\documentclass[runningheads]{llncs}

\usepackage{eccv}

\usepackage{eccvabbrv}

\usepackage{graphicx}
\usepackage{booktabs}
\usepackage{mathdots}
\usepackage{amsmath}

\usepackage[accsupp]{axessibility}  

\usepackage{hyperref}

\usepackage{orcidlink}

\usepackage{xcolor}
\usepackage{amsmath}
\usepackage{amsthm}
\usepackage{amsfonts}
\usepackage{amssymb}
\usepackage{stmaryrd}
\usepackage{graphicx} 

\usepackage{tikz}
\usepackage{pgfplots}
\definecolor{VeryPink}{rgb}{1.0, 0.0, 0.706}

\newcommand{\RR}{\mathbb{R}}
\newcommand{\CC}{\mathbb{C}}
\newcommand{\PP}{\mathbb{P}}
\renewcommand{\P}{\mathcal{P}}

\theoremstyle{definition}

\begin{document}

\title{Single-View Rolling-Shutter SfM} 


\author{Sofía Errázuriz Muñoz\inst{1}\orcidlink{0009-0007-1781-7489} \and
Kim Kiehn\inst{1}\orcidlink{0009-0001-0400-483X} \and
Petr Hruby\inst{1}\orcidlink{0009-0004-0344-3330} \and
Kathlén Kohn\inst{1,2}\orcidlink{0000-0002-4627-8812} }

\authorrunning{S. Errázuriz Muñoz et al.}



\institute{KTH Royal Institute of Technology, Sweden\and
Digital Futures, Sweden\\
\email{sofiaerr@kth.se}
}
\maketitle

\begin{abstract}
  Rolling-shutter (RS) cameras are ubiquitous, but RS SfM (structure-from-motion) has not been fully solved yet. This work suggests an approach to remedy this: We characterize RS single-view geometry of observed world points or lines. Exploiting this geometry, we describe which motion and scene parameters can be recovered from a single RS image and systematically derive minimal reconstruction problems. We evaluate several representative cases with proof-of-concept solvers, highlighting both feasibility and practical limitations. 
  \keywords{ rolling-shutter geometry \and camera pose \and minimal problems}
\end{abstract}

\vspace{-1cm}

\begin{figure}
    \centering
    \begin{tabular}{c c c}
     \includegraphics[width=0.251\linewidth]{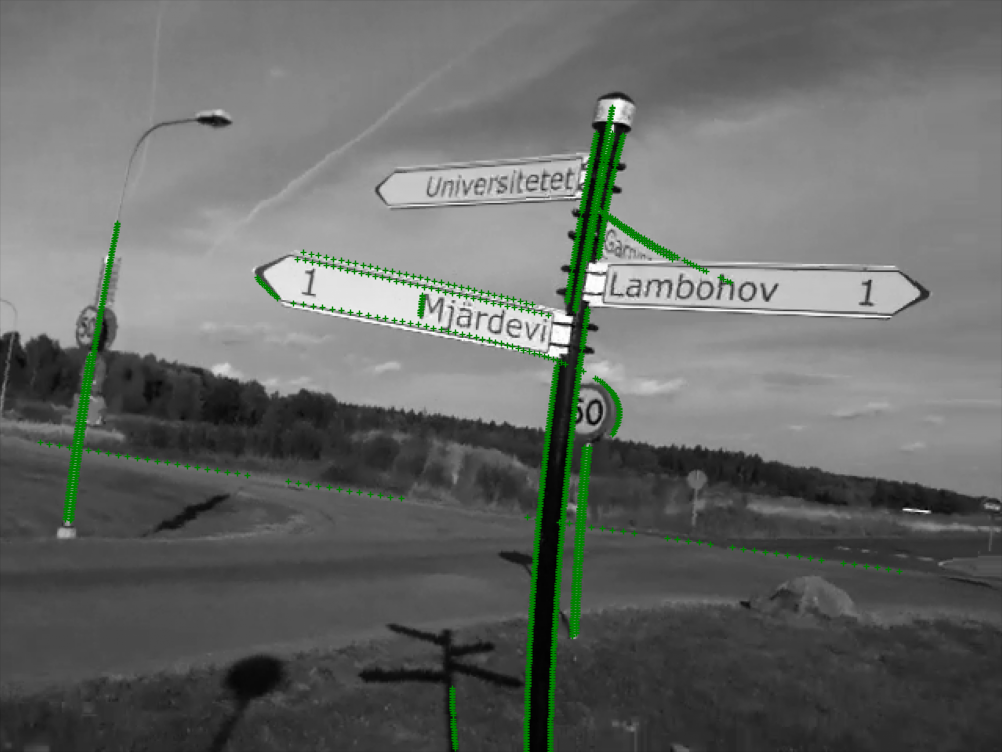} &
       \includegraphics[width=0.335\linewidth]{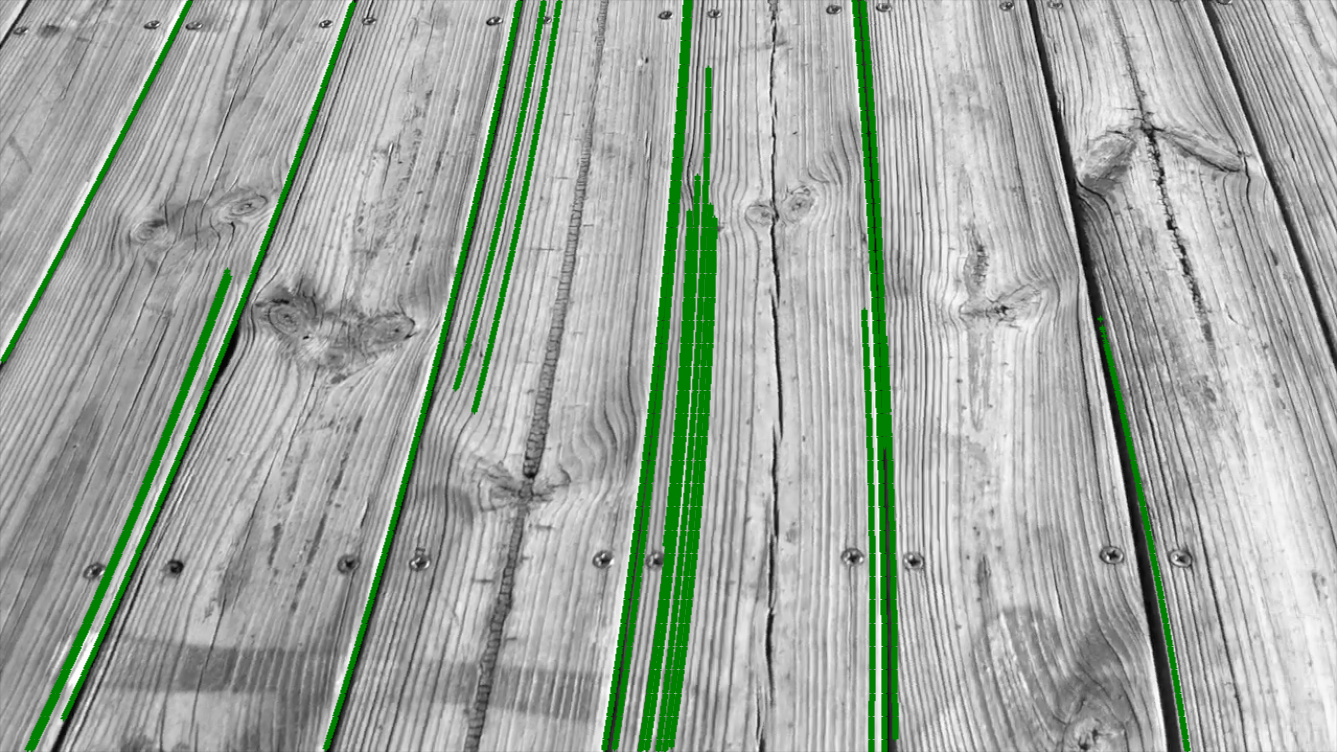}  &
       \includegraphics[width=0.35\linewidth]{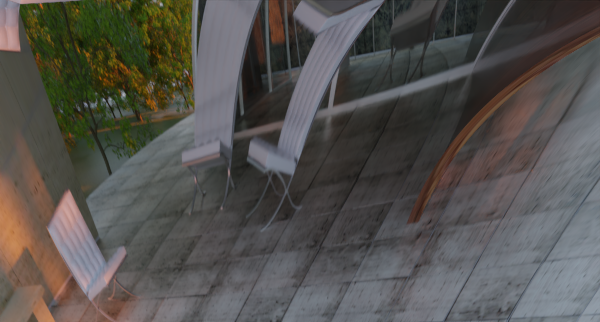}\\
    \end{tabular}
    
    \caption{Examples of RS images: \textit{left:} iPhone 3GS sequence from \cite{DBLP:conf/cvpr/ForssenR10}, \textit{middle}: sequence 06 from \cite{DBLP:conf/cvpr/HedborgFFR12}, \textit{right:} synthetic image generated by \cite{DBLP:conf/eccv/SeiskariYKRTKRS24}.}
    \label{fig:teaser}
\end{figure}

\vspace{-1.1cm}

\section{Introduction} \label{sec:intro}
Rolling shutter (RS) cameras \cite{DBLP:journals/corr/abs-cs-0503076} lead consumer and smartphone markets due to low cost, high resolution, and fast frame rates.
In contrast to global shutter (GS) sensors, RS cameras capture images line by line, yielding image distortions if the camera is moving during capture: 
1) RS images can show the same world point several times;
2) in general, RS cameras map straight world lines to non-linear image curves; see Fig.~\ref{fig:teaser}.
These distortions  turn Structure from Motion (SfM) and camera pose estimation into difficult problems.
Yet, efficient solvers for both problems are desired and essential building blocks in many applications involving moving cameras, e.g., assisted driving, augmented reality, self-navigating~robots.

There are many well-established solvers for SfM with GS cameras \cite{DBLP:conf/cvpr/NisterNB04, DBLP:journals/ivc/SteweniusNKS08, STEWENIUS2006284, Hartley1997}.
However, due to the RS image distortions described above, those solvers do not give accurate results when applied to moving RS cameras with non-negligible movement.
Efficient solvers for moving RS cameras exist only in restricted scenarios, \eg, 
\cite{DBLP:journals/ijautcomp/FanDH23,DBLP:conf/iccv/SaurerKBP13,9020067,albl2020two,DBLP:conf/cvpr/DaiLK16} (or for absolute pose \cite{albl2015r6p,magerand2012global,saurer2015minimal,albl2016rolling,albl2019rolling,kukelova2020minimal,kukelova2018linear,bapat2018rolling, hedborg2011structure}).

\smallskip \noindent \textbf{Main results.}
This article suggests an approach towards general RS SfM solvers, by systematically exploring single-view RS SfM for scenes consisting of points or lines. We consider fully-calibrated RS cameras without radial distortion, whose center and orientation vary polynomially during image capture (using Cayley's parametrization for rotations). Our results are threefold:

1) We provide \emph{foundational theory describing general RS image capture} (Sec. \ref{sec:images}). 
While the RS image distortions of points appearing several times and lines being shown as non-linear curves are well-known phenomena, we \emph{formally characterize} them for arbitrary degrees of the camera  motion model: 
On the one hand, we prove (to the best of our knowledge, for the first time) how often a world point is typically projected by the camera.
This is a fundamental invariant of the camera, referred to as its \emph{order} \cite{hahn2024order}. 
On the other hand, we determine the type of plane curves that arise as images of world lines, and we give computational evidence that our constraints on these curves are the only linear ones.

2) From our RS image theory, \emph{we derive which camera and 3D scene information is -- in principle -- possible to reconstruct from a single RS image}, exploiting the RS distortion of world lines into non-linear image curves (Sec. \ref{sec:lines}) or the multiple projections of world points (Sec. \ref{sec:points}). 
Along the way, we systematically enumerate minimal reconstruction problems (collected in SM Table \ref{tab:minimal}) and describe RS analogs to the classical essential matrix of a GS-camera pair.

3) We explore practical limitations and capabilities of the theoretical solvers from 2),
leading to \emph{proofs of concepts for practical solvers} (Sec. \ref{sec:experiments}).

\smallskip\noindent
\textbf{Related work.}\quad
Most methods for RS relative pose assume a multi-view setting. As the general formulation is challenging and largely unsolved, existing methods rely on special assumptions, e.g., consecutive frames \cite{DBLP:conf/iccv/SaurerKBP13, DBLP:conf/iccv/ZhuangCL17, DBLP:conf/cvpr/GaoGS0K24, DBLP:conf/cvpr/0001GLK25, linear_pure_trans, hruby2026asynchronous}, planar scenes \cite{9020067}, IMU \cite{DBLP:journals/corr/abs-1712-00184,DBLP:journals/corr/abs-1904-06770,hruby2025single},  special cameras \cite{DBLP:conf/icip/WangFD20, DBLP:journals/ivc/FanDZW22, DBLP:conf/wacv/McGriffMAD24}. Moreover, \cite{DBLP:journals/corr/LeeY17a} estimates  RS relative pose with a standard GS solver and refines the pose with optimization, \cite{DBLP:conf/cvpr/DaiLK16} studies non-minimal problems for RS relative pose with pure translation, and \cite{hahn2024order} studies RS cameras of order 1 and lists minimal problems for linear such cameras.

Alternatively, 3D structure and camera motion can be estimated from a single image. This was first explored by \cite{rengarajan2016bows,purkait2017rolling,lao2018robustified} in the context of RS rectification. These papers assume pure rotation, and use the information about the line curvature to recover angular velocity. While \cite{rengarajan2016bows,purkait2017rolling} are optimization-based, \cite{lao2018robustified} uses algebraic constraints from image curves to design a non-minimal solver.
\cite{DBLP:conf/wacv/PurkaitZ18} uses a similar approach to address a special motion and a known IMU.
A recent work~\cite{hruby2025single} is theoretically capable of single-view pose estimation but, in practice, relies on an IMU, and performs best in the multi-view setting.
The solvers in \cite{linear_pure_trans, hruby2026asynchronous} are capable of estimating the pose from  points that have been projected multiple times, although originally designed for the multi-view case.

The existing methods differ in the way they model camera motion. \cite{lao2018robustified, hruby2026asynchronous} replace exact rotation with an approximation, leading to simpler minimal problems. 
\cite{hruby2026asynchronous} estimates the pose without an explicit model, representing the trajectory accurately, but the resulting pose estimation is challenging. \cite{rengarajan2016bows} models motion with splines, which are accurate but non-polynomial and thus not suitable for algebraic solvers. We use the Cayley parameterization, similarly to \cite{hahn2024order}, which is an exact and polynomial formulation suitable for algebraic solvers.

\smallskip \noindent
\textbf{Notation and concepts.}
We consider cameras that, given scenes in projective 3-space $\PP^3$, project them on a projective image plane $\PP^2$.
We write homogeneous vectors $v \in \PP^n$  as $v = [v_1:v_2:\dots:v_n:v_0]^\top$ with $v_0$ being the homogenization variable.
$\RR[v]_d$ denotes the vector space of polynomials that are homogeneous of degree $d$ in the entries of $v$.
It is isomorphic to the vector space  $\RR[v_1,\ldots,v_n]_{\leq d}$  of polynomials of degree at most $d$ in the variables $v_1, \ldots, v_n$. We identify planes  $\Sigma$ in $\PP^3$ with points $\Sigma^\vee$  in the dual space $(\PP^3)^*$.
The Grassmannian $\mathrm{Gr}(1, \PP^3)$ is the set of lines in $\PP^3$.
An \emph{algebraic variety} is a solution set of polynomial equations.
The \emph{Zariski closure} of a set is the smallest variety containing it.
Additional concepts and lemmas from algebraic geometry are in SM Sec. \ref{sec:AG}.

A \emph{minimal problem} is a reconstruction problem that, given generic (e.g., sufficiently noisy) data, has a finite and positive number of solutions (over the complex numbers $\mathbb{C}$).
For an algebraic minimal problem, the number of solutions is the same for almost all input data. 
That number is a measurement for the algebraic complexity of the minimal problem,  called its \emph{degree}.
We give a systematic overview of minimal problems in RS SfM from world points or lines.
To find all minimal problems in a given setting, we follow the ideas from \cite{DBLP:journals/pami/DuffKLP24}:
Every minimal problem must be \emph{balanced}, which means that the degrees of freedom (DoF) of the parameters to reconstruct must equal the number of independent constraints imposed by the generic data.
Hence, we first narrow down the list of potentially minimal problems by characterizing all balanced ones, and afterwards determine which of those are indeed minimal. For a more formal overview of this strategy, see SM Section \ref{sec:AG}.
Most proofs are in SM Section \ref{sec:Proofs}.

\section{Camera Model}
A camera equipped with a rolling shutter (RS) sensor captures a picture by scanning the projection plane along a pencil of parallel image lines. In this article, we assume that these rolling lines are parallel to the $y$-axis and scan with constant speed. 
We denote by $C(x) \in \mathbb{R}^3$
the center of the camera
and by $R(x) \in \mathrm{SO}(3)$ its orientation
when the scanning line is at position $x \in \mathbb{R}$ on the projection plane. We will often simply call $x$ itself the scanline, or view $x=[v:t]$ as a point in a 1D projective space with default affine chart  $t = 1$.
We assume that $C(x)$ is a polynomial function in $x$, and that $R(x) = q2r(A(x))$, where $A(x) \in \RR^3$ is a polynomial function in $x$ and $q2r: \RR^3 \to \mathrm{SO}(3)$ is the Cayley transform, which assigns to a quaternion $(\alpha,\beta,\gamma,1)$ the rotation matrix
\[
\frac{1}{1 + \alpha^2 + \beta^2 + \gamma^2}
\left[\begin{smallmatrix}  
1 + \alpha^2 - \beta^2 - \gamma^2 &
2(\alpha \beta - \gamma) &
2(\alpha \gamma + \beta) \\
2(\alpha \beta + \gamma) &
1 - \alpha^2 + \beta^2 - \gamma^2 &
2(\beta \gamma - \alpha) \\
2(\alpha \gamma - \beta) &
2(\beta \gamma + \alpha) &
1 - \alpha^2 - \beta^2 + \gamma^2
\end{smallmatrix}\right].
\]
Thus, the \emph{space of RS cameras with degree-$d$ motion and degree-$\delta$ quaternion}~is
\begin{equation} \label{eq:cameraModel}
    \mathcal{P}_{d,\delta} := \{ (C,A) \in (\RR[x]_{\leq d})^3 \times (\RR[x]_{\leq \delta})^3\}.
\end{equation}

The camera matrix for the scanline at position $x$ is $P(x) = R(x) \begin{bmatrix} I | -C(x) \end{bmatrix}$.
Since taking a picture with that camera matrix is a linear map $\PP^3 \to \PP^2$ between projective spaces, we may omit the normalizing factor  $\frac{1}{1 + \alpha^2 + \beta^2 + \gamma^2}$ in the Cayley transform,  although technically without the normalization the image of $q2r(A(x))$ is not an element of $ \mathrm{SO}(3)$.

 The world points observed by the scanline at position $x$ form a plane $\Sigma(x)$. 
To compute that plane, we represent the scanline in the image plane $\PP^2$ as a point in the dual projective plane $(\PP^2)^\ast$, i.e., via the coefficient vector  $r(x) = \begin{bmatrix} t & 0 & -v \end{bmatrix}$ of its defining equation (see SM Example \ref{ex:scanlineDual}). Then, the coefficient vector of the defining equation of the plane $\Sigma(x) \subseteq \PP^3$ is
$\Sigma^\vee(x) = r(x) P(x) \in (\PP^3)^\ast$.

 \smallskip
\noindent
\textbf{Taking pictures of a world point $X \in \PP^3$.}
For any fixed RS camera, almost every world point $X$ appears the same number of times on the complex projective projection plane. 
That number is the \emph{order} of the camera \cite{hahn2024order}.
It equals the number of scanlines $x \in \PP^1$ satisfying $X \in \Sigma(x)$.  The latter condition holds iff
\begin{equation}\label{eq:orderPoint}
    0=\Sigma^\vee(x) \cdot  X = r(x) P(x) X.
\end{equation}
To take a picture of any point $X\in \PP^3$, we can first determine the scanlines $x_i$ observing $X$ by solving \eqref{eq:orderPoint} and afterwards compute the image points $P(x_i)X$.

\smallskip
\noindent
\textbf{Taking pictures of a world line $L \subseteq \PP^3$.}
 Let $L_1, L_2$ be distinct points on $L$. To obtain the projection $u(x)$ of $L$ onto the scanline $x$ with pose $R(x), C(x)$, we first compute the dual vector $l(x) \in (\PP^2)^*$ of the image line that we obtain from $L$ via the  GS camera with camera matrix $P(x)$, i.e.
   $l(x) = (R(x) (L_1 - C(x))) \times (R(x) (L_2 - C(x))) = R(x) (L_1 \times L_2 + (L_2 - L_1) \times C(x)).$
    Recall that the Pl\"ucker coordinates $[\Delta: q]$ of $L$ are equal to $\Delta = L_2 - L_1$ and $q = L_1 \times L_2$, and so we have $l(x) = R(x) (q + \Delta \times C(x))$.
Now, the projection $u(x) \in \PP^2$ of the line $L$ onto the scanline $x$ is the intersection of the scanline with the GS image line:
    \begin{equation}
        \label{eq:lineImageAtTimeX}
        u(x) = r(x) \times l(x) = r(x) \times ( R(x) (q + \Delta \times C(x)) ).
    \end{equation}

Finally, we derive constraints imposed by image points that are known to come from a world line $L$. Fix the point $u_1 = [x_1 \ y_1 \ 1]^\top \in \PP^2$. If this point is a projection of some point on $L$, it has to lie on $l(x_1)$, giving the  constraint
$
    u_1^\top R(x_1) (q + \Delta \times C(x_1))=0.$
Taking $q = L_1 \times \Delta$, this can take the form
\begin{equation}
    \label{eq:point_line_constraint}
    u_1^\top R(x_1) [\Delta]_{\times} (-L_1 + C(x_1))=0,
\end{equation}
which corresponds to the constraint used in \cite{hruby2025single}. Omitting the normalization factor $\frac{1}{1 + \alpha^2 + \beta^2 + \gamma^2}$ as discussed above yields a polynomial constraint.

In the following chapters, we are going to use these formulas to study the images of points and lines taken by a RS camera, and to develop methods for estimating the motion parameters $C(x), A(x)$ from point and line projections.

\section{RS images}\label{sec:images}
We begin by describing how the images of 3D scenes consisting of points and lines taken by RS cameras are expected to look. 
In Theorem \ref{thm:order}, we determine how often a RS camera typically observes a world point (over $\mathbb{C}$ and on the whole image plane). 
Although world points are generally projected more than once on the same RS image, the image of a world line is a single irreducible curve. 
Theorem \ref{thm:imageCurve} describes what those curves typically look like.

\begin{theorem}\label{thm:order}
    The order of almost all RS cameras in $\P_{d,\delta}$  is 
    $1 + d + 2 \delta$.
\end{theorem}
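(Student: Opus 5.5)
We will show that for fixed generic $X=[X_1:X_2:X_3:X_0]$, the polynomial $f_X(x) := r(x)P(x)X$ in the scanline variable has degree exactly $1+d+2\delta$ with distinct roots. Its number of roots on $\PP^1$ is then the order. We homogenize $x=[v:t]$ throughout, so that roots at infinity are also counted.

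\textbf{Degree bound.} We drop the Cayley normalization, so the unnormalized $q2r(A(x))$ has entries quadratic in $\alpha,\beta,\gamma$. These are homogeneous of degree $2$ in $(\alpha,\beta,\gamma,1)$, hence of degree $\le 2\delta$ in $x$. The matrix $[I\mid -C(x)]X = X_0' - X_0C(x)$, where $X_0'=(X_1,X_2,X_3)^\top$, has entries of degree $\le d$. Finally $r(x)=[t\;0\;-v]$ is linear. Hence $f_X$ is a binary form of degree $1+d+2\delta$ in $(v,t)$, after homogenizing each factor to its nominal degree. Note that $r(x)R(x)\bigl(X_0' - X_0 C(x)\bigr)$ is linear in $X$, so $f_X$ depends linearly on $X$.

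\textbf{Genericity.} We must show that for almost all cameras and almost all $X$, this form is not identically zero and has $1+d+2\delta$ distinct roots on $\PP^1$. The roots at $t=0$ are counted too, and since we work with a homogeneous binary form of the full nominal degree, a drop in affine degree appears as roots at infinity. The conditions ``$f_X\equiv 0$'' and ``discriminant of $f_X$ vanishes'' are Zariski closed in the space $\P_{d,\delta}\times\PP^3$. So by a standard irreducibility argument (the parameter space is an affine space, hence irreducible) it suffices to exhibit a single camera and a single point $X$ for which $f_X$ has $1+d+2\delta$ distinct simple roots. Generic cameras then have a nonempty open set of points $X$ with exactly that many distinct projections, which gives the order. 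One must also check that distinct roots $x_i$ yield genuine image points, i.e.\ $P(x_i)X\neq 0$. This is again an open condition, verified on the same example. We also need that the normalization $1+\alpha^2+\beta^2+\gamma^2$ does not vanish at the roots; this too is generic and can be checked on the example.

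\textbf{Explicit witness (main obstacle).} The hardest part is constructing this example and checking that no cancellation occurs, in particular at $x=\infty$. We would first try $A(x)=(0,0,\gamma(x))$ with $\gamma$ of degree $\delta$, a rotation about the optical axis, and $C(x)=(c(x),0,0)$ with $c$ of degree $d$. For $X=(X_1,X_2,X_3,1)$ the first and third rows of the unnormalized rotation give
\[
f_X = t\bigl[(1-\gamma^2)(X_1-c)-2\gamma X_2\bigr]-vX_3(1+\gamma^2).
\]
Its leading coefficient in $v$ is $-X_3\gamma_\delta^2$ times $v^{1+2\delta}$, up to the $c$ term. When $d>2\delta+1$, one should instead use the $c$ term, which contributes $t\,\gamma^2 c$ of degree $2\delta+d$ multiplied by $t$. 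This is awkward, so a cleaner route is to show that the leading coefficient of the nominal binary form, namely the coefficient of $v^{1+d+2\delta}$, is a nonzero polynomial in the camera and point parameters. That coefficient comes only from the $-v$ term of $r$ times top coefficients: $-(\text{row 3 of } R)_{\text{top}}\cdot(X_0'-X_0C)_{\text{top}}$. Taking $X_0\neq 0$ and $C$ with a top coefficient in the $z$-direction and $A$ generic makes it nonzero, so $f_X\not\equiv0$ with full degree. Distinctness of the roots then follows by choosing $X$ generic and applying Bertini-type reasoning: as $X$ varies, the forms $f_X$ span a linear system. We would show this system has no base points on $\PP^1$ (at a common root $x$, $\Sigma^\vee(x)$ would be zero, which fails for generic cameras) and is not composite with a pencil. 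A generic member of a base-point-free linear system on $\PP^1$ of dimension $\ge 1$ that separates roots then has distinct roots. If that reasoning stalls, we would fall back on a computer-checked numeric example for small $(d,\delta)$ plus a degeneration argument for general $(d,\delta)$.
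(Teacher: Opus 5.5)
Your reduction is sound: $f_X$ is a binary form of nominal degree $1+d+2\delta$, the degenerate loci are Zariski closed, and your leading-coefficient computation $X_0\,(R_3)_{\mathrm{top}}\cdot C_{\mathrm{top}}\neq 0$ correctly shows that $x=\infty$ is generically neither a root nor a base point. This is exactly where your first candidate $A=(0,0,\gamma)$, $C=(c,0,0)$ fails for $d\ge 1$, because there $(R_3)_{\mathrm{top}}\perp C_{\mathrm{top}}$.

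\textbf{The gap.} You only assert the decisive step: that for generic cameras $\Sigma^\vee(x)\neq 0$ at every finite $x$, i.e.\ that the entries of $\Sigma^\vee$ have no common factor. This is the actual content of the theorem. A common factor $h$ divides every $f_X$ and lowers the order by $\deg h$, and its roots may be simple. So ``$f_X$ has $1+d+2\delta$ distinct roots for one $(P,X)$'' does not exclude it. Neither does your test $P(x_i)X\neq 0$: at a finite base point $R(x_i)$ has rank one, and $P(x_i)X$ is still generically nonzero. Only your normalization condition $1+\alpha^2+\beta^2+\gamma^2\neq 0$ at the roots would exclude it, but you never check it on any example. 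A numerical check for small $(d,\delta)$ plus an unspecified degeneration argument does not cover all $(d,\delta)$.

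\textbf{How to close it.} You need an argument uniform in $d,\delta$. The paper specializes $b=t^d$, $c=a$, $\beta=t^\delta$, $\gamma=\alpha$. This gives $\Sigma^\vee_2=-4\alpha t^{d+\delta}v$ and $\Sigma^\vee_0=2a\alpha^2v+4\alpha t^{d+\delta}v-2at^{2\delta}v-2a\alpha^2t-2at^{2\delta+1}$. It then checks that none of $v$, $t$, or a linear factor of $\alpha$ divides $\Sigma^\vee_0$ for generic $a,\alpha$.

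A dimension count would also work. Since the unnormalized Cayley matrix satisfies $RR^\top=(1+\alpha^2+\beta^2+\gamma^2)^2I$, a finite base point forces $1+\alpha(x)^2+\beta(x)^2+\gamma(x)^2=0$. At such $x$, $R(x)$ has rank one, and one further independent condition is needed: $r(x)$ must annihilate its column. For each fixed $x$ this confines $A(x)$ to a set of codimension at least $2$ in $\CC^3$. Hence the cameras with a finite base point lie in a proper closed subset of $\P_{d,\delta}$.

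Once base-point-freeness is established, your Bertini step does give distinct roots for generic $X$, a point the paper leaves implicit. On $\PP^1$ in characteristic $0$ the ``not composite with a pencil'' hypothesis is not needed.
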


\begin{proof}
We homogenize \eqref{eq:orderPoint} into a homogeneous polynomial in $[v:t]$. The order of the RS camera is equal to the degree of that polynomial for generic $X$, as the images of the point $X$ are given as $P(x_i)X$, where the $x_i$ are the solutions to \eqref{eq:orderPoint}.
The degree of the polynomial is $\deg \Sigma^\vee \leq \deg r + \deg C + \deg R = 1 + d + 2 \delta$. It remains to show that, for a generic camera, the 3 entries of $r(x)P(x)$ do not have any common factor, since otherwise the actual degree of $\Sigma^\vee$ would be lower. 
In SM Section \ref{sec:Proofs}, we provide these remaining details.
\qed
\end{proof}

\begin{theorem}\label{thm:imageCurve}
    For almost all RS cameras in $\P_{d,\delta}$  and almost all world lines $L$, the image of $L$  is a rational irreducible curve of degree $1 + d + 2 \delta$ that passes $d + 2 \delta$ many times through the point at infinity at the end of the y-axis. 
\end{theorem}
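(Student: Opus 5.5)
We study the parametrization \eqref{eq:lineImageAtTimeX} directly. After homogenizing in the scanline coordinate $x=[v:t]$, the map
\[
\PP^1 \to \PP^2,\qquad [v:t]\mapsto u([v:t]) = r(x)\times l(x),\qquad l(x)=R(x)\bigl(q+\Delta\times C(x)\bigr),
\]
has entries that are homogeneous of degree $1+d+2\delta$: $r$ is linear, $C$ has degree $d$, and the unnormalized Cayley matrix $R$ has degree $2\delta$.

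Rationality and irreducibility are then immediate, since the image of $L$ is the image of $\PP^1$ under a polynomial map. Over $\CC$ every point of $L$ lies on some scanning plane $\Sigma(x)$, so the image of $L$ is exactly this curve together with finitely many limit points, and its Zariski closure is irreducible.

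Next we read off the point at infinity. With $r=[t,0,-v]$ and $l=[l_1,l_2,l_3]$,
\[
u = r\times l = \bigl(v\,l_2,\; -v\,l_1 - t\,l_3,\; t\,l_2\bigr).
\]
Two things follow. The scanline condition $u_1 = (v/t)u_3$ holds, and the point $[0:1:0]$, the end of the $y$-axis, lies on the curve. The degree of the curve is computed by intersecting it with a generic line $a u_1 + b u_2 + c u_3 = 0$. This gives a binary form of degree $1+d+2\delta$ in $[v:t]$, so the curve has degree $1+d+2\delta$, provided two conditions hold:
\begin{enumerate}
\item the three components of $u$ have no common factor, so there are no base points;
\item the parametrization is birational onto its image (degree one).
\end{enumerate}
The number of passages through $[0:1:0]$ is the number of roots of $u_1=u_3=0$, that is, of $v\,l_2 = t\,l_2 = 0$. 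These are exactly the roots of $l_2$, which has degree $d+2\delta$. For a generic camera and line, $l_2$ has $d+2\delta$ distinct roots, giving $d+2\delta$ branches through that point. Equivalently, lines through $[0:1:0]$ are the vertical lines $u_1=\lambda u_3$, and each such line meets the curve once away from the point (the scanline $v/t=\lambda$) and $d+2\delta$ times at it.

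It remains to verify the two genericity conditions, and this is the main obstacle. I would use the standard semicontinuity argument: both the absence of base points and birationality are Zariski-open conditions on $(C,A,L)$, so it suffices to exhibit a single example. For the base-point condition, a common factor of $v l_2$, $t l_2$ and $v l_1 + t l_3$ would have to divide $l_2$ and $v l_1 + t l_3$. Choosing, say, a constant translation-free rotation part plus explicit polynomials makes $l_2$ and $v l_1 + t l_3$ coprime, which can be checked via a resultant. This mirrors the common-factor argument for Theorem~\ref{thm:order}.

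For birationality, a generic vertical line $u_1=\lambda u_3$ meets the curve outside $[0:1:0]$ at exactly one parameter value, namely $v/t=\lambda$. Hence a generic image point has a unique preimage, and the map has degree one. This is a clean argument, so the real work lies only in the explicit example for the non-vanishing and no-common-factor conditions. If a uniform example for all $d,\delta$ is awkward, I would first try $R$ constant in one factor, specializing to pure translation with $l = q + \Delta\times C(x)$ to handle $d$, and then a pure rotation with $A$ of degree $\delta$ to handle the $2\delta$ part, combining the two by genericity.
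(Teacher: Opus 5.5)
Your framework matches the paper's. You write $u=(v\,l_2,\,-v\,l_1-t\,l_3,\,t\,l_2)$, so your $l_2$ is the paper's $\tilde u$. You get birationality from the $x$-coordinate $u_1/u_3=v/t$, read the degree off the binary forms, and count the passages through $[0:1:0]$ as the roots of $l_2$. Your reduction of the base-point question to $\gcd(l_2,\,v\,l_1+t\,l_3)=1$ is also correct.

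The gap is that this coprimality condition, which is the real content of the theorem, is never established for general $d,\delta$. The routes you sketch cannot establish it.
\begin{itemize}
\item A resultant computation handles only one fixed pair $(d,\delta)$, not all of them.
\item The specializations you propose lie inside the bad locus. With $A$ constant (pure translation), the homogenized Cayley matrix equals $t^{2\delta}R_0$, so $l_1,l_2,l_3$ share the factor $t^{2\delta}$; affinely, the image curve has degree at most $1+d$. With $C$ constant (pure rotation), they share the factor $t^{d}$.
\item Zariski-openness lets you conclude from a single good point of the full family $\P_{d,\delta}\times\mathrm{Gr}(1,\PP^3)$. It does not let you conclude from two degenerate points in two subfamilies, so ``combining the two by genericity'' proves nothing.
\end{itemize}
Your claim that $l_2$ generically has distinct roots is also unproved, though the statement only needs the count with multiplicity.

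What is missing is one explicit configuration in which both the center motion and the rotation have full degree, yet the gcd can still be checked by hand for all $d,\delta$. The paper takes $q=(0,1,0)$, $\Delta=(1,0,0)$, $b=t^d$, $c=a$, $\beta=t^\delta$, $\gamma=\alpha$, with generic $a\in\RR[v,t]_d$ and $\alpha\in\RR[v,t]_\delta$. Then $l_2=2(\alpha-t^\delta)(\alpha+t^\delta)(a-t^d)$ splits into rotation and translation factors. Reduce $u_2$ modulo a linear factor $\lambda$ of each piece, using $\alpha\equiv\pm t^\delta$ or $a\equiv t^d \bmod \lambda$. This leaves $(-4t^dv\pm4at\mp4t^{d+1})\,t^{2\delta}$ or $-2(\alpha^2+t^{2\delta})\,t^dv$, neither of which $\lambda$ divides for generic $a,\alpha$. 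This factorization is the correct way to carry out your idea of treating translation and rotation separately: both contributions are present at full degree in a single example.
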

\begin{proof}
    Let $L$ be a generic 3D line with Pl\"ucker coordinates $[\Delta: q]$. 
    By \eqref{eq:lineImageAtTimeX},  
     the image of the line $L$ by the RS camera is an algebraic curve with parameterization $
        \mathbb{P}^1 \to \PP^2,  
        x \mapsto u(x) = \begin{bmatrix}
            u_1(x) :u_2(x):u_0(x)
        \end{bmatrix}^\top$. 
    Since the image curve can be parameterized by $\PP^1$, it is an irreducible rational curve.

    From \eqref{eq:lineImageAtTimeX} in homogeneous coordinates $x=[v:t]$, we see that the entries $u_i(x)$ are polynomials of degree $1+d+2\delta$, and that $u_0(x)$ has $t$ as a factor, i.e., $u_0(x) = t \cdot \tilde{u}$ for some polynomial $\tilde{u}$ of degree $d+2 \delta$.
    Moreover, $u_1(x) = v \cdot \tilde u$, verifying that the x-coordinate $\frac{u_1(x)}{u_0(x)}$ of the point $u(x)$ is indeed $ \frac{v}{t}$ (which equals $x \in \RR$ in the affine chart $t=1$, as desired).
    The y-coordinate of $u(x)$ is y$ = \frac{u_2(x)}{u_0(x)}$.

    As in the proof of Theorem~\ref{thm:order}, it remains to show that $u_1,u_2,u_0$ do not have any common factor for generic RS cameras. We show this in SM Section \ref{sec:Proofs}.
    This has two consequences: First, since the map $x \mapsto u(x)$ is almost everywhere invertible (more concretely, over all finite points), the degree of the image curve is indeed $\deg u_2 = 1+d+2\delta$.
    Second, the curve passes $\deg \tilde u = d + 2 \delta$ many times through the point $[0:1:0]$ at infinity, namely once for each root of $\tilde u$.
    \qed
\end{proof}

\section{Line SfM} \label{sec:lines}

In this section, we analyze  single-view SfM for RS cameras that observe world lines. 
Following Theorem \ref{thm:imageCurve}, we denote by $\mathcal{H}_D$ the set of rational curves in $\PP^2$ that have degree $D$ and pass $D-1$ many times through the point $[0:1:0]$.
These curves are precisely those that can be parametrized as 
\begin{equation} \label{eq:projectiveCurveParam}
    [v:t] \mapsto [\ell_1(v:t) f(v:t) \;: \; g(v:t) \;:\; \ell_2(v:t) f(v:t)],
\end{equation}
where the $\ell_i$ are linearly independent linear forms encoding the relation between the scanline coordinate and the time unit, $\deg f = D-1$, and $\deg g = D$. 
By acting with automorphisms of $\PP^1$, we may assume without loss of generality that $\ell_1(v:t) = v$ and $\ell_2(v:t) = t$, rescaling time to make it equal to the scanline coordinate.
Thus, in the affine charts where $t=1$ and the last coordinate of \eqref{eq:projectiveCurveParam} is $1$, the curve, denoted as $u(x)$, is parametrized by
\begin{equation}\label{eq:AffineCurveParam}
    x \mapsto \left(x, \frac{g(x)}{f(x)}\right);
\end{equation}
cf. also the proof of Theorem \ref{thm:imageCurve}.
Recall that projective planar curves can be identified with their defining polynomial (up to scaling), so that the set of degree-$D$ curves becomes a Zariski dense subset of the projective space
$\PP (\CC[\text{x,y,z}]_D)$.

\begin{proposition} \label{prop:H}
    $\dim \mathcal{H}_D = 2D$.
    Moreover, the Zariski closure of $\mathcal{H}_D$ inside $\PP (\CC[\,\text{x,y,z}\,]_D)$ is a projective subspace (i.e., it is defined by linear constraints). 
\end{proposition}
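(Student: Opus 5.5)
Using the affine parametrization \eqref{eq:AffineCurveParam}, I will compute the implicit equation of a curve in $\mathcal{H}_D$ directly. A point $(\text{x},\text{y})$ in the chart $\text{z}=1$ lies on the curve iff $\text{y} f(\text{x}) - g(\text{x}) = 0$. Homogenizing with respect to $\text{z}$ gives the degree-$D$ form
\[
F(\text{x},\text{y},\text{z}) = \text{y}\,\hat f(\text{x},\text{z}) - \hat g(\text{x},\text{z}),
\]
where $\hat f \in \CC[\text{x},\text{z}]_{D-1}$ and $\hat g \in \CC[\text{x},\text{z}]_D$ are the binary forms of $f$ and $g$. (The statement after \eqref{eq:projectiveCurveParam} lets us take $\ell_1=v$ and $\ell_2=t$.) I will check that this $F$ is really the defining polynomial of the image curve and not a multiple of it. Since $x\mapsto(x,g/f)$ is injective on finite points, the curve is birational to $\PP^1$. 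For generic $f,g$ with no common factor, $F$ has degree exactly $D$ in total, has degree $1$ in $\text{y}$, and is irreducible: a form that is linear in $\text{y}$ with coprime coefficients $\hat f, \hat g$ is irreducible. So $F$ cuts out exactly the image. Conversely, any irreducible $F$ of this shape is parametrized as in \eqref{eq:projectiveCurveParam} by solving for $\text{y}$. Also, $F$ has multiplicity exactly $D-1$ at $[0:1:0]$: in the chart $\text{y}=1$ its lowest-degree part is $\hat f$, of degree $D-1$.

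Next I will show the reverse containment: every irreducible degree-$D$ curve with multiplicity at least $D-1$ at $[0:1:0]$ has a defining polynomial of this form. Write $F=\sum_{k} \text{y}^k F_{D-k}(\text{x},\text{z})$. Multiplicity $\ge D-1$ at $[0:1:0]$ means that, after dehomogenizing at $\text{y}=1$, all terms of degree $<D-1$ in $(\text{x},\text{z})$ vanish. This says exactly $F_{D-k}=0$ for $k\ge 2$, so $F=\text{y}F_{D-1}+F_D$. Such an irreducible curve is rational, since projection from $[0:1:0]$ gives a birational map to $\PP^1$. Therefore the closure of $\mathcal{H}_D$ is contained in
\[
V=\{\text{y}\,\hat f + \hat g\} \subseteq \PP(\CC[\text{x},\text{y},\text{z}]_D),
\]
which is a projective subspace cut out by the linear conditions that every coefficient of a monomial $\text{y}^k\text{x}^a\text{z}^b$ with $k\ge 2$ vanishes.

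By the first step, $\mathcal{H}_D$ contains the Zariski-open dense subset of $V$ where $\hat f,\hat g$ are coprime and $\hat f \neq 0$. Coprimality of two binary forms is an open condition, given by nonvanishing of the resultant, and this set is nonempty. Hence $\overline{\mathcal{H}_D}=V$, which proves the linearity claim. Counting parameters, $\hat f$ has $D$ coefficients and $\hat g$ has $D+1$, so
\[
\dim V = D+(D+1)-1 = 2D.
\]

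The main obstacle is the bookkeeping in the first step: making sure each generic element of $V$ is actually in $\mathcal{H}_D$. This means ruling out a hidden common factor that would drop the degree, and confirming that the multiplicity at $[0:1:0]$ is exactly $D-1$ rather than $D$, which would happen only if $\hat g=0$. The resultant argument above handles both.
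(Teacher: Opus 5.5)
Your proof is correct and follows essentially the same route as the paper's. Both derive the implicit equation $\text{y}\,f(\text{x})=g(\text{x})$ from the parametrization, identify the closure as the linear space where all monomials of degree $\geq 2$ in $\text{y}$ vanish, and count $D+(D+1)-1=2D$ coefficients up to a common scalar. Where the paper is terse, you supply two checks it leaves implicit: that $\text{y}\hat f-\hat g$ is irreducible when $\hat f,\hat g$ are coprime (so a generic point of the linear space really lies in $\mathcal{H}_D$), and the multiplicity argument showing that every curve of $\mathcal{H}_D$ lies in that space.
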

\begin{proof}
    The polynomials $g$ and $f$ in \eqref{eq:AffineCurveParam} have $D+1$ and $D$ coefficients, respectively. 
    The only way how different polynomials parametrize the same curve is if they differ from $g$ and $f$ by a common scalar factor. This proves the dimension claim.
    Moreover, in the affine chart $\text{z}=1$, \eqref{eq:AffineCurveParam} implies that the curves in $\mathcal{H}_D$ are precisely those whose defining equation is of the form $\text{y} \cdot f(\text{x}) = g(\text{x})$.
    These are exactly those equations where monomials of degree larger than $1$ in y do not appear. \qed
\end{proof}

\noindent
We expect the image curves to not satisfy any further general linear constraints (in addition to being contained in $\mathcal{H}_D$). We  verified this computationally for  small choices of camera hyperparameters and number of lines (see SM):
\begin{proposition} \label{prop:spanImLine}
    Let  $d\leq 5 ,\delta \leq 5,\ell \leq 6$.
    The linear span of the set of images obtained by some RS camera in $\mathcal{P}_{d,\delta}$ observing $\ell$ world lines equals $\mathcal{H}_{1+d+2\delta}^\ell$. 
\end{proposition}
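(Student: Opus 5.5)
The containment "$\subseteq$" is immediate. By Theorem~\ref{thm:imageCurve}, for almost all cameras and lines each image curve lies in $\mathcal{H}_{1+d+2\delta}$. The image map is continuous and $\mathcal{H}_{1+d+2\delta}$ is Zariski closed by Proposition~\ref{prop:H}, so every image in the Zariski closure of the set of images also lies in this linear space. The same holds for $\ell$-tuples, factor by factor. For the reverse inclusion, the natural parametrization is
\[
\mathcal{P}_{d,\delta}\times \mathrm{Gr}(1,\PP^3)^\ell \;\to\; \prod_{i=1}^\ell \PP(\CC[\text{x,y,z}]_{D}), \qquad D=1+d+2\delta .
\]
Here the $i$-th curve is the implicit equation $\text{y}\,\tilde u_i(\text{x}) - \tilde g_i(\text{x})=0$ read off from \eqref{eq:lineImageAtTimeX}; it is affine-linear in $\text{y}$ with $\tilde g_i = u_2$ and $\tilde u_i=u_0/t$. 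We must show this image is not contained in any proper linear subvariety of the (multi)projective product. Concretely, we work with the coefficient vectors $(\tilde u_i,\tilde g_i)\in \CC^{2D+1}$ and ask whether the affine cone over the image spans $\mathcal{H}_D^\ell$. In the product setting this means: no nonzero multilinear form (linear in each factor) in the $\ell$ coefficient vectors vanishes identically.

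Since the statement is restricted to finitely many $(d,\delta,\ell)$, I would prove it computationally, by exact arithmetic, in each case. Fix $(d,\delta,\ell)$. Sample $N$ random rational points: a camera $(C,A)$ with random rational coefficients, plus $\ell$ random rational lines given by Plücker coordinates from two random rational points. For each sample, compute the $\ell$ coefficient vectors of $(\tilde u_i,\tilde g_i)$ symbolically via \eqref{eq:lineImageAtTimeX} and form their tensor (Kronecker) product, a vector of length $(2D+1)^\ell$. The linear span of $\mathcal{H}_D^\ell$, viewed in the Segre embedding, has dimension $(2D+1)^\ell$. So it suffices to exhibit $(2D+1)^\ell$ samples whose Kronecker vectors are linearly independent, i.e.\ a full-rank sample matrix.

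Full rank at particular rational points certifies full rank generically, because rank is lower semicontinuous; a lucky rank drop could only lower the rank, so this certification is sound with no false positives. Rank computations are done modulo a large prime to keep them tractable. Rank deficiency modulo $p$ can only underestimate the rank over $\mathbb{Q}$, so full rank mod $p$ suffices.

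The main obstacle is size. For $d=\delta=5$ we get $D=16$, so $(2D+1)^\ell=33^6\approx 1.3\cdot 10^9$, which is infeasible directly. To reduce it, I would use the product structure. The multilinear span of a product-like set decomposes, but the camera is shared across the lines, so the sets are not independent. However, if we fix the camera and vary only the lines, the $\ell$ factors become independent, and the image is then a product of $\ell$ copies of the single-line image set. The span of a product of sets is the tensor product of the spans. Hence it suffices to show that, for one fixed generic camera, the images of single lines span all of $\CC^{2D+1}$. This is a $(2D+1)$-sized rank check, done for each $(d,\delta)$, which makes the argument independent of $\ell$.

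If some cases fail (lines have only 4 parameters, but linearity of the Plücker dependence may still give a large span), I would then let the camera vary as well and fall back on mod-$p$ sampling for the small $\ell$ values only.
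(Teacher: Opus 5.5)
\textbf{The fixed-camera reduction fails.} Your plan rests on the claim that, for one fixed generic camera, the single-line images span all of $\CC^{2D+1}$. This is false. For a fixed camera, \eqref{eq:lineImageAtTimeX} gives $u(x)=r(x)\times\bigl(R(x)(q+\Delta\times C(x))\bigr)$, which is \emph{linear} in the Pl\"ucker vector $(q,\Delta)\in\CC^6$. This is exactly the linear map $\Gamma$ in the proofs of Propositions~\ref{prop:Delta0ImageCurveFixedCamera} and~\ref{prop:ImageCurveFixedCamera}, and for $d=0$ only $q$ enters. So the single-line images of one camera span a subspace of dimension at most $6$ (at most $3$ if $d=0$) inside $\CC^{2D+1}=\CC^{3+2d+4\delta}$. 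That subspace is proper for every $(d,\delta)\notin\{(0,0),(1,0)\}$.

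Your rank check would therefore fail in essentially every case, and not because of unlucky sampling. The fixed-camera image set is a proper linear section: the ``essential-matrix analog'' the paper describes. Your parenthetical has it backwards: linearity in the Pl\"ucker coordinates is precisely what caps the span at $6$, no matter how many parameters a line has. All the content of the proposition comes from varying the camera. Since the camera is shared by the $\ell$ lines, the tensor-product decomposition you wanted is not available.

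\textbf{The fallback does not cover the stated range.} With the camera varying, you must certify rank $(2D+1)^\ell$, which you yourself note is infeasible at the top of the range ($33^6$). You propose it only ``for small $\ell$'', so the stated range is left unproven. It also targets a stronger property than needed, namely absence of all multilinear relations on the Segre image. A linear relation among the chart coordinates of the $\ell$ curves becomes a multilinear form after clearing the per-curve normalizations, so your version would imply the paper's, but you do not need it.

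\textbf{What the paper does.} The statement is about linear relations among the concatenated coefficient vectors, in the same sense in which Lemma~\ref{lem:pointRel} and Proposition~\ref{prop:spanImagePoints} count linear constraints on point images. The paper samples $\ell\cdot 2D$ configurations, each with its \emph{own} generic camera and $\ell$ generic lines. It stacks the concatenated curve coefficients ($2D$ per curve, in a chart) as rows and checks that the resulting square matrix has nonzero determinant. That is at most $6\cdot 32=192$ rows. Your soundness arguments (full rank at a rational point certifies generic full rank; full rank modulo $p$ implies full rank over $\mathbb{Q}$) carry over verbatim to that setting.
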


 \noindent
 SfM reconstruction is possible only up to global rotation, translation, and scaling of 3-space \cite[Rem. 1]{hahn2024order}.
Thus, we may assume  for the scanline at position $x=0$, that the camera center is the origin (i.e., $C(0) = 0$) with identity rotation (i.e., $A(0) = 0$) and  one of the remaining coefficients of $C(x)$ is fixed to be $1$ (whenever $d>0$).
We hope to recover the remaining $3 \delta + 3d -1$ camera parameters.

However, from the image of a single line we cannot reconstruct the part of the camera motion in the same direction as the line. More generally:
\begin{proposition} \label{prop:parallel}
    For a 3D scene  of parallel lines with direction vector~$\Delta$, two RS cameras with center motions $C_1, C_2 \in (\RR[x]_{\leq d})^3$ satisfying $\Delta \times C_1  = \Delta \times C_2$ and
    with the same rotation $A_1 = A_2 \in (\RR[x]_{\leq \delta})^3$ produce the same image.
    Thus, there are additional $d$ camera parameters that  cannot be recovered from the image. 
\end{proposition}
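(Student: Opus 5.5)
The plan is to work directly from the image formula \eqref{eq:lineImageAtTimeX}. For a line $L$ with Pl\"ucker coordinates $[\Delta:q]$, the image point on scanline $x$ is
\[
u(x) = r(x) \times \bigl(R(x)(q + \Delta \times C(x))\bigr).
\]
The camera center enters this formula only through the combination $\Delta \times C(x)$. The rotation enters only through $R(x) = q2r(A(x))$.

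\textbf{Step 1 (same image for each line).} Suppose two cameras satisfy $A_1 = A_2$ and $\Delta \times C_1 = \Delta \times C_2$ as polynomials in $x$. Then for every line $L$ with direction $\Delta$ (any moment $q$), the two curves $u^{(1)}(x)$ and $u^{(2)}(x)$ coincide identically as maps $\PP^1 \to \PP^2$. Hence the image curves coincide. If one prefers to argue via the point--line constraint \eqref{eq:point_line_constraint}, the constraint $u_1^\top R(x_1)(q + \Delta\times C(x_1)) = 0$ is literally the same polynomial condition for both cameras. So the sets of image points lying on the image of $L$ agree.

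\textbf{Step 2 (whole scene).} Since all lines of the scene share the same direction $\Delta$, Step~1 applies to each of them. The full images of the scene are therefore equal.

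\textbf{Step 3 (counting the lost parameters).} The condition $\Delta \times (C_1 - C_2) = 0$ means $C_1(x) - C_2(x) = \lambda(x)\,\Delta$ for a scalar polynomial $\lambda \in \RR[x]_{\leq d}$. This uses $\Delta \neq 0$: the kernel of $[\Delta]_\times$ is spanned by $\Delta$, pointwise in $x$, and comparing coefficients shows $\lambda$ is a polynomial of degree $\le d$. The normalization $C(0)=0$ forces $\lambda(0)=0$. This leaves $d$ free coefficients of $\lambda$, i.e.\ a $d$-dimensional family of centre motions giving identical images.

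It remains to check that this family is not already absorbed by the global normalization fixing one coefficient of $C$ to $1$. Generically, $\Delta$ is not a coordinate direction. One can then pick the $d$-dimensional family inside the normalized slice, e.g.\ by rescaling compatibly or by restricting $\lambda$ so that the fixed coefficient is unchanged. If that coefficient's direction has a nonzero $\Delta$-component, this costs one dimension. It is recovered, however, because the global scaling acts independently: scaling changes $\Delta\times C$, whereas adding $\lambda\Delta$ does not.

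\textbf{Main obstacle.} I expect this last bookkeeping step to be the delicate point. The key is to verify that the fibre of the map from normalized cameras to images contains a $d$-dimensional subset. I would handle it by computing the fibre dimension directly: the map $C \mapsto \Delta \times C$ on $\{C \in (\RR[x]_{\leq d})^3 : C(0)=0\}$ has kernel of dimension exactly $d$. This kernel meets the slice defined by the scale normalization in an affine subspace of dimension $d$ whenever the fixed coefficient is chosen along a direction transverse to $\Delta$. Such a choice is always possible, and it is harmless since the choice of normalization is arbitrary.
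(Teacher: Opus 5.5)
Your proposal is correct and follows the paper's proof: the first claim is read off from \eqref{eq:lineImageAtTimeX}, where $C$ enters only through $\Delta\times C(x)$, and the count is the $d$ non-constant coefficients of the component of $C$ along $\Delta$. The paper gets these by rotating so that $\Delta=[1\,0\,0]^\top$ and translating to $C(0)=0$, which is exactly your $\lambda(x)\Delta$ with $\lambda(0)=0$. The paper does not discuss how this interacts with the scale normalization; your remark that scaling and the shift $C\mapsto C+\lambda\Delta$ act independently already settles it, so the closing paragraph is not needed (and there ``transverse to $\Delta$'' should read ``orthogonal to $\Delta$'' for the whole coset $C+\ker$ to lie in the slice).
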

\begin{proof}
    The first claim follows directly from \eqref{eq:lineImageAtTimeX}, which shows that the only part of the center motion $C(x)$ that affects the image of a line is the cross product of the line's direction vector with $C(x)$.
    For the parameter count, we may rotate  scene and  camera such that $\Delta = [1\,0\,0]^\top$ and translate until $C(0)=[0\,0\,0]^\top$.
    Then, $\Delta \times C(x)$ depends only on the second and third entry of $C(x)$, and so the $d$ coefficients of the polynomial in the first entry of $C(x)$ cannot be recovered. \qed
\end{proof}

\subsection{Pure rotation} \label{sec:pure_rotation}
Here, we consider the case $d=0$ and $\delta>0$. 
Henceforth, all proofs are in SM Sec. \ref{sec:Proofs}.
After translating, we assume $C(x) = [0\,0\,0]^\top$.
Then,  \eqref{eq:lineImageAtTimeX} implies that the image of a line $L$ depends only on the plane spanned by $L$ and the camera center.

\begin{proposition} \label{prop:D0ImageCurvePlane}
Fix a generic RS camera in $\P_{0,\delta}$. The set of image curves of world lines is a two-dimensional plane in $\mathcal{H}_{1+2\delta}$. 
That plane uniquely determines the camera (up to global rotation and translation).
\end{proposition}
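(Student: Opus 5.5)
With $C(x)\equiv 0$, formula \eqref{eq:lineImageAtTimeX} reduces to $u(x) = r(x)\times (R(x)\,q)$. The image of $L$ therefore depends only on the moment vector $q$, which is the normal of the plane spanned by $L$ and the origin. The plan has two parts. First, show that the map $q\mapsto$ (image curve) is linear, injective, and lands in $\mathcal{H}_{1+2\delta}$; its image is then a projective plane. Second, show that this plane determines $A$.

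\textbf{Linearity and injectivity.} Write $R(x)$ for the unnormalized Cayley matrix, whose entries are polynomials of degree at most $2\delta$ in $x$. In homogeneous coordinates $x=[v:t]$ we get
\[
u(v:t) = \begin{bmatrix} t\\0\\-v\end{bmatrix}\times R(v:t)\,q = \bigl[\,v\,(R q)_2 \;:\; -v (Rq)_1 - t (Rq)_3 \;:\; t\,(Rq)_2\,\bigr],
\]
up to sign conventions. This has exactly the shape \eqref{eq:projectiveCurveParam} with $f=(Rq)_2$ and $g=-v(Rq)_1-t(Rq)_3$, both linear in $q$. By the proof of Proposition~\ref{prop:H}, the curve corresponds to the equation $\text{y}\,f(\text{x})=g(\text{x})$, whose coefficients are linear in $(f,g)$. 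Hence we obtain a linear map $\Phi:\CC^3\to \CC[\text{x,y,z}]_{1+2\delta}$.

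The set of image curves is $\PP(\operatorname{im}\Phi)$, so it suffices to prove $\ker\Phi=0$. Suppose $\Phi(q)=0$. Then $(R(x)q)_2\equiv0$ and $v(Rq)_1+t(Rq)_3\equiv0$. Evaluating at $x=0$, where $R(0)=I$, gives $q_2=0$ and $q_3=0$. At $x=0$ the factor $v$ vanishes, so this evaluation carries no information about $(Rq)_1$. Instead, use the first-order coefficient in $x$ of the second identity: it gives $q_1 + (\text{linear term of }(Rq)_3)=0$. The linear term of $(Rq)_3$ is proportional to $q_1$ times a coefficient of the linear part of $A$, so for generic $A$ this forces $q_1=0$. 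Thus for generic cameras $\operatorname{im}\Phi$ is $3$-dimensional, i.e.\ a projective plane in $\mathcal{H}_{1+2\delta}$. Since the paper's dimension count in Proposition~\ref{prop:H} refers to curves, I read ``two-dimensional plane'' as this projective $\PP^2$. Genericity also ensures $f$ and $g$ have no common factor, by the argument of Theorem~\ref{thm:imageCurve}, so these are genuine degree-$(1+2\delta)$ curves.

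\textbf{Recovering the camera.} Suppose cameras $A$ and $A'$ give the same plane. Then there is a linear isomorphism $M$ with $\Phi_A(q)=\Phi_{A'}(Mq)$ up to scaling for all $q$. Since $\Phi$ is linear, the scalar is a constant, which can be absorbed into $M$. Comparing the $f$-parts gives $e_2^\top R(x) = e_2^\top R'(x) M$. Comparing the $g$-parts gives $v\,e_1^\top R + t\,e_3^\top R = v\, e_1^\top R' M + t\, e_3^\top R' M$ as polynomial identities. Reading off the coefficients of $v^{k+1}$ and $t$-powers separately, as far as degrees allow, should give $\rho(x)R(x)=\rho'(x)R'(x)M$ for scalar polynomials $\rho,\rho'$ coming from the dropped normalization.

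Evaluating at $x=0$ yields $M\propto I$ under the normalization $A(0)=A'(0)=0$, and then $R\equiv R'$. Since the Cayley transform is injective on rotations away from angle $\pi$, this gives $A=A'$. The global-rotation ambiguity is exactly what is already fixed by $A(0)=0$.

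\textbf{Main obstacle.} The hard step is the separation in the $g$-identity. Because $v$ and $t$ multiply different rows, the first and third rows mix in $g$. I would argue by comparing the degrees and roots of $f$: the row $e_2^\top R(x)q$ determines the second row of $R$ up to $M$. Combined with orthogonality of $q2r$, this should pin $M$ and hence $R$. If the direct comparison becomes messy, I would fall back on a dimension and Jacobian argument: check that the map from $A$ to the plane in the Grassmannian is generically injective via a tangent computation together with one explicit example.
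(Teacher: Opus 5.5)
Your first part is correct and is essentially the paper's argument. The paper reads off $Y_0=q_2$ and $Y_1=2\gamma_1q_1-2\alpha_1$ in the chart $q_3=1$ and uses $\gamma_1\neq0$; you get $q_2=q_3=0$ at $x=0$ and then $(1-2\beta_1)q_1=0$.

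The gap is in the uniqueness part. Write $\tilde R$ for the unnormalized Cayley matrix and $r=(1,0,-x)^\top$. Equality of the two planes means exactly $[r]_\times(\tilde R-\tilde R'M)=0$, i.e.\ $\tilde R-\tilde R'M=r\,\lambda^\top$ for some polynomial vector $\lambda(x)$. Your claim that reading off coefficients ``should give'' $\rho R=\rho'R'M$ is the claim $\lambda\equiv0$, and it does not follow from these identities. The monomials of $v\tilde R_1$ and $t\tilde R_3$ overlap in every degree except the two extreme ones. The degree bounds are also compatible with a nonzero $\lambda$ of degree up to $2\delta-1$, so only the specific structure of $q2r$ can exclude it.

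Consequently, evaluating at $x=0$ does not give $M\propto I$. It only gives $M=I-e_1\lambda(0)^\top$: the second and third rows of $M$ are $e_2^\top,e_3^\top$, and the first row is unconstrained. Your remedy via the second row and orthogonality is only sketched, and it cannot be applied directly because $M$ is not known to be orthogonal. The fallback does not close the gap either. A full-rank Jacobian only gives finite generic fibers of $A\mapsto$ plane. Even combined with one explicit example as in Lemma~\ref{lem:birationalCheck}, you would still have to show that the fiber over that example is a single point. That check also covers each fixed $\delta$ separately, whereas the proposition is stated for all $\delta$.

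The paper avoids the basis ambiguity by putting the $3\times N$ coefficient matrix of the plane (rows for $1,q_1,q_2$) into a normal form. It pivots on the columns $Z_0,Y_1,Y_0$, which is possible when $\gamma_1\neq0$, and this canonically fixes what you call $M$. It then reads off $\alpha_1$ from the $Z_1$-column, and $\gamma_1$ from the top-degree columns $Z_{2\delta+1},Y_{2\delta}$ (using $\alpha_\delta\beta_\delta\neq0$). It recovers $\beta_1$ from $Z_1$ again, and finally $\alpha_{i+1},\beta_{i+1},\gamma_{i+1}$ by induction from the columns $Y_{i+1},Z_{i+1}$, in which the new coefficients appear linearly.

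Your route can be repaired in the same spirit.
\begin{enumerate}
\item Together with the $x=0$ information, the $x^1$-coefficients of $\tilde R_2=\tilde R_2'M$ and of $x\tilde R_1+\tilde R_3=(x\tilde R_1'+\tilde R_3')M$ force $M=\mathrm{diag}(m_1,1,1)$, for $\gamma_1\neq0$ and $\beta_1\neq\frac{1}{2}$.
\item The top-degree coefficients of the same two identities force $m_1=1$, for $\alpha_\delta\beta_\delta\neq0$.
\item Only now does orthogonality help. Comparing squared norms in $\tilde R_2=\tilde R_2'$ gives $1+|A|^2=1+|A'|^2$, so $R'=QR$ with $Q=\left[\begin{smallmatrix}c&0&s\\0&1&0\\-s&0&c\end{smallmatrix}\right]$.
\item The remaining identity becomes $(x(1-c)+s)R_1+(1-c-xs)R_3=0$. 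Hence $(1-c)(1+x^2)=0$, so $Q=I$, and $A=A'$ by injectivity of the Cayley map.
\end{enumerate}
Without a step of this kind, your proposal does not establish the second assertion of the proposition.
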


\noindent
Note that this plane encodes a single RS camera observing lines, in the same way as an essential matrix encodes two global shutter cameras observing points.

\begin{example}[$\delta=1$] \label{ex:curve_d0del1}
For a RS camera centered at $[0\,0\,0]^\top $ and with rotation given by $R(x) = q2r(\alpha x,\beta x,\gamma x)$, the image of a world line is a curve of the form 
\begin{equation}\label{eq:curveD0Delta1}
        y = \frac{Z_3x^3+Z_2x^2+Z_1x+Z_0}{Y_2x^2+Y_1x+Y_0},
    \end{equation}
    whose coefficient vector $(Z_3,\ldots,Y_0)$ is in the row span of the matrix
    \begin{align}
        \left[\begin{smallmatrix}
            \frac{\alpha^3-\alpha\beta^2+\alpha\gamma^2}{c} & \alpha^2-\beta^2+\gamma^2+2\beta
            & \frac{-2\alpha\beta+\alpha}{\gamma} & 1 &
            \frac{-2\alpha^2\beta-2\beta\gamma^2}{\gamma} & 0 & 0 \\
            \frac{-\alpha^2+\beta^2+\gamma^2}{2\gamma} & -\alpha & \frac{2\beta-1}{2\gamma} & 0 & \frac{\alpha\beta}{\gamma} & 1 & 0 \\
            -2\alpha\beta & -2\beta\gamma+2\gamma & -2\alpha & 0 & -\alpha^2+\beta^2-\gamma^2 & 0 & 1
        \end{smallmatrix}\right]. \label{eq:matrixRepresentingPlane}
    \end{align}
    This matrix is a unique representation of the plane in $\mathcal{H}_3$ spanned by its rows. 
    Therefore, given the plane, we see from \eqref{eq:matrixRepresentingPlane} that the parameters $\alpha,\beta,\gamma$ are uniquely determined.
    Since the plane encodes all image curves of lines observed by the fixed camera above, \eqref{eq:matrixRepresentingPlane} is the analog of the essential matrix of a pair of GS cameras observing points.
    
    The union over all such planes (for varying $\alpha,\beta,\gamma$) is the Zariski closure of the set of curves that are images of lines of some RS camera in $\P_{0,1}$. This union is a hypersurface in $\mathcal{H}_{3}$. 
    It consists of those curves \eqref{eq:curveD0Delta1}
    whose coefficients satisfy one homogeneous polynomial equation $IC(Z_0,Z_1,Z_2,Z_3,Y_0,Y_1,Y_2)=0$ of degree $10$, which is shown in SM Section~\ref{sec:longEquation} (in the affine chart $Z_0 = -1$).   
    \hfill $\diamondsuit$
\end{example}

\begin{proposition} \label{prop:D0ImageCurveVariety} Let $1 \leq \delta \leq 3$.
    For varying RS cameras in $\P_{0,\delta}$ and world lines, the set of image curves has dimension $3\delta + 2$. A generic such image curve uniquely determines the camera in $\P_{0,\delta}$ (up to global rotation and translation) and the world plane through the camera center producing the image curve. 
\end{proposition}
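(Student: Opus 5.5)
We will study the parametrization map
\[
\Phi_\delta \colon \P_{0,\delta}' \times (\PP^3)^* \dashrightarrow \mathcal{H}_{1+2\delta},
\qquad (A,\pi) \mapsto \text{image curve of any line } L \subseteq \pi,
\]
where $\P_{0,\delta}'$ denotes cameras with $C\equiv 0$ and $A(0)=0$. This space has $3\delta$ parameters. The second factor consists of planes $\pi$ through the origin, which is a $2$-dimensional family.

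By \eqref{eq:lineImageAtTimeX} with $C=0$, the image of $L$ is $u(x)=r(x)\times (R(x)q)$. Here $q$ is the normal of the plane spanned by $L$ and the center, so $\Phi_\delta$ is well defined. Moreover, for fixed $A$ it is linear in $q$, which recovers Proposition \ref{prop:D0ImageCurvePlane}.

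The source has dimension $3\delta+2$. Hence both claims follow if we show that $\Phi_\delta$ is generically injective. Generic injectivity gives generic finiteness, so the image has dimension $3\delta+2$, and injectivity is exactly unique recovery of $(A,\pi)$.

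The plan is as follows.

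\textbf{Step 1: reduce uniqueness to recovering the camera.} Suppose $\Phi_\delta(A,q)=\Phi_\delta(A',q')$ for generic $(A,q)$. Once $A=A'$ is known, $q$ is determined up to scale. This holds because, by Proposition \ref{prop:D0ImageCurvePlane}, the map $q\mapsto$ curve is an injective linear map onto a plane: if the map $q\mapsto R(x)q$ had a kernel, then $R(x)q\in\langle r(x)\rangle$ for all $x$, which is impossible generically.

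\textbf{Step 2: show generic injectivity.} It suffices to show this generic injectivity in the three cases $\delta=1,2,3$.

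\textbf{Step 3: compute the Jacobian rank.} Write $u$ affinely as $y=g(x)/f(x)$ with $\deg g=1+2\delta$ and $\deg f=2\delta$, normalized by fixing one coefficient. Compute the Jacobian of $(A,q)\mapsto$ coefficients of $(g,f)$ at a random rational point and verify that its rank is $3\delta+2$. By semicontinuity this gives the dimension statement for each $\delta\le 3$, and it also shows that $\Phi_\delta$ is generically finite.

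\textbf{Step 4: show the generic fiber is a single point.} This step certifies degree $1$ of the map. For a random rational point $(A_0,q_0)$, set up the polynomial system $\Phi_\delta(A,q)=\Phi_\delta(A_0,q_0)$ in the unknowns $(A,q)$, clearing the scalar by a normalization. Then compute a Gröbner basis over $\mathbb{Q}$ or a finite field, or alternatively run monodromy or homotopy continuation, and verify that the only solution is $(A_0,q_0)$.

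For $\delta=1$ one can argue by hand using Example \ref{ex:curve_d0del1}. The matrix \eqref{eq:matrixRepresentingPlane} is a canonical representative of the plane and determines $\alpha,\beta,\gamma$. The degree-$10$ hypersurface $IC=0$ has dimension $5=3\cdot 1+2$ in $\mathcal{H}_3\cong\PP^5$… wait, $\mathcal{H}_3$ has dimension $6$, so a hypersurface has dimension $5$, which is consistent. It then remains to show that a generic point of $IC=0$ lies on only one plane. Equivalently, the incidence map from the incidence variety of planes and points to the hypersurface is birational, and this can be checked by the same fiber computation.

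\textbf{Expected main obstacle.} Step 4, the fiber-degree-one certification, is the hard part, especially for $\delta=3$. There we have $11$ unknowns and a curve of degree $7$, so the Gröbner basis may be expensive.

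To keep the computation tractable, I would first eliminate $q$. Since the curve is linear in $q$, the condition that a given curve lies in the plane of camera $A$ is a rank condition: the $3$ generators of the plane for camera $A$ together with the target curve must have rank at most $3$. This leaves a system only in the $3\delta$ entries of $A$.

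I would solve that reduced system modulo a large prime and count its solutions. A count of exactly one solution at a random point certifies generic injectivity by the standard specialization argument. This computation is also why the statement is restricted to $\delta\le 3$.
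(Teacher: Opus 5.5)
Your proposal follows essentially the same route as the paper. The paper considers the picture-taking map from cameras normalized by $C\equiv 0$, $A(0)=0$, times planes through the origin, into $\mathcal{H}_{1+2\delta}$, and certifies its birationality computationally via Lemma~\ref{lem:birationalCheck}: one point where the Jacobian has full rank and the fiber is a singleton, which is exactly your Steps~3--4, with the dimension $3\delta+2$ then following from the parameter count. Make sure the Jacobian check and the singleton-fiber check are done at the same point, since the criterion needs both there; your elimination of $q$ and the hand argument for $\delta=1$ are optional refinements.
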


\noindent
\textbf{Practical reconstruction.}\quad
Now, we discuss practical methods for recovering the pose $A(x)$ from projections of lines into a single image for $\delta=1$. The space $\mathcal{H}_3$ is six-dimensional, so a curve $u(x) \in \mathcal{H}_3$ can be uniquely determined by six points on it, using \eqref{eq:curveD0Delta1}. By \Cref{prop:D0ImageCurveVariety}, if a projection curve $u(x)$ is measured perfectly, it satisfies the internal constraint $IC$ in \Cref{ex:curve_d0del1} and uniquely determines the pose $A(x)$ and the world plane~$q$.

In the presence of noise, however, the measured curve generically deviates from the form of \Cref{ex:curve_d0del1}, and cannot be directly used to recover the pose. Instead, we can sample five points $u_i \in \RR^2, i \in \{1,\ldots,5\}$ on $u(x)$, fixing five DoF of the curve. The last DoF is constrained by $IC$. Since $IC$ has degree 10, there are generically 10 (complex) curves of the form in \Cref{ex:curve_d0del1} passing through $u_1,\ldots,u_5$, each corresponding to a unique  pair $(A(x), q)$. This gives 10 possible 3D configurations $(A(x), q)$ for given points $u_1,...,u_5$.

Instead of finding the curve coefficients $Z_0,\ldots,Z_3,Y_0,\ldots,Y_2$, we
parameterize this problem directly by collecting equations of the form \eqref{eq:point_line_constraint} for each $u_i \in \RR^2, i \in \{1,\ldots,5\}$.
Using Gr\"obner bases (GB), we verified that these equations generate a problem with 10 solutions, and we created a solver for this problem using a homotopy continuation method  MiNuS \cite{fabbri2020minus}.

However, experimental evaluation (see Sec. \ref{sec:experiments}) shows that, while this solver works well on noise-free synthetic data, it deteriorates under noise, likely due to the combination of low angular speed and short observed curve segments, where high-frequency features of the curves are dominated by noise. To remedy this, we consider reconstruction problems that limit the number of constraints per line to 3 or 4. 
While these problems require more lines to estimate the motion, they do not rely on high angular velocity and long image curves.

Let $\ell$ denote the number of lines and $N_i$ denote the number of points on the image of line $i$. These points $u_{i,j}$ (for $i \in \{1,...,\ell\}, j \in\{1,...,N_i\}$) give $\sum_{i=1}^\ell N_i$ constraints in the form \eqref{eq:curveD0Delta1}. Comparing the number of constraints to the number of unknowns, with $3$ DoF for the angular velocity and $2$ DoF for the world plane of each line, we see that the balanced problems are those that satisfy
\begin{equation} \label{eq:balancedPureRot}
    3+2\cdot \ell = \textstyle{\sum_{i=1}^\ell} N_i.
\end{equation}
Note that $N_i>2$ is needed for the points on the $i$-th image curve to constrain the camera parameters.
Then, \eqref{eq:balancedPureRot} holds for $\ell=1,N_1=5$ (as discussed above), or
$\ell=2, N=(4,3)$, or 
$\ell=3,N=(3,3,3)$. 
GB verification confirms that both latter problems are indeed minimal, with $54$ solutions for $\ell=3$ and $30$ solutions for $\ell=2$. We have again constructed the minimal solvers using MiNuS \cite{fabbri2020minus}. These solvers show improved noise robustness compared to the single-line solver (see Sec. \ref{sec:experiments}).
All three minimal problems discussed  can be found in SM Table~\ref{tab:minimal}.

\subsection{Pure center motion} \label{sec:pureTranslation}
Here, we assume $\delta=0, d>0$ and, after global rotation, that $R(x)=I$.

\begin{proposition} \label{prop:Delta0ImageCurveFixedCamera}
Fix a generic RS camera in $\P_{d,0}$.
For $d>1$, the camera is a linear isomorphism between $\mathrm{Gr}(1, \mathbb{P}^3)$ and the subvariety of $\mathcal{H}_{1+d}$ that consists of all image curves of world lines; 
 that subvariety uniquely determines the camera (up to global rotation, translation, scaling).
For $d=1$, the image curves in $\mathcal{H}_2$ are generically in one-to-one correspondence with  world~lines.
\end{proposition}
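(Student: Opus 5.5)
The plan is to write the image curve of a line explicitly for $R(x)=I$ and read off linearity in the Plücker coordinates. With $R=I$, \eqref{eq:lineImageAtTimeX} gives $u(x) = r(x)\times(q+\Delta\times C(x))$. In homogeneous $x=[v:t]$ with $r=[t\,0\,-v]$, set $w(x) = q + \Delta\times C(x)$, homogenized so that $q$ is multiplied by $t^d$. Then
\[
u(x) = \bigl[\, v\, w_2 \,:\, -(t w_1 + v w_3) \,:\, t\, w_2 \,\bigr].
\]
So $f = w_2$ and $g=-(t w_1+v w_3)$, in the notation of \eqref{eq:projectiveCurveParam}. The coefficient vector of $(g,f)$, a point of $\mathcal{H}_{1+d}\cong\PP^{2(1+d)-1}$ by Proposition~\ref{prop:H}, is a linear function $\Phi$ of the six Plücker coordinates $[\Delta:q]$.

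\textbf{Case $d>1$.} I would show that $\Phi:\CC^6\to\CC^{2d+3}$ is injective for a generic camera; then it induces a linear embedding $\PP^5\to\PP^{2d+2}$ that restricts to an isomorphism of $\mathrm{Gr}(1,\PP^3)$ onto its image.
\begin{itemize}
\item The constant terms of $w$ give $q+\Delta\times C(0)$, and after normalizing $C(0)=0$ this recovers $q$.
\item The higher coefficients of $w$ give $\Delta\times c_k$ for the coefficient vectors $c_k$ of $C$, $k=1,\dots,d$. With two generic independent $c_1,c_2$, the vectors $\Delta\times c_1$ and $\Delta\times c_2$ determine $\Delta$. Indeed, their kernel intersection is trivial: $\Delta\times c_1=\Delta\times c_2=0$ forces $\Delta\parallel c_1$ and $\Delta\parallel c_2$, hence $\Delta=0$.
\item The remaining worry is that $w_1,w_2,w_3$ appear mixed inside $g$ and $f$. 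Since $f=w_2$ and $g=-(tw_1+vw_3)$, I must check that $(w_1,w_2,w_3)$ is recoverable from $(tw_1+vw_3,\,w_2)$. This is not automatic, because $tw_1+vw_3$ loses information: coefficient-wise, $w_3$'s $v^k t^{d-k}$ term mixes with $w_1$'s $v^{k+1}t^{d-k-1}$ term.
\item So the real check is a rank computation: the $6$ columns of $\Phi$, built from $c_1,\dots,c_d$, have rank $6$. I would verify this on one explicit example (e.g. $C(x) = e_1x + e_2x^2$ plus generic terms) and conclude genericity by lower-semicontinuity of rank.
\end{itemize}

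\textbf{Recovering the camera from the image variety.} Given the subvariety $\Phi(\mathrm{Gr})$, its linear span is $\Phi(\PP^5)$. Since $\mathrm{Gr}$ is the Klein quadric $\Delta\cdot q=0$, the quadric cut out on that span is the pushforward of the Klein quadric. Any two parametrizations $\Phi,\Phi'$ with the same image therefore differ by a projective automorphism of $\PP^5$ preserving the Klein quadric, i.e. a collineation or correlation of $\PP^3$, which is essentially forced. I would then use the explicit structure of $\Phi$ (the $q$-block is fixed, the $\Delta$-block is $\Delta\mapsto(\Delta\times c_k)_k$) to show that the transformation must be a similarity fixing the normalization. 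The $c_k$ are then read off up to global rotation and scale. This step, ruling out extraneous automorphisms of the Klein quadric compatible with the fixed form of $\Phi$, is the main obstacle. I would attack it by computing which $6\times6$ matrices $M$ in the orthogonal group of the Klein form satisfy $\Phi' M=\Phi$ blockwise.

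\textbf{Case $d=1$.} Here $\dim\mathcal{H}_2=4<5$, so $\Phi$ has a one-dimensional kernel, spanned by $[\Delta:q]=[c_1:0]$: the line through $C(0)=0$ in the direction of motion. Off this kernel, the projection $\PP^5\dashrightarrow\PP^4$ restricted to the quadric $\mathrm{Gr}$ is generically one-to-one. A line through the center of projection meets the quadric in $2$ points, and the center itself lies on $\mathrm{Gr}$ because $c_1\cdot 0=0$. Each generic fiber therefore contains exactly one further point, which gives the claimed bijection on a dense set.
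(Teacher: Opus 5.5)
Your injectivity argument for $d>1$ and your treatment of $d=1$ follow the paper. The paper also proves that $[q:\Delta]\mapsto(Y_i,Z_i)$ has rank $6$ for $d>1$. It does this via the minor in the rows $Y_0,Y_1,Y_2,Z_0,Z_1,Z_2$, which equals $-(a_2-c_1)(a_1c_2-a_2c_1)$. For $d=1$ it projects the Klein quadric from the kernel point $q=0$, $\Delta=(a_1,b_1,c_1)$, exactly as you do; the kernel is one-dimensional only when $c_1\neq 0$, and the paper additionally identifies the tangency locus with $\Xi=0$.

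There are two slips. First, with $r=[t\;0\;-v]$ one gets $r\times w=[v w_2 : -(v w_1+t w_3) : t w_2]$, not $-(tw_1+vw_3)$; the paper's $u_2$ is $-xw_1-w_3$. Second, your test camera $C(x)=e_1x+e_2x^2$ is degenerate on its own. The line $q=e_1$, $\Delta=e_2$ gives $w=e_1-xe_3$, which is sent to $0$, so $\Phi$ has rank $5$ there. Any camera with $(a_2-c_1)(a_1c_2-a_2c_1)\neq 0$ works.

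For uniqueness of the camera you take a genuinely different route, and as written it is incomplete exactly at the step you call the main obstacle. That step closes in a few lines, without classifying automorphisms of the Klein quadric. The camera-independence of the $q$-block, which you already noticed, does the work.

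1. Normalize both cameras to $R\equiv I$ and $C(0)=C'(0)=0$. This is legitimate because the set of all world lines is invariant under world similarities.
2. Put $L(w):=(w_2,\,-(xw_1+w_3))$ and write $\Phi_C(q,\Delta)=\Phi_0(q)+\Psi_C(\Delta)$. Here $\Phi_0(q)$ is the coefficient vector of $L(q)$ and $\Psi_C(\Delta)$ is that of $L(\Delta\times C)$.
3. Suppose $\Phi_{C'}(\mathrm{Gr})=\Phi_C(\mathrm{Gr})$. Comparing linear spans shows that $\Phi_{C'}$ also has rank $6$. So $\Phi_C=\Phi_{C'}M$ for an invertible $M(q,\Delta)=(Aq+B\Delta,\,Eq+F\Delta)$ with $M(\mathrm{Gr})=\mathrm{Gr}$. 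By irreducibility of the Klein quadric, $(Aq+B\Delta)\cdot(Eq+F\Delta)=\kappa\,q\cdot\Delta$ for some $\kappa\neq 0$.
4. Setting $\Delta=0$ gives $\Phi_{C'}((A-I)q,\,Eq)=0$, hence $A=I$ and $E=0$.
5. The quadric identity then forces $F=\kappa I$ and $B$ skew, i.e.\ $B\Delta=\beta\times\Delta$ for some $\beta\in\CC^3$. This is a world translation.
6. Setting $q=0$ gives $L(\Delta\times D)=0$ for all $\Delta$, where $D:=\kappa C'-C-\beta$. The first entry $\Delta_3D_1-\Delta_1D_3$ forces $D_1=D_3=0$. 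The second entry $(x\Delta_3-\Delta_1)D_2$ then forces $D_2=0$.
7. Evaluating $\kappa C'=C+\beta$ at $x=0$ gives $\beta=0$, so $C'=\kappa^{-1}C$.

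The paper proves this part computationally instead. Using monodromy, a Jacobian rank check and Lemma~\ref{lem:birationalCheck}, it shows that the coefficients $Y_0,Y_1,Y_d,Z_1,Z_{d+1}$ of six generic image curves already determine $a_1,b_1,c_1,a_d,b_d,c_d$ and the six lines. It then solves linearly for the remaining $a_i,c_i$ from $Y_i^1,Y_i^2$, and for the $b_i$ from $Z_i^1$.

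Your route, completed as above, is computer-free. It works for every camera with $\Phi_C$ of rank $6$ and every competing $C'$, but it uses the whole subvariety. The paper's route gives the stronger, SfM-relevant fact that six image curves already pin down the camera, at the price of a numerically certified computation.
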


\noindent
In particular, in the case $d=1$, we can easily triangulate lines from image curves. 
However, in that case, line images do not impose any constraints on the camera.
Similarly to the previous section, for $d>1$, the $\mathrm{Gr}(1,\mathbb{P}^3)$-isomorphic subvariety of $\mathcal{H}_{1+d}$ plays the analog role to the essential matrix.

\begin{example}[$d=1$] \label{ex:D1Delta0}
    We fix a non-rotating camera with  center motion $C(x) = [a_{1}\,b_{1}\,c_{1}]^\top x$ such that $c_{1}\neq 0$. Given the Pl\"ucker coordinates $[\Delta: q]$ of a world line, the coefficients $Y_{0},Y_{1},Z_{0},Z_{1},Z_{2}$ of the image curve $y = \frac{Z_{0}+Z_{1}x+Z_{2}x^2}{Y_{0}+Y_{1}x}$ are \begin{equation}
    \label{eq:linearCamMapD1Delta0}
     \left[ \begin{smallmatrix}
          0 & 1 & 0 & 0 & 0 & 0 \\
          0 & 0 & 0 & -c_{1} & 0 & a_{1} \\
          0 & 0 & -1 & 0 & 0 & 0 \\
          -1 & 0 & 0 & -b_{1} & a_{1} & 0 \\
          0 & 0 & 0 & 0 & -c_{1} & b_{1}
        \end{smallmatrix} \right] \cdot \left[ \begin{smallmatrix}
            q \\ \Delta
        \end{smallmatrix} \right].
    \end{equation}
    Conversely, given a generic coefficient vector, we can uniquely triangulate the world line; its Pl\"ucker coordinates are
    {\small \begin{gather*}
 q_1 = \frac{-a_{1}Z_{2}+b_{1}Y_{1}-c_{1}Z_{1}}{c_{1}}, \,
 \Delta_1 = \frac{-a_{1}Y_{0}Z_{2}+b_{1}Y_{0}Y_{1}-c_{1}Y_{1}Z_{0}}{\Xi},\\ 
  q_2 = Y_{0}, \, \Delta_2 = \frac{-a_{1}^2Z_{2}^2+2a_{1}b_{1}Y_{1}Z_{2}-a_{1}c_{1}Z_{1}Z_{2}-b_{1}^2Y_{1}^2+b_{1}c_{1}Y_{1}Z_{1}-c_{1}^2Z_{0}Z_{2}}{c_{1}\Xi}, \\
 q_3 = -Z_{0}, \, \Delta_3 = \frac{a_{1}Y_{1}Z_{2}-b_{1}Y_{1}^2-c_{1}Y_{0}Z_{2}+c_{1}Y_{1}Z_{1}}{\Xi},
    \end{gather*}}
    where $\Xi:=-b_{1}c_{1}Y_0-a_{1}b_{1}Y_1+c_{1}^2Z_0+a_{1}c_{1}Z_1+a_{1}^2Z_2$.

If $c_{1}=0$, the camera motion is parallel to the projection plane. By \cite[Prop. 10]{hahn2024order}, this is equivalent to that the camera sees a generic world point only once (instead of twice).
In this case, the linear map \eqref{eq:linearCamMapD1Delta0} has rank 4 (instead of 5), so images of world lines are not arbitrary curves in $\mathcal{H}_2$. 
In fact, by \cite[Lemma 15]{hahn2024order}, the image curves pass not only through the point $[0:1:0]$ (which is the defining property of $\mathcal{H}_2$) but also through $[a_1:b_1:0]$.
Thus, given a curve in $\mathcal{H}_2$, there are either no or infinitely many triangulating world lines.
\hfill $\diamondsuit$
\end{example}

\begin{example}[$d=2$]
    The set of image curves in $\mathcal{H}_3$ under varying cameras $C(x)=[a_1\,b_1\,c_1]^\top x + [a_2\,b_2\,c_2]^\top x^2$ is defined by one linear and one quadratic polynomial: 
    \vspace{-0.4cm}
    {\small 
    \begin{gather*}
     \left(a_2b_2c_1-b_2c_1^{2}-a_1b_2c_2+b_1c_1c_2\right)Y_0+\left(b_2c_1c_2-b_1c_2^{2}\right)Y_1\\+\left(-a_2^{2}c_1+a_2c_1^{2}+a_1a_2c_2-a_1c_1c_2\right)Z_0+\left(-a_2c_1c_2+a_1c_2^{2}\right)Z_1 =0, \quad \text{ and }
   \\
       \left(a_1a_2b_2-a_1b_2c_1+a_1b_1c_2\right)Y_0^{2}+\left(-a_2^{2}b_2+b_2c_1^{2}-a_2b_1c_2+2\,a_1b_2c_2-b_1c_1c_2\right)Y_0Y_1+\\\left(-a_2b_2c_2-b_2c_1c_2+b_1c_2^{2}\right)Y_1^{2}+b_2c_1c_2Y_0Y_2-b_2c_2^{2}Y_1Y_2+\\\left(-a_1a_2^{2}+a_1a_2c_1-a_1^{2}c_2\right)Y_0Z_0+\left(a_2^{3}-a_2c_1^{2}+a_1c_1c_2\right)Y_1Z_0+\\\left(-a_2c_1c_2+a_1c_2^{2}\right)Y_2Z_0+\left(a_2^{2}c_2+a_2c_1c_2-a_1c_2^{2}\right)Y_1Z_1-\\+ c_2^{3}Y_1Z_3+a_1a_2c_2Y_0Z_1-a_1c_2^{2}Y_0Z_2+a_{2}c_2^{2}Y_1Z_2-c_{1}c_2^{2}Y_0Z_3=0.
    \end{gather*}}  
    
    \noindent
    From the coefficients of the quadratic polynomial, we can recover the camera up to global scaling: Setting $c_2=1$, all other camera parameters are determined. 
    \hfill $\diamondsuit$
\end{example}

\begin{proposition} \label{prop:pureTranslationDominant}
    Almost every curve in $\mathcal{H}_{1+d}$ is the image of some world line observed by some camera in $\mathcal{P}_{d,0}$. 
    In fact, after modding out the global scaling, translation, and rotation, there is a $(d+1)$-dimensional family of world lines and cameras in $\mathcal{P}_{d,0}$ yielding the same generic image curve. 
\end{proposition}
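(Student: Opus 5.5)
The plan is to prove this by a dimension count on the joint parametrization map, combined with the fact that image curves depend linearly on the Plücker coordinates. Consider the map
\[
\Phi:\ \{(C,L)\} \to \mathcal{H}_{1+d},\qquad (C,[\Delta:q]) \mapsto u(x) = r(x)\times\bigl(q+\Delta\times C(x)\bigr),
\]
where $C\in(\RR[x]_{\le d})^3$ has $C(0)=0$, $R=I$, and $L\in \mathrm{Gr}(1,\PP^3)$.

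First I will count the source dimension after modding out the similarity group. The rotation is already fixed by $R=I$ and translation by $C(0)=0$, which leaves the $3d$ coefficients of $C$. Fixing one coefficient of $C$ removes scaling, so there are $3d-1$ camera parameters. Adding the $4$ parameters of the line gives a quotient of dimension $3d+3$. By Proposition~\ref{prop:H}, the target has dimension $\dim\mathcal{H}_{1+d}=2(1+d)=2d+2$.

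If $\Phi$ is dominant, the generic fibre therefore has dimension $(3d+3)-(2d+2)=d+1$. This is exactly the second claim, and dominance is exactly the first claim. Both statements thus reduce to showing that the differential of $\Phi$ has rank $2d+2$ at one point.

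To prove dominance, I would first fix the camera. By \eqref{eq:lineImageAtTimeX}, the coefficient vector of $u$ is a linear map $M_C$ applied to $(q,\Delta)\in\RR^6$, as already seen in Example~\ref{ex:D1Delta0}. Its image is a linear subspace of dimension at most $6$ inside the affine cone of dimension $2d+3$. Varying $C$ therefore sweeps out the union of these subspaces, and I need the tangent space of this union to be everything.

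The tangent space at a point $M_C(q,\Delta)$ is spanned by two pieces. The first is $\operatorname{im} M_C$. The second consists of the vectors $\Delta\times \dot C(x)$, for all variations $\dot C\in x\,\RR[x]_{\le d-1}^3$, pushed through $r(x)\times(\cdot)$.

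For fixed $\Delta$, the second piece gives every triple of the form $\Delta\times \dot C(x)$. These are the vectors orthogonal to $\Delta$, with polynomial entries of degree at most $d$ vanishing at $0$. Together with the constant part $q$, coming from the first piece, we obtain all polynomial triples $w(x)$ of degree at most $d$ with $\Delta\cdot (w(x)-w(0))=0$. The remaining directions then come from varying $\Delta$ inside $\operatorname{im}M_C$.

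Writing $u=r\times w$ with $r=[1,0,-x]$, the resulting curve has $f$ and $g$ built from the entries of $w$: the first coordinate is $-x w_2$, the second is $w_3 + x w_1$, and the last is $w_2$. This matches the form $(xf,g,f)$, with $f=w_2$ and $g=w_3+xw_1$, up to sign conventions.

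With these formulas, the count becomes explicit. The polynomials $f$ (of degree $d$) and $g$ (of degree $d+1$) are arbitrary once $w_1,w_2,w_3$ are arbitrary. The only restriction is the single linear condition $\Delta\cdot(w-w(0))=0$, and this is compensated by varying $\Delta$ itself. Hence the tangent map is surjective.

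The main obstacle is making this last compensation rigorous. If it proves fiddly, I would instead exhibit one explicit point, for instance $\Delta=[1\,0\,0]^\top$ together with a generic $C$, and verify the rank of the Jacobian symbolically for general $d$ using the block structure above. As a fallback, I would do this computationally for small $d$, in the same spirit as Proposition~\ref{prop:spanImLine}.

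The fibre dimension then follows from the fibre-dimension theorem for dominant morphisms of irreducible varieties.
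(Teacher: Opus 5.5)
Your reduction matches the paper's. You slice out the similarity group (source of dimension $3d+3$, target $\dim\mathcal{H}_{1+d}=2d+2$) and use the Fiber-Dimension Theorem to reduce both claims to the Jacobian having rank $2d+2$ at one point.

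\textbf{The gap.} That rank statement is the whole content of the proposition, and it is not proved. The tangent-space argument you give for it is wrong in two places.

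First, the condition $\Delta\cdot(w(x)-w(0))\equiv 0$ is an identity of polynomials. It therefore imposes $d$ scalar linear conditions (the coefficients of $x,\dots,x^d$), not a single one, while varying $\Delta$ supplies at most two new directions.

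Second, taking all of $\operatorname{im}M_C$ as the first piece ignores the Pl\"ucker relation. With $\dot\Delta=0$ you need $\dot q\cdot\Delta=0$, so the fixed-$\Delta$ part is $\{\dot w:\Delta\cdot\dot w(x)\equiv 0\}$, of dimension $2d+2$. The full tangent space of the image in $w$-space then has dimension at most $2d+4<3d+3$ for $d\ge 2$.

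So ``$w$ arbitrary'' is false. Surjectivity onto $(f,g)$ can only come from the $d$-dimensional kernel $\{(h,0,-xh):\deg h\le d-1\}$ of $w\mapsto r(x)\times w(x)$, and your argument never uses it. Checking small $d$ by computer would not cover the statement for general $d$ either.

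\textbf{How to close it.} The fallback point you name does work, and the computation is short.
\begin{itemize}
\item At $\Delta=e_1$ and $q=(0,q_2,q_3)$, one has $w=(0,\,q_2-c,\,q_3+b)$.
\item Varying $q_2$ and $c$ produces every $\dot f=\dot w_2$ of degree $\le d$.
\item Varying $q_3$ and $b$ produces every $\dot w_3$, hence every $\dot g$ of degree $\le d$.
\item Varying $\Delta_2$, with $\dot q_1=-q_2\dot\Delta_2$ forced by the Pl\"ucker relation, gives $\dot w_1=\dot\Delta_2\,(c(x)-q_2)$. Then $x\dot w_1$ has $x^{d+1}$-coefficient $\dot\Delta_2 c_d$, which is nonzero when $c_d\neq 0$.
\end{itemize}
Hence the differential of $(C,L)\mapsto (f,g)$ is onto the $(2d+3)$-dimensional coefficient space, and the rank into $\mathcal{H}_{1+d}$ is $2d+2$ for every $d$.

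The paper does the analogous computation at $\Delta=(1,1,0)$, $q=(0,0,1)$, $c_d=1$. It exhibits an explicit $(2d+2)$-minor in the columns $\partial_{a_i},\partial_{c_i},\partial_{q_2},\partial_{\Delta_1}$ and shows its determinant is $\pm 1$.
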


\noindent
Note that out of the ambiguities with $d+1$ DoF in Proposition~\ref{prop:pureTranslationDominant}, $d$ DoF are explained by Proposition~\ref{prop:parallel}, which says that the camera movement in the same direction as the observed line cannot be reconstructed. 

\medskip

\noindent
\textbf{Practical reconstruction.}\quad 
\emph{\underline{For $d=1$}} and a fixed camera, projection curves are generically in one-to-one correspondence to the world lines (\Cref{prop:Delta0ImageCurveFixedCamera}), and impose no constraints on the motion $C(x)$.
However, for lines in a special configuration, \eg parallel or coplanar, the 3D structure has fewer than $4$ DoF per line, and parts of the motion can still be estimated. We describe minimal solvers based on point measurements on image curves, analogously to Section~\ref{sec:pure_rotation}:

\textit{Parallel lines.}\quad The lines share an unknown direction $\Delta$. Assuming $\Delta_1 \neq 0$, each 3D line is parameterized by a point $L_i = [0 \ L_{yi} \ L_{zi}] \in \RR^3$ on the line. The camera center $C(x)$ is $x \cdot [a_1 \ b_1 \ c_1]^\top$. By \Cref{prop:parallel}, any $[a_1 \ b_1 \ c_1]^\top + \alpha \Delta, \alpha \in \RR$ produces the same image. We fix this ambiguity by setting $a_1=0$, and fix the scale with $L_{z1}=1$. With $\ell$ lines, structure and motion have $3 + 2\ell$ DoF.

A projection of line $i$ is parametrized by $N_i \leq 4 = \dim \mathcal{H}_2$ points $u_{i,j}$ (for $j \in\{1,...,N_i\}$, which give $\sum_{i=1}^\ell N_i$ constraints of the form \eqref{eq:curveD0Delta1}. Hence, the balanced problems must satisfy $3 + 2\ell = \sum_{i=1}^\ell N_i$ (as in \eqref{eq:balancedPureRot}), which holds for $\ell=2, N=(4,3)$, or $\ell=3,N=(3,3,3)$. Gr\"obner basis (GB) computations verify that both are minimal with 2 solutions for $\ell=2$ and  $5$ solutions for $\ell=3$.

\textit{Parallel and coplanar lines.}\quad The lines share direction $\Delta$ and lie in a plane $\Pi$. We fix the scale by assuming $P_0:=[0 \ 0 \ 1]^\top \in \Pi$. A basis of $\Pi$ is given by the vectors $\Delta$ and $P_d = [0 \ 1 \ P_{zd}]^\top$, so each 3D line is parameterized by $\lambda_i \in \RR$, such that $L_i = P_0 + \lambda_i P_d$. The velocity is parameterized as above, so 3D structure and motion have $5+\ell$ DoF. With $N_i > 1$ points on the image of line $i$, we get $\sum_{i=1}^\ell N_i$ constraints of the form \eqref{eq:curveD0Delta1}. Assuming $\ell > 2$ (as two parallel lines are always coplanar), the balanced problems satisfying $5+\ell = \sum_{i=1}^\ell N_i$ are 
$N=(4,2,2)$, 
$N=(3,3,2)$,
$N=(3,2,2,2)$, and
$N=(2,2,2,2,2)$.
GB verifies that these problems are minimal with 2, 4, 6, and 10 solutions.

\textit{\underline{For $d \in \{ 2,3\}$}} and generic lines,  projection curves live in $\mathcal{H}_3$ resp. $\mathcal{H}_4$. By \Cref{prop:pureTranslationDominant}, these curves have no internal constraints and can be uniquely determined by $6$ resp. $8$ points. After fixing global scale, rotation and translation, the 3D structure and motion have $5+4\ell$ resp. $8+4 \ell$ DoF.
Sampling $N_i >4$ points on the image of line $i$ (to impose constraints on the camera) yields
$\sum_{i=1}^\ell N_i$ constraints of the form \eqref{eq:curveD0Delta1}. 
For $d=2$, the balanced problems satisfy $5+4 \ell = \sum N_i$, where $4< N_i\leq 6$. These are  
$N=(6,6,5)$, $N=(6,5,5,5)$, and $N=(5,5,5,5,5)$.
HC computation
shows that all are minimal of degree $30$, $131$, and $680$, respectively.
For $d=3$, there are 15 balanced problems (satisfying $8+4 \ell= \sum N_i, 4<N_i \leq 8$); see SM Example~\ref{ex:minProblsD3Delta0}. 
The problem with the lowest number of lines is $\ell=2, N=(8,8)$. It is minimal of degree 25.

All minimal problems discovered here can be found in SM Table \ref{tab:minimal}.
As in Section~\ref{sec:pure_rotation}, for representative examples of those minimal problems, we have constructed solvers  with MiNuS \cite{fabbri2020minus}, which are evaluated in Section~\ref{sec:experiments}.

\vspace{-0.2cm}
\subsection{Center motion and rotation}
In this section, we assume $d >0$ and $\delta>0$.  

\begin{proposition} \label{prop:ImageCurveFixedCamera}
Fix a generic RS camera in $\P_{d,\delta}$.
The camera is a linear isomorphism between $\mathrm{Gr}(1, \mathbb{P}^3)$ and the subvariety of $\mathcal{H}_{1+d+2\delta}$ that consists of all image curves of world lines. 
For $(d,\delta)\in \{(1,1),(1,2),(2,1)\}$, that subvariety uniquely determines the camera (up to global rotation, translation, and scaling).
\end{proposition}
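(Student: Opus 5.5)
The plan is to read off linearity directly from \eqref{eq:lineImageAtTimeX}. For a fixed camera $(C,A)\in\P_{d,\delta}$, the homogeneous parametrization $u(x)=r(x)\times\big(R(x)(q+\Delta\times C(x))\big)$ is linear in the Pl\"ucker vector $[\Delta:q]\in\PP^5$. Its coefficients with respect to $v,t$ are therefore obtained by a $\big(\dim\mathcal H_{1+d+2\delta}+1\big)\times 6$ matrix $M_{C,A}$ applied to $[q;\Delta]$, generalizing \eqref{eq:linearCamMapD1Delta0}. Here I use the normal form $[v\tilde u: u_2: t\tilde u]$ from the proof of Theorem~\ref{thm:imageCurve}: the coefficients of $\tilde u$ and $u_2$ are exactly the coordinates $(Y,Z)$ of $\mathcal H_{1+d+2\delta}$.

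\emph{Step 1: the map is an isomorphism onto its image.} I first show that $M_{C,A}$ has full rank $6$ for generic $(C,A)$. It suffices to exhibit one explicit camera with rank $6$, since rank is lower semicontinuous. I would pick a simple camera, e.g.\ $C(x)=[0\,0\,1]^\top x$ and $A(x)=[\alpha\,\beta\,\gamma]^\top x$ padded by higher-degree terms, and compute a nonzero $6\times 6$ minor. Alternatively, I would argue by degree bookkeeping that the coefficient of $x^0$ recovers $q$ up to $r(0)\times q$, and that the higher coefficients involving $\Delta\times C$ and rotation recover $\Delta$.

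Given injectivity, $M_{C,A}$ induces a linear embedding $\PP^5\hookrightarrow\PP(\mathcal H)$. It restricts to an isomorphism of the Pl\"ucker quadric $\mathrm{Gr}(1,\PP^3)$ onto the set of image curves. I still need that the curve determines its coefficient vector up to scale, which holds because the coefficients of $u_1,u_2,u_0$ have no common factor (proved for Theorem~\ref{thm:imageCurve}). Nonreal or degenerate Pl\"ucker points are handled by restricting to the quadric. This part should be routine.

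\emph{Step 2: uniqueness for $(d,\delta)\in\{(1,1),(1,2),(2,1)\}$.} The subvariety is the image of the Pl\"ucker quadric $\mathcal Q$ under $M_{C,A}$. Its linear span is the $5$-dimensional projective subspace $\operatorname{im}M_{C,A}$, and the quadric on that subspace is $\mathcal Q\circ M_{C,A}^{+}$. Hence the camera is determined if the pair consisting of the subspace and the quadric on it determines $(C,A)$ modulo the gauge. That gauge is $C(0)=0$, $A(0)=0$, and one coefficient of $C$ set to $1$, leaving $3d+3\delta-1$ parameters. My concrete strategy mirrors Example~\ref{ex:curve_d0del1}:
\begin{enumerate}
\item compute $M_{C,A}$ symbolically;
\item bring its column space to a reduced row-echelon (Pl\"ucker-coordinate) representation, which is a canonical form of the subspace;
\item append the pulled-back quadric, normalized;
\item show that the resulting rational map from camera parameters to (subspace, quadric) is generically injective.
\end{enumerate}
For injectivity, I would either solve explicitly for the parameters from some entries, as done for $\alpha,\beta,\gamma$ in \eqref{eq:matrixRepresentingPlane}, or do the following: fix a random rational camera, impose equality of invariants with an unknown camera, and use a Gr\"obner basis computation to verify that the fiber consists of a single point. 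Generic injectivity then follows from semicontinuity of fiber cardinality over a dense open set.

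\emph{Main obstacle.} The hard part is Step 2. The equations are large: for $(1,2)$ the ambient space $\mathcal H_6$ has dimension $12$. It is also not clear a priori that the subspace alone suffices, so the quadric may be genuinely needed. Moreover, a symbolic proof is likely infeasible, so the argument will reduce to a certified computational check at a generic point. That check has to be combined with a dimension count showing the invariant map has full rank $3d+3\delta-1$ on the Jacobian, which gives local identifiability, and a degree-one check on the fiber.
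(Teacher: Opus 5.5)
Your Step~1 is the paper's argument. The paper also reads off linearity in $[\Delta:q]$ from \eqref{eq:lineImageAtTimeX} and exhibits a generically nonsingular $6\times 6$ block, formed by the coefficients $Y_0,Y_1,Y_{d+2\delta},Z_0,Z_1,Z_{d+2\delta+1}$. Step~2 is where you diverge, and your route is valid but different. You encode the subvariety by its linear span together with the induced quadric, and aim to show that the invariant map (camera $\mapsto$ (span, quadric)) is birational. This is an equivalent datum: the injective image of the smooth Pl\"ucker quadric spans $\operatorname{im}M_{C,A}$ and is a quadric hypersurface there. The paper never computes such invariants. It observes that it suffices for two \emph{generic points} of the subvariety, the images of two generic lines, to determine the camera, since any other camera producing the same subvariety would also map some two lines to those two curves. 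It then certifies that the two-line picture-taking map $\mathcal{P}_{d,\delta}\times\mathrm{Gr}(1,\PP^3)^2\to\mathcal{H}_{1+d+2\delta}^2$ is birational. It runs monodromy on a square subsystem, filters the solutions against all coefficients in the charts $a_d=1$, $b_d=1$, $c_d=1$, checks that the Jacobian has full rank, and applies Lemma~\ref{lem:birationalCheck}. The paper's route has three advantages. It works with polynomial equations in only $3d+3\delta-1+8$ unknowns and needs no row-echelon normal forms or pulled-back quadrics. It sidesteps your worry about whether the quadric is needed, since points on the subvariety carry that information automatically. And it proves the stronger fact that two image curves already suffice, which is what feeds the minimal-problem analysis. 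Your route, if the symbolic computation is tractable, would yield an explicit canonical representative of the subvariety, the genuine analog of \eqref{eq:matrixRepresentingPlane}, and could decide whether the span alone suffices. The price is large rational expressions. One correction to how you justify the final check: a singleton fiber over the invariants of one random camera does not imply generic injectivity by ``semicontinuity''. A ramification point can have a singleton fiber even when the degree exceeds one. What makes your plan rigorous is the full-rank Jacobian at that same point, which you do mention at the end, combined with Lemma~\ref{lem:birationalCheck}. You must also check for solutions outside your normalization chart, where the coefficient of $C$ you set to $1$ vanishes, as the paper does.
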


\noindent
As before, the subvariety is the analog to the essential matrix.
We now analyze which image curves can be obtained for varying cameras. Recall from Prop.~\ref{prop:parallel} that the image curve is not affected by those parts of the camera motion that follow the direction of the observed line. We call this the \emph{line direction~ambiguity}.

\begin{proposition} \label{prop:delta1VaryingCam}
    Let $\delta = 1$.
    Almost every curve in $\mathcal{H}_{3+d}$ is the image of some world line observed by some camera in $\mathcal{P}_{d,1}$. In fact, after modding out the global scaling, translation, rotation, and the line direction ambiguity, there are finitely many world lines and cameras in $\mathcal{P}_{d,1}$ yielding the same generic image curve. The degrees of these minimal problems are:\begin{tabular}{c|ccccc}
        $d$  & 1 & 2 & 3 & 4 & 5 \\
        \# solutions  & 20 & 104 & 320 & 760 & 1540 
    \end{tabular}
\end{proposition}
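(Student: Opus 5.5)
The plan is a dimension count to show the statement is balanced, a dominance argument via the rank of the Jacobian at a random point, and homotopy continuation to certify the degrees.

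\textbf{Step 1: parameter count.} The image map sends (camera, line) to a curve in $\mathcal{H}_{3+d}$. By Proposition~\ref{prop:H}, $\dim \mathcal{H}_{3+d} = 2(3+d) = 6+2d$. On the source side, after fixing global rotation, translation and scale ($C(0)=0$, $A(0)=0$, one coefficient of $C$ set to $1$), the camera in $\mathcal{P}_{d,1}$ has $3 + 3d - 1 = 3d+2$ parameters and the line has $4$. Modding out the line direction ambiguity of Proposition~\ref{prop:parallel} removes another $d$ parameters. Concretely, I would rotate so that $\Delta$ has a fixed normalized form, say $\Delta = [1 : \Delta_2 : \Delta_3]$, and set the first entry of each non-constant coefficient of $C(x)$ to zero. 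The scale normalization must then be moved to one of the remaining entries. This leaves $(3d+2) - d + 4 = 2d+6$ parameters, matching $\dim \mathcal{H}_{3+d}$. So the problem is balanced.

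\textbf{Step 2: dominance and finiteness.} Write the map $\Phi$ explicitly using \eqref{eq:lineImageAtTimeX}:
\[
u(x) = r(x) \times \bigl(R(x)(q + \Delta \times C(x))\bigr),
\]
with $R$ the unnormalized Cayley matrix of $A(x) = [\alpha\,\beta\,\gamma]^\top x$. The coefficients of $f = \tilde u$ and $g = u_2$ then give a polynomial map between affine charts of equal dimension. Dominance is equivalent to the Jacobian having full rank at one point. By lower semicontinuity of rank, it suffices to evaluate the Jacobian at a random rational point and check full rank exactly for each $d \le 5$. That gives generic finiteness of the fibres, and the image of $\Phi$ is dense in $\mathcal{H}_{3+d}$. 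For general $d$ the same argument works symbolically only in principle, which is why the statement lists concrete $d$.

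\textbf{Step 3: counting solutions.} The table values come from solving $\Phi = u^\ast$ for a random curve $u^\ast$. I would do this by monodromy/homotopy continuation: start from a random source point, take its image as the start fibre, and loop until the count stabilizes. I would cross-check with a Gr\"obner basis over a finite field for the small cases $d=1,2$, aiming to match 20 and 104. To certify, I would apply a trace test or verify that the solutions are nondegenerate.

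\textbf{Main obstacle.} The hardest step is making sure the normalization really mods out exactly the symmetries, so that spurious positive-dimensional components or degenerate solutions are not counted. Two things need care. First, the rescaling of $\PP^1$ has already been fixed by $\ell_1 = v$, $\ell_2 = t$. Second, the scale normalization must not conflict with the zeroed coefficients. Complex solutions with $1+\alpha^2+\beta^2+\gamma^2 = 0$, where the unnormalized Cayley matrix is degenerate, must be saturated away. I would first verify that the fibre over a random point has no positive-dimensional component, via the Jacobian rank check at each computed solution. Then I would saturate by the ideal generated by $1+\alpha^2+\beta^2+\gamma^2$ and the common factor of $f$ and $g$.
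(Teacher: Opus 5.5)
\textbf{Gap: the dominance and finiteness claims are only proved for $d\le 5$.}

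Your architecture matches the paper's. You slice out global rotation, translation, scale and the line direction ambiguity to get a $(2d+6)$-dimensional source mapping to the $(2d+6)$-dimensional $\mathcal{H}_{3+d}$. You certify dominance by a full-rank Jacobian at one point, obtain finite generic fibres from the Fiber-Dimension Theorem, and compute the degree table numerically. For each fixed $d$, an exact rank computation at a rational point is a valid certificate.

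The gap is one of scope. The first two assertions of the proposition are made for every $d\ge 1$: dominance onto $\mathcal{H}_{3+d}$, and finiteness of the generic fibre modulo the symmetries. Only the degree table is restricted to $d\le 5$. Your remark that general $d$ works ``only in principle, which is why the statement lists concrete $d$'' misreads the statement. As written, your argument establishes nothing for $d\ge 6$, and no finite list of computer checks can close this. You need a full-rank certificate that is uniform in $d$.

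\textbf{How the paper closes it.} The paper's computer check is used only for $d\le 3$; for $d\ge 4$ it argues by hand.
\begin{itemize}
\item It normalizes by $C(0)=0$, $A(0)=0$, $c_d=1$ and $b_1=\dots=b_d=0$. It uses the line chart $q_3=1$, so $\Delta_3=-q_1\Delta_1-q_2\Delta_2$, and the target chart $Z_0=-1$.
\item It writes all $Y_i,Z_i$ explicitly and evaluates the Jacobian at the structured point $\alpha=\gamma=0$, $\beta=\tfrac12$, $a_i=0$, $c_1=\tfrac12$, $c_i=0$ for $2\le i\le d-1$, $q_1=\Delta_1=0$, $q_2=\Delta_2=1$, $\Delta_3=-1$.
\item At that point, each column $\partial/\partial c_i$ with $1\le i\le d-1$ has a single nonzero entry, in the row of $Z_{i+3}$. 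The $q_1$-column has a single nonzero entry, in the row of $Z_3$. The row $Y_0=q_2$ also has a single nonzero entry.
\item Laplace expansion removes all of these, leaving a banded $(d+5)\times(d+5)$ matrix.
\item Repeating two row operations followed by expansion along two columns reduces this, according to parity, to a $5\times 5$ or $4\times 4$ determinant. Its entries involve powers of $-\tfrac14$, and it is nonzero for every $d\ge 4$.
\end{itemize}
You need an argument of this kind at a suitable point in your own chart $\Delta_1=1$, $a_i=0$; the paper's point has $\Delta_1=0$, so it does not lie in your chart.

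\textbf{Minor remark.} The saturation by $1+\alpha^2+\beta^2+\gamma^2$ is unnecessary. For $\delta=1$ the Cayley normalizer is $1+(\alpha^2+\beta^2+\gamma^2)x^2$, which never vanishes identically in $x$. In any case, fibres over generic curves of a dominant map between equidimensional varieties avoid any fixed proper subvariety.
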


\begin{proposition} \label{prop:delta2VaryingCam}
    Let $\delta=2$ and $1\leq d\leq3$. For varying RS cameras in $\mathcal{P}_{d,\delta}$ and world lines, the set of image curves has dimension $3\delta + 2d + 3$. A generic such image curve uniquely determines the camera in $\mathcal{P}_{d,\delta}$ and the world line, up to global scaling, translation, rotation, and the line direction ambiguity. 
\end{proposition}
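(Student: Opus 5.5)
Proposition \ref{prop:delta2VaryingCam} is a statement about the generic fibre of the parametrization map
\[
\Phi:\ (\text{cameras in } \P_{d,2}) \times \mathrm{Gr}(1,\PP^3) \longrightarrow \mathcal{H}_{1+d+4},
\qquad (C,A,L)\mapsto u_L(x),
\]
given by \eqref{eq:lineImageAtTimeX}. We first quotient the source by global rotation, translation and scaling, and by the line direction ambiguity. For the ambiguity, we normalize $C(0)=0$ and $A(0)=0$, fix one coefficient of $C$ to $1$, and remove the $d$ coefficients of $C$ along $\Delta$ (Proposition \ref{prop:parallel}). The quotient source $S$ then has dimension $(3\delta+3d-1) - d + 4 = 3\delta+2d+3$. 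The plan is to show two things: $\Phi$ restricted to $S$ is generically finite of degree one, so the image has dimension $3\delta+2d+3$; and a generic fibre is a single point. Note that this number is smaller than $\dim\mathcal{H}_{5+d}=2(5+d)$ for $d\le 3$ (e.g.\ $13 < 12$ fails for $d=1$; we will recheck the count). If it is not smaller, the image would be dense and the claim would be a degree-$1$ finiteness statement. Either way, the core is injectivity.

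\textbf{Dimension.} We compute the rank of the Jacobian of $\Phi|_S$ at one random (rational or finite-field) point for each $d\in\{1,2,3\}$. By lower semicontinuity of rank, rank $=\dim S$ at one point implies generic finiteness, hence the image dimension. The coordinates of $\Phi$ are the coefficients $Z_i,Y_j$ of $g,f$ in \eqref{eq:AffineCurveParam}, read off from $u(x)$ after dividing by the common factor $t$ (Theorem \ref{thm:imageCurve}). Projective scaling is handled by fixing one coefficient, or by working in the affine cone and adding one to the count.

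\textbf{Uniqueness.} Finite-to-one is not enough; we need exactly one preimage. For this we take a random point $s_0\in S$ over $\mathbb{Q}$, compute $h=\Phi(s_0)$, and solve $\Phi(s)=h$ by a Gröbner basis over a finite field (or over $\mathbb{Q}$) in the normalized unknowns. We must saturate by the degeneracy loci: the denominators $1+\alpha^2+\beta^2+\gamma^2$, a vanishing leading coefficient, and $\Delta$ parallel to a normalized coordinate direction, since these lie outside the generic setting. Having a single solution at a generic point, with the solution count upper semicontinuous on a dense open set, yields generic degree $1$. Alternatively, we count via monodromy or homotopy continuation and confirm that the fibre has one point.

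\textbf{Main obstacle.} The main obstacle is making the fibre computation rigorous rather than heuristic. The ideals are large for $d=3$, and spurious components must be excluded by saturation, in particular those where $u_0,u_1,u_2$ acquire a common factor, which the SM argument of Theorem \ref{thm:imageCurve} rules out only generically. If the Gröbner basis over $\mathbb{Q}$ is too expensive, we work modulo a large prime. The argument then becomes a generic-fibre statement certified with high probability, as for Proposition \ref{prop:D0ImageCurveVariety}.
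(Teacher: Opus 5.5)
Your plan matches the paper's proof. The paper likewise fixes the ambiguities ($C(0)=0$, $A(0)=0$, $a(x)=0$ for the line direction ambiguity, $b_d=1$ for scale). It then checks computationally that a generic image curve has a single preimage, using monodromy on a square subsystem of $3\delta+2d+3$ equations and keeping only the solutions of the full system, and that the Jacobian has full rank there. It concludes via Lemma~\ref{lem:birationalCheck} and the Fiber-Dimension Theorem~\ref{thm:fiberDim}.

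One slip to correct: $3\delta+2d+3=9+2d<10+2d=\dim\mathcal{H}_{5+d}$ for every $d$. For $d=1$ this is $11<12$, not $13$. So the image is always a hypersurface, and your ``dense image'' alternative never arises.
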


\noindent
\textbf{Practical reconstruction.}
For $\delta=1$,  minimal SfM problems for a single world line are listed in Prop. \ref{prop:delta1VaryingCam}.
For $\delta >1$, we can find minimal problems by proceeding analogously to Sec. \ref{sec:pure_rotation} - \ref{sec:pureTranslation}. 
We report them for $\delta=2,d=1$ in SM Example~\ref{ex:minProblsD1Delta2}.

\section{Point SfM} \label{sec:points}
We analyze minimal problems for single-view RS SfM that use point projections to estimate motion parameters.
These problems exploit the higher order of RS cameras. A point $X$ that is observed $k$ times $u_1,\ldots,u_k \in \PP^2$ by the same camera imposes $2k$ constraints of the form $u_i \sim R(x_i) (X - C(x_i))$, where $u_i = [x_i \, y_i \, 1]^\top$, on structure and motion.
However, these constraints are not linearly independent when $k$ equals the order of the camera, several points are observed $k$ times and the camera center is undergoing a non-linear or accelerated motion:
 \begin{lemma}\label{lem:pointRel}
        Assume $d\geq 2$. Define $o := 1+d+2\delta$ and let $(x_i^j, y_i^j)$ for $i=1,...,o$ and $j=1,...,p$ be the images of $p$ world points made by a generic camera in $\P_{d,\delta}$. Then, it holds for all $j,j'$ that $
            \sum_{i=1}^{o}x_i^j=\sum_{i=1}^{o}x_i^{j'}.
        $
    \end{lemma}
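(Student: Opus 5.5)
The plan is to read the scanlines $x_1^j,\dots,x_o^j$ of the $j$-th world point as the roots of the univariate polynomial \eqref{eq:orderPoint}, and to apply Vieta's formula: their sum is $-c_{o-1}/c_o$, where $c_k$ denotes the coefficient of $x^k$. By Theorem \ref{thm:order}, for a generic camera in $\P_{d,\delta}$ and a generic point $X$, this polynomial has exactly $o=1+d+2\delta$ roots. So it suffices to show that the ratio $c_{o-1}/c_o$ does not depend on $X$.

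Work in the affine chart $t=1$, so that $r(x)=[1\;0\;-x]$. Write $X=[\bar X : X_0]$ with $\bar X\in\RR^3$. Using the unnormalized Cayley matrix $\tilde R(x)$, whose entries have degree at most $2\delta$, condition \eqref{eq:orderPoint} becomes
\[
p_X(x) = r(x)\tilde R(x)\bar X - X_0\, r(x)\tilde R(x) C(x).
\]
I will then compare degrees. The first term has degree at most $1+2\delta$. The second term has degree at most $1+2\delta+d=o$.

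The key observation is that $d\ge 2$ gives $1+2\delta\le o-2$. Hence the first term contributes to neither $c_o$ nor $c_{o-1}$. Both top coefficients therefore come only from $-X_0\, r(x)\tilde R(x)C(x)$, so $c_o=X_0\,\kappa_o$ and $c_{o-1}=X_0\,\kappa_{o-1}$. Here $\kappa_o,\kappa_{o-1}$ are the corresponding coefficients of $-r(x)\tilde R(x)C(x)$, which depend only on the camera. For a world point not at infinity we have $X_0\neq 0$, and this factor cancels, giving $\sum_i x_i^j=-\kappa_{o-1}/\kappa_o$ for every $j$. This is exactly the claimed equality.

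The one step that needs care is checking that $\kappa_o\neq 0$ for a generic camera, so that the degree is really $o$ and Vieta applies to all $o$ roots. This is also implicitly covered by Theorem \ref{thm:order}. To verify it directly, write the leading coefficients as $A(x)=a x^\delta+\dots$ and $C(x)=c\,x^d+\dots$. The top-degree part of $\tilde R$ is the quadratic part of the Cayley matrix, namely $2aa^\top-\|a\|^2 I$, since the antisymmetric part is only linear in $a$. The leading term of $r(x)$ is $-x\,e_3^\top$. This gives $\kappa_o=e_3^\top(2aa^\top-\|a\|^2 I)c=2a_3(a\cdot c)-\|a\|^2c_3$, which is a nonzero polynomial in the camera coefficients and hence nonzero for a generic camera.

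One subtlety remains: the images $(x_i^j,y_i^j)$ must be the affine images of the roots. This holds because all roots are finite when $\kappa_o\neq 0$, and the image points $P(x_i)X$ are in the generic case affine with first coordinate $x_i$, as in the proof of Theorem \ref{thm:imageCurve}.
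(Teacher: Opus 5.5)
Your proof is correct and is essentially the paper's: read the $x_i^j$ as the roots of \eqref{eq:orderPoint}, note that for $d\ge 2$ the world-point coordinates only enter monomials of degree at most $1+2\delta<d+2\delta$, and conclude by Vieta that the ratio of the top two coefficients, i.e. the root sum, does not depend on the point. Your explicit check that the leading coefficient is generically nonzero is a good addition, but the formula $\kappa_o=2a_3(a\cdot c)-\|a\|^2c_3$ only holds for $\delta\ge 1$; for $\delta=0$ the whole constant Cayley matrix $\tilde R$ is top-degree, so $\kappa_o=e_3^\top\tilde R\,c$, which is still generically nonzero, so the argument goes through.
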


\noindent
By this lemma, the set of (complex) images of $p$ world points observed by RS cameras with $d>1$ satisfies $p-1$ linear constraints. 
Analogously to Prop.~\ref{prop:spanImLine} in the case of world lines, we expect there not to be any further linear constraints:
\begin{proposition} \label{prop:spanImagePoints} Let $d \leq 4, \delta \leq 3, p \leq 3$.
    The set of images of $p$ points taken by RS cameras in $\P_{d,\delta}$ satisfy $p-1$ linear constraints if $d \geq 2$ and no linear constraints if $d \leq 1$. 
\end{proposition}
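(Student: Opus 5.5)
Since Proposition~\ref{prop:spanImLine} was established computationally, I would prove this statement the same way, as a finite computer verification for each triple $(d,\delta,p)$ with $d\le4$, $\delta\le3$, $p\le3$. The \emph{existence} of the $p-1$ linear constraints for $d\ge2$ is already given by Lemma~\ref{lem:pointRel}. The constraints $\sum_i x_i^1=\sum_i x_i^j$ for $j=2,\dots,p$ are visibly linearly independent, since each one involves a different point $j$. So what remains is the upper bound: the linear span of the image set has codimension exactly $p-1$ (respectively $0$ for $d\le1$) in the ambient space $\CC^{2op}$ of coordinates $(x_i^j,y_i^j)$, where $o=1+d+2\delta$ by Theorem~\ref{thm:order}.

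The first step is to fix a labeling issue. The $o$ images of a point are unordered, so a "linear constraint" should be read on the ordered tuple. Linear constraints on the set of all orderings are invariant under permuting the $i$-index within each point. I would therefore work with the ordered image variety, meaning the Zariski closure of all tuples obtained by choosing an ordering of the roots of \eqref{eq:orderPoint}. Its linear span is computed by generating samples and doing linear algebra.

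Concretely, for each $(d,\delta,p)$ I would:
\begin{enumerate}
\item Sample random cameras in $\P_{d,\delta}$ (random rational or complex coefficients of $C$ and $A$) and $p$ random world points $X^j$.
\item Solve the univariate degree-$o$ polynomial $r(x)P(x)X^j=0$ numerically for its $o$ roots $x_i^j$.
\item Set $y_i^j$ from $P(x_i^j)X^j$.
\item Form the vector in $\CC^{2op}$, repeating this for more than $2op$ samples.
\item Compute the numerical rank of the sample matrix augmented by a column of ones (affine span). Use an SVD and check for a clear singular-value gap.
\end{enumerate}
The expected outcome is rank $2op-(p-1)$ for $d\ge2$ and full rank $2op$ for $d\le1$. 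For rigor I would additionally check the rank exactly on a few instances. One option is to use exact algebraic numbers via certified root isolation. Alternatively, note that the rank of a matrix of generic samples can only be lower-bounded by exhibiting a nonzero maximal minor, and a certified interval evaluation of one minor suffices, because a lower bound on the rank is exactly what we need.

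The conceptual reason to expect the answer is Vieta's formulas. The sum $\sum_i x_i^j$ equals minus the ratio of the top two coefficients of the polynomial in \eqref{eq:orderPoint}. For $d\ge2$ the leading coefficients are dominated by the leading terms of $C$ and $A$, which makes this ratio independent of $X^j$, as Lemma~\ref{lem:pointRel} shows. For $d\le1$ the ratio depends on $X^j$, so no such relation arises.

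I expect the main obstacle to be numerical reliability of the rank determination as $o$ grows (up to $o=11$). Root-finding of a degree-11 polynomial and the $y$-coordinates, which are rational of high degree, produce ill-conditioned sample matrices. I would first try sampling with coordinates of moderate size, normalizing each column, and working in high precision (e.g.\ multiprecision arithmetic). Only if that fails would I switch to the exact-minor certification described above.
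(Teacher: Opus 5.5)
Your proposal is correct and takes the same route as the paper's own proof, which samples cameras in $\P_{d,\delta}$ and world points, computes all image points, and estimates the rank of the resulting matrix from its singular values, with Lemma~\ref{lem:pointRel} supplying the $p-1$ constraints when $d\ge 2$; your remarks on ordering the roots and on certifying a nonzero maximal minor add rigor that the paper does not spell out. One bookkeeping slip: once you append the column of ones, the expected ranks become $2op-(p-1)+1$ and $2op+1$ rather than $2op-(p-1)$ and $2op$, so either adjust these targets or drop the extra column, which you can do because the constraints of Lemma~\ref{lem:pointRel} are homogeneous.
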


\noindent
Hence, if $p$ points were indeed appearing order-many times on the same RS image, they would impose 
\begin{equation} \label{eq:ptConstraintsNumber}
        \begin{cases}
            2p (1+d+2 \delta) & \text{constraints if }d\leq 1\\
            2p (1+d+2 \delta)-(p-1) &  \text{constraints if }d\geq 2
        \end{cases}
    \end{equation}
on structure and motion.
Now, structure and motion have $3d+3\delta-1+3p$ DoF whenever $d>0$. 
For $d=0$, we cannot reconstruct the world points, but only the lines spanned by them and the camera center (this is analogous to Sec. \ref{sec:pure_rotation} where only the plane spanned by camera center and world line could be recovered); and so structure and motion have $3 \delta +2p$ DoF. 
\begin{theorem} \label{thm:point_minimal_problems}
Consider a generic RS camera in $\mathcal{P}_{d,\delta}$ observing $p$ world points.
The balanced SfM problems using all $1+d+2\delta$ image points per world point are: 
\\$\bullet$ $p=2,d=2,\delta=0$:  this is minimal of degree 1;
\\$\bullet$ $p=2, d=1, \delta=0$:  this is minimal of degree 1;
\\$\bullet$ $p=1, d=\delta$:  minimal for $d \leq 5$, with degrees $48$ for $d=1$ and $9609$ for $d=2$. 
\end{theorem}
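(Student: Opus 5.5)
The plan is to first reduce the statement to a finite Diophantine enumeration by comparing the constraint count \eqref{eq:ptConstraintsNumber} with the DoF count, and then to certify minimality of each surviving case. The cases follow the three regimes in which these counts are stated: $d=0$, $d=1$, and $d\geq 2$. The independence of the constraints is assumed from Lemma~\ref{lem:pointRel} and Proposition~\ref{prop:spanImagePoints}, i.e.\ the only linear dependencies are the $p-1$ relations on the sums of $x$-coordinates when $d\ge 2$.

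\emph{Case $d=0$.} The constraints number $2p(1+2\delta)$ and the DoF $3\delta+2p$. Balancedness gives $4p\delta=3\delta$, which forces $\delta=0$. That is the global-shutter situation: there is no motion to recover, and only the viewing rays of the points are determined. It is therefore not a genuine RS reconstruction problem, and I would exclude it explicitly as degenerate.

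\emph{Case $d=1$.} Balancedness reads $2p(2+2\delta)=3+3\delta-1+3p$, i.e.\ $p(1+4\delta)=2+3\delta$.
\begin{itemize}
\item For $\delta=0$ this gives $p=2$.
\item For $\delta\ge 1$ we need $p\ge 1$, and $1+4\delta\le 2+3\delta$ forces $\delta=1$ and $p=1$.
\end{itemize}
So the solutions are $(p,\delta)=(2,0)$ and $(1,1)$; the latter falls under the family $d=\delta$.

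\emph{Case $d\ge 2$.} Balancedness reads $2p(1+d+2\delta)-(p-1)=3d+3\delta-1+3p$, i.e.\ $p(2d+4\delta-2)=3d+3\delta-2$.
\begin{itemize}
\item $p=1$ gives $d=\delta$.
\item $p=2$ gives $d+5\delta=2$, hence $d=2,\delta=0$.
\item $p\ge 3$ would require $3d+9\delta\le 4$, which is impossible for $d\geq 2$.
\end{itemize}
This yields exactly the list in the theorem.

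For minimality, I would proceed as follows.
\begin{itemize}
\item \emph{Pure translation, $\delta=0$, $d\in\{1,2\}$, $p=2$.} With $R=I$, each observation $u_i\sim X-C(x_i)$ with known $x_i$ gives equations that are linear in the unknown coefficients of $C$ and in $X$ (after clearing the depth scale via cross products). I would write out this linear system after fixing $C(0)=0$ and one coefficient equal to $1$. I would then check on a random instance that it has a one-dimensional kernel, i.e.\ a unique solution, which gives degree $1$. Genericity then follows by semicontinuity of rank.
\item \emph{Family $p=1$, $d=\delta$.} Here I would set up the polynomial system $u_i\times R(x_i)(X-C(x_i))=0$ for the $o=1+3d$ known observations, with the normalized Cayley rotation. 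I would then compute the number of complex solutions for a random instance by Gr\"obner bases over a finite field for $d=1$, and by monodromy / homotopy continuation for $d=2$ and up to $d=5$. A nonzero finite count at a generic point, together with a Jacobian of full rank at a solution, certifies minimality. This gives $48$ and $9609$ for $d=1,2$.
\end{itemize}

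The main obstacle is this last computational certification for larger $d=\delta$. Gr\"obner bases become infeasible there, so I would only verify minimality (finiteness and nonemptiness, via the Jacobian-rank test at a constructed solution) and not the degree. This is consistent with the theorem stating degrees only for $d\le 2$.
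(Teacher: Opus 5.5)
Your proposal is correct and is essentially the paper's proof. It uses the same case analysis of the balance equation; your explicit exclusion of the static case $d=\delta=0$ is, if anything, more careful than the paper's ``strictly larger'' remark, which needs $\delta\ge 1$. It uses the same linear system in the coefficients of $C$ and the affine point coordinates for $\delta=0$: for $d=2$ the instance must satisfy Lemma~\ref{lem:pointRel}, which the paper builds in by setting $\tilde x_{2,3}$ to the value forced by equal root sums, and the paper also rules out solutions with $X_4=0$ via the distinctness of the $\tilde x_{i,j}$, a step you should add for the degree-one claim. Your certificate for $p=1$, $d=\delta$ (full rank of the Jacobian of the square system in the unknowns at a constructed solution) is equivalent to the paper's implicit-function construction of the forward map, because the derivative of the system with respect to the data $(x_i,y_i)$ is invertible exactly when the $1+3d$ scanline roots are simple and $(P(x_i)X)_3\neq 0$.
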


\noindent
As it might be relatively rare to observe all order-many image points of the same world point on a given RS image, we next consider the case where $p$ world points are observed (at least) twice each, yielding $4p$ many constraints:
\begin{theorem} \label{thm:point_minimal_p2}
Consider a generic RS camera in $\mathcal{P}_{d,\delta}$ observing $p$ world points.
The balanced SfM problems using 2 image points per world point are:
\\$\bullet$ $d >0, 3(d+\delta)-1=p$: this is minimal of degree 1 whenever $\delta=0$,  minimal \\\hspace*{1.2mm} of degree $598$ for $d=\delta=1$, and minimal of degree $2993$ for $d=2,\delta=1$;
\\$\bullet$ $d=0, 3 \delta=2p$: this is minimal of degree $1136$ for $\delta=2, p=3$.
\end{theorem}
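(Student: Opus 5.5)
The proof has two parts: a dimension count that pins down the balanced cases, and a verification that each of them is minimal with the stated degree. With twice-observed points, each world point $X_j$ gives image points $u_{j,1},u_{j,2}$. These impose the $4$ scalar constraints $u_{j,k}\times R(x_{j,k})(X_j-C(x_{j,k}))=0$, two independent ones per image point.

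\emph{Case $d>0$.} After fixing the gauge as before ($C(0)=0$, $A(0)=0$, one coefficient of $C$ equal to $1$), structure and motion have $3d+3\delta-1+3p$ DoF. Balancedness is the equation $3d+3\delta-1+3p=4p$, i.e. $p=3(d+\delta)-1$.

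\emph{Case $d=0$.} Only the rays through the fixed camera center are recoverable, so each point contributes $2$ DoF and the motion contributes $3\delta$. Balancedness is $3\delta+2p=4p$, i.e. $3\delta=2p$.

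One must also check that with only two projections per point no linear relation like Lemma~\ref{lem:pointRel} appears. That lemma needs all $o$ scanline roots, because it comes from Vieta's formula for the sum of the roots of $r(x)P(x)X$. With just two of the $o$ roots observed, the $4p$ constraints are generically independent. I would argue this via the dominance step below: if the parametrization map is dominant, no extra relations exist.

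\emph{Degree $1$ for $\delta=0$, $d>0$.} Here $R(x)=I$, and the constraints $u_{j,k}\times(X_j-C(x_{j,k}))=0$ are \emph{linear} in the unknowns $(X_j,\text{coefficients of }C)$, since each $x_{j,k}$ is a known image coordinate. After the gauge fixing and dehomogenizing by the fixed coefficient, we get a linear system with as many equations as unknowns. So the problem is minimal of degree $1$ exactly when the system matrix is generically invertible. To show this it suffices to exhibit one numeric instance with full rank, for each $d$. I would argue by induction on $d$: going from $d$ to $d+1$ adds $3$ motion unknowns and $3$ points, i.e.\ $9+3=12=4\cdot3$ unknowns and equations. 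This gives a block-triangular structure if the new points are chosen with scanlines at which the new monomial $x^{d+1}$ dominates, for instance large $x$. A cleaner alternative is to specialize the data to make the matrix block triangular and then compute the determinant. The base case $d=1$, $p=2$ is a $6\times6$ linear system (motion $a_1,b_1,c_1$ with one fixed gives $2$ unknowns, plus $6$ point coordinates, against $8$ equations). Note $p=3d-1$ gives $p=2$ and $2+6=8$ unknowns, which is consistent.

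\emph{The remaining degrees.} These are $598$ for $(d,\delta)=(1,1)$ with $p=5$, $2993$ for $(2,1)$ with $p=8$, and $1136$ for $(0,2)$ with $p=3$. They are established computationally, as elsewhere in the paper. First, minimality (finiteness plus existence) follows by evaluating the Jacobian of the parametrization map (structure, motion) $\mapsto$ (image data) at a random rational point and checking it has full rank, which implies dominance and finite generic fibers. Second, the degree is obtained by monodromy/homotopy continuation, or by a Gr\"obner basis over a finite field on random data, counting solutions after saturating by the Cayley denominators and the degenerate loci.

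The main obstacle is certifying these large degrees, $2993$ in particular. Gr\"obner bases may be infeasible there, so I would rely on monodromy solving, stopping once the solution count stabilizes over many loops, and cross-check with a trace test.
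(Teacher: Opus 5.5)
Your balancedness count and your computational plan for the three nonlinear cases ($598$, $2993$, $1136$) match the paper, which also just solves the equations of \cite{hruby2025single} at random data. The gap is in the one part of the statement asserted for infinitely many cases: that for $\delta=0$ the linear system is generically invertible for \emph{every} $d>0$.

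Exhibiting one full-rank numeric instance per $d$ only covers finitely many $d$. The induction you sketch does not work as described. To see this, eliminate $X_j$ first. Each point then leaves the single equation $(a(x_2)-a(x_1))(y_1-y_2)+(b(x_2)-b(x_1))(x_2-x_1)+(c(x_2)-c(x_1))(x_1y_2-x_2y_1)=0$, which is coplanarity of the two rays.

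Now scale both scanlines of a new point by $s\to\infty$:
\begin{itemize}
\item the coefficient of $a_{d+1}$ has order $s^{d+1}$;
\item this is exactly the order of the coefficients of the \emph{old} unknowns $b_d,c_d$;
\item the coefficients of $b_{d+1},c_{d+1}$ have order $s^{d+2}$.
\end{itemize}
So after rescaling the three new rows, the limiting $3\times 3$ block has a zero column, and no block-triangular limit appears. In the unreduced system it is worse: the new rows' entries in the old motion columns dominate their $O(1)$, $O(x)$, $O(y)$ entries in the new point columns. (Also, your base case is an $8\times 8$ system, not $6\times 6$.)

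The paper proves generic invertibility uniformly in $d$ by carrying out the specialization you only mention. It sets $\tilde x_{i,2}=0$ for every point, so the two rows of that observation reduce to $X_1^i$ and $\tilde y_{i,2}X_3^i-X_2^i$. Laplace expansion and a few row operations per block (using $\tilde x_{i,1}\neq 0$) then remove all point columns. Together with dropping the $c_d$ column for the scale, this leaves a $(3d-1)\times(3d-1)$ matrix. Its rows are evaluations of $y_2-y_1,\ldots,(y_2-y_1)x^{d-1},\,x,\ldots,x^d,\,xy_2,\ldots,x^{d-1}y_2$ (up to sign). These polynomials are linearly independent, so the matrix is generically invertible.

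An equally short route uses the coplanarity form above. Compare the coefficients of $y_1$, $y_2$ and the $y$-free part: a motion with $C(0)=0$ satisfying the equation identically must vanish. Hence the $3d$ coefficient functions are linearly independent, and $3d-1$ generic evaluations have rank $3d-1$.

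Finally, the paper explicitly rules out solutions with $X_4=0$: for $\delta=0$ such a point is seen at a single scanline. Your affine setup assumes this without comment.
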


\noindent
Single-view RS SfM using point projections is related to motion estimation from asynchronous tracks \cite{linear_pure_trans,hruby2026asynchronous}: The approach in \cite{hruby2026asynchronous} estimates the camera motion $R(t), C(t)$ from projections $u_{i,j}$ of $p$ unknown 3D points $X_i$ at $n$ distinct known timestamps $t_{i,j}$, satisfying $u_{i,j} \sim R(t_{i,j}) (X_i - C(t_{i,j}))$. This formulation is mainly used for event cameras and multi-view RS videos, assuming a special camera motion. Single-view RS SfM is a special case, with $t_{i,j}=x_{i,j}$.
Our work differs from \cite{hruby2026asynchronous} in rotation parameterization: \cite{hruby2026asynchronous} uses a polynomial approximation of constant angular velocity, while we use the exact Cayley model.

For $\delta=0$, no rotation is used, and our problem becomes equivalent to \cite{linear_pure_trans,hruby2026asynchronous}. For $d=1,\delta=0$, the camera order is $2$, and \cite{hruby2026asynchronous} shows that estimating velocity $V \in \RR^3$ reduces to fitting the essential matrix $[V]_{\times}$ to point pairs $u_{i,1}, u_{i,2}$. As $[V]_{\times}$ has $2$ DoF up to scale, two point pairs are needed to linearly estimate $V$, which is consistent with \Cref{thm:point_minimal_problems}.

\section{Experiments}\label{sec:experiments}

SM Table \ref{tab:minimal} lists all minimal problems discovered in Section~\ref{sec:lines}.
Here, we present an experimental evaluation of some resulting minimal solvers. Although our work is primarily theoretical, the experiments illustrate which aspects of the theory can be applied in practice. All solvers are implemented with homotopy continuation MiNuS \cite{fabbri2020minus} and are not optimized for runtime. 
Solvers with $d=0$ are labeled $\delta\!M(N)$, where $M=\deg(A)$ and $N$ lists points per image curve  (e.g. $N=3^2 2$ means 2 lines with 3 image points each and 1 line with 2 image points), followed by $P$ for parallel or $PC$ for parallel and coplanar lines. Solvers with $\delta=0$ have the analog label $dD(N)$.
We also include the LAAA solver \cite{lao2018robustified}, which solves the approximation of the $d=0,\delta=1$~problem.

\smallskip
\noindent
\textbf{Numerical stability.}\quad We sample the camera velocity $V_{GT}$ or Cayley parameters $A_{GT}$, and for $d \geq 2$ also higher-order motion parameters $V_{2,GT}$ and $V_{3,GT}$ from a 3D Gaussian distribution with $\mu=0,\sigma=0.2$. 3D lines are randomly sampled and projected at $N$ random scanlines $x$. For solvers assuming parallel lines, the lines share direction $\Delta_{GT}$; for coplanar solvers, they are also constrained to a plane. We consider lines projecting within the camera frame with focal length $f=768px$ and image size $w=640px$, $h=480px$.

The solvers estimate motion ($V$ or $A$) and, if applicable, line direction $\Delta$. For $\delta=0$, we measure the velocity error $\angle(V,V_{GT})$ and the line error $\angle(\Delta,\Delta_{GT})$. For $d=0$, we measure the axis error $\angle(A,A_{GT})$ and the relative norm error $\lVert A-A_{GT}\rVert/\lVert A_{GT} \rVert$. Results (Fig.~\ref{fig:stability_tests}) show that our solvers are mostly stable. A few samples have large errors, likely due to homotopy failures; as this is a proof of concept, more stable solvers are left as future work. The LAAA solver \cite{lao2018robustified} shows higher errors, as it solves an approximation rather than the exact problem.

\begin{figure}
    \centering
    \input{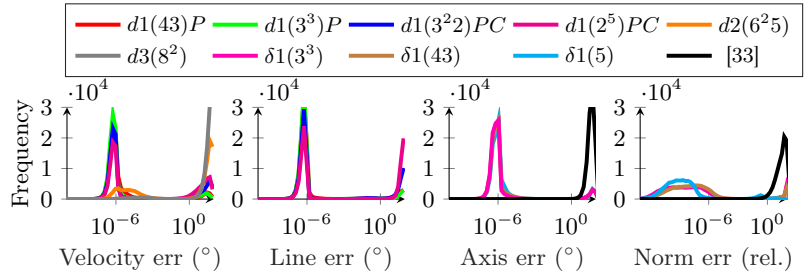}
    \vspace{-0.3cm}
    \caption{\textit{Noiseless stability.} Histogram of the velocity and line direction errors (for $\delta=0$) and the axis and norm errors (for $d=0$), calculated from $10^5$ noiseless samples.}
    \label{fig:stability_tests}
\end{figure}

\noindent
\textbf{Synthetic tests with noise.}\quad We sample problems like before, and add Gaussian noise with $\sigma=1px$ to each point measurement. The results are shown in Fig.~\ref{fig:noise_tests}. As expected, the solvers are less robust than multi-view methods, due to weaker constraints, yet, a substantial portion of solutions is accurate enough to initialize SfM. All solvers with $d=1,\delta=0$ behave similarly: about $23\%$ of samples have velocity error below $20^\circ$ and $45\%$ below $40^\circ$. The line error behaves similarly. Solvers with $d \geq 2$ have larger velocity errors. For $d=0,\delta=1$, $\delta1(3^3)$ is the most robust, with axis error below $40^\circ$ in about $33 \%$ of the cases.

\begin{figure}
    \centering
    \begin{tikzpicture}

\begin{axis}[%
width=0.16\columnwidth,
height=0.10\columnwidth,
at={(0in,0.389in)},
scale only axis,
xmin=0,
xmax=50,
xlabel style={font=\color{white!15!black}},
xlabel={Velocity err ($^\circ$)},
ymin=0,
ymax=1,
ymode=normal,
yminorticks=true,
ytick={0,0.5,1},
axis lines = left,
axis background/.style={fill=white},
title style={font=\bfseries},
ylabel style={yshift=-0.15in},
ylabel={Recall},
legend style={at={(2.5,1.1)}, anchor=south, legend cell align=left, align=left, draw=white!15!black, font=\footnotesize,nodes={scale=0.9, transform shape}},
legend columns=5
]


\addplot [color=red,line width=1.5pt, mark options={solid, red}]
  table[row sep=crcr]{%
0 0\\
1 0.01262\\
2 0.02418\\
3 0.0366\\
4 0.04869\\
5 0.06077\\
6 0.07298\\
7 0.08448\\
8 0.09587\\
9 0.10739\\
10 0.11904\\
11 0.12965\\
12 0.14102\\
13 0.15216\\
14 0.16386\\
15 0.17487\\
16 0.18607\\
17 0.19817\\
18 0.20907\\
19 0.21946\\
20 0.23113\\
21 0.24246\\
22 0.25352\\
23 0.26375\\
24 0.27441\\
25 0.28553\\
26 0.29647\\
27 0.3081\\
28 0.31887\\
29 0.33038\\
30 0.34154\\
31 0.35236\\
32 0.36332\\
33 0.37441\\
34 0.38554\\
35 0.39713\\
36 0.40808\\
37 0.41881\\
38 0.43016\\
39 0.44019\\
40 0.45124\\
41 0.46189\\
42 0.47281\\
43 0.48369\\
44 0.49505\\
45 0.50666\\
46 0.51779\\
47 0.52859\\
48 0.53914\\
49 0.55068\\
50 0.56237\\
51 0.57318\\
52 0.58376\\
53 0.59518\\
54 0.6061\\
55 0.61679\\
56 0.62729\\
57 0.63818\\
58 0.64879\\
59 0.65991\\
60 0.67111\\
61 0.68215\\
62 0.69385\\
63 0.70424\\
64 0.7153\\
65 0.72582\\
66 0.73636\\
67 0.74723\\
68 0.75855\\
69 0.76898\\
70 0.77964\\
71 0.79093\\
72 0.80204\\
73 0.81265\\
74 0.82355\\
75 0.83431\\
76 0.84543\\
77 0.85624\\
78 0.86739\\
79 0.87842\\
80 0.88954\\
81 0.90013\\
82 0.91064\\
83 0.92222\\
84 0.93369\\
85 0.94465\\
86 0.95549\\
87 0.96637\\
88 0.97737\\
89 0.98841\\
90 1\\
};
\addlegendentry{$d1(43)P$}

\addplot [color=green,line width=1.5pt, mark options={solid, red}]
  table[row sep=crcr]{%
0 0\\
1 0.01162\\
2 0.02305\\
3 0.03502\\
4 0.04706\\
5 0.05896\\
6 0.07127\\
7 0.08346\\
8 0.09558\\
9 0.10748\\
10 0.11904\\
11 0.13093\\
12 0.14304\\
13 0.15419\\
14 0.16593\\
15 0.17763\\
16 0.1899\\
17 0.2017\\
18 0.21289\\
19 0.22475\\
20 0.23609\\
21 0.24758\\
22 0.25931\\
23 0.27069\\
24 0.28147\\
25 0.29324\\
26 0.30435\\
27 0.31527\\
28 0.3263\\
29 0.33774\\
30 0.34949\\
31 0.36126\\
32 0.37269\\
33 0.38336\\
34 0.39438\\
35 0.40539\\
36 0.41631\\
37 0.42744\\
38 0.43833\\
39 0.44904\\
40 0.4608\\
41 0.47224\\
42 0.48287\\
43 0.49383\\
44 0.50472\\
45 0.51587\\
46 0.52694\\
47 0.53755\\
48 0.54867\\
49 0.55922\\
50 0.57035\\
51 0.58136\\
52 0.59195\\
53 0.60283\\
54 0.61393\\
55 0.62492\\
56 0.63621\\
57 0.647\\
58 0.65761\\
59 0.66832\\
60 0.67927\\
61 0.68995\\
62 0.70091\\
63 0.71162\\
64 0.72208\\
65 0.73257\\
66 0.74351\\
67 0.75326\\
68 0.764\\
69 0.77481\\
70 0.78538\\
71 0.79654\\
72 0.80737\\
73 0.81819\\
74 0.82897\\
75 0.84001\\
76 0.85032\\
77 0.86125\\
78 0.87186\\
79 0.88257\\
80 0.89341\\
81 0.90381\\
82 0.91512\\
83 0.92603\\
84 0.93668\\
85 0.94678\\
86 0.95772\\
87 0.96771\\
88 0.97883\\
89 0.98898\\
90 1\\
};
\addlegendentry{$d1(3^3)P$}

\addplot [color=blue,line width=1.5pt, mark options={solid, red}]
  table[row sep=crcr]{%
0 0\\
1 0.01149\\
2 0.02391\\
3 0.03582\\
4 0.04747\\
5 0.05877\\
6 0.07076\\
7 0.08311\\
8 0.09437\\
9 0.10647\\
10 0.11828\\
11 0.13038\\
12 0.14173\\
13 0.15292\\
14 0.16492\\
15 0.17666\\
16 0.18829\\
17 0.20035\\
18 0.21188\\
19 0.22325\\
20 0.23426\\
21 0.24568\\
22 0.25782\\
23 0.26902\\
24 0.28056\\
25 0.29194\\
26 0.30302\\
27 0.31445\\
28 0.32618\\
29 0.33734\\
30 0.34867\\
31 0.35996\\
32 0.37153\\
33 0.38295\\
34 0.3945\\
35 0.40522\\
36 0.41676\\
37 0.4277\\
38 0.43829\\
39 0.44916\\
40 0.4599\\
41 0.47106\\
42 0.48209\\
43 0.49327\\
44 0.50465\\
45 0.51576\\
46 0.52676\\
47 0.53729\\
48 0.54844\\
49 0.55948\\
50 0.57116\\
51 0.5819\\
52 0.59243\\
53 0.6034\\
54 0.61432\\
55 0.62569\\
56 0.6365\\
57 0.64769\\
58 0.6583\\
59 0.66884\\
60 0.67973\\
61 0.68958\\
62 0.70032\\
63 0.71074\\
64 0.72136\\
65 0.73165\\
66 0.74227\\
67 0.75305\\
68 0.76402\\
69 0.7746\\
70 0.78545\\
71 0.79617\\
72 0.80658\\
73 0.81731\\
74 0.82771\\
75 0.83836\\
76 0.84899\\
77 0.85978\\
78 0.87057\\
79 0.88139\\
80 0.89167\\
81 0.90208\\
82 0.91258\\
83 0.92345\\
84 0.93437\\
85 0.94496\\
86 0.9557\\
87 0.96673\\
88 0.97772\\
89 0.9885\\
90 1\\
};
\addlegendentry{$d1(3^22)PC$}

\addplot [color=magenta,line width=1.5pt, mark options={solid, red}]
  table[row sep=crcr]{%
0 0\\
1 0.01209\\
2 0.02457\\
3 0.03689\\
4 0.0491\\
5 0.06163\\
6 0.07307\\
7 0.08457\\
8 0.09621\\
9 0.10867\\
10 0.12077\\
11 0.13298\\
12 0.14476\\
13 0.1561\\
14 0.16815\\
15 0.17994\\
16 0.19148\\
17 0.20265\\
18 0.21405\\
19 0.22549\\
20 0.23685\\
21 0.24789\\
22 0.25981\\
23 0.27102\\
24 0.28212\\
25 0.29352\\
26 0.30594\\
27 0.3174\\
28 0.32824\\
29 0.33904\\
30 0.35062\\
31 0.3625\\
32 0.37315\\
33 0.38459\\
34 0.39555\\
35 0.40639\\
36 0.41755\\
37 0.42882\\
38 0.44006\\
39 0.45112\\
40 0.4618\\
41 0.47314\\
42 0.48445\\
43 0.49545\\
44 0.50689\\
45 0.5171\\
46 0.52787\\
47 0.53894\\
48 0.54942\\
49 0.56065\\
50 0.57122\\
51 0.5824\\
52 0.59288\\
53 0.60403\\
54 0.61544\\
55 0.62618\\
56 0.63705\\
57 0.64763\\
58 0.65874\\
59 0.66944\\
60 0.68038\\
61 0.6908\\
62 0.70228\\
63 0.7136\\
64 0.72451\\
65 0.73545\\
66 0.7467\\
67 0.75746\\
68 0.76846\\
69 0.77917\\
70 0.78884\\
71 0.79921\\
72 0.8099\\
73 0.82028\\
74 0.83081\\
75 0.84084\\
76 0.85191\\
77 0.86258\\
78 0.87338\\
79 0.88369\\
80 0.89436\\
81 0.90501\\
82 0.91584\\
83 0.92619\\
84 0.93658\\
85 0.94718\\
86 0.95811\\
87 0.96827\\
88 0.97871\\
89 0.98923\\
90 1\\
};
\addlegendentry{$d1(2^5)PC$}

\addplot [color=orange,line width=1.5pt, mark options={solid, red}]
  table[row sep=crcr]{%
0 0\\
1 0.0004\\
2 0.0008\\
3 0.0013\\
4 0.0023\\
5 0.0041\\
6 0.006\\
7 0.0082\\
8 0.01\\
9 0.0121\\
10 0.0147\\
11 0.0178\\
12 0.0214\\
13 0.0247\\
14 0.0288\\
15 0.0331\\
16 0.0384\\
17 0.0427\\
18 0.0473\\
19 0.0529\\
20 0.0582\\
21 0.0638\\
22 0.0694\\
23 0.0757\\
24 0.0838\\
25 0.0924\\
26 0.0995\\
27 0.1077\\
28 0.1163\\
29 0.1239\\
30 0.132\\
31 0.1404\\
32 0.1497\\
33 0.1585\\
34 0.1676\\
35 0.1763\\
36 0.1872\\
37 0.198\\
38 0.208\\
39 0.22\\
40 0.2318\\
41 0.2439\\
42 0.2576\\
43 0.2692\\
44 0.2811\\
45 0.2942\\
46 0.307\\
47 0.3207\\
48 0.3332\\
49 0.3454\\
50 0.3595\\
51 0.3726\\
52 0.3855\\
53 0.3985\\
54 0.4132\\
55 0.4252\\
56 0.4412\\
57 0.4565\\
58 0.4711\\
59 0.4852\\
60 0.5015\\
61 0.5159\\
62 0.5305\\
63 0.5471\\
64 0.5628\\
65 0.5786\\
66 0.5956\\
67 0.61\\
68 0.6279\\
69 0.6443\\
70 0.6597\\
71 0.6772\\
72 0.6924\\
73 0.7102\\
74 0.7249\\
75 0.7407\\
76 0.7577\\
77 0.7755\\
78 0.7911\\
79 0.8093\\
80 0.8269\\
81 0.8437\\
82 0.8617\\
83 0.8801\\
84 0.8983\\
85 0.9156\\
86 0.9338\\
87 0.9501\\
88 0.9665\\
89 0.9821\\
90 1\\
};
\addlegendentry{$d2(6^25)$}

\addplot [color=gray,line width=1.5pt, mark options={solid, red}]
  table[row sep=crcr]{%
0 0\\
1 0.0002\\
2 0.0006\\
3 0.0012\\
4 0.0022\\
5 0.0033\\
6 0.0043\\
7 0.0071\\
8 0.0105\\
9 0.0136\\
10 0.0164\\
11 0.0204\\
12 0.0249\\
13 0.0279\\
14 0.0335\\
15 0.0371\\
16 0.0437\\
17 0.0505\\
18 0.0551\\
19 0.0609\\
20 0.0668\\
21 0.0734\\
22 0.0797\\
23 0.0854\\
24 0.0925\\
25 0.1006\\
26 0.1096\\
27 0.1156\\
28 0.124\\
29 0.1316\\
30 0.14\\
31 0.149\\
32 0.1584\\
33 0.1684\\
34 0.1796\\
35 0.1898\\
36 0.198\\
37 0.2108\\
38 0.2217\\
39 0.233\\
40 0.2466\\
41 0.2592\\
42 0.2706\\
43 0.2807\\
44 0.292\\
45 0.3028\\
46 0.316\\
47 0.3276\\
48 0.3411\\
49 0.3555\\
50 0.3676\\
51 0.3797\\
52 0.3946\\
53 0.4096\\
54 0.4223\\
55 0.4364\\
56 0.4507\\
57 0.4659\\
58 0.4791\\
59 0.4942\\
60 0.5106\\
61 0.5243\\
62 0.5386\\
63 0.5554\\
64 0.5706\\
65 0.5853\\
66 0.5989\\
67 0.6139\\
68 0.6286\\
69 0.6447\\
70 0.6619\\
71 0.6795\\
72 0.6974\\
73 0.7156\\
74 0.7349\\
75 0.7491\\
76 0.7656\\
77 0.7823\\
78 0.7969\\
79 0.8121\\
80 0.8281\\
81 0.8444\\
82 0.8636\\
83 0.8794\\
84 0.8991\\
85 0.9174\\
86 0.9343\\
87 0.9494\\
88 0.9667\\
89 0.9833\\
90 1\\
};
\addlegendentry{$d3(8^2)$}

\addplot [color=VeryPink,line width=1.5pt, mark options={solid, red}]
  table[row sep=crcr]{%
0 0\\
};
\addlegendentry{$\delta1(3^3)$}

\addplot [color=brown,line width=1.5pt, mark options={solid, red}]
  table[row sep=crcr]{%
0 0\\
};
\addlegendentry{$\delta1(43)$}

\addplot [color=cyan,line width=1.5pt, mark options={solid, red}]
  table[row sep=crcr]{%
0 0\\
};
\addlegendentry{$\delta1(5)$}

\addplot [color=black,line width=1.5pt, mark options={solid, red}]
  table[row sep=crcr]{%
0 0\\
};
\addlegendentry{\cite{lao2018robustified}}


\end{axis}

\input{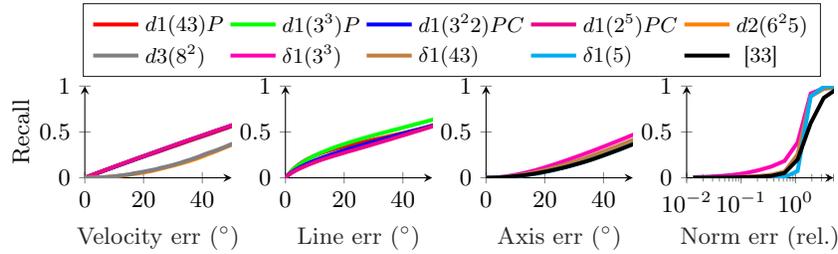}

\end{tikzpicture}%
    \vspace{-0.3cm}
    \caption{\textit{Noise test.} Recall curve of the velocity and line direction errors (for $\delta=0$) and the axis and norm errors (for $d=0$), calculated from $10^5$ samples with noise $\sigma=1px$, $||V||=0.2$ for $\delta=0$, $||A||=0.2$ for $d=0$.}
    \label{fig:noise_tests}
\end{figure}

\smallskip
\noindent
\textbf{Real-world experiments.}\quad For $d=1,\delta=0$, we use sequence 06 from \cite{DBLP:conf/cvpr/HedborgFFR12}, depicting parallel and coplanar lines with camera motion close to pure linear translation. As the sequence is depicted by a RS and GS camera moving together, we find pseudo-GT velocities and line directions by reconstructing the GS images in COLMAP \cite{schoenberger2016sfm}. For $d=0,\delta=1$, we use the iPhone 3GS sequence from \cite{DBLP:conf/cvpr/ForssenR10}. Since no GS images are available, pseudo-GT is obtained via a multi-view point-based method described in SM Sec. \ref{sec:SMexperiments}. Although less accurate than COLMAP on GS images, we believe  this provides a reasonable estimate of angular motion. Image curves are detected with a method similar to \cite{halir1998numerically} adapted to RS projections (see SM), and used as input to RANSAC \cite{ransac} with $1000$ iterations that selects the model minimizing the reprojection error (see SM). More advanced RANSAC variants \cite{barath2018graph} are left for future work.
Results (Fig.~\ref{fig:real_tests}) show that $d1(3^22)PC$, assuming parallel and coplanar lines, achieves velocity error below $20^\circ$ in about $50\%$ of images and below $40^\circ$ in $80\%$. $d1(3^3)P$ is less accurate but does not rely on coplanarity, making it more universal. Line direction estimates are accurate for all solvers. We exclude $d1(43)P$, as with our detector it consistently estimates the velocity as $[1 \ 0 \ 0]^\top$. For $d=0,\delta=1$, our $\delta1(43)$ and $\delta1(5)$ solvers perform best, outperforming LAAA \cite{lao2018robustified} in both axis and norm error. Overall, the results show that for some images with simple motion and regular scenes, single-image RS motion estimation achieves acceptable accuracy.

\begin{figure}
    \centering
    \begin{tikzpicture}

\begin{axis}[%
width=0.16\columnwidth,
height=0.10\columnwidth,
at={(0in,0.389in)},
scale only axis,
xmin=0,
xmax=50,
xlabel style={font=\color{white!15!black}},
xlabel={Velocity err ($^{\circ}$)},
ytick={0,0.5,1},
ymin=0,
ymax=1,
ymode=normal,
yminorticks=true,
axis lines = left,
axis background/.style={fill=white},
title style={font=\bfseries},
ylabel style={yshift=-0.15in},
xlabel style={yshift=0.07in},
ylabel={Recall},
legend style={at={(1.0,1.15)}, anchor=south, legend cell align=left, align=left, draw=white!15!black, font=\footnotesize,nodes={scale=0.9, transform shape}},
legend image post style={xscale=0.5},
legend columns=4
]



\addplot [color=green,line width=1.5pt, mark options={solid, red}]
  table[row sep=crcr]{%
0 0\\
2.1626625628294285 0.02\\
7.211603004504696 0.04\\
9.09691898180879 0.06\\
11.775945342969658 0.08\\
13.26801035350923 0.1\\
16.495948263158212 0.12\\
16.86559732088605 0.14\\
16.938738843756997 0.16\\
17.189245714148402 0.18\\
21.50402625762772 0.2\\
22.48715527314066 0.22\\
23.0699668131461 0.24\\
23.25153841848222 0.26\\
24.162101317263648 0.28\\
25.51903658923878 0.3\\
26.832401468881788 0.32\\
27.537056365052383 0.34\\
28.013388557548296 0.36\\
29.822949260936596 0.38\\
31.446043459148036 0.4\\
37.807619201202485 0.42\\
38.243838157967005 0.44\\
38.53833694201545 0.46\\
39.99696128156532 0.48\\
40.12799769007599 0.5\\
41.456925207298866 0.52\\
43.400089522250504 0.54\\
47.058177061870865 0.56\\
47.49563888549623 0.58\\
47.64000025869842 0.6\\
49.595351717414914 0.62\\
52.042768056531344 0.64\\
52.539102198326326 0.66\\
57.17286164687516 0.68\\
58.15991473839367 0.7\\
59.67189741674349 0.72\\
61.85246957884144 0.74\\
62.26662828304322 0.76\\
62.49828988353824 0.78\\
62.66740110567999 0.8\\
62.96879230924093 0.82\\
63.79285750752781 0.84\\
65.4302011030257 0.86\\
65.66693597250345 0.88\\
68.35815938688083 0.9\\
71.14681709091545 0.92\\
127.45718111168503 0.94\\
148.3893158202355 0.96\\
154.06769921997463 0.98\\
167.94473030024903 1.0\\
};
\addlegendentry{$d1(3^3)P$}

\addplot [color=blue,line width=1.5pt, mark options={solid, red}]
  table[row sep=crcr]{%
0 0\\
0.5025324448555653 0.02\\
0.70766468934265 0.04\\
0.9644654417976363 0.06\\
1.172668315837372 0.08\\
3.9051094208341524 0.1\\
4.122769986024683 0.12\\
4.986132431253123 0.14\\
5.2445892973500134 0.16\\
5.584836266303056 0.18\\
5.816090366320654 0.2\\
5.857142640158207 0.22\\
6.203610656569953 0.24\\
7.05285444597992 0.26\\
8.267917380901832 0.28\\
8.88479845952429 0.3\\
9.959818973933375 0.32\\
10.09450040143228 0.34\\
10.423236497654589 0.36\\
12.814191560093215 0.38\\
14.513570657249536 0.4\\
15.073996861185435 0.42\\
15.89714061526874 0.44\\
19.59387257723594 0.46\\
23.950889985676028 0.48\\
24.455361186469922 0.5\\
24.83121684317552 0.52\\
27.15422609800594 0.54\\
27.379235472029183 0.56\\
28.368876941903814 0.58\\
29.303004149314635 0.6\\
29.75585349079851 0.62\\
32.51741861057485 0.64\\
33.00203537632577 0.66\\
33.0114627769554 0.68\\
35.464646093693474 0.7\\
37.10801091274031 0.72\\
37.966013509660826 0.74\\
38.0414814172674 0.76\\
39.90315656787976 0.78\\
41.151959949045825 0.8\\
41.90344778281235 0.82\\
44.625805419467945 0.84\\
46.77316529258062 0.86\\
46.81334882128129 0.88\\
48.720396597339956 0.9\\
49.25246799453055 0.92\\
49.5317921229158 0.94\\
49.83487181985465 0.96\\
86.86970814509867 0.98\\
174.7433519697376 1.0\\
};
\addlegendentry{$d1(3^22)PC$}

\addplot [color=magenta,line width=1.5pt, mark options={solid, red}]
  table[row sep=crcr]{%
0 0\\
5.088828254569242 0.02\\
11.235590344375103 0.04\\
13.031767830932276 0.06\\
13.524721062357585 0.08\\
13.983538868683654 0.1\\
14.699145152496712 0.12\\
17.2220660313047 0.14\\
19.384144417989077 0.16\\
20.91887963963622 0.18\\
21.924654039722427 0.2\\
22.780030626970127 0.22\\
23.10043315652972 0.24\\
26.776955221085263 0.26\\
26.835955144709317 0.28\\
26.924980339324776 0.3\\
30.21576457452801 0.32\\
30.50201488692758 0.34\\
31.247105649973538 0.36\\
31.359566980927575 0.38\\
32.15946575302991 0.4\\
32.54586323498684 0.42\\
33.384821986861496 0.44\\
34.365492131591196 0.46\\
38.52285063996279 0.48\\
39.19149776139307 0.5\\
39.878254885775505 0.52\\
43.390259566572716 0.54\\
52.18788390503323 0.56\\
52.46229967399252 0.58\\
52.518067349560134 0.6\\
54.18915195471601 0.62\\
54.50988327787579 0.64\\
55.280654779482454 0.66\\
56.82944089195092 0.68\\
57.131519340926275 0.7\\
58.58108633663663 0.72\\
59.22640046308004 0.74\\
60.43626490001082 0.76\\
61.135413088120494 0.78\\
61.52689541348259 0.8\\
64.3310401416725 0.82\\
67.15942027623437 0.84\\
67.50811009941305 0.86\\
68.22986975817368 0.88\\
68.62759018953605 0.9\\
82.89338151615961 0.92\\
85.04490294388789 0.94\\
98.62216763751974 0.96\\
113.26118420318345 0.98\\
179.09883120108017 1.0\\
};
\addlegendentry{$d1(2^5)PC$}


\end{axis}

\input{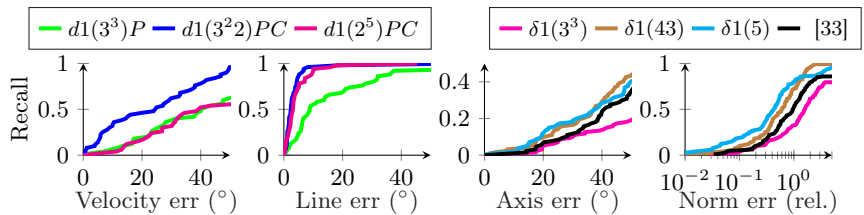}

\end{tikzpicture}%
    \vspace{-0.4cm}
    \caption{\textit{Real-world experiments.} Recall curves of the velocity and line direction errors of the proposed solvers with $\delta=0$ on the dataset \cite{DBLP:conf/cvpr/HedborgFFR12} (\textit{left}), and of the axis and norm errors of the proposed solvers with $d=0$ on the dataset \cite{DBLP:conf/cvpr/ForssenR10}.}
    \label{fig:real_tests}
\end{figure}

\section{Conclusion}
We characterized the algebraic and geometric properties of RS images of world points or lines, exploited those features to systematically explore SfM possibilities from a single view, and evaluated examples of minimal solvers. Despite its exhaustive nature, our work leaves many questions open. 
This includes the study of 1) single-view SfM from images of both world lines \emph{and} world points, or more complex primitives,2) multi-view SfM, or 3) alternative camera models (e.g., tangential / linearized rotations to model small but non-negligible rotations more accurately, or the more expressive model where $C(x)$ and $A(x)$ are rational (instead of polynomial) functions). 
Moreover, several statements in this article have been proven computationally for small choices of camera hyperparameters, but deserve more general, theoretical proofs.
We also leave the study of essential-matrix analogs for single RS cameras observing points for future work.
Finally, it should be investigated whether the minimal problems proposed in this article decompose into easier problems (cf. \cite{duff2025galois}), and can be used together within hybrid RANSAC \cite{DBLP:conf/cvpr/CamposecoCPS18} to build a general purpose, robust pipeline for RS SfM.

\section*{Acknowledgments} The authors thank Tomas Pajdla for numerous discussions on rolling-shutter cameras. 
S. E. M., P. H. and Ka. K. were supported by the Swedish Foundations’ Starting Grant \emph{Algebraic Vision} funded by the Ragnar Söderbergs stiftelse.
Kim K. and Ka. K. were supported by the Wallenberg AI, Autonomous Systems and Software Program (WASP) funded by the Knut and Alice Wallenberg Foundation.
%
%
\bibliographystyle{splncs04}
\bibliography{main}

\newpage
\appendix

\section{Algebraic Geometry Prerequisites} \label{sec:AG}

Let $\mathbb{K}$ be a field. In this work, $\mathbb{K}$ is either the real numbers $\RR$, the complex numbers $\CC$, or  -- for symbolic computations --  the rational numbers $\mathbb{Q}$ or occasionally a finite field.
As customary, we work with homogeneous coordinates on 3D world and images; that is, we work with projective spaces $\PP^n$ that are obtained from $\mathbb{K}^{n+1}\setminus \{ 0 \}$ by identifying vectors that are the same up to scalar multiplication. 
We say that $\PP^n$ is the projectivization of the vector space $\mathbb{K}^{n+1}$. 
Similarly, the \emph{dual projective space} $(\PP^n)^\ast$ is the projectivization of the dual vector space $(\mathbb{K}^{n+1})^\ast$. Note that points in  $(\PP^n)^\ast$ correspond to hyperplanes in $\PP^n$.

\begin{example} \label{ex:scanlineDual}
    The scanlines of our RS sensors are parallel to the $y$-axis. For instance, $\{ (5,y) \mid y \in \mathbb{K} \}$ is a scanline; its defining equation in homogeneous coordinates is $x-5z = 0$. 
    That line in the projective plane $\PP^2$ corresponds to the point $\begin{bmatrix}
        1:0:-5
    \end{bmatrix}$ in the dual projective plane $(\PP^2)^\ast$.
    \hfill $\diamondsuit$
\end{example}

\noindent
The projective space $\PP^n$ can also be seen as the union of $n+1$ \emph{affine charts} (i.e., $n+1$ copies of $\mathbb{K}^n$ ``glued together'' correctly; see \cite{shafarevich} for a formal treatment). 
Our default affine chart $\mathbb{K}^n \subseteq \PP^n$, as customary in computer vision, is the one where the last coordinate of points in $\PP^n$ is assumed to be $1$.

As our camera model is polynomial, algebraic varieties appear naturally in our work. 
An \emph{affine variety} in $\mathbb{K}^n$ is the solution set of a system of polynomial equations in $n$ variables (note that there are more general notions of algebraic varieties, see \cite{shafarevich}, but this definition suffices for our purposes).
A \emph{projective variety} in $\PP^n$ is the solution set of a system of \emph{homogeneous} polynomial equations.
Given a projective variety in $\PP^n$, we can obtain an affine variety in $\mathbb{K}^n$ by setting the last variable in each defining equation to $1$.
Conversely, from an affine variety in $\mathbb{K}^n$, we can build a projective variety in $\PP^n$ by introducing a new (last) variable and using that to homogenize each defining equation.

The latter process is a special instance of taking Zariski closures. 
Given any set $S$ in $\mathbb{K}^n$ or $\PP^n$, the \emph{Zariski closure} of $S$ is the smallest variety in $\mathbb{K}^n$ or $\PP^n$, respectively, that contains $S$.
For instance, for an affine variety $X$ in  $\mathbb{K}^n$, when viewing $\mathbb{K}^n$ as a subset of $\PP^n$, the Zariski closure of $X$ inside $\PP^n$ is precisely the projective variety obtained by homogenization described above.

In this work, we model the picture-taking process as an algebraic map between varieties.
An \emph{algebraic map} $f: X \to Y$ between affine varieties $X \subseteq \mathbb{K}^n$ and $Y \subseteq \mathbb{K}^m$ is given by $m$ rational functions $f_i$ in $n$ variables each.
For projective varieties $X \subseteq \PP^n$ and $Y \subseteq \PP^m$, algebraic maps $f: X \to Y$ are given by $m+1$ homogeneous polynomial functions $f_i$ in $n+1$ variables each (again, for more general notions of algebraic maps, see \cite{shafarevich}).

\begin{example} \label{ex:picMaps}
    By Theorem \ref{thm:imageCurve}, for almost all RS cameras in $\mathcal{P}_{d,\delta}$ (see \eqref{eq:cameraModel}), the picture-taking process of observing $\ell$ lines is an algebraic map between projective varieties
    \begin{equation} \label{eq:picMapFixedCam}
        \mathrm{Gr}(1, \PP^3)^\ell \to \mathcal{H}_{1+d+2\delta}^\ell,
    \end{equation}
    where the Grassmannian $\mathrm{Gr}(1, \PP^3) := \{ L \subseteq \PP^3 \mid L \text{ is line} \}$ can be seen as a projective variety inside $\PP^5$ via the Pl\"ucker coordinates and $\mathcal{H}_{1+d+2\delta}$ is a $2(1+d+2\delta)$-dimensional projective space by Proposition \ref{prop:H}. 

    When the camera is not fixed, we can model the picture-taking process of varying cameras as an algebraic map as well:
    \begin{equation}
        \label{eq:picMapVariedCam}
        \mathcal{P}_{d, \delta} \times \mathrm{Gr}(1, \PP^3)^\ell \to \mathcal{H}_{1+d+2\delta}^\ell. 
    \end{equation}
    Given an RS camera and $\ell$ lines, this map outputs the $\ell$ resulting image curves. 
    The map in \eqref{eq:picMapFixedCam} is obtained from \eqref{eq:picMapVariedCam} by specializing to a fixed camera.
    \hfill $\diamondsuit$
\end{example}

We are interested, on the one hand, in understanding all possible RS images (e.g., in understanding the images of the maps in Example \ref{ex:picMaps}) and, on the other hand, in 3D reconstruction.
Algebraically, 3D reconstruction means to compute \emph{fibers} of algebraic maps $f: X \to Y$, i.e., preimages $f^{-1}(y)$ over points $y \in Y$.

\begin{example}
    In the setting of Example \ref{ex:picMaps}, triangulation refers to computing  fibers of the map in \eqref{eq:picMapFixedCam}, while SfM aims to compute  fibers of the map \eqref{eq:picMapVariedCam}. \hfill $\diamondsuit$
\end{example}

\noindent
The dimension of the image of an algebraic map is directly linked to the dimension of its fibers. 
This  is formalized in the following theorem.
That theorem, like many other assertions in this paper, uses the concept of \emph{generic} points. 
A property is said to hold for generic points in a variety $X$ if there is a proper subvariety $Z \subsetneq X$ such that the property holds for all $x \in X \setminus Z$. 
Equivalently, we often write that the property holds \emph{for almost all} points in $X$; or simply say that it holds \emph{generically}.
Moreover, the following theorem holds for algebraic maps $f: X \to Y$ between \emph{irreducible} varieties, which are varieties that are not the union of two proper subvarieties. 
Note that the irreducibility of $X$ implies that the image $f(X)$ must be irreducible as well. 
All varieties describing structure or motion in this article are irreducible, e.g., all varieties in Example \ref{ex:picMaps} are irreducible.

\begin{theorem}[Fiber-Dimension Theorem {\cite[Chap. 1.6.3]{shafarevich}}] \label{thm:fiberDim}
    If $f: X \to Y$ is an algebraic map between irreducible varieties (over $\mathbb{C})$, then 
    \begin{align*}
        \dim X = \dim f^{-1} (f(x)) + \dim \mathrm{im}(f)
    \end{align*}
    for generic $x \in X$.
    Moreover, $\dim f^{-1} (f(x)) \geq \dim X - \dim \mathrm{im}(f)$ for all $x \in X$. 
\end{theorem}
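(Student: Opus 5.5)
The Fiber-Dimension Theorem is a standard result cited from Shafarevich, so rather than reprove it from scratch I will reduce it to the basic dimension facts about dominant maps; the paper invokes it as a black box. First replace $Y$ by the Zariski closure $W=\overline{f(X)}$, which is irreducible because $X$ is. Then $f\colon X\to W$ is dominant, and $\dim \mathrm{im}(f)$ means $\dim W$. This matches the convention the paper uses, since $f(X)$ is constructible (Chevalley) and has the same dimension as its closure. Both claims are local in nature, so I will work on affine open pieces $U\subseteq X$ and $V\subseteq W$ with $f(U)\subseteq V$. Dimension of an irreducible variety is unchanged under passing to a nonempty open subset, so this reduction costs nothing.

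\emph{Lower bound at every point.} Let $y=f(x)$ and $m=\dim W$. By Noether normalization (or a local system of parameters) there are functions $g_1,\dots,g_m$ near $y$ such that $y$ is an isolated point of $\{g_1=\dots=g_m=0\}$. Pulling these back, the fiber $f^{-1}(y)$ is, near $x$, a union of components of $\{f^*g_1=\dots=f^*g_m=0\}\subseteq X$. Krull's principal ideal theorem, applied $m$ times, says each component of the zero set of $m$ functions on an irreducible $X$ has dimension at least $\dim X-m$. This gives $\dim f^{-1}(f(x))\geq \dim X-\dim W$ for all $x$.

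\emph{Equality generically.} For an affine dominant $f\colon U\to V$ we have a field extension $\mathbb{C}(V)\subseteq\mathbb{C}(U)$ of transcendence degree $r=\dim X-\dim W$. Choose $h_1,\dots,h_r\in\mathbb{C}[U]$ that form a transcendence basis over $\mathbb{C}(V)$. Every coordinate function of $U$ is then algebraic over $\mathbb{C}(V)(h_1,\dots,h_r)$. Clearing denominators, there is a nonzero $a\in\mathbb{C}[V]$ such that, after shrinking $V$ to $V_a$, each coordinate satisfies a monic polynomial over $\mathbb{C}[V_a][h_1,\dots,h_r]$. Hence $U_a\to V_a\times\mathbb{A}^r$ is finite, and so every fiber over $y\in V_a$ is finite over $\mathbb{A}^r$ and therefore has dimension at most $r$. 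Combined with the lower bound, equality holds for $x$ in the dense open set $f^{-1}(V_a)$, that is, for generic $x$. To handle the projective case, cover $X$ by finitely many affines and intersect the resulting dense open sets, which is legitimate because $X$ is irreducible.

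The main obstacle is the generic upper bound, specifically the generic Noether normalization step that finds the single $a$ making the extension integral. I would handle it by clearing the finitely many denominators of the minimal polynomials of a finite set of generators of $\mathbb{C}[U]$, which is a finite and explicit construction.
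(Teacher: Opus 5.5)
The paper gives no proof of this statement; it only cites Shafarevich, so I am comparing your argument with the standard proof. Your reduction to $W=\overline{f(X)}$ is correct, and so is the lower bound (take parameters $g_1,\dots,g_m$ at $y$, pull them back, apply Krull). The gap is in the generic upper bound, at exactly the step you flag as the main obstacle.

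You choose an \emph{arbitrary} transcendence basis $h_1,\dots,h_r\in\mathbb{C}[U]$ over $\mathbb{C}(V)$. The generators of $\mathbb{C}[U]$ are then only \emph{algebraic} over $\mathbb{C}(V)(h_1,\dots,h_r)$. The denominators of their minimal polynomials lie in $\mathbb{C}(V)[h_1,\dots,h_r]$, so they are genuine polynomials in the $h_i$, not elements of $\mathbb{C}(V)$. Clearing them gives a relation whose leading coefficient is a polynomial in the $h_i$, and no single $a\in\mathbb{C}[V]$ inverts it.

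Concretely, let $V$ be a point, $U=\mathbb{A}^2$ with coordinates $s,t$, and take $h_1=s$, $h_2=st$. The minimal polynomial of $t$ is $T-h_2/h_1$. But $\mathbb{C}[s,t]$ is not finite over $\mathbb{C}[s,st]$, because the fiber of $(s,t)\mapsto(s,st)$ over the origin is a line. The same happens in families. For $U=V\times\mathbb{A}^2$ with $V=\mathbb{A}^1$ and the same $h_1,h_2$, no localization $V_a$ makes $U_a\to V_a\times\mathbb{A}^2$ finite. So the finite map $U_a\to V_a\times\mathbb{A}^r$, on which your bound $\dim f^{-1}(y)\le r$ rests, need not exist. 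In these examples the theorem itself still holds; it is the step that fails.

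The fix is to choose the $h_i$ by Noether normalization over the field $K=\mathbb{C}(V)$, not as an arbitrary transcendence basis. Apply Noether normalization to the finitely generated $K$-domain $K\otimes_{\mathbb{C}[V]}\mathbb{C}[U]$, which is a localization of $\mathbb{C}[U]$ of dimension $r$. This gives $h_1,\dots,h_r$; rescale them by nonzero elements of $\mathbb{C}[V]$ so they lie in $\mathbb{C}[U]$, which does not change $K[h_1,\dots,h_r]$. Now each generator of $\mathbb{C}[U]$ satisfies a \emph{monic} polynomial with coefficients in $K[h_1,\dots,h_r]$. The only denominators left are finitely many elements of $\mathbb{C}[V]$, and their product is your $a$. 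With this choice the rest of your argument goes through: finiteness over $V_a\times\mathbb{A}^r$, the finite affine cover, and the intersection of the resulting dense open sets.
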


\noindent
    This theorem is the nonlinear version of the rank-nullity theorem in linear algebra. The main difference is that for linear maps $f$, all fibers $f^{-1} (f(x))$ have the same dimension, while for polynomial / rational maps only almost all fibers have the same dimension.
A computational criterion to compute the dimension of the image of an algebraic map is by computing the rank of its Jacobian at a generic point in the domain, stated formally as follows:
\begin{lemma}[{\cite[Lemma 2.4 in Chap. 2.6]{shafarevich}}]
    \label{lem:jacobianCheck}
    If $f: X \to Y$ is an algebraic map between irreducible varieties, then  
    \begin{align*}
        \dim \mathrm{im}(f) = \mathrm{rank}  \, d_x f
    \end{align*}
    for almost all $x \in X$. Moreover, $\dim \mathrm{im}(f) \geq \mathrm{rank}  \, d_x f$ for all smooth $x \in X$. 
\end{lemma}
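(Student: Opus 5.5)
The plan is to compare the rank of $d_x f$ with $\dim Z$, where $Z := \overline{\mathrm{im}(f)}$ is the Zariski closure of the image. Since $X$ is irreducible, $Z$ is irreducible, and we may replace $Y$ by $Z$, so that $f : X \to Z$ is dominant. Working in affine charts, $f$ is given by rational functions whose Jacobian matrix $J$ has rational-function entries. Let $r$ be the maximal rank of $J$ on the smooth locus $X_{\mathrm{sm}}$ of $X$ (rank of the restriction to the tangent spaces $T_xX$). The set where the rank is $<r$ is cut out by vanishing of minors, hence closed. So the rank equals $r$ on a dense open $U\subseteq X_{\mathrm{sm}}$ and is $\leq r$ at every smooth point. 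The statement therefore reduces to showing $r = \dim Z$: the equality then holds on $U$, and the inequality at all smooth $x$.

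\textbf{Upper bound $r \leq \dim Z$.} For $x \in X_{\mathrm{sm}}$, the differential maps $T_xX$ into $T_{f(x)}Z$. The smooth locus $Z_{\mathrm{sm}}$ is a dense open subset of $Z$, and $f$ is dominant, so $f^{-1}(Z_{\mathrm{sm}}) \cap U$ is nonempty. At a point $x$ there, $\dim T_{f(x)}Z = \dim Z$, so $\mathrm{rank}\, d_x f \leq \dim Z$, giving $r \leq \dim Z$.

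\textbf{Lower bound $r \geq \dim Z$.} This is the main obstacle, and it genuinely uses characteristic $0$: in positive characteristic, Frobenius-type maps have zero differential yet are dominant. I would argue algebraically via function fields. Dominance gives an inclusion $\CC(Z) \subseteq \CC(X)$, and in characteristic $0$ every finitely generated field extension is separable. For separable extensions the natural map of Kähler differentials
\[
\Omega_{\CC(Z)/\CC} \otimes_{\CC(Z)} \CC(X) \to \Omega_{\CC(X)/\CC}
\]
is injective. The source has dimension $\dim Z$ over $\CC(X)$, and this map is exactly the generic cotangent map $f^\ast$, i.e. the transpose of $J$ at the generic point. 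Hence the generic rank of $J$ is $\dim Z$. Equivalently, one can invoke generic smoothness: there is a dense open $V \subseteq Z_{\mathrm{sm}}$ such that $f^{-1}(V)\cap X_{\mathrm{sm}} \to V$ is smooth, so $d_x f$ is surjective onto $T_{f(x)}Z$ there. Either route gives $r \geq \dim Z$.

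\textbf{Conclusion.} Combining the two bounds, $\mathrm{rank}\, d_x f = \dim Z = \dim \mathrm{im}(f)$ for all $x$ in the dense open $U$, hence for almost all $x \in X$. By the semicontinuity established at the start, $\mathrm{rank}\, d_x f \leq r = \dim \mathrm{im}(f)$ at every smooth point. The smoothness hypothesis is what makes $T_xX$ of the correct dimension, so that the rank of $d_x f$ is the rank of the restricted Jacobian at $x$. As a consistency check, the fiber-dimension theorem (Theorem \ref{thm:fiberDim}) then says that generic fibers have dimension $\dim X - \mathrm{rank}\, d_x f$, matching the rank–nullity heuristic mentioned after it.
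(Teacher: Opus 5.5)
Your proof is correct. The paper gives no argument of its own and only cites Shafarevich, and your proof is the standard one behind that reference: lower semicontinuity of the rank on $X_{\mathrm{sm}}$, the bound by $\dim T_{f(x)}Z$ at a point lying over $Z_{\mathrm{sm}}$, and separability of $\mathbb{C}(X)/\mathbb{C}(Z)$ (equivalently, generic smoothness) for the reverse inequality. You are also right that characteristic $0$ is essential. The paper's statement leaves that hypothesis implicit, even though it mentions occasionally computing over finite fields.
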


\noindent
Here, the smooth points on a variety $X$ are those where $X$ locally looks like a smooth manifold (for a formal definition, see \cite{shafarevich}). 
When the ground field $\mathbb{K}$ is one of $\CC, \RR, \mathbb{Q}$, then almost every point on a variety $X$ is smooth. 
At a smooth point $x$ of an irreducible variety $X$, the tangent space $T_x X$ has the same dimension as $X$. 
Given an algebraic map $f: X\to Y$, its derivative at $x$ is a linear map $d_xf: T_x X \to T_{f(x)} Y$, whose representation matrix is the Jacobian mentioned right before Lemma \ref{lem:jacobianCheck}.

We call an algebraic map \emph{birational} (onto its image) if it is injective almost everywhere, i.e., the fiber $f^{-1}(f(x))= \{ x \}$ for almost all $x \in X$. Here is a computational criterion to check for birationality:

\begin{lemma}[{\cite[Prop. 4.6.7]{ega}}]
\label{lem:birationalCheck}
    Let $f: X \to Y$ be an algebraic map between irreducible varieties (over $\mathbb{C})$. If there is some smooth point $x \in X$ such that the derivative $d_x f$ of $f$ at $x$ has full rank and $f^{-1}(f(x))= \{ x \}$, then the map $f$ is birational onto its image. 
\end{lemma}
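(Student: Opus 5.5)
The statement to prove is the last one displayed, Lemma~\ref{lem:birationalCheck}: if some smooth point $x\in X$ has $d_xf$ of full rank and $f^{-1}(f(x))=\{x\}$, then $f$ is birational onto its image. I would work with the dominant map $f\colon X\to Z:=\overline{f(X)}$ and argue in three steps: the full-rank hypothesis makes $f$ generically finite, the fiber hypothesis pins the degree of that finite map to one, and degree one is exactly generic injectivity.

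\emph{Step 1: dimension count.} By Lemma~\ref{lem:jacobianCheck}, $\dim Z \ge \operatorname{rank} d_xf = \dim X$ at the smooth point $x$. Here "full rank'' means injective on $T_xX$, which has dimension $\dim X$. Combined with $\dim Z\le\dim X$, this gives $\dim Z=\dim X$. Theorem~\ref{thm:fiberDim} then shows that generic fibers are zero-dimensional, so $f$ is generically finite. Consequently, over a dense open $V\subseteq Z$, the restriction $f\colon f^{-1}(V)\to V$ is finite, and (in characteristic $0$) étale of some degree $m=[\mathbb C(X):\mathbb C(Z)]$. Birationality is equivalent to $m=1$.

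\emph{Step 2: transfer the condition at $x$ to generic points.} The difficulty is that $x$ itself need not lie over $V$. I would use upper semicontinuity of the local degree. Since $d_xf$ is injective, $f$ is unramified at $x$, so there is an analytic neighborhood $U\ni x$ with $f|_U$ an immersion. Because $\dim Z=\dim X$, the set $f(U)$ contains a Euclidean neighborhood of $f(x)$ in $Z$, at least near a point where $Z$ is locally irreducible.

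Now suppose $m\ge2$. For $z\in V$ near $f(x)$, the fiber consists of $m$ points. At most one of them lies in $U$, since $f|_U$ is injective after shrinking $U$. So some point $x'_z\in f^{-1}(z)$ lies outside $U$. Letting $z\to f(x)$, the points $x'_z$ accumulate at some point $x'\in f^{-1}(f(x))$ with $x'\ne x$. This limit exists if $f$ is proper, and in general one passes to a compactification or uses the valuative criterion on a curve through $f(x)$. That contradicts $f^{-1}(f(x))=\{x\}$.

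\emph{Main obstacle.} The delicate point is Step~2. Points of the fiber over $z$ may escape to infinity, so I would first replace $X$ by its projective closure, or restrict to the case of projective $X$. A hidden second branch of $Z$ through $f(x)$ is also a concern, so I would invoke Zariski's Main Theorem. Concretely: a quasi-finite birational-type argument, or the EGA statement that a morphism unramified at $x$ with $x$ isolated and alone in its fiber is a local isomorphism onto its image near $x$, which gives local degree $1$ and hence $m=1$. If this fails for non-proper $f$, the lemma should be read with "smooth point'' also meaning that the image is normal at $f(x)$, as in \cite{ega}.
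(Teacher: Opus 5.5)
The paper gives no argument for this lemma (it only cites EGA), so your proposal has to stand on its own. It breaks in Step~2, and the break cannot be repaired without an extra hypothesis. The limit point $x'\in f^{-1}(f(x))$ of the second preimages $x'_z$ exists only if $f$ is proper over a neighbourhood of $f(x)$. None of your remedies supplies this:
\begin{itemize}
\item Passing to a compactification $\bar X$ can add new points to the fibre over $f(x)$, so the hypothesis $f^{-1}(f(x))=\{x\}$ is lost.
\item The valuative criterion tests properness; it does not produce limits.
\item Zariski's Main Theorem only factors a quasi-finite $f$ as an open immersion into some $X'$ followed by a finite map. The fibre of that finite map over $f(x)$ may contain exactly the points missing from $X$.
\end{itemize}

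In fact the statement as written is false without properness. Take $X=\mathbb{C}\setminus\{-1\}$ (an irreducible affine variety, e.g.\ $\{(t,u)\in\mathbb{C}^2 : (t+1)u=1\}$), $f(t)=t^2$ and $x=1$. Then $d_xf=2\neq 0$ has full rank, and $f^{-1}(1)=\{1\}$ because $-1\notin X$. Yet $f$ is generically two-to-one onto its image $\mathbb{C}$: your points $x'_z=-\sqrt{z}$ converge to $-1\notin X$. The image here is smooth, so your fallback reading (normality of the image at $f(x)$) does not help. Likewise, ``local degree $1$ at $x$'' says nothing about the global degree $m$ unless all points of nearby generic fibres are forced to lie near $x$, and that is precisely properness.

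The missing ingredient is the hypothesis that $f\colon X\to Z=\overline{f(X)}$ is proper over some neighbourhood of $z:=f(x)$ (e.g.\ $X$ projective, or $f$ finite). With it, a short algebraic argument replaces your Step~2 entirely:
\begin{itemize}
\item By upper semicontinuity of fibre dimension, the points of $X$ on positive-dimensional fibres form a closed set. Its image is closed by properness and misses $z$. So over a neighbourhood of $z$ the map is proper and quasi-finite, hence finite.
\item Injectivity of $d_xf$ means $\mathfrak{m}_z\to\mathfrak{m}_x/\mathfrak{m}_x^2$ is surjective, so $\mathfrak{m}_z\mathcal{O}_{X,x}=\mathfrak{m}_x$ by Nakayama.
\item Since $x$ is the only point over $z$, the finite $\mathcal{O}_{Z,z}$-algebra $B=(f_*\mathcal{O}_X)_z$ is local and equals $\mathcal{O}_{X,x}$. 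Hence $B/\mathfrak{m}_zB=\mathbb{C}$, and Nakayama shows that $\mathcal{O}_{Z,z}\to B$ is surjective.
\item It is injective because $Z$ is the reduced closure of the image.
\end{itemize}
Hence $f$ is an isomorphism over a neighbourhood of $z$, and in particular birational. This also shows $Z$ is smooth at $z$, which disposes of your ``hidden second branch'' worry without any normality assumption.
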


We say that an algebraic reconstruction problem is a \emph{minimal problem} if the generic fibers of its associated algebraic map $f: X \to Y$ are zero-dimensional; i.e., for generic $y \in Y$, we require that $f^{-1}(y)$ is finite and non-empty. 
Here,  $Y$ is the variety of possible measurements and $X$ is the variety of parameters to be recovered.
Typically, $Y$ should be a linear space (in an affine chart), as in Example \ref{ex:picMaps}, because when $Y$ is defined by nonlinear constraints (e.g., the degree-10 equation in Sec. \ref{sec:longEquation}), it is often practically  infeasible to obtain exact measurements on $Y$ due to noise. 
By the Fiber-Dimension Theorem \ref{thm:fiberDim}, every minimal problem satisfies that $\dim X = \dim Y$, and we call problems with that property \emph{balanced} (as in \cite{DBLP:journals/pami/DuffKLP24}).
Moreover, given a balanced problem, we can check whether it is indeed minimal by testing one of the following two equivalent conditions:
1) $f^{-1}(y) \neq \emptyset$ for generic $y \in Y$, or
2) $\dim \mathrm{im}(f) = \dim Y$.
We can test the first condition by finding a solution (e.g., with homotopy continuation) and the second condition by the Jacobian check in Lemma \ref{lem:jacobianCheck}.

\section{Minimal Problems} \label{sec:minProbs}

\begin{table}

\newcommand{\picplaceholder}{
\begin{tikzpicture}[scale=0.6]
\draw[gray] (0,0) rectangle (2.8,1.8);
\node at (1.4,0.9) {\small picture};
\end{tikzpicture}
}

\renewcommand{\arraystretch}{1.6}


\begin{center}
\resizebox{0.8\linewidth}{!}{
\begin{tabular}{|p{2.6cm}|ccccc|}
\hline 
\rule{0pt}{2.2cm}
\raisebox{0.75cm}{$\delta=0$, $d=1$}
&
\raisebox{-0.1cm}{\tikzset{every picture/.style={line width=0.75pt}} 

\begin{tikzpicture}[x=0.4pt,y=0.4pt,yscale=-1,xscale=1]

\draw   (202.25,52) -- (348,52) -- (348,197.75) -- (202.25,197.75) -- cycle ;
\draw [color={rgb, 255:red, 208; green, 2; blue, 27 }  ,draw opacity=1 ]   (305.5,62.25) -- (215.5,158.25) ;
\draw  [fill={rgb, 255:red, 0; green, 0; blue, 0 }  ,fill opacity=1 ] (287.25,79.13) .. controls (287.25,80.44) and (288.31,81.5) .. (289.63,81.5) .. controls (290.94,81.5) and (292,80.44) .. (292,79.13) .. controls (292,77.81) and (290.94,76.75) .. (289.63,76.75) .. controls (288.31,76.75) and (287.25,77.81) .. (287.25,79.13) -- cycle ;
\draw [color={rgb, 255:red, 208; green, 2; blue, 27 }  ,draw opacity=1 ][fill={rgb, 255:red, 208; green, 2; blue, 27 }  ,fill opacity=1 ]   (332.5,88.75) -- (243,184.5) ;
\draw  [color={rgb, 255:red, 0; green, 0; blue, 0 }  ,draw opacity=1 ][fill={rgb, 255:red, 0; green, 0; blue, 0 }  ,fill opacity=1 ] (272.75,94.63) .. controls (272.75,95.94) and (273.81,97) .. (275.13,97) .. controls (276.44,97) and (277.5,95.94) .. (277.5,94.63) .. controls (277.5,93.31) and (276.44,92.25) .. (275.13,92.25) .. controls (273.81,92.25) and (272.75,93.31) .. (272.75,94.63) -- cycle ;
\draw  [color={rgb, 255:red, 0; green, 0; blue, 0 }  ,draw opacity=1 ][fill={rgb, 255:red, 0; green, 0; blue, 0 }  ,fill opacity=1 ] (256.75,112.13) .. controls (256.75,113.44) and (257.81,114.5) .. (259.13,114.5) .. controls (260.44,114.5) and (261.5,113.44) .. (261.5,112.13) .. controls (261.5,110.81) and (260.44,109.75) .. (259.13,109.75) .. controls (257.81,109.75) and (256.75,110.81) .. (256.75,112.13) -- cycle ;
\draw  [color={rgb, 255:red, 0; green, 0; blue, 0 }  ,draw opacity=1 ][fill={rgb, 255:red, 0; green, 0; blue, 0 }  ,fill opacity=1 ] (238.75,131.13) .. controls (238.75,132.44) and (239.81,133.5) .. (241.13,133.5) .. controls (242.44,133.5) and (243.5,132.44) .. (243.5,131.13) .. controls (243.5,129.81) and (242.44,128.75) .. (241.13,128.75) .. controls (239.81,128.75) and (238.75,129.81) .. (238.75,131.13) -- cycle ;
\draw  [fill={rgb, 255:red, 0; green, 0; blue, 0 }  ,fill opacity=1 ] (305.25,115.13) .. controls (305.25,116.44) and (306.31,117.5) .. (307.63,117.5) .. controls (308.94,117.5) and (310,116.44) .. (310,115.13) .. controls (310,113.81) and (308.94,112.75) .. (307.63,112.75) .. controls (306.31,112.75) and (305.25,113.81) .. (305.25,115.13) -- cycle ;
\draw  [fill={rgb, 255:red, 0; green, 0; blue, 0 }  ,fill opacity=1 ] (285.75,135.63) .. controls (285.75,136.94) and (286.81,138) .. (288.13,138) .. controls (289.44,138) and (290.5,136.94) .. (290.5,135.63) .. controls (290.5,134.31) and (289.44,133.25) .. (288.13,133.25) .. controls (286.81,133.25) and (285.75,134.31) .. (285.75,135.63) -- cycle ;
\draw  [fill={rgb, 255:red, 0; green, 0; blue, 0 }  ,fill opacity=1 ] (263.75,159.63) .. controls (263.75,160.94) and (264.81,162) .. (266.13,162) .. controls (267.44,162) and (268.5,160.94) .. (268.5,159.63) .. controls (268.5,158.31) and (267.44,157.25) .. (266.13,157.25) .. controls (264.81,157.25) and (263.75,158.31) .. (263.75,159.63) -- cycle ;

\end{tikzpicture}}
&
\raisebox{-0.1cm}{\tikzset{every picture/.style={line width=0.75pt}} 

\begin{tikzpicture}[x=0.4pt,y=0.4pt,yscale=-1,xscale=1]

\draw   (222.25,72) -- (368,72) -- (368,217.75) -- (222.25,217.75) -- cycle ;
\draw [color={rgb, 255:red, 208; green, 2; blue, 27 }  ,draw opacity=1 ]   (297.67,85) -- (236.33,178.33) ;
\draw  [fill={rgb, 255:red, 0; green, 0; blue, 0 }  ,fill opacity=1 ] (283.92,102.46) .. controls (283.92,103.77) and (284.98,104.83) .. (286.29,104.83) .. controls (287.6,104.83) and (288.67,103.77) .. (288.67,102.46) .. controls (288.67,101.15) and (287.6,100.08) .. (286.29,100.08) .. controls (284.98,100.08) and (283.92,101.15) .. (283.92,102.46) -- cycle ;
\draw [color={rgb, 255:red, 208; green, 2; blue, 27 }  ,draw opacity=1 ][fill={rgb, 255:red, 208; green, 2; blue, 27 }  ,fill opacity=1 ]   (353,119) -- (294.33,208.5) ;
\draw  [color={rgb, 255:red, 0; green, 0; blue, 0 }  ,draw opacity=1 ][fill={rgb, 255:red, 0; green, 0; blue, 0 }  ,fill opacity=1 ] (254.75,146.96) .. controls (254.75,148.27) and (255.81,149.33) .. (257.13,149.33) .. controls (258.44,149.33) and (259.5,148.27) .. (259.5,146.96) .. controls (259.5,145.65) and (258.44,144.58) .. (257.13,144.58) .. controls (255.81,144.58) and (254.75,145.65) .. (254.75,146.96) -- cycle ;
\draw  [color={rgb, 255:red, 0; green, 0; blue, 0 }  ,draw opacity=1 ][fill={rgb, 255:red, 0; green, 0; blue, 0 }  ,fill opacity=1 ] (268.75,125.46) .. controls (268.75,126.77) and (269.81,127.83) .. (271.13,127.83) .. controls (272.44,127.83) and (273.5,126.77) .. (273.5,125.46) .. controls (273.5,124.15) and (272.44,123.08) .. (271.13,123.08) .. controls (269.81,123.08) and (268.75,124.15) .. (268.75,125.46) -- cycle ;
\draw  [color={rgb, 255:red, 0; green, 0; blue, 0 }  ,draw opacity=1 ][fill={rgb, 255:red, 0; green, 0; blue, 0 }  ,fill opacity=1 ] (333.75,145.46) .. controls (333.75,146.77) and (334.81,147.83) .. (336.13,147.83) .. controls (337.44,147.83) and (338.5,146.77) .. (338.5,145.46) .. controls (338.5,144.15) and (337.44,143.08) .. (336.13,143.08) .. controls (334.81,143.08) and (333.75,144.15) .. (333.75,145.46) -- cycle ;
\draw  [fill={rgb, 255:red, 0; green, 0; blue, 0 }  ,fill opacity=1 ] (345.25,127.79) .. controls (345.25,129.1) and (346.31,130.17) .. (347.63,130.17) .. controls (348.94,130.17) and (350,129.1) .. (350,127.79) .. controls (350,126.48) and (348.94,125.42) .. (347.63,125.42) .. controls (346.31,125.42) and (345.25,126.48) .. (345.25,127.79) -- cycle ;
\draw  [fill={rgb, 255:red, 0; green, 0; blue, 0 }  ,fill opacity=1 ] (308.75,183.63) .. controls (308.75,184.94) and (309.81,186) .. (311.13,186) .. controls (312.44,186) and (313.5,184.94) .. (313.5,183.63) .. controls (313.5,182.31) and (312.44,181.25) .. (311.13,181.25) .. controls (309.81,181.25) and (308.75,182.31) .. (308.75,183.63) -- cycle ;
\draw [color={rgb, 255:red, 208; green, 2; blue, 27 }  ,draw opacity=1 ]   (334.33,89) -- (257.5,203.58) ;
\draw  [color={rgb, 255:red, 0; green, 0; blue, 0 }  ,draw opacity=1 ][fill={rgb, 255:red, 0; green, 0; blue, 0 }  ,fill opacity=1 ] (310.42,121.12) .. controls (310.42,122.44) and (311.48,123.5) .. (312.79,123.5) .. controls (314.1,123.5) and (315.17,122.44) .. (315.17,121.12) .. controls (315.17,119.81) and (314.1,118.75) .. (312.79,118.75) .. controls (311.48,118.75) and (310.42,119.81) .. (310.42,121.12) -- cycle ;
\draw  [color={rgb, 255:red, 0; green, 0; blue, 0 }  ,draw opacity=1 ][fill={rgb, 255:red, 0; green, 0; blue, 0 }  ,fill opacity=1 ] (292.75,147.25) .. controls (292.75,148.56) and (293.81,149.63) .. (295.13,149.63) .. controls (296.44,149.63) and (297.5,148.56) .. (297.5,147.25) .. controls (297.5,145.94) and (296.44,144.88) .. (295.13,144.88) .. controls (293.81,144.88) and (292.75,145.94) .. (292.75,147.25) -- cycle ;
\draw  [color={rgb, 255:red, 0; green, 0; blue, 0 }  ,draw opacity=1 ][fill={rgb, 255:red, 0; green, 0; blue, 0 }  ,fill opacity=1 ] (269.75,181.79) .. controls (269.75,183.1) and (270.81,184.17) .. (272.13,184.17) .. controls (273.44,184.17) and (274.5,183.1) .. (274.5,181.79) .. controls (274.5,180.48) and (273.44,179.42) .. (272.13,179.42) .. controls (270.81,179.42) and (269.75,180.48) .. (269.75,181.79) -- cycle ;

\end{tikzpicture}}
&
&
&
\\
\textbf{lable}
&
\underline{$d_1(4,3)P$}
&
\underline{$d_1(3^3)P$}
&
&
&
\\
\textbf{degree}
&
$2$
&
$5$
&
&
&
\\
\hline

\rule{0pt}{2.3cm}
\raisebox{0.75cm}{$\delta=0$, $d=1$}
&
\tikzset{every picture/.style={line width=0.75pt}} 

\begin{tikzpicture}[x=0.4pt,y=0.4pt,yscale=-1,xscale=1]

\draw   (242.25,92) -- (388,92) -- (388,237.75) -- (242.25,237.75) -- cycle ;
\draw [color={rgb, 255:red, 208; green, 2; blue, 27 }  ,draw opacity=1 ]   (317.67,105) -- (256.33,198.33) ;
\draw  [fill={rgb, 255:red, 0; green, 0; blue, 0 }  ,fill opacity=1 ] (303.92,122.46) .. controls (303.92,123.77) and (304.98,124.83) .. (306.29,124.83) .. controls (307.6,124.83) and (308.67,123.77) .. (308.67,122.46) .. controls (308.67,121.15) and (307.6,120.08) .. (306.29,120.08) .. controls (304.98,120.08) and (303.92,121.15) .. (303.92,122.46) -- cycle ;
\draw [color={rgb, 255:red, 208; green, 2; blue, 27 }  ,draw opacity=1 ][fill={rgb, 255:red, 208; green, 2; blue, 27 }  ,fill opacity=1 ]   (373,139) -- (314.33,228.5) ;
\draw  [color={rgb, 255:red, 0; green, 0; blue, 0 }  ,draw opacity=1 ][fill={rgb, 255:red, 0; green, 0; blue, 0 }  ,fill opacity=1 ] (274.75,166.96) .. controls (274.75,168.27) and (275.81,169.33) .. (277.13,169.33) .. controls (278.44,169.33) and (279.5,168.27) .. (279.5,166.96) .. controls (279.5,165.65) and (278.44,164.58) .. (277.13,164.58) .. controls (275.81,164.58) and (274.75,165.65) .. (274.75,166.96) -- cycle ;
\draw  [color={rgb, 255:red, 0; green, 0; blue, 0 }  ,draw opacity=1 ][fill={rgb, 255:red, 0; green, 0; blue, 0 }  ,fill opacity=1 ] (288.75,145.46) .. controls (288.75,146.77) and (289.81,147.83) .. (291.13,147.83) .. controls (292.44,147.83) and (293.5,146.77) .. (293.5,145.46) .. controls (293.5,144.15) and (292.44,143.08) .. (291.13,143.08) .. controls (289.81,143.08) and (288.75,144.15) .. (288.75,145.46) -- cycle ;
\draw  [fill={rgb, 255:red, 0; green, 0; blue, 0 }  ,fill opacity=1 ] (365.25,147.79) .. controls (365.25,149.1) and (366.31,150.17) .. (367.63,150.17) .. controls (368.94,150.17) and (370,149.1) .. (370,147.79) .. controls (370,146.48) and (368.94,145.42) .. (367.63,145.42) .. controls (366.31,145.42) and (365.25,146.48) .. (365.25,147.79) -- cycle ;
\draw  [fill={rgb, 255:red, 0; green, 0; blue, 0 }  ,fill opacity=1 ] (328.75,203.63) .. controls (328.75,204.94) and (329.81,206) .. (331.13,206) .. controls (332.44,206) and (333.5,204.94) .. (333.5,203.63) .. controls (333.5,202.31) and (332.44,201.25) .. (331.13,201.25) .. controls (329.81,201.25) and (328.75,202.31) .. (328.75,203.63) -- cycle ;
\draw [color={rgb, 255:red, 208; green, 2; blue, 27 }  ,draw opacity=1 ]   (354.33,109) -- (277.5,223.58) ;
\draw  [color={rgb, 255:red, 0; green, 0; blue, 0 }  ,draw opacity=1 ][fill={rgb, 255:red, 0; green, 0; blue, 0 }  ,fill opacity=1 ] (330.42,141.12) .. controls (330.42,142.44) and (331.48,143.5) .. (332.79,143.5) .. controls (334.1,143.5) and (335.17,142.44) .. (335.17,141.12) .. controls (335.17,139.81) and (334.1,138.75) .. (332.79,138.75) .. controls (331.48,138.75) and (330.42,139.81) .. (330.42,141.12) -- cycle ;
\draw  [color={rgb, 255:red, 0; green, 0; blue, 0 }  ,draw opacity=1 ][fill={rgb, 255:red, 0; green, 0; blue, 0 }  ,fill opacity=1 ] (312.75,167.25) .. controls (312.75,168.56) and (313.81,169.63) .. (315.13,169.63) .. controls (316.44,169.63) and (317.5,168.56) .. (317.5,167.25) .. controls (317.5,165.94) and (316.44,164.88) .. (315.13,164.88) .. controls (313.81,164.88) and (312.75,165.94) .. (312.75,167.25) -- cycle ;
\draw  [color={rgb, 255:red, 0; green, 0; blue, 0 }  ,draw opacity=1 ][fill={rgb, 255:red, 0; green, 0; blue, 0 }  ,fill opacity=1 ] (261.75,185.96) .. controls (261.75,187.27) and (262.81,188.33) .. (264.13,188.33) .. controls (265.44,188.33) and (266.5,187.27) .. (266.5,185.96) .. controls (266.5,184.65) and (265.44,183.58) .. (264.13,183.58) .. controls (262.81,183.58) and (261.75,184.65) .. (261.75,185.96) -- cycle ;

\end{tikzpicture}
&
\tikzset{every picture/.style={line width=0.75pt}} 

\begin{tikzpicture}[x=0.4pt,y=0.4pt,yscale=-1,xscale=1]

\draw   (260.25,86) -- (406,86) -- (406,231.75) -- (260.25,231.75) -- cycle ;
\draw [color={rgb, 255:red, 208; green, 2; blue, 27 }  ,draw opacity=1 ]   (335.67,99) -- (274.33,192.33) ;
\draw  [fill={rgb, 255:red, 0; green, 0; blue, 0 }  ,fill opacity=1 ] (321.92,116.46) .. controls (321.92,117.77) and (322.98,118.83) .. (324.29,118.83) .. controls (325.6,118.83) and (326.67,117.77) .. (326.67,116.46) .. controls (326.67,115.15) and (325.6,114.08) .. (324.29,114.08) .. controls (322.98,114.08) and (321.92,115.15) .. (321.92,116.46) -- cycle ;
\draw [color={rgb, 255:red, 208; green, 2; blue, 27 }  ,draw opacity=1 ][fill={rgb, 255:red, 208; green, 2; blue, 27 }  ,fill opacity=1 ]   (391,133) -- (332.33,222.5) ;
\draw  [color={rgb, 255:red, 0; green, 0; blue, 0 }  ,draw opacity=1 ][fill={rgb, 255:red, 0; green, 0; blue, 0 }  ,fill opacity=1 ] (295.95,155.76) .. controls (295.95,157.07) and (297.01,158.13) .. (298.33,158.13) .. controls (299.64,158.13) and (300.7,157.07) .. (300.7,155.76) .. controls (300.7,154.45) and (299.64,153.38) .. (298.33,153.38) .. controls (297.01,153.38) and (295.95,154.45) .. (295.95,155.76) -- cycle ;
\draw  [fill={rgb, 255:red, 0; green, 0; blue, 0 }  ,fill opacity=1 ] (371.65,159.39) .. controls (371.65,160.7) and (372.71,161.77) .. (374.03,161.77) .. controls (375.34,161.77) and (376.4,160.7) .. (376.4,159.39) .. controls (376.4,158.08) and (375.34,157.02) .. (374.03,157.02) .. controls (372.71,157.02) and (371.65,158.08) .. (371.65,159.39) -- cycle ;
\draw  [fill={rgb, 255:red, 0; green, 0; blue, 0 }  ,fill opacity=1 ] (348.75,193.63) .. controls (348.75,194.94) and (349.81,196) .. (351.13,196) .. controls (352.44,196) and (353.5,194.94) .. (353.5,193.63) .. controls (353.5,192.31) and (352.44,191.25) .. (351.13,191.25) .. controls (349.81,191.25) and (348.75,192.31) .. (348.75,193.63) -- cycle ;
\draw [color={rgb, 255:red, 208; green, 2; blue, 27 }  ,draw opacity=1 ]   (372.33,103) -- (295.5,217.58) ;
\draw  [color={rgb, 255:red, 0; green, 0; blue, 0 }  ,draw opacity=1 ][fill={rgb, 255:red, 0; green, 0; blue, 0 }  ,fill opacity=1 ] (356.82,122.72) .. controls (356.82,124.04) and (357.88,125.1) .. (359.19,125.1) .. controls (360.5,125.1) and (361.57,124.04) .. (361.57,122.72) .. controls (361.57,121.41) and (360.5,120.35) .. (359.19,120.35) .. controls (357.88,120.35) and (356.82,121.41) .. (356.82,122.72) -- cycle ;
\draw  [color={rgb, 255:red, 0; green, 0; blue, 0 }  ,draw opacity=1 ][fill={rgb, 255:red, 0; green, 0; blue, 0 }  ,fill opacity=1 ] (339.95,148.05) .. controls (339.95,149.36) and (341.01,150.42) .. (342.33,150.42) .. controls (343.64,150.42) and (344.7,149.36) .. (344.7,148.05) .. controls (344.7,146.74) and (343.64,145.67) .. (342.33,145.67) .. controls (341.01,145.67) and (339.95,146.74) .. (339.95,148.05) -- cycle ;
\draw  [color={rgb, 255:red, 0; green, 0; blue, 0 }  ,draw opacity=1 ][fill={rgb, 255:red, 0; green, 0; blue, 0 }  ,fill opacity=1 ] (279.75,179.96) .. controls (279.75,181.27) and (280.81,182.33) .. (282.13,182.33) .. controls (283.44,182.33) and (284.5,181.27) .. (284.5,179.96) .. controls (284.5,178.65) and (283.44,177.58) .. (282.13,177.58) .. controls (280.81,177.58) and (279.75,178.65) .. (279.75,179.96) -- cycle ;
\draw  [color={rgb, 255:red, 0; green, 0; blue, 0 }  ,draw opacity=1 ][fill={rgb, 255:red, 0; green, 0; blue, 0 }  ,fill opacity=1 ] (311.55,190.26) .. controls (311.55,191.57) and (312.61,192.63) .. (313.93,192.63) .. controls (315.24,192.63) and (316.3,191.57) .. (316.3,190.26) .. controls (316.3,188.95) and (315.24,187.88) .. (313.93,187.88) .. controls (312.61,187.88) and (311.55,188.95) .. (311.55,190.26) -- cycle ;

\end{tikzpicture}
&
\tikzset{every picture/.style={line width=0.75pt}} 

\begin{tikzpicture}[x=0.4pt,y=0.4pt,yscale=-1,xscale=1]

\draw   (267.25,87) -- (413,87) -- (413,232.75) -- (267.25,232.75) -- cycle ;
\draw [color={rgb, 255:red, 208; green, 2; blue, 27 }  ,draw opacity=1 ]   (339,95) -- (277.67,188.33) ;
\draw [color={rgb, 255:red, 208; green, 2; blue, 27 }  ,draw opacity=1 ][fill={rgb, 255:red, 208; green, 2; blue, 27 }  ,fill opacity=1 ]   (402.67,136) -- (344,225.5) ;
\draw [color={rgb, 255:red, 208; green, 2; blue, 27 }  ,draw opacity=1 ]   (392.33,105.67) -- (327.67,205.67) ;
\draw [color={rgb, 255:red, 208; green, 2; blue, 27 }  ,draw opacity=1 ]   (351.33,121) -- (286.67,221) ;
\draw  [color={rgb, 255:red, 0; green, 0; blue, 0 }  ,draw opacity=1 ][fill={rgb, 255:red, 0; green, 0; blue, 0 }  ,fill opacity=1 ] (329.42,105.99) .. controls (329.42,107.3) and (330.48,108.37) .. (331.79,108.37) .. controls (333.1,108.37) and (334.17,107.3) .. (334.17,105.99) .. controls (334.17,104.68) and (333.1,103.62) .. (331.79,103.62) .. controls (330.48,103.62) and (329.42,104.68) .. (329.42,105.99) -- cycle ;
\draw  [color={rgb, 255:red, 0; green, 0; blue, 0 }  ,draw opacity=1 ][fill={rgb, 255:red, 0; green, 0; blue, 0 }  ,fill opacity=1 ] (314.42,128.79) .. controls (314.42,130.1) and (315.48,131.17) .. (316.79,131.17) .. controls (318.1,131.17) and (319.17,130.1) .. (319.17,128.79) .. controls (319.17,127.48) and (318.1,126.42) .. (316.79,126.42) .. controls (315.48,126.42) and (314.42,127.48) .. (314.42,128.79) -- cycle ;
\draw  [color={rgb, 255:red, 0; green, 0; blue, 0 }  ,draw opacity=1 ][fill={rgb, 255:red, 0; green, 0; blue, 0 }  ,fill opacity=1 ] (287.42,169.99) .. controls (287.42,171.3) and (288.48,172.37) .. (289.79,172.37) .. controls (291.1,172.37) and (292.17,171.3) .. (292.17,169.99) .. controls (292.17,168.68) and (291.1,167.62) .. (289.79,167.62) .. controls (288.48,167.62) and (287.42,168.68) .. (287.42,169.99) -- cycle ;
\draw  [color={rgb, 255:red, 0; green, 0; blue, 0 }  ,draw opacity=1 ][fill={rgb, 255:red, 0; green, 0; blue, 0 }  ,fill opacity=1 ] (340.22,133.99) .. controls (340.22,135.3) and (341.28,136.37) .. (342.59,136.37) .. controls (343.9,136.37) and (344.97,135.3) .. (344.97,133.99) .. controls (344.97,132.68) and (343.9,131.62) .. (342.59,131.62) .. controls (341.28,131.62) and (340.22,132.68) .. (340.22,133.99) -- cycle ;
\draw  [color={rgb, 255:red, 0; green, 0; blue, 0 }  ,draw opacity=1 ][fill={rgb, 255:red, 0; green, 0; blue, 0 }  ,fill opacity=1 ] (319.22,166.99) .. controls (319.22,168.3) and (320.28,169.37) .. (321.59,169.37) .. controls (322.9,169.37) and (323.97,168.3) .. (323.97,166.99) .. controls (323.97,165.68) and (322.9,164.62) .. (321.59,164.62) .. controls (320.28,164.62) and (319.22,165.68) .. (319.22,166.99) -- cycle ;
\draw  [color={rgb, 255:red, 0; green, 0; blue, 0 }  ,draw opacity=1 ][fill={rgb, 255:red, 0; green, 0; blue, 0 }  ,fill opacity=1 ] (371.62,134.19) .. controls (371.62,135.5) and (372.68,136.57) .. (373.99,136.57) .. controls (375.3,136.57) and (376.37,135.5) .. (376.37,134.19) .. controls (376.37,132.88) and (375.3,131.82) .. (373.99,131.82) .. controls (372.68,131.82) and (371.62,132.88) .. (371.62,134.19) -- cycle ;
\draw  [color={rgb, 255:red, 0; green, 0; blue, 0 }  ,draw opacity=1 ][fill={rgb, 255:red, 0; green, 0; blue, 0 }  ,fill opacity=1 ] (351.02,165.59) .. controls (351.02,166.9) and (352.08,167.97) .. (353.39,167.97) .. controls (354.7,167.97) and (355.77,166.9) .. (355.77,165.59) .. controls (355.77,164.28) and (354.7,163.22) .. (353.39,163.22) .. controls (352.08,163.22) and (351.02,164.28) .. (351.02,165.59) -- cycle ;
\draw  [color={rgb, 255:red, 0; green, 0; blue, 0 }  ,draw opacity=1 ][fill={rgb, 255:red, 0; green, 0; blue, 0 }  ,fill opacity=1 ] (383.82,161.19) .. controls (383.82,162.5) and (384.88,163.57) .. (386.19,163.57) .. controls (387.5,163.57) and (388.57,162.5) .. (388.57,161.19) .. controls (388.57,159.88) and (387.5,158.82) .. (386.19,158.82) .. controls (384.88,158.82) and (383.82,159.88) .. (383.82,161.19) -- cycle ;
\draw  [color={rgb, 255:red, 0; green, 0; blue, 0 }  ,draw opacity=1 ][fill={rgb, 255:red, 0; green, 0; blue, 0 }  ,fill opacity=1 ] (355.62,204.19) .. controls (355.62,205.5) and (356.68,206.57) .. (357.99,206.57) .. controls (359.3,206.57) and (360.37,205.5) .. (360.37,204.19) .. controls (360.37,202.88) and (359.3,201.82) .. (357.99,201.82) .. controls (356.68,201.82) and (355.62,202.88) .. (355.62,204.19) -- cycle ;

\end{tikzpicture}
&
\tikzset{every picture/.style={line width=0.75pt}} 

\begin{tikzpicture}[x=0.4pt,y=0.4pt,yscale=-1,xscale=1]

\draw   (270.25,87) -- (416,87) -- (416,232.75) -- (270.25,232.75) -- cycle ;
\draw [color={rgb, 255:red, 208; green, 2; blue, 27 }  ,draw opacity=1 ]   (337.4,97) -- (278.67,185.93) ;
\draw [color={rgb, 255:red, 208; green, 2; blue, 27 }  ,draw opacity=1 ][fill={rgb, 255:red, 208; green, 2; blue, 27 }  ,fill opacity=1 ]   (410,131) -- (353.6,216.6) ;
\draw [color={rgb, 255:red, 208; green, 2; blue, 27 }  ,draw opacity=1 ]   (394.6,117.8) -- (321.87,226.07) ;
\draw [color={rgb, 255:red, 208; green, 2; blue, 27 }  ,draw opacity=1 ]   (347.53,116.6) -- (282.87,216.6) ;
\draw  [color={rgb, 255:red, 0; green, 0; blue, 0 }  ,draw opacity=1 ][fill={rgb, 255:red, 0; green, 0; blue, 0 }  ,fill opacity=1 ] (327.62,107.99) .. controls (327.62,109.3) and (328.68,110.37) .. (329.99,110.37) .. controls (331.3,110.37) and (332.37,109.3) .. (332.37,107.99) .. controls (332.37,106.68) and (331.3,105.62) .. (329.99,105.62) .. controls (328.68,105.62) and (327.62,106.68) .. (327.62,107.99) -- cycle ;
\draw [color={rgb, 255:red, 208; green, 2; blue, 27 }  ,draw opacity=1 ]   (386.6,93) -- (312.2,207) ;
\draw  [color={rgb, 255:red, 0; green, 0; blue, 0 }  ,draw opacity=1 ][fill={rgb, 255:red, 0; green, 0; blue, 0 }  ,fill opacity=1 ] (302.02,146.79) .. controls (302.02,148.1) and (303.08,149.17) .. (304.39,149.17) .. controls (305.7,149.17) and (306.77,148.1) .. (306.77,146.79) .. controls (306.77,145.48) and (305.7,144.42) .. (304.39,144.42) .. controls (303.08,144.42) and (302.02,145.48) .. (302.02,146.79) -- cycle ;
\draw  [color={rgb, 255:red, 0; green, 0; blue, 0 }  ,draw opacity=1 ][fill={rgb, 255:red, 0; green, 0; blue, 0 }  ,fill opacity=1 ] (331.62,137.99) .. controls (331.62,139.3) and (332.68,140.37) .. (333.99,140.37) .. controls (335.3,140.37) and (336.37,139.3) .. (336.37,137.99) .. controls (336.37,136.68) and (335.3,135.62) .. (333.99,135.62) .. controls (332.68,135.62) and (331.62,136.68) .. (331.62,137.99) -- cycle ;
\draw  [color={rgb, 255:red, 0; green, 0; blue, 0 }  ,draw opacity=1 ][fill={rgb, 255:red, 0; green, 0; blue, 0 }  ,fill opacity=1 ] (302.42,183.19) .. controls (302.42,184.5) and (303.48,185.57) .. (304.79,185.57) .. controls (306.1,185.57) and (307.17,184.5) .. (307.17,183.19) .. controls (307.17,181.88) and (306.1,180.82) .. (304.79,180.82) .. controls (303.48,180.82) and (302.42,181.88) .. (302.42,183.19) -- cycle ;
\draw  [color={rgb, 255:red, 0; green, 0; blue, 0 }  ,draw opacity=1 ][fill={rgb, 255:red, 0; green, 0; blue, 0 }  ,fill opacity=1 ] (336.02,166.79) .. controls (336.02,168.1) and (337.08,169.17) .. (338.39,169.17) .. controls (339.7,169.17) and (340.77,168.1) .. (340.77,166.79) .. controls (340.77,165.48) and (339.7,164.42) .. (338.39,164.42) .. controls (337.08,164.42) and (336.02,165.48) .. (336.02,166.79) -- cycle ;
\draw  [color={rgb, 255:red, 0; green, 0; blue, 0 }  ,draw opacity=1 ][fill={rgb, 255:red, 0; green, 0; blue, 0 }  ,fill opacity=1 ] (372.82,110.79) .. controls (372.82,112.1) and (373.88,113.17) .. (375.19,113.17) .. controls (376.5,113.17) and (377.57,112.1) .. (377.57,110.79) .. controls (377.57,109.48) and (376.5,108.42) .. (375.19,108.42) .. controls (373.88,108.42) and (372.82,109.48) .. (372.82,110.79) -- cycle ;
\draw  [color={rgb, 255:red, 0; green, 0; blue, 0 }  ,draw opacity=1 ][fill={rgb, 255:red, 0; green, 0; blue, 0 }  ,fill opacity=1 ] (334.42,203.59) .. controls (334.42,204.9) and (335.48,205.97) .. (336.79,205.97) .. controls (338.1,205.97) and (339.17,204.9) .. (339.17,203.59) .. controls (339.17,202.28) and (338.1,201.22) .. (336.79,201.22) .. controls (335.48,201.22) and (334.42,202.28) .. (334.42,203.59) -- cycle ;
\draw  [color={rgb, 255:red, 0; green, 0; blue, 0 }  ,draw opacity=1 ][fill={rgb, 255:red, 0; green, 0; blue, 0 }  ,fill opacity=1 ] (363.22,161.59) .. controls (363.22,162.9) and (364.28,163.97) .. (365.59,163.97) .. controls (366.9,163.97) and (367.97,162.9) .. (367.97,161.59) .. controls (367.97,160.28) and (366.9,159.22) .. (365.59,159.22) .. controls (364.28,159.22) and (363.22,160.28) .. (363.22,161.59) -- cycle ;
\draw  [color={rgb, 255:red, 0; green, 0; blue, 0 }  ,draw opacity=1 ][fill={rgb, 255:red, 0; green, 0; blue, 0 }  ,fill opacity=1 ] (375.62,179.59) .. controls (375.62,180.9) and (376.68,181.97) .. (377.99,181.97) .. controls (379.3,181.97) and (380.37,180.9) .. (380.37,179.59) .. controls (380.37,178.28) and (379.3,177.22) .. (377.99,177.22) .. controls (376.68,177.22) and (375.62,178.28) .. (375.62,179.59) -- cycle ;
\draw  [color={rgb, 255:red, 0; green, 0; blue, 0 }  ,draw opacity=1 ][fill={rgb, 255:red, 0; green, 0; blue, 0 }  ,fill opacity=1 ] (399.22,143.59) .. controls (399.22,144.9) and (400.28,145.97) .. (401.59,145.97) .. controls (402.9,145.97) and (403.97,144.9) .. (403.97,143.59) .. controls (403.97,142.28) and (402.9,141.22) .. (401.59,141.22) .. controls (400.28,141.22) and (399.22,142.28) .. (399.22,143.59) -- cycle ;

\end{tikzpicture}
&
\\
\textbf{lable}
&
$d_1(4,2^2)PC$
&
\underline{$d_1(3^2,2)PC$}
&
$d_1(3,2^3)PC$
&
\underline{$d_1(2^5)PC$}
&
\\
\textbf{degree}
&
$2$
&
$4$
&
$6$
&
$10$
&
\\
\hline

\rule{0pt}{2.3cm}
\raisebox{0.75cm}{$\delta=0$, $d=2$}
&
\tikzset{every picture/.style={line width=0.75pt}} 

\begin{tikzpicture}[x=0.4pt,y=0.4pt,yscale=-1,xscale=1]

\draw   (290.25,107) -- (436,107) -- (436,252.75) -- (290.25,252.75) -- cycle ;
\draw [color={rgb, 255:red, 208; green, 2; blue, 27 }  ,draw opacity=1 ]   (363,119) -- (299.07,210.33) ;
\draw [color={rgb, 255:red, 208; green, 2; blue, 27 }  ,draw opacity=1 ]   (379,129.4) -- (345.4,241.8) ;
\draw  [color={rgb, 255:red, 0; green, 0; blue, 0 }  ,draw opacity=1 ][fill={rgb, 255:red, 0; green, 0; blue, 0 }  ,fill opacity=1 ] (337.22,152.39) .. controls (337.22,153.7) and (338.28,154.77) .. (339.59,154.77) .. controls (340.9,154.77) and (341.97,153.7) .. (341.97,152.39) .. controls (341.97,151.08) and (340.9,150.02) .. (339.59,150.02) .. controls (338.28,150.02) and (337.22,151.08) .. (337.22,152.39) -- cycle ;
\draw [color={rgb, 255:red, 208; green, 2; blue, 27 }  ,draw opacity=1 ]   (397.4,119) -- (415.4,231.4) ;
\draw  [color={rgb, 255:red, 0; green, 0; blue, 0 }  ,draw opacity=1 ][fill={rgb, 255:red, 0; green, 0; blue, 0 }  ,fill opacity=1 ] (311.62,189.19) .. controls (311.62,190.5) and (312.68,191.57) .. (313.99,191.57) .. controls (315.3,191.57) and (316.37,190.5) .. (316.37,189.19) .. controls (316.37,187.88) and (315.3,186.82) .. (313.99,186.82) .. controls (312.68,186.82) and (311.62,187.88) .. (311.62,189.19) -- cycle ;
\draw  [color={rgb, 255:red, 0; green, 0; blue, 0 }  ,draw opacity=1 ][fill={rgb, 255:red, 0; green, 0; blue, 0 }  ,fill opacity=1 ] (324.02,171.99) .. controls (324.02,173.3) and (325.08,174.37) .. (326.39,174.37) .. controls (327.7,174.37) and (328.77,173.3) .. (328.77,171.99) .. controls (328.77,170.68) and (327.7,169.62) .. (326.39,169.62) .. controls (325.08,169.62) and (324.02,170.68) .. (324.02,171.99) -- cycle ;
\draw  [color={rgb, 255:red, 0; green, 0; blue, 0 }  ,draw opacity=1 ][fill={rgb, 255:red, 0; green, 0; blue, 0 }  ,fill opacity=1 ] (300.82,204.39) .. controls (300.82,205.7) and (301.88,206.77) .. (303.19,206.77) .. controls (304.5,206.77) and (305.57,205.7) .. (305.57,204.39) .. controls (305.57,203.08) and (304.5,202.02) .. (303.19,202.02) .. controls (301.88,202.02) and (300.82,203.08) .. (300.82,204.39) -- cycle ;
\draw  [color={rgb, 255:red, 0; green, 0; blue, 0 }  ,draw opacity=1 ][fill={rgb, 255:red, 0; green, 0; blue, 0 }  ,fill opacity=1 ] (344.02,143.19) .. controls (344.02,144.5) and (345.08,145.57) .. (346.39,145.57) .. controls (347.7,145.57) and (348.77,144.5) .. (348.77,143.19) .. controls (348.77,141.88) and (347.7,140.82) .. (346.39,140.82) .. controls (345.08,140.82) and (344.02,141.88) .. (344.02,143.19) -- cycle ;
\draw  [color={rgb, 255:red, 0; green, 0; blue, 0 }  ,draw opacity=1 ][fill={rgb, 255:red, 0; green, 0; blue, 0 }  ,fill opacity=1 ] (374.27,137.74) .. controls (374.27,139.05) and (375.33,140.12) .. (376.64,140.12) .. controls (377.95,140.12) and (379.02,139.05) .. (379.02,137.74) .. controls (379.02,136.43) and (377.95,135.37) .. (376.64,135.37) .. controls (375.33,135.37) and (374.27,136.43) .. (374.27,137.74) -- cycle ;
\draw  [color={rgb, 255:red, 0; green, 0; blue, 0 }  ,draw opacity=1 ][fill={rgb, 255:red, 0; green, 0; blue, 0 }  ,fill opacity=1 ] (358.27,191.49) .. controls (358.27,192.8) and (359.33,193.87) .. (360.64,193.87) .. controls (361.95,193.87) and (363.02,192.8) .. (363.02,191.49) .. controls (363.02,190.18) and (361.95,189.12) .. (360.64,189.12) .. controls (359.33,189.12) and (358.27,190.18) .. (358.27,191.49) -- cycle ;
\draw  [color={rgb, 255:red, 0; green, 0; blue, 0 }  ,draw opacity=1 ][fill={rgb, 255:red, 0; green, 0; blue, 0 }  ,fill opacity=1 ] (369.97,151.89) .. controls (369.97,153.2) and (371.03,154.27) .. (372.34,154.27) .. controls (373.65,154.27) and (374.72,153.2) .. (374.72,151.89) .. controls (374.72,150.58) and (373.65,149.52) .. (372.34,149.52) .. controls (371.03,149.52) and (369.97,150.58) .. (369.97,151.89) -- cycle ;
\draw  [color={rgb, 255:red, 0; green, 0; blue, 0 }  ,draw opacity=1 ][fill={rgb, 255:red, 0; green, 0; blue, 0 }  ,fill opacity=1 ] (353.42,129.19) .. controls (353.42,130.5) and (354.48,131.57) .. (355.79,131.57) .. controls (357.1,131.57) and (358.17,130.5) .. (358.17,129.19) .. controls (358.17,127.88) and (357.1,126.82) .. (355.79,126.82) .. controls (354.48,126.82) and (353.42,127.88) .. (353.42,129.19) -- cycle ;
\draw  [color={rgb, 255:red, 0; green, 0; blue, 0 }  ,draw opacity=1 ][fill={rgb, 255:red, 0; green, 0; blue, 0 }  ,fill opacity=1 ] (362.72,175.14) .. controls (362.72,176.45) and (363.78,177.52) .. (365.09,177.52) .. controls (366.4,177.52) and (367.47,176.45) .. (367.47,175.14) .. controls (367.47,173.83) and (366.4,172.77) .. (365.09,172.77) .. controls (363.78,172.77) and (362.72,173.83) .. (362.72,175.14) -- cycle ;
\draw  [color={rgb, 255:red, 0; green, 0; blue, 0 }  ,draw opacity=1 ][fill={rgb, 255:red, 0; green, 0; blue, 0 }  ,fill opacity=1 ] (350.77,215.99) .. controls (350.77,217.3) and (351.83,218.37) .. (353.14,218.37) .. controls (354.45,218.37) and (355.52,217.3) .. (355.52,215.99) .. controls (355.52,214.68) and (354.45,213.62) .. (353.14,213.62) .. controls (351.83,213.62) and (350.77,214.68) .. (350.77,215.99) -- cycle ;
\draw  [color={rgb, 255:red, 0; green, 0; blue, 0 }  ,draw opacity=1 ][fill={rgb, 255:red, 0; green, 0; blue, 0 }  ,fill opacity=1 ] (347.27,228.24) .. controls (347.27,229.55) and (348.33,230.62) .. (349.64,230.62) .. controls (350.95,230.62) and (352.02,229.55) .. (352.02,228.24) .. controls (352.02,226.93) and (350.95,225.87) .. (349.64,225.87) .. controls (348.33,225.87) and (347.27,226.93) .. (347.27,228.24) -- cycle ;
\draw  [color={rgb, 255:red, 0; green, 0; blue, 0 }  ,draw opacity=1 ][fill={rgb, 255:red, 0; green, 0; blue, 0 }  ,fill opacity=1 ] (396.77,128.24) .. controls (396.77,129.55) and (397.83,130.62) .. (399.14,130.62) .. controls (400.45,130.62) and (401.52,129.55) .. (401.52,128.24) .. controls (401.52,126.93) and (400.45,125.87) .. (399.14,125.87) .. controls (397.83,125.87) and (396.77,126.93) .. (396.77,128.24) -- cycle ;
\draw  [color={rgb, 255:red, 0; green, 0; blue, 0 }  ,draw opacity=1 ][fill={rgb, 255:red, 0; green, 0; blue, 0 }  ,fill opacity=1 ] (404.03,175.2) .. controls (404.03,176.51) and (405.09,177.58) .. (406.4,177.58) .. controls (407.71,177.58) and (408.78,176.51) .. (408.78,175.2) .. controls (408.78,173.89) and (407.71,172.83) .. (406.4,172.83) .. controls (405.09,172.83) and (404.03,173.89) .. (404.03,175.2) -- cycle ;
\draw  [color={rgb, 255:red, 0; green, 0; blue, 0 }  ,draw opacity=1 ][fill={rgb, 255:red, 0; green, 0; blue, 0 }  ,fill opacity=1 ] (399.77,148.49) .. controls (399.77,149.8) and (400.83,150.87) .. (402.14,150.87) .. controls (403.45,150.87) and (404.52,149.8) .. (404.52,148.49) .. controls (404.52,147.18) and (403.45,146.12) .. (402.14,146.12) .. controls (400.83,146.12) and (399.77,147.18) .. (399.77,148.49) -- cycle ;
\draw  [color={rgb, 255:red, 0; green, 0; blue, 0 }  ,draw opacity=1 ][fill={rgb, 255:red, 0; green, 0; blue, 0 }  ,fill opacity=1 ] (408.02,198.74) .. controls (408.02,200.05) and (409.08,201.12) .. (410.39,201.12) .. controls (411.7,201.12) and (412.77,200.05) .. (412.77,198.74) .. controls (412.77,197.43) and (411.7,196.37) .. (410.39,196.37) .. controls (409.08,196.37) and (408.02,197.43) .. (408.02,198.74) -- cycle ;
\draw  [color={rgb, 255:red, 0; green, 0; blue, 0 }  ,draw opacity=1 ][fill={rgb, 255:red, 0; green, 0; blue, 0 }  ,fill opacity=1 ] (411.02,219.24) .. controls (411.02,220.55) and (412.08,221.62) .. (413.39,221.62) .. controls (414.7,221.62) and (415.77,220.55) .. (415.77,219.24) .. controls (415.77,217.93) and (414.7,216.87) .. (413.39,216.87) .. controls (412.08,216.87) and (411.02,217.93) .. (411.02,219.24) -- cycle ;

\end{tikzpicture}
&
\input{Tikz/d2_6555_}
&
\input{Tikz/d2_55555_}
&
&
\\
\textbf{lable}
&
\underline{$d_2(6^2,5)$}
&
$d_2(6,5^3)$
&
$d_2(5^5)$
&
&
\\
\textbf{degree}
&
$30$
&
$131$
&
$680$
&
&
\\
\hline

\rule{0pt}{2.3cm}
\raisebox{0.75cm}{$\delta=0$, $d=3$}
&
\tikzset{every picture/.style={line width=0.75pt}} 

\begin{tikzpicture}[x=0.4pt,y=0.4pt,yscale=-1,xscale=1]

\draw   (300.25,91) -- (446,91) -- (446,236.75) -- (300.25,236.75) -- cycle ;
\draw [color={rgb, 255:red, 208; green, 2; blue, 27 }  ,draw opacity=1 ]   (397,115) -- (315.63,222.89) ;
\draw [color={rgb, 255:red, 208; green, 2; blue, 27 }  ,draw opacity=1 ]   (420.6,102.6) -- (400.2,220.6) ;
\draw  [color={rgb, 255:red, 0; green, 0; blue, 0 }  ,draw opacity=1 ][fill={rgb, 255:red, 0; green, 0; blue, 0 }  ,fill opacity=1 ] (386.88,125.03) .. controls (386.88,126.34) and (387.94,127.4) .. (389.26,127.4) .. controls (390.57,127.4) and (391.63,126.34) .. (391.63,125.03) .. controls (391.63,123.72) and (390.57,122.65) .. (389.26,122.65) .. controls (387.94,122.65) and (386.88,123.72) .. (386.88,125.03) -- cycle ;
\draw  [color={rgb, 255:red, 0; green, 0; blue, 0 }  ,draw opacity=1 ][fill={rgb, 255:red, 0; green, 0; blue, 0 }  ,fill opacity=1 ] (379.6,135.27) .. controls (379.6,136.58) and (380.67,137.65) .. (381.98,137.65) .. controls (383.29,137.65) and (384.35,136.58) .. (384.35,135.27) .. controls (384.35,133.96) and (383.29,132.9) .. (381.98,132.9) .. controls (380.67,132.9) and (379.6,133.96) .. (379.6,135.27) -- cycle ;
\draw  [color={rgb, 255:red, 0; green, 0; blue, 0 }  ,draw opacity=1 ][fill={rgb, 255:red, 0; green, 0; blue, 0 }  ,fill opacity=1 ] (360.5,160.34) .. controls (360.5,161.65) and (361.57,162.72) .. (362.88,162.72) .. controls (364.19,162.72) and (365.25,161.65) .. (365.25,160.34) .. controls (365.25,159.03) and (364.19,157.97) .. (362.88,157.97) .. controls (361.57,157.97) and (360.5,159.03) .. (360.5,160.34) -- cycle ;
\draw  [color={rgb, 255:red, 0; green, 0; blue, 0 }  ,draw opacity=1 ][fill={rgb, 255:red, 0; green, 0; blue, 0 }  ,fill opacity=1 ] (353.94,168.94) .. controls (353.94,170.25) and (355,171.32) .. (356.31,171.32) .. controls (357.63,171.32) and (358.69,170.25) .. (358.69,168.94) .. controls (358.69,167.63) and (357.63,166.57) .. (356.31,166.57) .. controls (355,166.57) and (353.94,167.63) .. (353.94,168.94) -- cycle ;
\draw  [color={rgb, 255:red, 0; green, 0; blue, 0 }  ,draw opacity=1 ][fill={rgb, 255:red, 0; green, 0; blue, 0 }  ,fill opacity=1 ] (369.91,147.98) .. controls (369.91,149.3) and (370.97,150.36) .. (372.28,150.36) .. controls (373.6,150.36) and (374.66,149.3) .. (374.66,147.98) .. controls (374.66,146.67) and (373.6,145.61) .. (372.28,145.61) .. controls (370.97,145.61) and (369.91,146.67) .. (369.91,147.98) -- cycle ;
\draw  [color={rgb, 255:red, 0; green, 0; blue, 0 }  ,draw opacity=1 ][fill={rgb, 255:red, 0; green, 0; blue, 0 }  ,fill opacity=1 ] (416.71,110.5) .. controls (416.71,111.81) and (417.77,112.87) .. (419.08,112.87) .. controls (420.4,112.87) and (421.46,111.81) .. (421.46,110.5) .. controls (421.46,109.19) and (420.4,108.12) .. (419.08,108.12) .. controls (417.77,108.12) and (416.71,109.19) .. (416.71,110.5) -- cycle ;
\draw  [color={rgb, 255:red, 0; green, 0; blue, 0 }  ,draw opacity=1 ][fill={rgb, 255:red, 0; green, 0; blue, 0 }  ,fill opacity=1 ] (345.63,180.34) .. controls (345.63,181.65) and (346.69,182.72) .. (348.01,182.72) .. controls (349.32,182.72) and (350.38,181.65) .. (350.38,180.34) .. controls (350.38,179.03) and (349.32,177.97) .. (348.01,177.97) .. controls (346.69,177.97) and (345.63,179.03) .. (345.63,180.34) -- cycle ;
\draw  [color={rgb, 255:red, 0; green, 0; blue, 0 }  ,draw opacity=1 ][fill={rgb, 255:red, 0; green, 0; blue, 0 }  ,fill opacity=1 ] (333.2,197.48) .. controls (333.2,198.8) and (334.27,199.86) .. (335.58,199.86) .. controls (336.89,199.86) and (337.95,198.8) .. (337.95,197.48) .. controls (337.95,196.17) and (336.89,195.11) .. (335.58,195.11) .. controls (334.27,195.11) and (333.2,196.17) .. (333.2,197.48) -- cycle ;
\draw  [color={rgb, 255:red, 0; green, 0; blue, 0 }  ,draw opacity=1 ][fill={rgb, 255:red, 0; green, 0; blue, 0 }  ,fill opacity=1 ] (321.42,212.73) .. controls (321.42,214.05) and (322.48,215.11) .. (323.79,215.11) .. controls (325.1,215.11) and (326.17,214.05) .. (326.17,212.73) .. controls (326.17,211.42) and (325.1,210.36) .. (323.79,210.36) .. controls (322.48,210.36) and (321.42,211.42) .. (321.42,212.73) -- cycle ;
\draw  [color={rgb, 255:red, 0; green, 0; blue, 0 }  ,draw opacity=1 ][fill={rgb, 255:red, 0; green, 0; blue, 0 }  ,fill opacity=1 ] (414.81,121.91) .. controls (414.81,123.22) and (415.87,124.29) .. (417.18,124.29) .. controls (418.5,124.29) and (419.56,123.22) .. (419.56,121.91) .. controls (419.56,120.6) and (418.5,119.54) .. (417.18,119.54) .. controls (415.87,119.54) and (414.81,120.6) .. (414.81,121.91) -- cycle ;
\draw  [color={rgb, 255:red, 0; green, 0; blue, 0 }  ,draw opacity=1 ][fill={rgb, 255:red, 0; green, 0; blue, 0 }  ,fill opacity=1 ] (412,138.66) .. controls (412,139.97) and (413.06,141.04) .. (414.38,141.04) .. controls (415.69,141.04) and (416.75,139.97) .. (416.75,138.66) .. controls (416.75,137.35) and (415.69,136.29) .. (414.38,136.29) .. controls (413.06,136.29) and (412,137.35) .. (412,138.66) -- cycle ;
\draw  [color={rgb, 255:red, 0; green, 0; blue, 0 }  ,draw opacity=1 ][fill={rgb, 255:red, 0; green, 0; blue, 0 }  ,fill opacity=1 ] (409.75,151.91) .. controls (409.75,153.22) and (410.81,154.29) .. (412.13,154.29) .. controls (413.44,154.29) and (414.5,153.22) .. (414.5,151.91) .. controls (414.5,150.6) and (413.44,149.54) .. (412.13,149.54) .. controls (410.81,149.54) and (409.75,150.6) .. (409.75,151.91) -- cycle ;
\draw  [color={rgb, 255:red, 0; green, 0; blue, 0 }  ,draw opacity=1 ][fill={rgb, 255:red, 0; green, 0; blue, 0 }  ,fill opacity=1 ] (406.5,170.16) .. controls (406.5,171.47) and (407.56,172.54) .. (408.88,172.54) .. controls (410.19,172.54) and (411.25,171.47) .. (411.25,170.16) .. controls (411.25,168.85) and (410.19,167.79) .. (408.88,167.79) .. controls (407.56,167.79) and (406.5,168.85) .. (406.5,170.16) -- cycle ;
\draw  [color={rgb, 255:red, 0; green, 0; blue, 0 }  ,draw opacity=1 ][fill={rgb, 255:red, 0; green, 0; blue, 0 }  ,fill opacity=1 ] (405,178.91) .. controls (405,180.22) and (406.06,181.29) .. (407.38,181.29) .. controls (408.69,181.29) and (409.75,180.22) .. (409.75,178.91) .. controls (409.75,177.6) and (408.69,176.54) .. (407.38,176.54) .. controls (406.06,176.54) and (405,177.6) .. (405,178.91) -- cycle ;
\draw  [color={rgb, 255:red, 0; green, 0; blue, 0 }  ,draw opacity=1 ][fill={rgb, 255:red, 0; green, 0; blue, 0 }  ,fill opacity=1 ] (402,196.91) .. controls (402,198.22) and (403.06,199.29) .. (404.38,199.29) .. controls (405.69,199.29) and (406.75,198.22) .. (406.75,196.91) .. controls (406.75,195.6) and (405.69,194.54) .. (404.38,194.54) .. controls (403.06,194.54) and (402,195.6) .. (402,196.91) -- cycle ;
\draw  [color={rgb, 255:red, 0; green, 0; blue, 0 }  ,draw opacity=1 ][fill={rgb, 255:red, 0; green, 0; blue, 0 }  ,fill opacity=1 ] (399.25,212.66) .. controls (399.25,213.97) and (400.31,215.04) .. (401.63,215.04) .. controls (402.94,215.04) and (404,213.97) .. (404,212.66) .. controls (404,211.35) and (402.94,210.29) .. (401.63,210.29) .. controls (400.31,210.29) and (399.25,211.35) .. (399.25,212.66) -- cycle ;

\end{tikzpicture}
&
\multicolumn{3}{c}{\raisebox{0.3cm}{14 further balanced problems; see Example \ref{ex:minProblsD3Delta0}}}
&
\\
\textbf{lable}
&
\underline{$d_3(8^2)$}
&
&
&
&
\\
\textbf{degree}
&
$25$
&
&
&
&
\\
\hline

\rule{0pt}{2.3cm}
\raisebox{0.75cm}{$\delta=1$, $d=0$}
&
\tikzset{every picture/.style={line width=0.75pt}} 

\begin{tikzpicture}[x=0.4pt,y=0.4pt,yscale=-1,xscale=1]

\draw   (306.25,87) -- (452,87) -- (452,232.75) -- (306.25,232.75) -- cycle ;
\draw [color={rgb, 255:red, 208; green, 2; blue, 27 }  ,draw opacity=1 ]   (438,98.83) -- (319.67,219.5) ;
\draw  [color={rgb, 255:red, 0; green, 0; blue, 0 }  ,draw opacity=1 ][fill={rgb, 255:red, 0; green, 0; blue, 0 }  ,fill opacity=1 ] (421.84,112.67) .. controls (421.84,113.99) and (422.9,115.05) .. (424.21,115.05) .. controls (425.52,115.05) and (426.59,113.99) .. (426.59,112.67) .. controls (426.59,111.36) and (425.52,110.3) .. (424.21,110.3) .. controls (422.9,110.3) and (421.84,111.36) .. (421.84,112.67) -- cycle ;
\draw  [color={rgb, 255:red, 0; green, 0; blue, 0 }  ,draw opacity=1 ][fill={rgb, 255:red, 0; green, 0; blue, 0 }  ,fill opacity=1 ] (401.94,133.94) .. controls (401.94,135.25) and (403,136.32) .. (404.31,136.32) .. controls (405.63,136.32) and (406.69,135.25) .. (406.69,133.94) .. controls (406.69,132.63) and (405.63,131.57) .. (404.31,131.57) .. controls (403,131.57) and (401.94,132.63) .. (401.94,133.94) -- cycle ;
\draw  [color={rgb, 255:red, 0; green, 0; blue, 0 }  ,draw opacity=1 ][fill={rgb, 255:red, 0; green, 0; blue, 0 }  ,fill opacity=1 ] (372.3,164.01) .. controls (372.3,165.32) and (373.36,166.38) .. (374.67,166.38) .. controls (375.98,166.38) and (377.05,165.32) .. (377.05,164.01) .. controls (377.05,162.7) and (375.98,161.63) .. (374.67,161.63) .. controls (373.36,161.63) and (372.3,162.7) .. (372.3,164.01) -- cycle ;
\draw  [color={rgb, 255:red, 0; green, 0; blue, 0 }  ,draw opacity=1 ][fill={rgb, 255:red, 0; green, 0; blue, 0 }  ,fill opacity=1 ] (342.2,194.15) .. controls (342.2,195.46) and (343.27,196.53) .. (344.58,196.53) .. controls (345.89,196.53) and (346.95,195.46) .. (346.95,194.15) .. controls (346.95,192.84) and (345.89,191.78) .. (344.58,191.78) .. controls (343.27,191.78) and (342.2,192.84) .. (342.2,194.15) -- cycle ;
\draw  [color={rgb, 255:red, 0; green, 0; blue, 0 }  ,draw opacity=1 ][fill={rgb, 255:red, 0; green, 0; blue, 0 }  ,fill opacity=1 ] (328.42,208.73) .. controls (328.42,210.05) and (329.48,211.11) .. (330.79,211.11) .. controls (332.1,211.11) and (333.17,210.05) .. (333.17,208.73) .. controls (333.17,207.42) and (332.1,206.36) .. (330.79,206.36) .. controls (329.48,206.36) and (328.42,207.42) .. (328.42,208.73) -- cycle ;

\end{tikzpicture}
&
\tikzset{every picture/.style={line width=0.75pt}} 

\begin{tikzpicture}[x=0.4pt,y=0.4pt,yscale=-1,xscale=1]

\draw   (308.25,77) -- (454,77) -- (454,222.75) -- (308.25,222.75) -- cycle ;
\draw [color={rgb, 255:red, 208; green, 2; blue, 27 }  ,draw opacity=1 ]   (397,91) -- (323.67,185.5) ;
\draw  [color={rgb, 255:red, 0; green, 0; blue, 0 }  ,draw opacity=1 ][fill={rgb, 255:red, 0; green, 0; blue, 0 }  ,fill opacity=1 ] (389.14,98.34) .. controls (389.14,99.65) and (390.2,100.72) .. (391.51,100.72) .. controls (392.83,100.72) and (393.89,99.65) .. (393.89,98.34) .. controls (393.89,97.03) and (392.83,95.97) .. (391.51,95.97) .. controls (390.2,95.97) and (389.14,97.03) .. (389.14,98.34) -- cycle ;
\draw  [color={rgb, 255:red, 0; green, 0; blue, 0 }  ,draw opacity=1 ][fill={rgb, 255:red, 0; green, 0; blue, 0 }  ,fill opacity=1 ] (374.3,117.21) .. controls (374.3,118.52) and (375.36,119.58) .. (376.67,119.58) .. controls (377.98,119.58) and (379.05,118.52) .. (379.05,117.21) .. controls (379.05,115.9) and (377.98,114.83) .. (376.67,114.83) .. controls (375.36,114.83) and (374.3,115.9) .. (374.3,117.21) -- cycle ;
\draw  [color={rgb, 255:red, 0; green, 0; blue, 0 }  ,draw opacity=1 ][fill={rgb, 255:red, 0; green, 0; blue, 0 }  ,fill opacity=1 ] (354.2,143.35) .. controls (354.2,144.66) and (355.27,145.73) .. (356.58,145.73) .. controls (357.89,145.73) and (358.95,144.66) .. (358.95,143.35) .. controls (358.95,142.04) and (357.89,140.98) .. (356.58,140.98) .. controls (355.27,140.98) and (354.2,142.04) .. (354.2,143.35) -- cycle ;
\draw  [color={rgb, 255:red, 0; green, 0; blue, 0 }  ,draw opacity=1 ][fill={rgb, 255:red, 0; green, 0; blue, 0 }  ,fill opacity=1 ] (335.17,167.63) .. controls (335.17,168.95) and (336.23,170.01) .. (337.54,170.01) .. controls (338.85,170.01) and (339.92,168.95) .. (339.92,167.63) .. controls (339.92,166.32) and (338.85,165.26) .. (337.54,165.26) .. controls (336.23,165.26) and (335.17,166.32) .. (335.17,167.63) -- cycle ;
\draw [color={rgb, 255:red, 208; green, 2; blue, 27 }  ,draw opacity=1 ]   (423.4,87.8) -- (399,209) ;
\draw  [color={rgb, 255:red, 0; green, 0; blue, 0 }  ,draw opacity=1 ][fill={rgb, 255:red, 0; green, 0; blue, 0 }  ,fill opacity=1 ] (417.42,106.13) .. controls (417.42,107.45) and (418.48,108.51) .. (419.79,108.51) .. controls (421.1,108.51) and (422.17,107.45) .. (422.17,106.13) .. controls (422.17,104.82) and (421.1,103.76) .. (419.79,103.76) .. controls (418.48,103.76) and (417.42,104.82) .. (417.42,106.13) -- cycle ;
\draw  [color={rgb, 255:red, 0; green, 0; blue, 0 }  ,draw opacity=1 ][fill={rgb, 255:red, 0; green, 0; blue, 0 }  ,fill opacity=1 ] (408.83,148.4) .. controls (408.83,149.71) and (409.89,150.77) .. (411.2,150.77) .. controls (412.51,150.77) and (413.58,149.71) .. (413.58,148.4) .. controls (413.58,147.09) and (412.51,146.02) .. (411.2,146.02) .. controls (409.89,146.02) and (408.83,147.09) .. (408.83,148.4) -- cycle ;
\draw  [color={rgb, 255:red, 0; green, 0; blue, 0 }  ,draw opacity=1 ][fill={rgb, 255:red, 0; green, 0; blue, 0 }  ,fill opacity=1 ] (401.67,185.38) .. controls (401.67,186.7) and (402.73,187.76) .. (404.04,187.76) .. controls (405.35,187.76) and (406.42,186.7) .. (406.42,185.38) .. controls (406.42,184.07) and (405.35,183.01) .. (404.04,183.01) .. controls (402.73,183.01) and (401.67,184.07) .. (401.67,185.38) -- cycle ;

\end{tikzpicture}
&
\tikzset{every picture/.style={line width=0.75pt}} 

\begin{tikzpicture}[x=0.4pt,y=0.4pt,yscale=-1,xscale=1]

\draw   (306.25,84) -- (452,84) -- (452,229.75) -- (306.25,229.75) -- cycle ;
\draw [color={rgb, 255:red, 208; green, 2; blue, 27 }  ,draw opacity=1 ]   (381,95.4) -- (321.67,192.5) ;
\draw  [color={rgb, 255:red, 0; green, 0; blue, 0 }  ,draw opacity=1 ][fill={rgb, 255:red, 0; green, 0; blue, 0 }  ,fill opacity=1 ] (373.14,104.54) .. controls (373.14,105.85) and (374.2,106.92) .. (375.51,106.92) .. controls (376.83,106.92) and (377.89,105.85) .. (377.89,104.54) .. controls (377.89,103.23) and (376.83,102.17) .. (375.51,102.17) .. controls (374.2,102.17) and (373.14,103.23) .. (373.14,104.54) -- cycle ;
\draw  [color={rgb, 255:red, 0; green, 0; blue, 0 }  ,draw opacity=1 ][fill={rgb, 255:red, 0; green, 0; blue, 0 }  ,fill opacity=1 ] (353.5,137.01) .. controls (353.5,138.32) and (354.56,139.38) .. (355.87,139.38) .. controls (357.18,139.38) and (358.25,138.32) .. (358.25,137.01) .. controls (358.25,135.7) and (357.18,134.63) .. (355.87,134.63) .. controls (354.56,134.63) and (353.5,135.7) .. (353.5,137.01) -- cycle ;
\draw  [color={rgb, 255:red, 0; green, 0; blue, 0 }  ,draw opacity=1 ][fill={rgb, 255:red, 0; green, 0; blue, 0 }  ,fill opacity=1 ] (330.88,174.35) .. controls (330.88,175.66) and (331.94,176.72) .. (333.26,176.72) .. controls (334.57,176.72) and (335.63,175.66) .. (335.63,174.35) .. controls (335.63,173.04) and (334.57,171.97) .. (333.26,171.97) .. controls (331.94,171.97) and (330.88,173.04) .. (330.88,174.35) -- cycle ;
\draw [color={rgb, 255:red, 208; green, 2; blue, 27 }  ,draw opacity=1 ]   (397.4,97.2) -- (373,218.4) ;
\draw  [color={rgb, 255:red, 0; green, 0; blue, 0 }  ,draw opacity=1 ][fill={rgb, 255:red, 0; green, 0; blue, 0 }  ,fill opacity=1 ] (392.56,109.99) .. controls (392.56,111.3) and (393.62,112.37) .. (394.93,112.37) .. controls (396.25,112.37) and (397.31,111.3) .. (397.31,109.99) .. controls (397.31,108.68) and (396.25,107.62) .. (394.93,107.62) .. controls (393.62,107.62) and (392.56,108.68) .. (392.56,109.99) -- cycle ;
\draw  [color={rgb, 255:red, 0; green, 0; blue, 0 }  ,draw opacity=1 ][fill={rgb, 255:red, 0; green, 0; blue, 0 }  ,fill opacity=1 ] (379.97,173.69) .. controls (379.97,175) and (381.03,176.06) .. (382.34,176.06) .. controls (383.65,176.06) and (384.72,175) .. (384.72,173.69) .. controls (384.72,172.37) and (383.65,171.31) .. (382.34,171.31) .. controls (381.03,171.31) and (379.97,172.37) .. (379.97,173.69) -- cycle ;
\draw  [color={rgb, 255:red, 0; green, 0; blue, 0 }  ,draw opacity=1 ][fill={rgb, 255:red, 0; green, 0; blue, 0 }  ,fill opacity=1 ] (372.81,206.96) .. controls (372.81,208.27) and (373.87,209.33) .. (375.18,209.33) .. controls (376.5,209.33) and (377.56,208.27) .. (377.56,206.96) .. controls (377.56,205.64) and (376.5,204.58) .. (375.18,204.58) .. controls (373.87,204.58) and (372.81,205.64) .. (372.81,206.96) -- cycle ;
\draw [color={rgb, 255:red, 208; green, 2; blue, 27 }  ,draw opacity=1 ]   (413,101.31) -- (431,212.31) ;
\draw  [color={rgb, 255:red, 0; green, 0; blue, 0 }  ,draw opacity=1 ][fill={rgb, 255:red, 0; green, 0; blue, 0 }  ,fill opacity=1 ] (423.1,177.77) .. controls (423.1,179.08) and (424.16,180.14) .. (425.47,180.14) .. controls (426.78,180.14) and (427.85,179.08) .. (427.85,177.77) .. controls (427.85,176.46) and (426.78,175.39) .. (425.47,175.39) .. controls (424.16,175.39) and (423.1,176.46) .. (423.1,177.77) -- cycle ;
\draw  [color={rgb, 255:red, 0; green, 0; blue, 0 }  ,draw opacity=1 ][fill={rgb, 255:red, 0; green, 0; blue, 0 }  ,fill opacity=1 ] (420.24,160.91) .. controls (420.24,162.22) and (421.3,163.29) .. (422.61,163.29) .. controls (423.92,163.29) and (424.99,162.22) .. (424.99,160.91) .. controls (424.99,159.6) and (423.92,158.54) .. (422.61,158.54) .. controls (421.3,158.54) and (420.24,159.6) .. (420.24,160.91) -- cycle ;
\draw  [color={rgb, 255:red, 0; green, 0; blue, 0 }  ,draw opacity=1 ][fill={rgb, 255:red, 0; green, 0; blue, 0 }  ,fill opacity=1 ] (414.81,127.2) .. controls (414.81,128.51) and (415.87,129.57) .. (417.18,129.57) .. controls (418.5,129.57) and (419.56,128.51) .. (419.56,127.2) .. controls (419.56,125.88) and (418.5,124.82) .. (417.18,124.82) .. controls (415.87,124.82) and (414.81,125.88) .. (414.81,127.2) -- cycle ;

\end{tikzpicture}
&
&
\\
\textbf{lable}
&
\underline{$\delta_1(5)$}
&
\underline{$\delta_1(4,3)$}
&
\underline{$\delta_1(3^3)$}
&
&
\\
\textbf{degree}
&
$10$
&
$30$
&
$54$
&
&
\\
\hline

\rule{0pt}{2.3cm}
\raisebox{0.75cm}{$\delta=1$, $0<d\le 5$}
&
\tikzset{every picture/.style={line width=0.75pt}} 

\begin{tikzpicture}[x=0.4pt,y=0.4pt,yscale=-1,xscale=1]

\draw   (310.82,74.14) -- (456.57,74.14) -- (456.57,219.89) -- (310.82,219.89) -- cycle ;
\draw [color={rgb, 255:red, 208; green, 2; blue, 27 }  ,draw opacity=1 ]   (440.43,90.43) -- (326.2,204.94) ;
\draw  [color={rgb, 255:red, 0; green, 0; blue, 0 }  ,draw opacity=1 ][fill={rgb, 255:red, 0; green, 0; blue, 0 }  ,fill opacity=1 ] (411.1,117.91) .. controls (411.1,119.22) and (412.16,120.29) .. (413.47,120.29) .. controls (414.78,120.29) and (415.85,119.22) .. (415.85,117.91) .. controls (415.85,116.6) and (414.78,115.54) .. (413.47,115.54) .. controls (412.16,115.54) and (411.1,116.6) .. (411.1,117.91) -- cycle ;
\draw  [color={rgb, 255:red, 0; green, 0; blue, 0 }  ,draw opacity=1 ][fill={rgb, 255:red, 0; green, 0; blue, 0 }  ,fill opacity=1 ] (421.72,107) .. controls (421.72,108.31) and (422.79,109.37) .. (424.1,109.37) .. controls (425.41,109.37) and (426.47,108.31) .. (426.47,107) .. controls (426.47,105.68) and (425.41,104.62) .. (424.1,104.62) .. controls (422.79,104.62) and (421.72,105.68) .. (421.72,107) -- cycle ;
\draw  [color={rgb, 255:red, 0; green, 0; blue, 0 }  ,draw opacity=1 ][fill={rgb, 255:red, 0; green, 0; blue, 0 }  ,fill opacity=1 ] (428.58,99.8) .. controls (428.58,101.11) and (429.64,102.17) .. (430.96,102.17) .. controls (432.27,102.17) and (433.33,101.11) .. (433.33,99.8) .. controls (433.33,98.48) and (432.27,97.42) .. (430.96,97.42) .. controls (429.64,97.42) and (428.58,98.48) .. (428.58,99.8) -- cycle ;
\draw  [color={rgb, 255:red, 0; green, 0; blue, 0 }  ,draw opacity=1 ][fill={rgb, 255:red, 0; green, 0; blue, 0 }  ,fill opacity=1 ] (401.67,127.62) .. controls (401.67,128.94) and (402.73,130) .. (404.04,130) .. controls (405.35,130) and (406.42,128.94) .. (406.42,127.62) .. controls (406.42,126.31) and (405.35,125.25) .. (404.04,125.25) .. controls (402.73,125.25) and (401.67,126.31) .. (401.67,127.62) -- cycle ;
\draw  [color={rgb, 255:red, 0; green, 0; blue, 0 }  ,draw opacity=1 ][fill={rgb, 255:red, 0; green, 0; blue, 0 }  ,fill opacity=1 ] (386.52,142.48) .. controls (386.52,143.79) and (387.59,144.86) .. (388.9,144.86) .. controls (390.21,144.86) and (391.27,143.79) .. (391.27,142.48) .. controls (391.27,141.17) and (390.21,140.11) .. (388.9,140.11) .. controls (387.59,140.11) and (386.52,141.17) .. (386.52,142.48) -- cycle ;
\draw  [color={rgb, 255:red, 0; green, 0; blue, 0 }  ,draw opacity=1 ][fill={rgb, 255:red, 0; green, 0; blue, 0 }  ,fill opacity=1 ] (331.95,197.05) .. controls (331.95,198.37) and (333.02,199.43) .. (334.33,199.43) .. controls (335.64,199.43) and (336.7,198.37) .. (336.7,197.05) .. controls (336.7,195.74) and (335.64,194.68) .. (334.33,194.68) .. controls (333.02,194.68) and (331.95,195.74) .. (331.95,197.05) -- cycle ;
\draw  [color={rgb, 255:red, 0; green, 0; blue, 0 }  ,draw opacity=1 ][fill={rgb, 255:red, 0; green, 0; blue, 0 }  ,fill opacity=1 ] (348.24,181.05) .. controls (348.24,182.37) and (349.3,183.43) .. (350.61,183.43) .. controls (351.92,183.43) and (352.99,182.37) .. (352.99,181.05) .. controls (352.99,179.74) and (351.92,178.68) .. (350.61,178.68) .. controls (349.3,178.68) and (348.24,179.74) .. (348.24,181.05) -- cycle ;
\draw  [color={rgb, 255:red, 0; green, 0; blue, 0 }  ,draw opacity=1 ][fill={rgb, 255:red, 0; green, 0; blue, 0 }  ,fill opacity=1 ] (367.67,161.34) .. controls (367.67,162.65) and (368.73,163.71) .. (370.04,163.71) .. controls (371.35,163.71) and (372.42,162.65) .. (372.42,161.34) .. controls (372.42,160.03) and (371.35,158.96) .. (370.04,158.96) .. controls (368.73,158.96) and (367.67,160.03) .. (367.67,161.34) -- cycle ;

\end{tikzpicture}
&
\tikzset{every picture/.style={line width=0.75pt}} 

\begin{tikzpicture}[x=0.4pt,y=0.4pt,yscale=-1,xscale=1]

\draw   (311.82,80.14) -- (457.57,80.14) -- (457.57,225.89) -- (311.82,225.89) -- cycle ;
\draw [color={rgb, 255:red, 208; green, 2; blue, 27 }  ,draw opacity=1 ]   (441.43,96.43) -- (327.2,210.94) ;
\draw  [color={rgb, 255:red, 0; green, 0; blue, 0 }  ,draw opacity=1 ][fill={rgb, 255:red, 0; green, 0; blue, 0 }  ,fill opacity=1 ] (412.1,123.91) .. controls (412.1,125.22) and (413.16,126.29) .. (414.47,126.29) .. controls (415.78,126.29) and (416.85,125.22) .. (416.85,123.91) .. controls (416.85,122.6) and (415.78,121.54) .. (414.47,121.54) .. controls (413.16,121.54) and (412.1,122.6) .. (412.1,123.91) -- cycle ;
\draw  [color={rgb, 255:red, 0; green, 0; blue, 0 }  ,draw opacity=1 ][fill={rgb, 255:red, 0; green, 0; blue, 0 }  ,fill opacity=1 ] (422.72,113) .. controls (422.72,114.31) and (423.79,115.37) .. (425.1,115.37) .. controls (426.41,115.37) and (427.47,114.31) .. (427.47,113) .. controls (427.47,111.68) and (426.41,110.62) .. (425.1,110.62) .. controls (423.79,110.62) and (422.72,111.68) .. (422.72,113) -- cycle ;
\draw  [color={rgb, 255:red, 0; green, 0; blue, 0 }  ,draw opacity=1 ][fill={rgb, 255:red, 0; green, 0; blue, 0 }  ,fill opacity=1 ] (429.58,105.8) .. controls (429.58,107.11) and (430.64,108.17) .. (431.96,108.17) .. controls (433.27,108.17) and (434.33,107.11) .. (434.33,105.8) .. controls (434.33,104.48) and (433.27,103.42) .. (431.96,103.42) .. controls (430.64,103.42) and (429.58,104.48) .. (429.58,105.8) -- cycle ;
\draw  [color={rgb, 255:red, 0; green, 0; blue, 0 }  ,draw opacity=1 ][fill={rgb, 255:red, 0; green, 0; blue, 0 }  ,fill opacity=1 ] (402.67,133.63) .. controls (402.67,134.94) and (403.73,136) .. (405.04,136) .. controls (406.35,136) and (407.42,134.94) .. (407.42,133.63) .. controls (407.42,132.31) and (406.35,131.25) .. (405.04,131.25) .. controls (403.73,131.25) and (402.67,132.31) .. (402.67,133.63) -- cycle ;
\draw  [color={rgb, 255:red, 0; green, 0; blue, 0 }  ,draw opacity=1 ][fill={rgb, 255:red, 0; green, 0; blue, 0 }  ,fill opacity=1 ] (387.52,148.48) .. controls (387.52,149.79) and (388.59,150.86) .. (389.9,150.86) .. controls (391.21,150.86) and (392.27,149.79) .. (392.27,148.48) .. controls (392.27,147.17) and (391.21,146.11) .. (389.9,146.11) .. controls (388.59,146.11) and (387.52,147.17) .. (387.52,148.48) -- cycle ;
\draw  [color={rgb, 255:red, 0; green, 0; blue, 0 }  ,draw opacity=1 ][fill={rgb, 255:red, 0; green, 0; blue, 0 }  ,fill opacity=1 ] (332.95,203.05) .. controls (332.95,204.37) and (334.02,205.43) .. (335.33,205.43) .. controls (336.64,205.43) and (337.7,204.37) .. (337.7,203.05) .. controls (337.7,201.74) and (336.64,200.68) .. (335.33,200.68) .. controls (334.02,200.68) and (332.95,201.74) .. (332.95,203.05) -- cycle ;
\draw  [color={rgb, 255:red, 0; green, 0; blue, 0 }  ,draw opacity=1 ][fill={rgb, 255:red, 0; green, 0; blue, 0 }  ,fill opacity=1 ] (340.84,195.05) .. controls (340.84,196.37) and (341.9,197.43) .. (343.21,197.43) .. controls (344.52,197.43) and (345.59,196.37) .. (345.59,195.05) .. controls (345.59,193.74) and (344.52,192.68) .. (343.21,192.68) .. controls (341.9,192.68) and (340.84,193.74) .. (340.84,195.05) -- cycle ;
\draw  [color={rgb, 255:red, 0; green, 0; blue, 0 }  ,draw opacity=1 ][fill={rgb, 255:red, 0; green, 0; blue, 0 }  ,fill opacity=1 ] (375.87,160.14) .. controls (375.87,161.45) and (376.93,162.51) .. (378.24,162.51) .. controls (379.55,162.51) and (380.62,161.45) .. (380.62,160.14) .. controls (380.62,158.83) and (379.55,157.76) .. (378.24,157.76) .. controls (376.93,157.76) and (375.87,158.83) .. (375.87,160.14) -- cycle ;
\draw  [color={rgb, 255:red, 0; green, 0; blue, 0 }  ,draw opacity=1 ][fill={rgb, 255:red, 0; green, 0; blue, 0 }  ,fill opacity=1 ] (354.81,181.45) .. controls (354.81,182.77) and (355.88,183.83) .. (357.19,183.83) .. controls (358.5,183.83) and (359.56,182.77) .. (359.56,181.45) .. controls (359.56,180.14) and (358.5,179.08) .. (357.19,179.08) .. controls (355.88,179.08) and (354.81,180.14) .. (354.81,181.45) -- cycle ;
\draw  [color={rgb, 255:red, 0; green, 0; blue, 0 }  ,draw opacity=1 ][fill={rgb, 255:red, 0; green, 0; blue, 0 }  ,fill opacity=1 ] (367.92,168.6) .. controls (367.92,169.91) and (368.99,170.97) .. (370.3,170.97) .. controls (371.61,170.97) and (372.67,169.91) .. (372.67,168.6) .. controls (372.67,167.28) and (371.61,166.22) .. (370.3,166.22) .. controls (368.99,166.22) and (367.92,167.28) .. (367.92,168.6) -- cycle ;

\end{tikzpicture}
&
\tikzset{every picture/.style={line width=0.75pt}} 

\begin{tikzpicture}[x=0.4pt,y=0.4pt,yscale=-1,xscale=1]

\draw   (311.82,80.14) -- (457.57,80.14) -- (457.57,225.89) -- (311.82,225.89) -- cycle ;
\draw [color={rgb, 255:red, 208; green, 2; blue, 27 }  ,draw opacity=1 ]   (441.43,96.43) -- (327.2,210.94) ;
\draw  [color={rgb, 255:red, 0; green, 0; blue, 0 }  ,draw opacity=1 ][fill={rgb, 255:red, 0; green, 0; blue, 0 }  ,fill opacity=1 ] (412.1,123.91) .. controls (412.1,125.22) and (413.16,126.29) .. (414.47,126.29) .. controls (415.78,126.29) and (416.85,125.22) .. (416.85,123.91) .. controls (416.85,122.6) and (415.78,121.54) .. (414.47,121.54) .. controls (413.16,121.54) and (412.1,122.6) .. (412.1,123.91) -- cycle ;
\draw  [color={rgb, 255:red, 0; green, 0; blue, 0 }  ,draw opacity=1 ][fill={rgb, 255:red, 0; green, 0; blue, 0 }  ,fill opacity=1 ] (422.72,113) .. controls (422.72,114.31) and (423.79,115.37) .. (425.1,115.37) .. controls (426.41,115.37) and (427.47,114.31) .. (427.47,113) .. controls (427.47,111.68) and (426.41,110.62) .. (425.1,110.62) .. controls (423.79,110.62) and (422.72,111.68) .. (422.72,113) -- cycle ;
\draw  [color={rgb, 255:red, 0; green, 0; blue, 0 }  ,draw opacity=1 ][fill={rgb, 255:red, 0; green, 0; blue, 0 }  ,fill opacity=1 ] (429.58,105.8) .. controls (429.58,107.11) and (430.64,108.17) .. (431.96,108.17) .. controls (433.27,108.17) and (434.33,107.11) .. (434.33,105.8) .. controls (434.33,104.48) and (433.27,103.42) .. (431.96,103.42) .. controls (430.64,103.42) and (429.58,104.48) .. (429.58,105.8) -- cycle ;
\draw  [color={rgb, 255:red, 0; green, 0; blue, 0 }  ,draw opacity=1 ][fill={rgb, 255:red, 0; green, 0; blue, 0 }  ,fill opacity=1 ] (402.67,133.63) .. controls (402.67,134.94) and (403.73,136) .. (405.04,136) .. controls (406.35,136) and (407.42,134.94) .. (407.42,133.63) .. controls (407.42,132.31) and (406.35,131.25) .. (405.04,131.25) .. controls (403.73,131.25) and (402.67,132.31) .. (402.67,133.63) -- cycle ;
\draw  [color={rgb, 255:red, 0; green, 0; blue, 0 }  ,draw opacity=1 ][fill={rgb, 255:red, 0; green, 0; blue, 0 }  ,fill opacity=1 ] (387.52,148.48) .. controls (387.52,149.79) and (388.59,150.86) .. (389.9,150.86) .. controls (391.21,150.86) and (392.27,149.79) .. (392.27,148.48) .. controls (392.27,147.17) and (391.21,146.11) .. (389.9,146.11) .. controls (388.59,146.11) and (387.52,147.17) .. (387.52,148.48) -- cycle ;
\draw  [color={rgb, 255:red, 0; green, 0; blue, 0 }  ,draw opacity=1 ][fill={rgb, 255:red, 0; green, 0; blue, 0 }  ,fill opacity=1 ] (332.95,203.05) .. controls (332.95,204.37) and (334.02,205.43) .. (335.33,205.43) .. controls (336.64,205.43) and (337.7,204.37) .. (337.7,203.05) .. controls (337.7,201.74) and (336.64,200.68) .. (335.33,200.68) .. controls (334.02,200.68) and (332.95,201.74) .. (332.95,203.05) -- cycle ;
\draw  [color={rgb, 255:red, 0; green, 0; blue, 0 }  ,draw opacity=1 ][fill={rgb, 255:red, 0; green, 0; blue, 0 }  ,fill opacity=1 ] (340.84,195.05) .. controls (340.84,196.37) and (341.9,197.43) .. (343.21,197.43) .. controls (344.52,197.43) and (345.59,196.37) .. (345.59,195.05) .. controls (345.59,193.74) and (344.52,192.68) .. (343.21,192.68) .. controls (341.9,192.68) and (340.84,193.74) .. (340.84,195.05) -- cycle ;
\draw  [color={rgb, 255:red, 0; green, 0; blue, 0 }  ,draw opacity=1 ][fill={rgb, 255:red, 0; green, 0; blue, 0 }  ,fill opacity=1 ] (375.87,160.14) .. controls (375.87,161.45) and (376.93,162.51) .. (378.24,162.51) .. controls (379.55,162.51) and (380.62,161.45) .. (380.62,160.14) .. controls (380.62,158.83) and (379.55,157.76) .. (378.24,157.76) .. controls (376.93,157.76) and (375.87,158.83) .. (375.87,160.14) -- cycle ;
\draw  [color={rgb, 255:red, 0; green, 0; blue, 0 }  ,draw opacity=1 ][fill={rgb, 255:red, 0; green, 0; blue, 0 }  ,fill opacity=1 ] (349.67,186.31) .. controls (349.67,187.62) and (350.73,188.69) .. (352.05,188.69) .. controls (353.36,188.69) and (354.42,187.62) .. (354.42,186.31) .. controls (354.42,185) and (353.36,183.94) .. (352.05,183.94) .. controls (350.73,183.94) and (349.67,185) .. (349.67,186.31) -- cycle ;
\draw  [color={rgb, 255:red, 0; green, 0; blue, 0 }  ,draw opacity=1 ][fill={rgb, 255:red, 0; green, 0; blue, 0 }  ,fill opacity=1 ] (367.92,168.6) .. controls (367.92,169.91) and (368.99,170.97) .. (370.3,170.97) .. controls (371.61,170.97) and (372.67,169.91) .. (372.67,168.6) .. controls (372.67,167.28) and (371.61,166.22) .. (370.3,166.22) .. controls (368.99,166.22) and (367.92,167.28) .. (367.92,168.6) -- cycle ;
\draw  [color={rgb, 255:red, 0; green, 0; blue, 0 }  ,draw opacity=1 ][fill={rgb, 255:red, 0; green, 0; blue, 0 }  ,fill opacity=1 ] (395.1,140.88) .. controls (395.1,142.19) and (396.16,143.26) .. (397.47,143.26) .. controls (398.79,143.26) and (399.85,142.19) .. (399.85,140.88) .. controls (399.85,139.57) and (398.79,138.51) .. (397.47,138.51) .. controls (396.16,138.51) and (395.1,139.57) .. (395.1,140.88) -- cycle ;
\draw  [color={rgb, 255:red, 0; green, 0; blue, 0 }  ,draw opacity=1 ][fill={rgb, 255:red, 0; green, 0; blue, 0 }  ,fill opacity=1 ] (358.53,177.74) .. controls (358.53,179.05) and (359.59,180.11) .. (360.9,180.11) .. controls (362.21,180.11) and (363.28,179.05) .. (363.28,177.74) .. controls (363.28,176.43) and (362.21,175.36) .. (360.9,175.36) .. controls (359.59,175.36) and (358.53,176.43) .. (358.53,177.74) -- cycle ;

\end{tikzpicture}
&
\tikzset{every picture/.style={line width=0.75pt}} 

\begin{tikzpicture}[x=0.4pt,y=0.4pt,yscale=-1,xscale=1]

\draw   (311.82,80.14) -- (457.57,80.14) -- (457.57,225.89) -- (311.82,225.89) -- cycle ;
\draw [color={rgb, 255:red, 208; green, 2; blue, 27 }  ,draw opacity=1 ]   (441.43,96.43) -- (327.2,210.94) ;
\draw  [color={rgb, 255:red, 0; green, 0; blue, 0 }  ,draw opacity=1 ][fill={rgb, 255:red, 0; green, 0; blue, 0 }  ,fill opacity=1 ] (408.95,127.05) .. controls (408.95,128.37) and (410.02,129.43) .. (411.33,129.43) .. controls (412.64,129.43) and (413.7,128.37) .. (413.7,127.05) .. controls (413.7,125.74) and (412.64,124.68) .. (411.33,124.68) .. controls (410.02,124.68) and (408.95,125.74) .. (408.95,127.05) -- cycle ;
\draw  [color={rgb, 255:red, 0; green, 0; blue, 0 }  ,draw opacity=1 ][fill={rgb, 255:red, 0; green, 0; blue, 0 }  ,fill opacity=1 ] (422.72,113) .. controls (422.72,114.31) and (423.79,115.37) .. (425.1,115.37) .. controls (426.41,115.37) and (427.47,114.31) .. (427.47,113) .. controls (427.47,111.68) and (426.41,110.62) .. (425.1,110.62) .. controls (423.79,110.62) and (422.72,111.68) .. (422.72,113) -- cycle ;
\draw  [color={rgb, 255:red, 0; green, 0; blue, 0 }  ,draw opacity=1 ][fill={rgb, 255:red, 0; green, 0; blue, 0 }  ,fill opacity=1 ] (429.58,105.8) .. controls (429.58,107.11) and (430.64,108.17) .. (431.96,108.17) .. controls (433.27,108.17) and (434.33,107.11) .. (434.33,105.8) .. controls (434.33,104.48) and (433.27,103.42) .. (431.96,103.42) .. controls (430.64,103.42) and (429.58,104.48) .. (429.58,105.8) -- cycle ;
\draw  [color={rgb, 255:red, 0; green, 0; blue, 0 }  ,draw opacity=1 ][fill={rgb, 255:red, 0; green, 0; blue, 0 }  ,fill opacity=1 ] (402.67,133.63) .. controls (402.67,134.94) and (403.73,136) .. (405.04,136) .. controls (406.35,136) and (407.42,134.94) .. (407.42,133.63) .. controls (407.42,132.31) and (406.35,131.25) .. (405.04,131.25) .. controls (403.73,131.25) and (402.67,132.31) .. (402.67,133.63) -- cycle ;
\draw  [color={rgb, 255:red, 0; green, 0; blue, 0 }  ,draw opacity=1 ][fill={rgb, 255:red, 0; green, 0; blue, 0 }  ,fill opacity=1 ] (381.94,153.69) .. controls (381.94,155) and (383,156.06) .. (384.31,156.06) .. controls (385.63,156.06) and (386.69,155) .. (386.69,153.69) .. controls (386.69,152.37) and (385.63,151.31) .. (384.31,151.31) .. controls (383,151.31) and (381.94,152.37) .. (381.94,153.69) -- cycle ;
\draw  [color={rgb, 255:red, 0; green, 0; blue, 0 }  ,draw opacity=1 ][fill={rgb, 255:red, 0; green, 0; blue, 0 }  ,fill opacity=1 ] (332.95,203.05) .. controls (332.95,204.37) and (334.02,205.43) .. (335.33,205.43) .. controls (336.64,205.43) and (337.7,204.37) .. (337.7,203.05) .. controls (337.7,201.74) and (336.64,200.68) .. (335.33,200.68) .. controls (334.02,200.68) and (332.95,201.74) .. (332.95,203.05) -- cycle ;
\draw  [color={rgb, 255:red, 0; green, 0; blue, 0 }  ,draw opacity=1 ][fill={rgb, 255:red, 0; green, 0; blue, 0 }  ,fill opacity=1 ] (340.84,195.05) .. controls (340.84,196.37) and (341.9,197.43) .. (343.21,197.43) .. controls (344.52,197.43) and (345.59,196.37) .. (345.59,195.05) .. controls (345.59,193.74) and (344.52,192.68) .. (343.21,192.68) .. controls (341.9,192.68) and (340.84,193.74) .. (340.84,195.05) -- cycle ;
\draw  [color={rgb, 255:red, 0; green, 0; blue, 0 }  ,draw opacity=1 ][fill={rgb, 255:red, 0; green, 0; blue, 0 }  ,fill opacity=1 ] (375.87,160.14) .. controls (375.87,161.45) and (376.93,162.51) .. (378.24,162.51) .. controls (379.55,162.51) and (380.62,161.45) .. (380.62,160.14) .. controls (380.62,158.83) and (379.55,157.76) .. (378.24,157.76) .. controls (376.93,157.76) and (375.87,158.83) .. (375.87,160.14) -- cycle ;
\draw  [color={rgb, 255:red, 0; green, 0; blue, 0 }  ,draw opacity=1 ][fill={rgb, 255:red, 0; green, 0; blue, 0 }  ,fill opacity=1 ] (349.67,186.31) .. controls (349.67,187.62) and (350.73,188.69) .. (352.05,188.69) .. controls (353.36,188.69) and (354.42,187.62) .. (354.42,186.31) .. controls (354.42,185) and (353.36,183.94) .. (352.05,183.94) .. controls (350.73,183.94) and (349.67,185) .. (349.67,186.31) -- cycle ;
\draw  [color={rgb, 255:red, 0; green, 0; blue, 0 }  ,draw opacity=1 ][fill={rgb, 255:red, 0; green, 0; blue, 0 }  ,fill opacity=1 ] (367.92,168.6) .. controls (367.92,169.91) and (368.99,170.97) .. (370.3,170.97) .. controls (371.61,170.97) and (372.67,169.91) .. (372.67,168.6) .. controls (372.67,167.28) and (371.61,166.22) .. (370.3,166.22) .. controls (368.99,166.22) and (367.92,167.28) .. (367.92,168.6) -- cycle ;
\draw  [color={rgb, 255:red, 0; green, 0; blue, 0 }  ,draw opacity=1 ][fill={rgb, 255:red, 0; green, 0; blue, 0 }  ,fill opacity=1 ] (395.1,140.88) .. controls (395.1,142.19) and (396.16,143.26) .. (397.47,143.26) .. controls (398.79,143.26) and (399.85,142.19) .. (399.85,140.88) .. controls (399.85,139.57) and (398.79,138.51) .. (397.47,138.51) .. controls (396.16,138.51) and (395.1,139.57) .. (395.1,140.88) -- cycle ;
\draw  [color={rgb, 255:red, 0; green, 0; blue, 0 }  ,draw opacity=1 ][fill={rgb, 255:red, 0; green, 0; blue, 0 }  ,fill opacity=1 ] (358.53,177.74) .. controls (358.53,179.05) and (359.59,180.11) .. (360.9,180.11) .. controls (362.21,180.11) and (363.28,179.05) .. (363.28,177.74) .. controls (363.28,176.43) and (362.21,175.36) .. (360.9,175.36) .. controls (359.59,175.36) and (358.53,176.43) .. (358.53,177.74) -- cycle ;
\draw  [color={rgb, 255:red, 0; green, 0; blue, 0 }  ,draw opacity=1 ][fill={rgb, 255:red, 0; green, 0; blue, 0 }  ,fill opacity=1 ] (415.52,120.2) .. controls (415.52,121.51) and (416.59,122.57) .. (417.9,122.57) .. controls (419.21,122.57) and (420.27,121.51) .. (420.27,120.2) .. controls (420.27,118.88) and (419.21,117.82) .. (417.9,117.82) .. controls (416.59,117.82) and (415.52,118.88) .. (415.52,120.2) -- cycle ;
\draw  [color={rgb, 255:red, 0; green, 0; blue, 0 }  ,draw opacity=1 ][fill={rgb, 255:red, 0; green, 0; blue, 0 }  ,fill opacity=1 ] (388.1,147.91) .. controls (388.1,149.22) and (389.16,150.29) .. (390.47,150.29) .. controls (391.78,150.29) and (392.85,149.22) .. (392.85,147.91) .. controls (392.85,146.6) and (391.78,145.54) .. (390.47,145.54) .. controls (389.16,145.54) and (388.1,146.6) .. (388.1,147.91) -- cycle ;

\end{tikzpicture}
&
\tikzset{every picture/.style={line width=0.75pt}} 

\begin{tikzpicture}[x=0.4pt,y=0.4pt,yscale=-1,xscale=1]

\draw   (311.82,80.14) -- (457.57,80.14) -- (457.57,225.89) -- (311.82,225.89) -- cycle ;
\draw [color={rgb, 255:red, 208; green, 2; blue, 27 }  ,draw opacity=1 ]   (441.43,96.43) -- (327.2,210.94) ;
\draw  [color={rgb, 255:red, 0; green, 0; blue, 0 }  ,draw opacity=1 ][fill={rgb, 255:red, 0; green, 0; blue, 0 }  ,fill opacity=1 ] (408.95,127.05) .. controls (408.95,128.37) and (410.02,129.43) .. (411.33,129.43) .. controls (412.64,129.43) and (413.7,128.37) .. (413.7,127.05) .. controls (413.7,125.74) and (412.64,124.68) .. (411.33,124.68) .. controls (410.02,124.68) and (408.95,125.74) .. (408.95,127.05) -- cycle ;
\draw  [color={rgb, 255:red, 0; green, 0; blue, 0 }  ,draw opacity=1 ][fill={rgb, 255:red, 0; green, 0; blue, 0 }  ,fill opacity=1 ] (421.87,113.57) .. controls (421.87,114.88) and (422.93,115.94) .. (424.24,115.94) .. controls (425.55,115.94) and (426.62,114.88) .. (426.62,113.57) .. controls (426.62,112.26) and (425.55,111.19) .. (424.24,111.19) .. controls (422.93,111.19) and (421.87,112.26) .. (421.87,113.57) -- cycle ;
\draw  [color={rgb, 255:red, 0; green, 0; blue, 0 }  ,draw opacity=1 ][fill={rgb, 255:red, 0; green, 0; blue, 0 }  ,fill opacity=1 ] (427.58,108.37) .. controls (427.58,109.68) and (428.64,110.74) .. (429.96,110.74) .. controls (431.27,110.74) and (432.33,109.68) .. (432.33,108.37) .. controls (432.33,107.06) and (431.27,105.99) .. (429.96,105.99) .. controls (428.64,105.99) and (427.58,107.06) .. (427.58,108.37) -- cycle ;
\draw  [color={rgb, 255:red, 0; green, 0; blue, 0 }  ,draw opacity=1 ][fill={rgb, 255:red, 0; green, 0; blue, 0 }  ,fill opacity=1 ] (402.67,133.63) .. controls (402.67,134.94) and (403.73,136) .. (405.04,136) .. controls (406.35,136) and (407.42,134.94) .. (407.42,133.63) .. controls (407.42,132.31) and (406.35,131.25) .. (405.04,131.25) .. controls (403.73,131.25) and (402.67,132.31) .. (402.67,133.63) -- cycle ;
\draw  [color={rgb, 255:red, 0; green, 0; blue, 0 }  ,draw opacity=1 ][fill={rgb, 255:red, 0; green, 0; blue, 0 }  ,fill opacity=1 ] (381.94,153.69) .. controls (381.94,155) and (383,156.06) .. (384.31,156.06) .. controls (385.63,156.06) and (386.69,155) .. (386.69,153.69) .. controls (386.69,152.37) and (385.63,151.31) .. (384.31,151.31) .. controls (383,151.31) and (381.94,152.37) .. (381.94,153.69) -- cycle ;
\draw  [color={rgb, 255:red, 0; green, 0; blue, 0 }  ,draw opacity=1 ][fill={rgb, 255:red, 0; green, 0; blue, 0 }  ,fill opacity=1 ] (332.95,203.05) .. controls (332.95,204.37) and (334.02,205.43) .. (335.33,205.43) .. controls (336.64,205.43) and (337.7,204.37) .. (337.7,203.05) .. controls (337.7,201.74) and (336.64,200.68) .. (335.33,200.68) .. controls (334.02,200.68) and (332.95,201.74) .. (332.95,203.05) -- cycle ;
\draw  [color={rgb, 255:red, 0; green, 0; blue, 0 }  ,draw opacity=1 ][fill={rgb, 255:red, 0; green, 0; blue, 0 }  ,fill opacity=1 ] (340.84,195.05) .. controls (340.84,196.37) and (341.9,197.43) .. (343.21,197.43) .. controls (344.52,197.43) and (345.59,196.37) .. (345.59,195.05) .. controls (345.59,193.74) and (344.52,192.68) .. (343.21,192.68) .. controls (341.9,192.68) and (340.84,193.74) .. (340.84,195.05) -- cycle ;
\draw  [color={rgb, 255:red, 0; green, 0; blue, 0 }  ,draw opacity=1 ][fill={rgb, 255:red, 0; green, 0; blue, 0 }  ,fill opacity=1 ] (375.87,160.14) .. controls (375.87,161.45) and (376.93,162.51) .. (378.24,162.51) .. controls (379.55,162.51) and (380.62,161.45) .. (380.62,160.14) .. controls (380.62,158.83) and (379.55,157.76) .. (378.24,157.76) .. controls (376.93,157.76) and (375.87,158.83) .. (375.87,160.14) -- cycle ;
\draw  [color={rgb, 255:red, 0; green, 0; blue, 0 }  ,draw opacity=1 ][fill={rgb, 255:red, 0; green, 0; blue, 0 }  ,fill opacity=1 ] (349.67,186.31) .. controls (349.67,187.62) and (350.73,188.69) .. (352.05,188.69) .. controls (353.36,188.69) and (354.42,187.62) .. (354.42,186.31) .. controls (354.42,185) and (353.36,183.94) .. (352.05,183.94) .. controls (350.73,183.94) and (349.67,185) .. (349.67,186.31) -- cycle ;
\draw  [color={rgb, 255:red, 0; green, 0; blue, 0 }  ,draw opacity=1 ][fill={rgb, 255:red, 0; green, 0; blue, 0 }  ,fill opacity=1 ] (367.92,168.6) .. controls (367.92,169.91) and (368.99,170.97) .. (370.3,170.97) .. controls (371.61,170.97) and (372.67,169.91) .. (372.67,168.6) .. controls (372.67,167.28) and (371.61,166.22) .. (370.3,166.22) .. controls (368.99,166.22) and (367.92,167.28) .. (367.92,168.6) -- cycle ;
\draw  [color={rgb, 255:red, 0; green, 0; blue, 0 }  ,draw opacity=1 ][fill={rgb, 255:red, 0; green, 0; blue, 0 }  ,fill opacity=1 ] (395.1,140.88) .. controls (395.1,142.19) and (396.16,143.26) .. (397.47,143.26) .. controls (398.79,143.26) and (399.85,142.19) .. (399.85,140.88) .. controls (399.85,139.57) and (398.79,138.51) .. (397.47,138.51) .. controls (396.16,138.51) and (395.1,139.57) .. (395.1,140.88) -- cycle ;
\draw  [color={rgb, 255:red, 0; green, 0; blue, 0 }  ,draw opacity=1 ][fill={rgb, 255:red, 0; green, 0; blue, 0 }  ,fill opacity=1 ] (355.67,179.74) .. controls (355.67,181.05) and (356.73,182.11) .. (358.05,182.11) .. controls (359.36,182.11) and (360.42,181.05) .. (360.42,179.74) .. controls (360.42,178.43) and (359.36,177.36) .. (358.05,177.36) .. controls (356.73,177.36) and (355.67,178.43) .. (355.67,179.74) -- cycle ;
\draw  [color={rgb, 255:red, 0; green, 0; blue, 0 }  ,draw opacity=1 ][fill={rgb, 255:red, 0; green, 0; blue, 0 }  ,fill opacity=1 ] (415.52,120.2) .. controls (415.52,121.51) and (416.59,122.57) .. (417.9,122.57) .. controls (419.21,122.57) and (420.27,121.51) .. (420.27,120.2) .. controls (420.27,118.88) and (419.21,117.82) .. (417.9,117.82) .. controls (416.59,117.82) and (415.52,118.88) .. (415.52,120.2) -- cycle ;
\draw  [color={rgb, 255:red, 0; green, 0; blue, 0 }  ,draw opacity=1 ][fill={rgb, 255:red, 0; green, 0; blue, 0 }  ,fill opacity=1 ] (388.1,147.91) .. controls (388.1,149.22) and (389.16,150.29) .. (390.47,150.29) .. controls (391.78,150.29) and (392.85,149.22) .. (392.85,147.91) .. controls (392.85,146.6) and (391.78,145.54) .. (390.47,145.54) .. controls (389.16,145.54) and (388.1,146.6) .. (388.1,147.91) -- cycle ;
\draw  [color={rgb, 255:red, 0; green, 0; blue, 0 }  ,draw opacity=1 ][fill={rgb, 255:red, 0; green, 0; blue, 0 }  ,fill opacity=1 ] (434.15,101.57) .. controls (434.15,102.88) and (435.22,103.94) .. (436.53,103.94) .. controls (437.84,103.94) and (438.9,102.88) .. (438.9,101.57) .. controls (438.9,100.26) and (437.84,99.19) .. (436.53,99.19) .. controls (435.22,99.19) and (434.15,100.26) .. (434.15,101.57) -- cycle ;
\draw  [color={rgb, 255:red, 0; green, 0; blue, 0 }  ,draw opacity=1 ][fill={rgb, 255:red, 0; green, 0; blue, 0 }  ,fill opacity=1 ] (362.15,173.85) .. controls (362.15,175.17) and (363.22,176.23) .. (364.53,176.23) .. controls (365.84,176.23) and (366.9,175.17) .. (366.9,173.85) .. controls (366.9,172.54) and (365.84,171.48) .. (364.53,171.48) .. controls (363.22,171.48) and (362.15,172.54) .. (362.15,173.85) -- cycle ;

\end{tikzpicture}
\\
\textbf{lable}
&
$d_1\delta_1(8)$
&
$d_2\delta_1(10)$
&
$d_3\delta_1(12)$
&
$d_4\delta_1(14)$
&
$d_5\delta_1(16)$
\\
\textbf{degree}
&
$20$
&
$104$
&
$320$
&
$760$
&
$1540$\\ \hline
\end{tabular}
}
\end{center}


    \caption{All minimal problems discovered in Section \ref{sec:lines}. Each row contains \emph{all} minimal problems in the given setting (except for $\delta=0, d=3$). The lines represent 3D lines, while the points are sampled on their image curves. The label identifies a problem as $dD\delta M(N)$, where $D=\deg(C)$, $M=\deg(A)$, and $N=n_1^{e_1},\ldots,n_k^{e_k}$ means that $e_i$ lines have $n_i$ points sampled on their image curves. $P$ means that the lines are parallel and $PC$ means that they are parallel and coplanar. For the underlined labels, we evaluated minimal solvers in Section \ref{sec:experiments}.}
    \label{tab:minimal}
\end{table}

Following the strategy described above, we find all minimal problems in a given setting by finding the balanced problems first.
Here, we explain the general principle on how to enumerate all balanced SfM problems in Sec. \ref{sec:lines}. 
There, the DoF of the parameters to be recovered is of the form $\Gamma + \Lambda \ell$, where $\Gamma$ is the DoF of the camera plus potential joint information of the $\ell$ lines (e.g., in case of parallel or coplanar lines) and $\Lambda$ is the individual DoF of each world line (both after modding out all global symmetries). 
Often $\Lambda$ is $4$, but in Sec. \ref{sec:pure_rotation} and \ref{sec:pureTranslation}, it is also $2$ or $1$.
Choosing $N_i$ points on the image curve of the $i$-th line yields $\sum_{i=1}^l N_i$ constraints on structure and motion.  
Note that each $N_i$ has to be chosen larger than $\Lambda$ as the points on the $i$-th image curve otherwise do not constrain the camera parameters.
Furthermore, $N_i$ should be at most the dimension of the space of all possible image curves (determined in Prop. \ref{prop:D0ImageCurveVariety}, \ref{prop:pureTranslationDominant} and \ref{prop:delta2VaryingCam}) as measuring more points per curve would only measure noise. 
All in all, the balanced problems are those satisfying
\begin{gather}
\label{eq:pointsinlinesDoF}
    \begin{split}
    \Gamma + \Lambda \ell = \textstyle{\sum_{i=1}^l N_i}, \\
    \Lambda < N_i \leq \dim \mathrm{im}(\mathcal{P}_{d,\delta} \times \mathrm{Gr}(1, \PP^3) \to \mathcal{H}_{1+d+2\delta}) \text{ for all } i = 1, \ldots, \ell.
    \end{split}
\end{gather}
From this, we can easily enumerate all balanced problems, and then check minimality for each of them. For the cases studied in Sec. \ref{sec:lines}, this results in Table~\ref{tab:minimal}.

\begin{example}[$d=1,\delta=2$] \label{ex:minProblsD1Delta2}
    We discuss minimal problems arising from measuring points on image curves obtained from a single generic RS camera in $\mathcal{P}_{1,2}$ observing $\ell$ lines.
The image curves have degree $6$, i.e., they live in the $12$-dimensional space $\mathcal{H}_6$.
By Proposition \ref{prop:delta2VaryingCam}, all such curves form an $11$-dimensional subvariety.
In other words, they form a hypersurface in  $\mathcal{H}_6$ that is defined by a single equation in the curve coefficients, analogously to the equation in Section~\ref{sec:longEquation}.

From the image of a single line (i.e., $\ell=1$), we cannot recover the part of the camera motion in the same direction as the line. 
However, by Proposition \ref{prop:delta2VaryingCam}, we can reconstruct the remaining motion parameters (up to global scale, translation, rotation). Thus, using the notation in \eqref{eq:pointsinlinesDoF}, we have $\Gamma = 2d+3\delta-1=7$ and $\Lambda=4$.
Hence,  choosing $N=11$ points on the image curve yields a balanced problem, which we verified with GB to be  minimal of degree 1964  (note that this coincides with the degree of the hypersurface in $\mathcal{H}_6$).

Given images of more lines (i.e., $\ell > 1$), the line direction ambiguity is in general no longer a problem.
Therefore, we have $\Gamma = 3d+3\delta-1=3+6-1=8$  and $\Lambda = 4$, and so  
\eqref{eq:pointsinlinesDoF} becomes in this case $8+4\ell = \sum_{i=1}^\ell N_i$, $5\leq N_i\leq 11$. 
All possible values for $N_i$ (up to reordering the lines) are:
\begin{center}
\begin{tabular}{|c|l|}
\hline 
    $\;\ell = 2\;$ & $\;(11,5),(10,6),(9,7),\mathbf{(8,8)}$ \\
    \hline
    $\ell = 3$ & $\;(10,5,5),(9,6,5),\mathbf{(8,7,5)},\mathbf{(8,6,6)}, \mathbf{(7,7,6)}$ \\
    \hline 
    $\ell = 4$ & $\;(9,5,5,5),\mathbf{(8,6,5,5)},\mathbf{(7,7,5,5)},\mathbf{(7,6,6,5)}, \mathbf{(6,6,6,6)}\;$ \\
    \hline 
    $\ell = 5$ & $\;\mathbf{(8,5,5,5,5)},\mathbf{(7,6,5,5,5)}, \mathbf{(6,6,6,5,5)}$ \\
    \hline 
    $\ell = 6$ & $\;\mathbf{(7,5,5,5,5,5)}, \mathbf{(6,6,5,5,5,5)}$ \\
    \hline 
    $\ell = 7$ & $\;\mathbf{(6,5,5,5,5,5,5)}$ \\
    \hline 
    $\ell = 8$ & $\;\mathbf{(5,5,5,5,5,5,5,5)}$ \\
    \hline 
\end{tabular} 
\end{center}\hfill$\diamondsuit$
\end{example}

\begin{example}[$d=3,\delta=0$] \label{ex:minProblsD3Delta0}
     We again use the notation in \eqref{eq:pointsinlinesDoF}.
     After modding out  global rotation, translation and scaling, we have $\Gamma = 3\delta+3d-1=8$ (assuming $\ell>1$) and $\Lambda=4$. Thus, we obtained the same balanced equation as above: $8+4\ell=\sum_{i=1}^\ell N_i$. The lower bound of $5$ for the $N_i$ remains unchanged; however, by Prop. \ref{prop:pureTranslationDominant}, the upper bound is the dimension of $\mathcal{H}_4$, which is  $8$ by Prop.~\ref{prop:H}. Thus,  the balanced problems are the bold ones in the table in Example \ref{ex:minProblsD1Delta2}.
\hfill$\diamondsuit$
\end{example}

\section{Proofs}\label{sec:Proofs}

Here, we provide proofs for the assertions made in the main paper.
Some of our proofs are computational, relying on symbolic computations in \texttt{Macaulay2} \cite{M2} and \texttt{Oscar} \cite{OSCAR,OSCAR-book} or numerical computations with \texttt{Julia} \cite{Julia-2017} using the \texttt{HomotopyContinuation.jl} package \cite{HomotopyContinuation.jl}. All the code mentioned in this section can be found in \href{https://github.com/sofiaemd/Single-View-Rolling-Shutter-SfM}{https://github.com/sofiaemd/Single-View-Rolling-Shutter-SfM}.

\begin{proof} [of Theorem \ref{thm:order}]
    To compute the order, we work with complex and projective spaces. In particular, we use homogeneous coordinates $x = [v:t] \in \PP^1$ for the position of the scanline, such that the usual affine coordinates lie in the affine chart where $t=1$ and the point $(1:0)$ represents infinity.
Then, we interpret the camera position $C(x)$ in the projective space $\PP^3$ by homogenizing  its polynomial coordinate functions, i.e.
$C = (a,b,c,t^d)^\top \in (\RR[v,t]_d)^4$.
We homogenize $A(x)$ in the same way.
The camera matrix for the scanline at position $x$ is now
$P(x) = R(x) \begin{bmatrix} t^d I | -\tilde{C}(x)  \end{bmatrix}$, where $\tilde{C} = (a,b,c)^\top$.

The order of the camera equals the degree of the (now homogeneous) polynomial $\Sigma^\vee(x) \cdot  X = r(x) P(x) X$. 
Since $X$ is a (generic) constant, the degree is equal to the degree of the map $\PP^1 \to (\PP^{3})^\ast, x \mapsto \Sigma^\vee(x) =  r(x) P(x)$. That degree is at most  $\deg r + \deg C + \deg R = 1+d+2\delta$.

    It remains to prove that the 3 coordinate functions of $\Sigma^\vee(x)$ do not have any common factors, since, in that case, the actual degree of the map $\Sigma^\vee$ would be lower. Since the set of $(C,A)$ for which $\Sigma^\vee(x)$ has common factors is a subvariety of $\mathcal{P}_{d,\delta}$, it suffices to find a single example where this does not happen. For this, we set $b = t^d$, $c = a$, $\beta = t^\delta$, $\gamma = \alpha$.
    After this substitution, the second entry of $\Sigma^\vee(x)$ is equal to $\Sigma_2^\vee(x) = -4 \alpha t^{d+\delta} v $. The last element of $\Sigma^\vee(x)$ is  $\Sigma_0^\vee(x) = 2a\alpha^2v + 4\alpha t^{d+\delta}v - 2a t^{2 \delta} v - 2a\alpha^2t - 2at^{2 \delta+1}$. For generic polynomials  $a \in \RR[v,t]_d$ and $\alpha \in  \RR[v,t]_\delta$, we have that $v$ does not divide $\Sigma_0^\vee(x)$, $t$ does not divide $\Sigma_0^\vee(x)$, and none of the complex linear factors of $\alpha$ divides $\Sigma_0^\vee(x)$. Thus, $\Sigma_2^\vee(x)$ and $\Sigma_0^\vee(x)$ are generically coprime, and the order of the RS camera is indeed $1+d+2\delta$.
    \qed
\end{proof}

\begin{proof} [of Theorem \ref{thm:imageCurve}]
We continue the proof from the main paper.
     As in the proof of Theorem \ref{thm:order} above, it is sufficient to find a single example, where $u_1,u_2,u_0$ do not have common factors (since having common factors is a Zariski closed property). For this, we take a line $L$, whose Plücker coordinates are $q = (0,1,0)$ and $\Delta=(1,0,0)$ and set the motion parameters to $b=t^d$, $c = a$, $\beta =t^\delta$, $\gamma = \alpha$.
    In that case,  $u_2$ and $u_0 = t \cdot \tilde u$ become
    \begin{align*}
        u_2 = -2 \alpha^2 t^d v - 2 t^{d+2 \delta} v + 4 a \alpha  t^{\delta+1} - 4  \alpha  t^{d+\delta+1}, \\
        \tilde u = 2 a \alpha^2  - 2 t^d \alpha^2  - 2 a t^{2\delta}  + 2  t^{d+2\delta}  = 2  (\alpha - t^\delta) (\alpha + t^\delta) (a - t^d).
    \end{align*}
    We can see that $t$ is not a factor of $\tilde u$ for generic choices of $a \in \mathbb{R}[v,t]_d$ and $\alpha \in \mathbb{R}[v,t]_\delta$, and thus $t$ does not divide $u_1 = v \cdot \tilde u$ generically.
    Hence, the only way how $u_1,u_2,u_0$ can have a common factor is if one of the complex linear factors of $\tilde u$ divides $u_2$.
    We prove now that this does not happen generically.
    To see this, we begin with considering a complex linear factor $\lambda$ of $\alpha-t^\delta$, i.e., $\alpha-t^\delta = \lambda \cdot \tilde \alpha$ for some polynomial $\tilde \alpha$ of degree $\delta-1$.
    If $\lambda$ were to divide $u_2$, then $u_2 \equiv 0 \mod \lambda$, but since we also have $\alpha\equiv t^\delta \mod \lambda$, we obtain
    \begin{align*}
        0 \equiv u_2 \equiv   (-4t^d  v + 4a  t - 4t^{d+1}) t^{2\delta} \mod \lambda.
    \end{align*}
    However, $\lambda \neq t$  (as otherwise it would also be a factor of $\alpha = t^\delta + \lambda \cdot \tilde \alpha$ and so $A(x)$ would have been of degree lower than $\delta$) and for generic choices of $a \in \mathbb{R}[v,t]_d$, $\lambda$ is also not a factor of $t^d v-at+t^{d+1}$. 
    Thus, generically, $\lambda$ cannot divide $u_2$.

    Similarly, if a linear factor $\lambda$ of $\alpha+t^\delta$ or $a-t^d$ would divide $u_2$, we can exploit the relation $\alpha \equiv -t^\delta \mod \lambda$ or $a \equiv t^d \mod \lambda$, respectively, to derive
    \begin{align*}
        &0 \equiv u_2 \equiv 
         (-4t^d v - 4a  t + 4  t^{d+1}) t^{2\delta} 
        \mod \lambda \quad \text{ or }\\
        &0 \equiv u_2 \equiv
        (-2 \alpha^2  - 2 t^{2 \delta} ) t^d v
        \mod \lambda, 
    \end{align*}
    respectively, both of which are impossible for generic choices of $a \in \mathbb{R}[v,t]_d$ and  $\alpha \in \mathbb{R}[v,t]_\delta$.
    This shows that $u_1,u_2,u_0$ do not have any common factors generically. 
    \qed
\end{proof}

\begin{proof}[of Proposition \ref{prop:spanImLine}]
To verify that the linear span is full dimensional, we picked $\ell(2+2d+4\delta)$ generic cameras and $\ell$ generic world lines for each of them. Then we computed the matrix whose rows are the coefficients of the curves obtained when taking a picture of these lines through the respective cameras and checked that its determinant does not vanish. See submitted code. \qed
\end{proof}

\begin{proof}[of Proposition \ref{prop:D0ImageCurvePlane}]
  The image of a world line whose span with the camera center $[0\, 0\, 0]^\top$ is described by the coefficient vector $q\in \mathbb{P}^2$ is a curve parametrized by $u(x) = r(x) \times (R(x) q)$. 
    Since this expression is linear in $q $, it is clear that the set of all image curves is a projective subspace of $\mathcal{H}_{1+2\delta}$ of dimension at most two. 
    We now prove that this dimension equals two. 
    As we know from the proof of Theorem \ref{thm:imageCurve} that the coordinate functions $[u_1:u_2:u_0]$ of $u$ do not share a common factor, it is sufficient to investigate the coefficients of $u_0(x)$.
    In the affine chart $t=1$, we have $r(x) = [1:0:-x]^\top$ and so $u_0(x)$ is the second row of $R(x)$ times $q$.
    By modding out the global rotation, we may assume that $R(0)$ is the identity matrix.
    Writing $q = [q_1:q_2:1]^\top$ and $R(x) = q2r(A(x))$ with $A(x) = (\alpha(x),\beta(x),\gamma(x))$ and $\alpha(x) = \sum_{j=1}^\delta \alpha_j x^j,\;\beta(x) = \sum_{j=1}^\delta \beta_j x^j,\;\gamma(x) = \sum_{j=1}^\delta \gamma_j x^j$, we see that
    \begin{equation} \label{eq:u0Prop4}
        u_0(x) = (q_2 + 2(\gamma_1q_1-\alpha_1)x + \mathrm{h.o.t.}).
    \end{equation}
    Thus, for $\gamma_1 \neq 0$, the linear map that assigns to $q$ the coefficients of $u_0$ is bijective onto its image.

To prove the second statement, we generalize Example \ref{ex:curve_d0del1} as follows: First, we describe how to get the analog of \eqref{eq:matrixRepresentingPlane} as a unique representation for the plane in $\mathcal{H}_{1+2\delta}$. Then, for any fixed $\delta$, we  argue inductively on the coefficients of $A(x)$ to show that the plane uniquely determines the camera. The proof is constructive and thus provides an algorithm for computing the camera from the plane: 

We start by spelling out the entries of $u(x):$
\begin{gather*}
    u(x)=\begin{bmatrix}
        1 &
        0 &
        -x
    \end{bmatrix}^\top \times q2r (\alpha(x),\beta(x),\gamma(x))\begin{bmatrix}
        q_1&
        q_2&
        1
    \end{bmatrix}^\top=\\
    \left[\begin{smallmatrix}
        x(2(\alpha(x) \beta(x) + \gamma(x))q_1 +(1 - \alpha(x)^2 + \beta(x)^2 - \gamma(x)^2)q_2 +2(\beta(x) \gamma(x) - \alpha(x))) \\
        \\
-x((1 + \alpha(x)^2 - \beta(x)^2 - \gamma(x)^2)q_1 + 2(\alpha(x) \beta(x) - \gamma(x))q_2 +2(\alpha(x) \gamma(x) + \beta(x)))\\
-(2(\alpha(x) \gamma(x) - \beta(x))q_1+2(\beta(x) \gamma(x) + \alpha(x))q_2 +1 - \alpha(x)^2 - \beta(x)^2 + \gamma(x)^2) \\
\\
2(\alpha(x) \beta(x) + \gamma(x))q_1 +(1 - \alpha(x)^2 + \beta(x)^2 - \gamma(x)^2)q_2 +2(\beta(x) \gamma(x) - \alpha(x))
    \end{smallmatrix}\right].
\end{gather*}
The parametrization \eqref{eq:AffineCurveParam} of the curve is obtained by dividing the second row by the last one, which gives it the form $y=\frac{\sum_{i=0}^{2\delta+1}Z_{i}x^i}{\sum_{i=0}^{2\delta}Y_{i}x^i}$.  
We consider the matrix $M$ that has a column for each coefficient $Z_i, Y_i$, and $3$ rows corresponding to $1,q_1,q_2$: The first row of the matrix contains in each column the terms of $Z_i$ resp. $Y_i$ that do not contain any $q_j$, in the second row the terms with $q_1$, and in the third row the terms with $q_2$. We observe now that $Z_{0}=-1, \;Y_{1}=2\gamma_1q_1-2\alpha_1\text{, and }
        Y_{0}=q_2,$ see \eqref{eq:u0Prop4}.
   These give rise to the following columns of the matrix $M$: {\small\begin{gather*}
        \begin{bmatrix}
            -1\\
            0\\
            0
        \end{bmatrix}, \begin{bmatrix}
            -2\alpha_1\\
            2\gamma_1\\
            0
        \end{bmatrix},\text{ and }\begin{bmatrix}
            0\\
            0\\
            1
        \end{bmatrix}.
    \end{gather*}}
    Whenever $\gamma_1\neq0$, the columns form a full-rank $3\times 3$ submatrix $M$. 
    Thus, multiplying $M$ with the inverse of that $3 \times 3$ matrix 
    gives a normal form for the matrix $M$ containing the standard basis vectors 
    as in \eqref{eq:matrixRepresentingPlane}.
    This normal form is a unique representation of the plane in $\mathcal{H}_{1+2\delta}$. 
    
    From here on, we will proceed by induction on the index of the coefficients of $\alpha(x),\beta(x),$ and $\gamma(x)$. We will show as a base case that $\alpha_1,\beta_1,$ and $\gamma_1$ are uniquely determined by this normal form. The induction step will assume that $\alpha_k,\beta_k, \gamma_k$ are uniquely determined for $k\leq i$ and show that this implies that $\alpha_{i+1},\beta_{i+1},\gamma_{i+1}$ are  uniquely determined as well.

    For the base case, we look at the columns corresponding to $Z_1,Z_{2\delta+1},$ and $Y_{2\delta}$. 
    We only care about the second and third entry of the $Z_1$-column. 
    In the matrix $M$, they encode the terms $-q_1 + 2\beta_1q_1-2\alpha_1q_2$.
    After the normalization, they become the column vector $\begin{bmatrix}
        * & \frac{2\beta_1-1}{2\gamma_1} & -2\alpha_1
    \end{bmatrix}^\top$, where $*$ refers to entries that are irrelevant for our argument. In particular, we observe that $\alpha_1$ is uniquely determined. 
    The only entry we need from $Z_{2\delta+1}$ is the last one, which is the  maximal-degree term containing $q_2$, i.e.,  $-2\alpha_\delta \beta_\delta$. So the $Z_{2\delta+1}$-vector looks like $\begin{bmatrix}
        * &* & -2\alpha_\delta \beta_\delta
    \end{bmatrix}^\top$. Finally, from $Y_{2\delta}$ we only need the second entry, i.e., the maximal-degree term containing $q_1$, which  is $2\alpha_\delta \beta_\delta$. Thus, after normalizing, the $Y_{2\delta}$-vector looks like $\begin{bmatrix}
        *& \frac{2\alpha_\delta \beta_\delta}{2\gamma_1} & *
    \end{bmatrix}^\top$. 
    Combining this with the $Z_{2\delta+1}$-vector, we see that 
    $\gamma_1$  is uniquely determined. Now, the second entry of the $Z_1$-column  determines $\beta_1$ uniquely, finishing the base case.

    For the induction step, we assume now that $\alpha_k,\beta_k$, and $\gamma_k$ are uniquely determined for $1\leq k\leq i$ and show that the same holds for $k=i+1$. To achieve this, we study the columns associated with $Y_{i+1}$ and $Z_{i+1}$. 
    Starting with $Y_{i+1}$, we claim that the second entry involves --  besides a linear appearance of $\gamma_{i+1}$ --  only coefficients $\alpha_k,\beta_k,\gamma_k$ with $k\leq i$.
    The terms in the last entry of $u$ involving $q_1$ are $2(\alpha(x)\beta(x)+\gamma(x))$, so the term of  degree $i+1$ is 
    $2(\sum_{k=1}^i\alpha_{i+1-k}\beta_k + \gamma_{i+1})$ since
    $\alpha_0=\beta_0=0$.
    Thus, after normalizing, the $Y_{i+1}$-column is     $\begin{bmatrix}
        *& \frac{\gamma_{i+1}}{\gamma_1} + \text{l.i.t.} & *
    \end{bmatrix}^\top, $  where l.i.t. stands for lower index terms, and we can recover $\gamma_{i+1}$ from this uniquely.

    Similarly, we study the $q_1$ and $q_2$ entries of the $Z_{i+1}$-column. Since the first summand of $u_2(x)$ is multiplied with $x$, it can only contribute coefficients with index at most $i$ to both entries. 
    Thus, it only contributes values we already computed. 
    We can argue analogously to the above that the products $\alpha(x)\gamma(x)$ and $\beta(x)\gamma(x)$ cannot contain coefficients of index larger than $i$. 
    Thus, the only terms involving $q_1$ or $q_2$ and indices larger than $i$ are
    $2\beta_{i+1}q_1-2\alpha_{i+1}q_2$. Hence, after normalizing,  the $Z_{i+1}$-column is    $\begin{bmatrix}
        *& \frac{\beta_{i+1}}{\gamma_1}+\text{l.i.t.} & -2\alpha_{i+1} + \text{l.i.t.}
    \end{bmatrix}^\top,$ from which we can recover also $\alpha_{i+1}$ and $\beta_{i+1}$ uniquely.
    \qed
\end{proof}

\begin{proof}[of Proposition~\ref{prop:D0ImageCurveVariety}]
The second statement implies the first since cameras in $\P_{0,\delta}$ (up to global rotation and translation) have $3\delta$ DoF and the world plane through the fixed camera center has $2$ DoF. 
To prove the second statement, we consider the space $\mathcal{P}$ of cameras in $\mathcal{P}_{0,\delta}$ centered at the origin and with $R(0) = I$, the space $\mathcal{G}$ of world planes passing through the origin, and the picture-taking map $\mathcal{P} \times \mathcal{G} \to \mathcal{H}_{1+2 \delta}$.
The second statement claims that that map is birational. 
We can check this via the criterion in SM Lemma~\ref{lem:birationalCheck}, i.e.,  by finding a single image curve that uniquely determines the camera in $\mathcal{P}$ and the plane in $\mathcal{G}$, and such that the Jacobian of the picture-taking map at that point ($=$ the camera $+$ the plane) has full rank. 
We verified this computationally (see attached code).
\qed
\end{proof}

\begin{proof}[of Proposition \ref{prop:Delta0ImageCurveFixedCamera}]
We see from \eqref{eq:lineImageAtTimeX} that the image of a world line depends linearly on its Pl\"ucker coordinates. Writing $q = [q_1,q_2,q_3]^\top$ and $\Delta = [\Delta_1,\Delta_2,\Delta_3]^\top$ for those coordinates and $C(x) = [a(x),b(x),c(x)]^ \top$ for the camera center,   \eqref{eq:lineImageAtTimeX} becomes  {\small \begin{gather*}
    u(x)=\begin{bmatrix}
        x(q_2+\Delta_3a(x)-\Delta_1c(x))\\
        x(-q_1+\Delta_3b(x)-\Delta_2c(x))-q_3+\Delta_2a(x)-\Delta_1b(x)\\
        q_2+\Delta_3a(x)-\Delta_1c(x)
    \end{bmatrix}.
\end{gather*}}
As before, we denote by $Z_i$ and $Y_i$ the coefficients of the numerator and  denominator of $\frac{u_2}{u_0}$, respectively. Fixing the global translation by $C(0) = [0,0,0]^\top$, we see that $Z_0=-q_3$. So in the affine chart $q_3=1$, the image curve is already in a normal form and lives in an affine subspace of $\mathcal{H}_{1+d}$. Furthermore, we have {\small\begin{equation}\label{eq:puretranscoeff}
\begin{aligned}
&Z_1 = -q_1+\Delta_2a_1-\Delta_1b_1,\\
&Z_i = \Delta_3b_{i-1}-\Delta_2c_{i-1}+\Delta_2a_i-\Delta_1b_i\quad &(2\le i\le d),\\
&Z_{d+1} = \Delta_3b_d-\Delta_2c_d,\\
&Y_0 = q_2,\\
&Y_i = \Delta_3a_i-\Delta_1c_i \quad (1\le i\le d).
\end{aligned}
\end{equation}}
    For $d>1$ and generic $a_i,b_i$, and $c_i$, the six coefficients $Y_0,Y_1,Y_2,Z_0,Z_1,Z_2$ yield a full-rank linear map $\PP^5 \to \PP^5$
    \begin{align*}
\left[ \begin{smallmatrix}
            q \\ \Delta
        \end{smallmatrix} \right] \mapsto        \left[ \begin{smallmatrix}
          0 & 1 & 0 & 0 & 0 & 0 \\
          0 & 0 & 0 & -c_{1} & 0 & a_{1} \\
          0 & 0 & 0 & -c_{2} & 0 &a_{2} \\
          0 & 0 & -1 & 0 & 0 & 0 \\
          -1 & 0 & 0 & -b_{1} & a_{1} & 0 \\
          0 & 0 & 0 & -b_{2} & a_{2}-c_{1} & b_{1}
        \end{smallmatrix} \right] \cdot \left[ \begin{smallmatrix}
            q \\ \Delta
        \end{smallmatrix} \right].
    \end{align*}
    Thus, the camera induces a full-rank linear map  $\Gamma: \mathbb{P}^5 \to \mathcal{H}_{1+d}$, whose restriction to the Pl\"ucker quadric $q_1 \Delta_1+q_2\Delta_2+q_3\Delta_3 = 0$ is a linear isomorphism between world lines and their image curves. This proves the first assertion of Proposition \ref{prop:Delta0ImageCurveFixedCamera}.
    
    Next, we show that the image of the Grassmanian $\mathrm{Gr}(1, \PP^3)$ under $\Gamma$ determines the camera uniquely.
    In fact, we prove that 6 generic points on $\Gamma(\mathrm{Gr}(1, \PP^3))$ already determine the camera.
    To see this, we take 6 generic lines and observe that $Y_0^j,Y_1^j,Y_d^j, Z_1^j,$ and $Z_{d+1}^j$ for $j=1, \ldots, 6$  uniquely determine both $a_1,b_1,c_1,a_d,b_d,c_d$ and the parameters of the lines (see code in the SM) in the affine chart where $q_3^j=1$ and $c_d=1$. Note that, since $Z_0=-q_3$, any line producing a curve with $Z_0=-1$ indeed has to be in the affine chart $q_3=1$.  Our code additionally checks that there are no further solutions with $c_d=0$ and that the Jacobian of the map taking the parameters to the coefficients $Y_i^j,Z_i^j$ has indeed full rank generically.
    Then, we can apply Lemma \ref{lem:birationalCheck} to conclude that the image curves of 6 generic lines uniquely determine the lines and the $a_1,b_1,c_1,a_d,b_d,c_d$. To derive the uniqueness of the remaining camera parameters, we use $Y_i^1$ and $Y_i^2$ for $i=1, \ldots, d$ to obtain the equation\begin{gather*}
    \begin{bmatrix}
        \Delta^1_3 & \quad-\Delta_1^1\\
        \Delta^2_3 & \quad-\Delta_1^2
    \end{bmatrix}\cdot \begin{bmatrix}
        a_i\\
        c_i
    \end{bmatrix}=\begin{bmatrix}
        Y_i^1\\
        Y_i^2
    \end{bmatrix}.
\end{gather*}  
For generic $\Delta^j$, the matrix is invertible and thus $a_i,c_i$ are uniquely determined from the image measurements $Y_i^j$. Given now all parameters but the $b_i$, we can easily reconstruct those from $Z_i^1$.

    For $d=1$, we have $a_2=b_2=c_{2}=0$, and so the five coefficients $Y_0,Y_1,Z_0,Z_1,$ $Z_2$ yield the  linear map $\Gamma: \mathbb{P}^5 \to \mathcal{H}_2$ in \eqref{eq:linearCamMapD1Delta0}.
    If $c_{1}\neq 0$, its kernel is spanned by $\boldsymbol{c} := [0\,0\,0\,a_{1}\,b_{1}\,c_{1}]^\top$.
    This point lies on the Pl\"ucker quadric (it represents the world line along which the camera is moving).
    Therefore, the linear map restricted to the Pl\"ucker quadric is one-to-one almost everywhere. 
    In fact, for every point $(u_0(x) = Y_0+Y_1x, u_2(x) = Z_0+Z_1x+Z_2x^2)$ in $\mathcal{H}_2$, its preimage under the linear map is a projective line $\Gamma^{-1}(u) \subseteq \mathbb{P}^5$ that passes through the kernel point $\boldsymbol{c}$.
    If $\Gamma^{-1}(u)$ is not a tangent line of the Pl\"ucker quadric at $\boldsymbol{c}$, 
    then it intersects the Pl\"ucker quadric at exactly one additional point, which means that there is precisely one world line that the camera maps to the image curve parametrized by $u(x)$.
    Otherwise (i.e., if $\Gamma^{-1}(u)$ is a tangent line), the curve parametrized by $u(x)$ cannot be obtained by the camera. This happens when $(Y_0,Y_1,Z_0,Z_1,Z_2)$ lies in the image of the tangent space of the Pl\"ucker quadric at $\boldsymbol{c}$ under the linear map \eqref{eq:linearCamMapD1Delta0}, which happens precisely when 
    \begin{align}
        -b_{1}c_{1}Y_0-a_{1}b_{1}Y_1+c_{1}^2Z_0+a_{1}c_{1}Z_1+a_{1}^2Z_2=0.
        \label{eq:discriminantDelta0D1}
    \end{align}
    Note that \eqref{eq:discriminantDelta0D1} is precisely the denominator $\Xi$ in the triangulation formulas in Example \ref{ex:D1Delta0}.
  \qed
\end{proof}
\begin{proof}[of Proposition \ref{prop:pureTranslationDominant}]
The second claim follows from the first by applying the Fiber-Dimension Theorem (see SM Theorem \ref{thm:fiberDim}) to the picture-taking map $\mathcal{P}_{d,0} \times \mathrm{Gr}(1,\PP^3) \to \mathcal{H}_{1+d}$. In fact, the fiber over a generic image curve has dimension $(3d+6) + 4 - 2(d+1)$, which -- after subtracting $7$ DoF for the global scaling / translation / rotation -- equals $d+1$, as claimed. 

To prove the first statement,     we consider the affine chart in the Plücker coordinates for the line where we normalize $q_3=1$. The Plücker relation can then be used to eliminate $\Delta_3$ via $\Delta_3=-q_1\Delta_1-q_2\Delta_2$. 
To understand the dimension of the subvariety of $\mathcal{H}_{1+d}$ consisting of curves that are the image of some line through a camera in $\mathcal{P}_{0,d}$, we use SM Lemma \ref{lem:jacobianCheck} and study the Jacobian of the map that takes a camera and a line and maps it to its image curve. The map is again the one that takes the parameters and maps to the coefficients of the curve in  \eqref{eq:puretranscoeff}. As we have more variables than coefficients, it suffices to consider selected directional derivatives corresponding to a non-zero minor. Thus, we will consider the following partial derivatives: $\frac{\partial}{\partial a_i}, \frac{\partial}{\partial c_i} $ for $1\leq i\leq d$, $\frac{\partial}{\partial q_2}$, and $\frac{\partial}{\partial \Delta_1}$. We order them as follows: at first alternating the $\frac{\partial}{\partial a_i}$ and $\frac{\partial}{\partial c_i}$ and finally the last two. Then, we obtain the following  (partial) gradients:{\small \begin{gather*}
    \nabla Z_1=\begin{bmatrix}
        \Delta_2& 0 &\dots & 0 & * & *
    \end{bmatrix},\\
    \nabla Z_i=\begin{bmatrix}
        0 & \dots & 0 & -\Delta_2 & \Delta_2 & 0& \dots & 0 & * &*
    \end{bmatrix}  \text{ for }2\leq i\leq d,\\
    \nabla Z_{d+1}=\begin{bmatrix}
        0& \dots &0&-\Delta_2 & *& \frac{\partial}{\partial \Delta_1}Z_{d+1}
    \end{bmatrix},\\
    \nabla Y_0=\begin{bmatrix}
        0& \dots & 0 & 1 & 0
    \end{bmatrix},\\
    \nabla Y_i=\begin{bmatrix}
        0& \dots & 0 & \Delta_3 & -\Delta_1&0 & \dots & 0 & *&*
    \end{bmatrix},
\end{gather*}}
where $*$ refers to entries that are irrelevant for our argument. 
Combining all these, we obtain the partial Jacobian \begin{gather*}
    J=\left[\begin{smallmatrix}
         \Delta_2& 0&0&0 &\dots & 0&0 & * & *\\
         0 & -\Delta_2 & \Delta_2 & 0& \dots & 0&0 & * &*\\
         \vdots & \ddots & \ddots & \ddots& -\Delta_2 & \Delta_2 &0& * &*\\
         0& \dots & \dots & \dots & 0 &0 &-\Delta_2 & * & \frac{\partial}{\partial \Delta_1}Z_{d+1}\\
         0 & 0 &0 &0 & \dots & \dots & 0 & 1 & 0\\
         \Delta_3 & -\Delta_1& 0 & 0 & \dots & 0 & 0 & * &*\\
         0 & 0 & \Delta_3 & -\Delta_1&0 &\dots & 0& * &*\\
         0 & \ddots & \ddots&\ddots & \ddots & \Delta_3 & -\Delta_1 & * &\frac{\partial}{\partial \Delta_1}Y_{d}
    \end{smallmatrix}\right].
\end{gather*}
By SM Lemma \ref{lem:jacobianCheck},
it suffices to see that this matrix has a non-zero determinant at some point. We make the following choices, $\Delta_1=1,\Delta_2=1,q_1=0,q_2=0$, and $c_d=1$. All other parameters can be set randomly (or simply to $0$). By the Plücker relation, we get $\Delta_3=0$. We can now use Laplace expansion to compute the absolute value of the determinant of $J$. First, we observe that we can expand the row corresponding to $Y_0$, which just gives a sign that vanishes under the absolute value. Next, we observe that all columns corresponding to partial derivatives $\frac{\partial}{\partial a_i}$ contain only one non-zero entry corresponding to $\Delta_2=1$ (since $\Delta_3=0$), so we can expand on those columns without changing the absolute value. Since the previous expansions removed the $-\Delta_2$ entry of the columns corresponding to $\frac{\partial}{\partial c_i}$ for $i<d$, these have now only one non-zero entry, namely $-\Delta_1=-1$. Thus, after further expansion on those columns, we get \begin{gather*}
    |\det(J)|= \left\lvert \det\left(\begin{bmatrix}
        -\Delta_2 & \frac{\partial}{\partial \Delta_1}Z_{d+1}\\
        -\Delta_1 & \frac{\partial}{\partial \Delta_1}Y_{d}
    \end{bmatrix}\right)\right\rvert.
\end{gather*}
To finish the proof, we compute the remaining partial derivatives and afterwards the latter determinant: $\frac{\partial}{\partial \Delta_1}Z_{d+1}=\frac{\partial}{\partial \Delta_1}(-q_1\Delta_1-q_2\Delta_2)b_d-\Delta_2c_d= -q_1b_d$ and $\frac{\partial}{\partial \Delta_1}Y_{d}=\frac{\partial}{\partial \Delta_1}a_d(-q_1\Delta_1-q_2\Delta_2)-c_d\Delta_1=-q_1a_d-c_d$. Thus, the absolute value of the Jacobian at the chosen point is \begin{gather*}
    |\det(J)|=|\Delta_2q_1a_d+\Delta_2c_d-\Delta_1q_1b_d|=|0+1+0|=1.
\end{gather*}\qed
\end{proof}

\begin{proof}[of Proposition \ref{prop:ImageCurveFixedCamera}] 
This proof proceeds similarly to the proof of Proposition \ref{prop:Delta0ImageCurveFixedCamera}.
As before, we write $C(x) = (a(x), b(x), c(x))$, $A(x)= (\alpha(x),\beta(x),\gamma(x))$, and assume -- due to global translation and rotation -- that $C(0) = 0$ and $A(0) = 0$.
Further, we again denote by $[\Delta: q]$ the Plücker coordinates of a world line and record the image curve as $u_2(x) = \sum_{i=0}^{1+d+2\delta} Z_i x^i$ and $u_0(x) = \sum_{i=0}^{d+2\delta} Y_i x^i$.
For a generic choice of coefficients $a_i ,b_i, c_i, \alpha_i,\beta_i$ and $\gamma_i$, the six coefficients $Y_0,Y_1,Y_{2\delta+d},Z_0,Z_1$ and $Z_{2\delta+d+1}$ yield a full-rank linear map $\mathbb{P}^5\to\mathbb{P}^5$
    \begin{align*}
\left[ \begin{smallmatrix}
            q \\ \Delta
        \end{smallmatrix} \right] \mapsto        \left[ \begin{smallmatrix}
          0 & 1 & 0 & 0 & 0 & 0 \\
          2\gamma_1 & 0 & -2\alpha_1 & -c_{1} & 0 & a_{1} \\
          0 & 0 & 0 & c_d\nu+2\beta_\delta\gamma_\delta b_d & 2\beta_\delta(\alpha_\delta c_d-\gamma_\delta a_d) & -a_d\nu-2\alpha_\delta\beta_\delta b_d \\
          0 & 0 & -1 & 0 & 0 & 0 \\
          -1+2\beta_1 & -2\alpha_1 & 0 & -b_{1} & a_{1} & 0 \\
          0 & 0 & 0 & 2\alpha_\delta(\beta_\delta c_d-\gamma_\delta b_d) & c_d\mu+2\alpha_\delta\gamma_\delta a_d & -b_d\mu-2\alpha_\delta\beta_\delta a_d
        \end{smallmatrix} \right] \cdot \left[ \begin{smallmatrix}
            q \\ \Delta
        \end{smallmatrix} \right],
    \end{align*}
    where $\nu := \alpha_\delta^2-\beta^2_\delta+\gamma_\delta^2$ and $\mu:=-\alpha_\delta^2+\beta_\delta^2+\gamma_\delta^2$. 
    Thus, the linear map $\Gamma:\mathbb{P}^5\to\mathcal{H}_{2\delta+d+1}$ sending the Plücker coordinates to all coefficients $Y_i,Z_i$ has full rank, and so its restriction to the Plücker quadric is a linear isomorphism, proving the first statement in Proposition \ref{prop:ImageCurveFixedCamera}.

    To prove that the subvariety $\Gamma(\mathrm{Gr}(1,\PP^3)) $ uniquely determines the camera for $(d,\delta)\in \{(1,1),(1,2),(2,1)\}$, it suffices to show that two points in the subvariety uniquely determine the camera. We check this computationally (see submitted code) by studying the two-line picture-taking map $\mathcal{P}_{d,\delta}\times \mathrm{Gr}(1, \mathbb{P}^3)^2\to \mathcal{H}_{1+d+2\delta}^2$ that takes a camera and two world lines and maps them to the coefficients of  the two resulting image curves.
We proceed in three steps in order to derive the desired result from SM Lemma \ref{lem:birationalCheck}: 1) fix a camera and two world lines, 2) solve a polynomial system to see that we can recover the chosen camera and lines from the image curves,  and 3) check that the two-line picture-taking map has full-rank at the chosen point in its domain.

    For the first step, we choose a sufficiently generic camera and world lines $((C,A),L_1,L_2)\in\mathcal{P}_{d,\delta}\times \mathrm{Gr}(1, \mathbb{P}^3)^2$. For convenience in later calculations, we pick it so that $a_d=b_d=c_d=1$ 
    as well as $q_3=1$ in both lines.

    Next, to implement the map as a polynomial system, we consider $8$ variables for the two lines and $3\delta + 3d - 1$ variables for the camera parameters (by setting $a_d=1$ to get rid of the global scaling).
    We then compute the polynomials in those variables that define the curve coefficients for the image of both lines, i.e. the polynomials defining the $Z_i$'s and $Y_i$'s for both curves.

    For the second step, we construct a square polynomial subsystem of the previous one, consisting of $2\delta+d+3$ polynomials coming from the first curve and $\delta+2d+4$ from the second curve. We then use the monodromy technique (which is based on numerical homotopy continuation; see e.g. \cite{DBLP:journals/pami/DuffKLP24}) to find all choices of variables that yield the same values as $((C,A),L_1,L_2)$ when evaluating the polynomials in the square system.
    We finally check which of those solutions make all coefficients $Y_i,Z_i$ from the original system equal to the ones of the first step, and conclude that the only possible choice is $((C,A),L_1,L_2)$. We also check that this is true in the alternative charts where $b_d=1$ or $c_d=1$.

    For the third step, we compute that the Jacobian of the two-line picture-taking map at the chosen point $((C,A),L_1,L_2)$ has full rank. Then, by SM Lemma \ref{lem:birationalCheck}, we obtain that the generic fiber of that map is a point, i.e., generically, the image of two  lines uniquely determines the camera.
    \qed
\end{proof}

\begin{proof}[of Proposition \ref{prop:delta1VaryingCam}]
   In this proof, we write   $Z_i,Y_i$ for the coefficients of the image curve $y=\frac{-\sum_{i=0}^{d+3} Z_i x^i}{\sum_{i=0}^{d+2} Y_i x^i}$.
    We modify the picture-taking map $\mathcal{P}_{d,1}\times \mathrm{Gr}(1, \mathbb{P}^3)\to \mathcal{H}_{d+3}$ by modding out global scaling, rotation, translation and  line direction ambiguity as follows:
    We consider the subset $\mathcal{P} \subseteq \mathcal{P}_{d,1}$ of cameras $(C,A) = ((a,b,c),(\alpha,\beta,\gamma))$ with $C(0)=0$, $A(0)=0$, $c_d = 1$, and $b_i=0$ for all $i=1,\ldots,d$. 
    Writing $[\Delta:q]$ for the Plücker coordinates of 3D lines, we work with the affine chart $\mathcal{G}$ of $\mathrm{Gr}(1,\PP^3)$ where $q_3=1$. 
    Restricting the picture-taking map to these spaces yields the map  $\rho: \mathcal{P} \times \mathcal{G} \to \mathcal{H}$, where $\mathcal{H}$ is the affine chart of $\mathcal{H}_{d+3}$ where $Z_0 = -1$.

    In the following, we prove that almost every curve in $\mathcal{H}$ is in the image of $\rho$. This has two consequences: 1) It shows the first claim in Prop. \ref{prop:delta1VaryingCam}. 
    2) Since  $\dim (\mathcal{P} \times \mathcal{G}) = 2(d+3) = \dim \mathcal{H}$, the Fiber-Dimension Theorem \ref{thm:fiberDim}  yields that the generic fiber of $\rho$ is zero-dimensional, i.e., a finite non-empty set, proving the second claim in Prop. \ref{prop:delta1VaryingCam}. The number of solutions are computed in the submitted code.

    To prove that almost every curve in $\mathcal{H}$ is in the image of $\rho$, it suffices to show that the dimension of the image of $\rho$ equals $\dim \mathcal{H}$. 
    For that, we use the Jacobian check in  SM Lemma \ref{lem:jacobianCheck}. 
        For $d \in \{ 1,2,3\}$, we compute the rank of the Jacobian of $\rho$ in the submitted code.
        For $d \geq 4$, 
    we begin by spelling out the map $\rho$:
    \small
    \begin{align*}
        Y_0 &= q_2,\quad
        Y_1 = \Delta_3a_1-\Delta_1c_1+2\gamma q_1-2\alpha,\\
        Y_2 &= \Delta_3a_2-\Delta_1c_2+2\gamma\Delta_2c_1+2\alpha\Delta_2a_1+\nu q_2+2\alpha\beta q_1+2\beta\gamma,\\
        Y_i &= \Delta_3a_i-\Delta_1c_i+2\gamma\Delta_2 c_{i-1}+2\alpha\Delta_2a_{i-1}
        +\nu(\Delta_3a_{i-2}-\Delta_1 c_{i-2})\\&\hspace{1.5cm}+2\alpha\beta\Delta_2 c_{i-2}-2\beta\gamma\Delta_2a_{i-2},\\
        Y_{d+1} &=2\gamma\Delta_2+2\alpha\Delta_2a_d+\nu(\Delta_3a_{d-1}-\Delta_1c_{d-1})+2\alpha\beta\Delta_2c_{d-1}-2\beta\gamma\Delta_2a_{d-1},\\
        Y_{d+2} &= \nu(\Delta_3a_d-\Delta_1)+2\alpha\beta\Delta_2-2\beta\gamma\Delta_2a_d,
    \end{align*}
    \normalsize
    where $\nu := -\alpha^2+\beta^2-\gamma^2$ and $3\leq i\leq d$;  and
    \small
    \begin{align*}
        &Z_1 = \Delta_2a_1+2\beta q_1-2\alpha q_2-q_1,\\
        &Z_2 =\Delta_2a_2+2\beta\Delta_2c_1-2\alpha(\Delta_3 a_1-\Delta_1c_1)+\upsilon
        -2\alpha\gamma q_1-2\beta\gamma q_2-\Delta_2 c_1+2\gamma q_2-2\beta,\\
        &Z_3 = \Delta_2a_3-(1-2\beta)\Delta_2c_2-2\alpha(\Delta_3a_2-\Delta_1c_2)-(\upsilon-2\beta)\Delta_2a_1\\&\hspace{1.2cm}-2\alpha\gamma\Delta_2c_1-(2\beta\gamma-2\gamma)(\Delta_3a_1-\Delta_1c_1)+\mu q_1-2\alpha\beta q_2-2\alpha\gamma,\\
        &Z_i = \Delta_2a_i-(1-2\beta)\Delta_2c_{i-1}-2\alpha(\Delta_3a_{i-1}-\Delta_1c_{i-1})-(\upsilon-2\beta)\Delta_2a_{i-2}\\&\hspace{1.2cm}
        -2\alpha\gamma\Delta_2c_{i-2}-(2\beta\gamma-2\gamma)(\Delta_3a_{i-2}-\Delta_1c_{i-2})+\mu\Delta_2c_{i-3}\\&\hspace{1.2cm}
        -2\alpha\beta (\Delta_3a_{i-3}-\Delta_1c_{i-3})+2\alpha\gamma\Delta_2a_{i-3},\displaybreak\\
        &Z_{d+1} =-(1-2\beta)\Delta_2-2\alpha(\Delta_3a_{d}-\Delta_1)-(\upsilon-2\beta)\Delta_2a_{d-1}
        -2\alpha\gamma\Delta_2c_{d-1}+\mu\Delta_2c_{d-2}\\&\hspace{1.5cm}-(2\beta\gamma-2\gamma)(\Delta_3a_{d-1}-\Delta_1c_{d-1})
        -2\alpha\beta (\Delta_3a_{d-2}-\Delta_1c_{d-2})+2\alpha\gamma\Delta_2a_{d-2},\\
        &Z_{d+2} =-(\upsilon-2\beta)\Delta_2a_d-2\alpha\gamma\Delta_2-(2\beta\gamma-2\gamma)(\Delta_3a_d-\Delta_1)\\&\hspace{1.5cm}+\mu\Delta_2c_{d-1}-2\alpha\beta (\Delta_3a_{d-1}-\Delta_1c_{d-1})+2\alpha\gamma\Delta_2a_{d-1},\\
        &Z_{d+3} = -2\alpha\beta(\Delta_3 a_d-\Delta_1 )+\Delta_2(\mu+2\alpha\gamma a_d),
    \end{align*}
    \normalsize
    where $\upsilon := \alpha^2+\beta^2-\gamma^2$, $\mu :=-\alpha^2+\beta^2+\gamma^2$ and $4\leq i\leq d$. By SM Lemma~\ref{lem:jacobianCheck}, we only need to verify that the Jacobian has nonzero determinant at some point. This chosen point will be the camera with rotation parameters $\alpha=0$, $\beta=\frac{1}{2}$ and $\gamma=0$,  center parameters $a_i=0$ for $1\leq i\leq d$,  $c_1=\frac{1}{2}$ and $c_i=0$ for $2\leq i\leq d-1$, and the line with parameters $q_1=\Delta_1=0$, $\Delta_2=q_2=1$, $\Delta_3=-1$.

    Notice that for any $1\leq i\leq d-1$, we have that $\frac{\partial}{\partial c_i}Z_j=0$ if $j\notin \{i+1,i+2,i+3\}$ and
    \begin{gather*}
    \frac{\partial}{\partial c_i}Z_{i+1}=2\beta\Delta_2+2\alpha\Delta_1-\Delta_2,\;\frac{\partial}{\partial c_i}Z_{i+2}=-2\alpha\gamma\Delta_2+2(\beta-1)\gamma\Delta_1,\\
    \frac{\partial}{\partial c_i}Z_{i+3}=\mu\Delta_2+2\alpha\beta\Delta_1.
    \end{gather*}
    Also, $\frac{\partial}{\partial c_i}Y_j=0$ if $j\notin \{i,i+1,i+2\}$ and
    \begin{gather*}
    \frac{\partial}{\partial c_i}Y_{i}=-\Delta_1,\;\frac{\partial}{\partial c_i}Y_{i+1}=2\gamma\Delta_2,\;\frac{\partial}{\partial c_i}Y_{i+2}=-\nu\Delta_1+2\alpha\beta\Delta_2.
    \end{gather*}
    In the chosen point all these partial derivatives become $0$ except for $\frac{\partial}{\partial c_i}Z_{i+3}=-\frac{1}{4}$. Then we can do a Laplace expansion of the Jacobian on all its columns corresponding to $\frac{\partial}{\partial c_i}$ for $1 \leq i \leq d-1$, which also removes all rows corresponding to $Z_4, \ldots, Z_{d+2}$.
    It now suffices to show that the resulting $(d+7)\times (d+7)$ matrix has nonzero determinant.
    Its rows correspond to the functions $Z_{d+3},Z_3,Z_2,Z_1,$ $Y_{d+2},\dots,Y_0$, and its columns to the variables $a_1,\dots, a_d,\alpha,\beta,\gamma,q_1,q_2,\Delta_1,\Delta_2$ (note that in our affine chart $q_3=1$, we have $\Delta_3=-\Delta_1q_1-\Delta_2q_2$).

    Next, we consider the partial derivatives of the remaining equations with respect to $q_1$. At the chosen point, we have that $\frac{\partial }{\partial q_1}\Delta_3 =0$, and so we see that $\frac{\partial}{\partial q_1}Z_i=0$ for every $i\neq3$, $\frac{\partial}{\partial q_1}Z_3=\frac{1}{4}\neq0$  and $\frac{\partial}{\partial q_1}Y_i=0$ for every $i$.
    Thus,  we can do a Laplace expansion in the column corresponding to $q_1$, which also removes the $Z_3$-row.
    In addition, we can  do a Laplace expansion on the row corresponding to $Y_0$, which also removes the $q_2$-column.
    We are now left with a $(d+5)\times (d+5)$ matrix and need to show that its determinant is nonzero.
    
At the chosen point, if we order the rows following $Z_{d+3},Z_2,Z_{1},Y_{d+2},\dots, Y_1$ and the columns following $a_1,\dots, a_d,\alpha,\beta,\gamma,\Delta_1,\Delta_2$, we obtain the matrix:
    \begin{gather*}
        \left[\begin{smallmatrix}
            0 & 0 & 0 & 0 & 0& \dots& 0 &0 & 1 & 0 & 0 &\frac{1}{4}\\
            0 & 1 & 0 & 0 & 0 &\dots & 0 & 0 & 0 & 1 & 0 & 0\\
            1 & 0 &0 &0 &0&\dots & 0 & -2 & 0 & 0 & 0 & 0\\
            0 & 0&0 &0&0 &\dots &-\frac{1}{4}& 1 & 0 & 0&-\frac{1}{4}&0\\
            0 & 0 & \dots & \dots &\dots &-\frac{1}{4} & 0& 0 & 0 & 2 & 0&0\\
            0 & 0 &\dots &\dots & -\frac{1}{4} & 0 & -1& 0 & 0 & 0 & -1 & 0\\
            0 & 0 &\dots  & -\frac{1}{4} & 0 & -1& 0 & 0 & 0 & 0 & 0 & 0\\
            0 & \iddots& \iddots &\iddots& \iddots & \iddots & 0 & 0 & 0 & 0 & 0 & 0\\
            0 & -\frac{1}{4}&0&-1& 0 & \dots &\dots & 0 & 0 & 0& 0 & 0\\
            -\frac{1}{4}& 0 & -1& 0 &\dots &\dots &\dots  &\frac{1}{2} & 0 & 0 & -\frac{1}{8}&0\\
            0 & -1 & 0 & 0 &\dots &\dots &\dots & 0 & 1 & 2 & 0 & 0\\
            -1 & 0& 0&0&\dots &\dots &\dots  & -2 & 0 & 0 & -\frac{1}{2}&0
        \end{smallmatrix}\right]
    \end{gather*}
    We can expand on the last column and then in the forth to last column to obtain

    \begin{gather*}
        \left[\begin{smallmatrix}
            0 & 1 & 0 & 0 & 0 &\dots & 0 & 0 &1&0\\
            1 & 0 &0 &0 &0&\dots & 0 & -2&0 &0\\
            0 & 0&0 &0&0 &\dots &-\frac{1}{4}& 1& 0&-\frac{1}{4}\\
            0 & 0 & \dots & \dots &\dots &-\frac{1}{4} & 0& 0&2 & 0\\
            0 & 0 &\dots &\dots & -\frac{1}{4} & 0 & -1& 0 &0&-1\\
            0 & 0 &\dots  & -\frac{1}{4} & 0 & -1& 0 &0&0&0\\
            0 & \iddots& \iddots &\iddots& \iddots & \iddots & 0 &0 &0 &0\\
            0 & -\frac{1}{4}&0&-1& 0 & \dots &\dots & 0& 0& 0\\
            -\frac{1}{4}& 0 & -1& 0 &\dots &\dots &\dots  &\frac{1}{2}&0& -\frac{1}{8}\\
            -1 & 0& 0&0&\dots &\dots &\dots  &-2&0& -\frac{1}{2}
        \end{smallmatrix}\right]
    \end{gather*}
To compute the (absolute value of the)  determinant, we start by performing some basic row operations. We subtract $\frac{1}{4}$ times the last row from the penultimate one,  we add the last row to the second row, and we add $\frac{1}{4}$ times the first row to the third to last. These operations do not change the determinant and we obtain the following matrix\begin{gather*}
    \left[\begin{smallmatrix}
            0 & 1 & 0 & 0 & 0 &\dots & 0 & 0 &1&0\\
            0 & 0 &0 &0 &0&\dots & 0 & -4&0 &-\frac{1}{2}\\
            0 & 0&0 &0&0 &\dots &-\frac{1}{4}& 1& 0&-\frac{1}{4}\\
            0 & 0 & \dots & \dots &\dots &-\frac{1}{4} & 0& 0&2 & 0\\
            0 & 0 &\dots &\dots & -\frac{1}{4} & 0 & -1& 0 &0&-1\\
            0 & 0 &\dots  & -\frac{1}{4} & 0 & -1& 0 &0&0&0\\
            0 & \iddots& \iddots &\iddots& \iddots & \iddots & 0 &0 &0 &0\\
            0 & 0&0&-1& 0 & \dots &\dots & 0& \frac{1}{4}& 0\\
            0& 0 & -1& 0 &\dots &\dots &\dots  &1&0& 0\\
            -1 & 0& 0&0&\dots &\dots &\dots  &-2&0& -\frac{1}{2}
        \end{smallmatrix}\right]
\end{gather*} 
Applying Laplace expansion to the first two columns yields
\begin{gather*}\left[\begin{smallmatrix}
            0 &0 &0&\dots & 0 & -4&0 &-\frac{1}{2}\\
            0 &0&0 &\dots &-\frac{1}{4}& 1& 0&-\frac{1}{4}\\
            0 & \dots &\dots &-\frac{1}{4} & 0& 0&2 & 0\\
            0 & \dots & -\frac{1}{4} & 0 & -1& 0 &0&-1\\
            \dots  & -\frac{1}{4} & 0 & -1& 0 &0&0&0\\
            0 & \iddots& \iddots &\iddots& \iddots & \iddots & 0 &0 \\
            -\frac{1}{4}& 0& -1 & 0 &\dots &\dots &\dots &0\\
            0&-1& 0 & \dots &\dots & 0& \frac{1}{4}& 0\\
             -1& 0 &\dots &\dots &\dots  &1&0& 0\\
        \end{smallmatrix}\right]
\end{gather*}
Now, we will perform the following two steps iteratively: 1) subtract $\frac{1}{4}$ times the last row from the third to last row and also $\frac{1}{4}$ of the penultimate row from the fourth to last row; 2) Laplace expand the first two columns. We see inductively that after applying these steps $i$-times, we obtain 
\begin{gather*}
    \left[\begin{smallmatrix}
            0 &0 &0&\dots & 0 & -4&0 &-\frac{1}{2}\\
            0 &0&0 &\dots &-\frac{1}{4}& 1& 0&-\frac{1}{4}\\
            0 & \dots &\dots &-\frac{1}{4} & 0& 0&2 & 0\\
            0 & \dots & -\frac{1}{4} & 0 & -1& 0 &0&-1\\
            \dots  & -\frac{1}{4} & 0 & -1& 0 &0&0&0\\
            0 & \iddots& \iddots &\iddots& \iddots & \iddots & 0 &0 \\
            -\frac{1}{4}& 0& -1 & 0 &\dots &\dots &\dots &0\\
            0&-1& 0 & \dots &\dots & 0& \frac{1}{4}\left(-\frac{1}{4}\right)^i& 0\\
             -1& 0 &\dots &\dots &\dots  &\left(-\frac{1}{4}\right)^i&0& 0\\
        \end{smallmatrix}\right].
\end{gather*}
Hence, depending on whether the size of the matrix was odd or even, we reduced our problem to computing the determinant of either of the following 2 matrices: 
{\small\begin{gather*}
    \begin{bmatrix}
            0 & 0 & -4&0 &-\frac{1}{2}\\
            0 &-\frac{1}{4}& 1& 0&-\frac{1}{4}\\
             -\frac{1}{4} & 0& 0&2 & 0\\
             0 & -1& 0 &\frac{1}{4}\left(-\frac{1}{4}\right)^i&-1\\
             -1& 0 &\left(-\frac{1}{4}\right)^i&0&0         
        \end{bmatrix},  \begin{bmatrix}
             0 & -4&0 &-\frac{1}{2}\\
           -\frac{1}{4}& 1& 0&-\frac{1}{4}\\
             0& 0&2+\frac{1}{4}\left(-\frac{1}{4}\right)^i & 0\\
             -1& \left(-\frac{1}{4}\right)^i &0&-1       
        \end{bmatrix}.
\end{gather*}}
Starting with the first case, we perform the two steps again and obtain now {\small\begin{gather*}
   \det \begin{bmatrix}
        -4 & 0 & -\frac{1}{2}\\
        1 & \frac{1}{4}\left(-\frac{1}{4}\right)^{i+1}& 0\\
        \left(-\frac{1}{4}\right)^{i+1}& 2 & 0
    \end{bmatrix}=-\frac{1}{2}\det\begin{bmatrix}
        1 & \frac{1}{4}\left(-\frac{1}{4}\right)^{i+1}\\
        \left(-\frac{1}{4}\right)^{i+1}& 2
    \end{bmatrix}\\
    =-\frac{1}{2}\left(2-\frac{1}{4}\left(-\frac{1}{4}\right)^{2i+2}\right),
\end{gather*}}
which cannot be zero for $i\geq 0$.
In the case of the second matrix, we expand along the third row and so the problem reduces to showing that {\small\begin{gather*}
    0\neq \det\begin{bmatrix}
        0 & -4& -\frac{1}{2}\\
        -\frac{1}{4}& 1 & -\frac{1}{4}\\
        -1 &\left(-\frac{1}{4}\right)^i& -1
    \end{bmatrix}= \det \begin{bmatrix}
        0 & -4& -\frac{1}{2}\\
        0& 1+\left(-\frac{1}{4}\right)^{i+1} & 0\\
        -1 &\left(-\frac{1}{4}\right)^i& -1
    \end{bmatrix}.
\end{gather*}}
After expanding along the second row, it thus suffices to see that $\det\begin{bmatrix}
    0 & -\frac{1}{2}\\
    -1 & -1
\end{bmatrix}\neq 0,$ which is obviously true.
\qed
\end{proof}

\begin{proof}[of Proposition \ref{prop:delta2VaryingCam}]
    We study the picture-taking map $\rho: \mathcal{P}_{d,\delta}\times \mathrm{Gr}(1, \mathbb{P}^3)\to \mathcal{H}_{1+d+2\delta}$ that takes a camera and a line and computes the corresponding image curve.
    From Proposition \ref{prop:parallel}, we know that, for generic $((C,A), L) \in \mathcal{P}_{d,\delta}\times \mathrm{Gr}(1, \mathbb{P}^3)$, the fiber $\rho^{-1}(\rho((C,A), L))$ contains all $((C',A'), L')$ that are obtained from $((C,A), L)$ by global scaling, translation, rotation, and the line direction ambiguity. We show in the following that the fiber does not contain anything else. This then concludes the proof of Proposition \ref{prop:delta2VaryingCam} by the Fiber-Dimension Theorem \ref{thm:fiberDim}.

    We fix the known ambiguities in the fiber as follows:
    When the scanline is at position $x=0$, the camera is assumed to be at the origin with identity rotation. If the line direction has a nonzero first coordinate, we can also fix the parameters of the first coordinate of the camera center (e.g., setting $a(x)=0)$ to get rid of line direction ambiguity. Finally, if $b_d\neq 0$, we can fix the global scaling by assuming $b_d=1$.

    We check computationally (see submitted code) that the  generic image curve $\rho((C,A), L)$ uniquely determines the remaining camera parameters and the world line by applying SM Lemma \ref{lem:birationalCheck} in a similar manner as in the proof of Proposition \ref{prop:ImageCurveFixedCamera}: 
    We construct the polynomial system that defines the picture-taking map (i.e. the polynomials that define the $Y_i$'s and $Z_i$'s). Then, we use monodromy to find the solutions to a square subsystem of these polynomials (i.e.,  of  $3\delta+2d+3$ many polynomials),  yielding a finite amount of solutions. Afterwards, we verify that only one of those solutions satisfies the whole system (this solutions is of course $((C,A), L)$). Finally, we compute the Jacobian of the map at $((C,A), L)$ and verify that it has full rank. 
    \qed
\end{proof}

 \begin{proof}[of Lemma~\ref{lem:pointRel}]
        By \eqref{eq:orderPoint}, the  $x_i^j$ are exactly the points in time when a picture of point $j$ is taken, i.e., they all have to satisfy the equation {\small\begin{gather*}
           0= \begin{bmatrix}
      1 & 0 & -x
  \end{bmatrix}\cdot R(x) \cdot \begin{bmatrix}
                1 & 0 & 0 & -a(x)\\
                0 & 1 & 0& -b(x)\\
                0& 0& 1 & -c(x)
            \end{bmatrix}\cdot \begin{bmatrix}
                X_1^j\\
                X_2^j\\
                X_3^j\\
                1
            \end{bmatrix}
            =\begin{bmatrix}
      1 & 0 & -x
  \end{bmatrix}\cdot R(x)\cdot \begin{bmatrix}
                X_1^j-a(x)\\
                X_2^j-b(x)\\
                X_3^j-c(x)
            \end{bmatrix}.
        \end{gather*}}
        As in the proof of Theorem \ref{thm:order}, we know that this is a polynomial of degree $1+d+2\delta$ in $x$. Thus, the $x_i^j$ are indeed all roots of this polynomial for a fixed $j$. Moreover, we see that $X_1^j,X_2^j,X_3^j$ appear only in monomials of degree at most $1+2\delta$. Since $d\geq 2$, we thus know that they do not appear in the coefficient of $x^{d+2\delta}$ as $d+2\delta> 1+2\delta$. Hence, this coefficient is independent of $j$, but (after dividing by the leading coefficient) the second highest coefficient of a univariate polynomial is precisely the sum of its roots, which proves the claim.
        \qed
    \end{proof}

    \begin{proof}[of Proposition \ref{prop:spanImagePoints}]
    Similarly to the proof of Proposition \ref{prop:spanImLine}, we checked this numerically by sampling points and estimating the rank of the resulting matrix by inspecting the singular values. See code in SM.
    \qed
\end{proof}

\begin{proof}[of Theorem \ref{thm:point_minimal_problems}]
A minimal problem must be balanced, i.e., the number $3p+3d+3\delta-1$  of unknowns (or $3\delta+2p$ in the case $d=0$) must equal the number \eqref{eq:ptConstraintsNumber} of constraints.
 In the case $d=0$, this means $3\delta+2p=2p(1+2\delta)=2p+4p\delta$; however,  for $p\geq1$, the right-hand side is strictly larger than the left. So there is no balanced problem with $d=0$, and we assume $d>0$ from now on.

Here, we first consider the case $p=1$. Then, the two numbers in \eqref{eq:ptConstraintsNumber} agree and we get the balanced equation\begin{gather*}
      3d+3\delta +2=2(1+d+2\delta)\iff d=\delta,
  \end{gather*} which yields the third kind of problem in Theorem~\ref{thm:point_minimal_problems}.
  
  Next, we assume  $p \geq 2$ and $d = 1$. In this case, the balanced equation is \begin{gather*}
      p-1=(2p-3)+(4p-3)\delta.
  \end{gather*} 
  As $2\leq p$, both $2p-3$ and $4p-3$ are positive. Furthermore, $p-1< 4p-3$ for any such $p$ and thus we get $\delta=0$. 
  Now, the only solution is  $p=2$, giving the second problem in Theorem~\ref{thm:point_minimal_problems}.
  
  To complete the classification of balanced problems, it remains to consider the case $p \geq 2$ and $d \geq 2$. This time, the balanced equation can be written as  \begin{gather*}
      2(p-1)=(2p-3)d+(4p-3)\delta.
  \end{gather*} Again, it holds that $4p-3>2(p-1)$ and thus $\delta=0$. Since $d\geq 2$, we get that $2(p-1)=(2p-3)d\geq 4p-6$, which yields $2\geq p$. Hence, $p=2$ is the only possibility, which implies $d=2$, i.e, the first problem in Theorem~\ref{thm:point_minimal_problems}. 
  
  In the following, we always assume that the constant rotation matrix is the identity. To prove minimality of degree $1$ for the first two problems in Theorem~\ref{thm:point_minimal_problems}, we study the corresponding equation system.
   We assume that $\delta=0, p=2, d \in \{ 1,2\}$.
  By \eqref{eq:orderPoint}, we know that $(x,y)$ is an image of a world point $X$ if and only if $x$ satisfies  $r(x)\cdot P(x)\cdot X=0$,  where $r(x)=\begin{bmatrix}
      1 & 0 & -x
  \end{bmatrix}$. If $(P(x)X)_3\neq 0$, then the second coordinate satisfies $y=(P(x)X)_2/(P(x)X)_3$, or equivalently $(P(x)X)_2=y\cdot (P(x)X)_3$.
 We show that for a generic tuple of two dimensional vectors $(\tilde x_{i,1},\tilde y_{i,1}), \ldots, $ $(\tilde x_{i,d+1},\tilde y_{i,d+1})$ in the codomain of the picture-taking relation there exist a unique camera and unique world point giving rise to the vectors as images. The genericity of these vectors implies in particular that we may assume that the $\tilde x_{i,1}, \ldots, \tilde x_{i,d+1}$ are pairwise distinct. 
 We now consider all solutions $X^1, X^2, P(x)$ to the equations 
 \begin{align} \label{eq:solveForCamsAndPts}
     r(\tilde x_{i,j})\cdot P(\tilde x_{i,j}) \cdot X^i = 0 \quad \text{and} \quad
     (P(\tilde x_{i,j}) X^i)_2 = \tilde y_{i,j} \cdot(P(\tilde x_{i,j})X^i)_3
 \end{align}
 for all $i=1,2$ and $j=1,\ldots,d+1$.
 We claim that all such solutions must live in the affine chart  $X^i_4 =1$.
 Otherwise, if $X^i_4$ were zero, then we would have that $P(x)X^i=\begin{bmatrix}
      X_1^i&X_2^i&X_3^i
  \end{bmatrix}^\top$ (since $\delta=0$) and this expression does not depend on $x$ anymore. In particular, the  first coordinate of each corresponding image point is  $\frac{X_1^i}{X_3^i}$ (note that $X_3^i$ cannot be zero as  the generic image point we started with was not at infinity). However, this contradicts our assumption that $\tilde x_{i,1} \neq \tilde x_{i,2}$.
  
  The equations \eqref{eq:solveForCamsAndPts} can be written as
  \begin{gather*}
X_1^i-a(\tilde x_{i,j})-\tilde x_{i,j}X^i_3+\tilde x_{i,j}c(\tilde x_{i,j})=0, \,  \tilde y_{i,j}\cdot (X_3^i-c(\tilde x_{i,j}))-X_2^i+b(\tilde x_{i,j})=0.
  \end{gather*}
 These equations are linear in the unknowns, which are the coefficients of $a(x),$ $b(x),c(x)$ and the entries of the $X^i$. Assuming $a_0=b_0=c_0=0$ (by modding out the global translation)
 and combining all those linear equations in the case $d=1$, we obtain the matrix\begin{gather*}
      \left[\begin{smallmatrix}
        -\tilde x_{1,1} & 0 & \tilde x_{1,1}^2 & 1 & 0 & -\tilde x_{1,1} & 0 & 0 &0\\
        0 & \tilde x_{1,1} & -\tilde y_{1,1} \tilde x_{1,1} & 0 & -1 & \tilde y_{1,1}& 0 & 0 &0\\
        -\tilde x_{1,2} & 0 & \tilde x_{1,2}^2 & 1 & 0 & -\tilde x_{1,2} & 0 & 0 &0\\
        0 & \tilde x_{1,2} & -\tilde y_{1,2} \tilde x_{1,2} & 0 & -1 & \tilde y_{1,2}& 0 & 0 &0\\
        - \tilde x_{2,1} & 0 & \tilde x_{2,1}^2 & 0 & 0 & 0 & 1 & 0 &-\tilde x_{2,1}\\
        0 & \tilde x_{2,1} & -\tilde y_{2,1} \tilde x_{2,1} & 0 & 0 & 0& 0 & -1 &\tilde y_{2,1}\\
        -\tilde x_{2,2} & 0 & \tilde x_{2,2}^2 & 0 & 0 & 0 & 1 & 0 &-\tilde x_{2,2}\\
        0 & \tilde x_{2,2} & -\tilde y_{2,2} \tilde x_{2,2} & 0 & 0 & 0& 0 & -1 &\tilde y_{2,2}
      \end{smallmatrix}\right]\cdot \left[\begin{smallmatrix}
          a_1\\
          b_1\\
          c_1\\
          X^1_1\\
          X^1_2\\
          X^1_3\\
          X^2_1\\
          X^2_2\\
          X^2_3\\
      \end{smallmatrix}\right]=0.
  \end{gather*}
  Since we can still mod out the global scaling,  we can view the solution space to this equation as a projective space.
  This matrix is of full rank $8$, i.e., its kernel is 
  one-dimensional, which corresponds  exactly to a single projective point. This proves that there is generically a unique solution to the second SfM problem in Theorem \ref{thm:point_minimal_problems}. For the first SfM problem in that theorem (i.e., $d=2$), we can argue similarly by additionally using the linear relation from Lemma \ref{lem:pointRel}. We get:\[
     \left[ \begin{smallmatrix}
          -\tilde x_{1,1} & -\tilde  x_{1,1}^2 & 0 &0 & \tilde  x_{1,1}^2& \tilde  x_{1,1}^3& 1 & 0 & -\tilde  x_{1,1} & 0 & 0 &0\\
          0 & 0 & \tilde  x_{1,1}& \tilde  x_{1,1}^2 & -\tilde y_{1,1}\tilde x_{1,1} & -\tilde y_{1,1}\tilde x_{1,1}^2 & 0 & -1 & \tilde y_{1,1}& 0 & 0 &0\\
          -\tilde x_{1,2} & -\tilde x_{1,2}^2 & 0 &0 & \tilde x_{1,2}^2& \tilde x_{1,2}^3& 1 & 0 & -\tilde x_{1,2} & 0 & 0 &0\\
          0 & 0 & \tilde x_{1,2}& \tilde x_{1,2}^2 & -\tilde y_{1,2}\tilde x_{1,2} & -\tilde y_{1,2}\tilde x_{1,2}^2 & 0 & -1 & \tilde y_{1,2}& 0 & 0 &0\\
          -\tilde x_{1,3} & -\tilde x_{1,3}^2 & 0 &0 & \tilde x_{1,3}^2& \tilde x_{1,3}^3& 1 & 0 & -\tilde x_{1,3} & 0 & 0 &0\\
          0 & 0 & \tilde x_{1,3}& \tilde x_{1,3}^2 & -\tilde y_{1,3}\tilde x_{1,3} & -\tilde y_{1,3}\tilde x_{1,3}^2 & 0 & -1 & \tilde y_{1,3}& 0 & 0 &0\\
          -\tilde x_{2,1} & -\tilde x_{2,1}^2 & 0 &0 & \tilde x_{2,1}^2& \tilde x_{2,1}^3& 0 & 0 & 0 & 1 & 0 &-\tilde x_{2,1}\\
          0 & 0 & \tilde x_{2,1}& \tilde x_{2,1}^2 & -\tilde y_{2,1}\tilde x_{2,1} & -\tilde y_{2,1}\tilde x_{2,1}^2& 0 & 0 & 0& 0 & -1 &\tilde y_{2,1}\\
          -\tilde x_{2,2} & -\tilde x_{2,2}^2 & 0 &0 & \tilde x_{2,2}^2& \tilde x_{2,2}^3& 0 & 0 & 0 & 1 & 0 &-\tilde x_{2,2}\\
          0 & 0 & \tilde x_{2,2}& \tilde x_{2,2}^2 & -\tilde y_{2,2}\tilde x_{2,2} & -\tilde y_{2,2}\tilde x_{2,2}^2& 0 & 0 & 0& 0 & -1 &\tilde y_{2,2}\\
          0 & 0 & z& z^2 & -\tilde y_{2,3}z & -\tilde y_{2,3}z^2& 0 & 0 & 0& 0 & -1 &\tilde y_{2,3}
      \end{smallmatrix}\right]\cdot \left[ \begin{smallmatrix}
          a_1\\
          a_2\\
          b_1\\
          b_2\\
          c_1\\
          c_2\\
          X^1_1\\
          X^1_2\\
          X^1_3\\
          X^2_1\\
          X^2_2\\
          X^2_3\\
      \end{smallmatrix}\right]=0,
\]
where $z:=\tilde x_{1,1}+\tilde x_{1,2}+\tilde x_{1,3}-\tilde x_{2,1}-\tilde x_{2,2}$. The fact that the last row does no longer follow the pattern is due to Lemma \ref{lem:pointRel}, as $\tilde x_{2,3}=z$.
We can again show that the matrix is full-rank to obtain a single solution in the projective space arising from the global scaling ambiguity.

  Lastly, we have to describe how to test the minimality for the balanced problems with $p=1$. 
  For a GS camera, this can be done by applying SM Lemma~\ref{lem:jacobianCheck} to the picture-taking map (as in \cite{DBLP:journals/pami/DuffKLP24} and as explained at the end of Sec.~\ref{sec:AG}).
  More concretely, it is enough to find a single point in the domain where the Jacobian of the picture-taking map has full rank.
  
  In the case of an RS camera, this approach is not directly applicable, since the act of taking a picture is not as straightforward to model as an algebraic map;  rather, it is more naturally seen as the relation of a world point with its image points. More explicitly, the image of $p$ points taken by an RS camera $P\in \mathcal{P}_{d,\delta}$ can be seen as a projection of the following algebraic set.{\small\begin{align*}
      \mathcal{V}'&:=V(\lbrace r(x_{i,j}) P(x_{i,j}) X^i, y_{i,j}(P(x_{i,j}) X^i)_3-(P(x_{i,j}) X^i)_2\mid i\in [p], j\in [o]\rbrace) \\
      &\subseteq \mathcal{P}_{d,\delta}\times (\mathbb{K}^3)^p\times \left((\mathbb{K}^2)^o\right)^p,
  \end{align*}}

  \noindent
  where $o:=1+d+2\delta$ and $V(S)$ denotes the vanishing locus of a set $S$ of polynomial equations. The first equation means that the point $X^i$ is observed at time $x_{i,j}$ and the second equation ensures that the second coordinate of the image point has indeed the correct value. We then consider the Zariski closure $\mathcal{V}$ of the open subset of $\mathcal{V}$' that additionally satisfies  $x_{i,j}\neq x_{i,j'}$ for all $j\neq j'$ and also $(P(x_{i,j}) X^i)_3\neq 0$. Now, the image variety of points is the projection of $\mathcal{V}$ onto the variables $x_{i,j}$ and $y_{i,j}$. By Chevalley's theorem (Section 6, Theorem 6, \cite{matsumura1980commutative}), this is  a constructible set; in particular, it is dense if and only if it has full dimension. The latter condition can  be checked analytically since a set that contains an open $\varepsilon$-ball around some point has to be full-dimensional. 
  
  The advantage of this approach is that, even though our model for the act of taking pictures is just a relation (instead of a map), we can still show that---in a Euclidean neighborhood of a generic point---there are indeed functions that give us the roots of the equation for varying parameters. This can be proven using the implicit function theorem from multivariate calculus.

  To apply the implicit function theorem, we have to specify some function $
      F: \mathbb{C}^{n+m}\to \mathbb{C}^m$.
  In our setting, we  have $n=3d+3\delta +2$ representing  camera and world point parameters (modulo global rotation, translation and scaling) and $m=o=1+d+2\delta$. In the $i$-the dimension, this function is given by\begin{gather*} 
      F_i(a,b,c,\alpha,\beta,\gamma,X,x_1,\dots,x_{o})=r(x_{i}) P_{a,b,c,\alpha,\beta,\gamma}(x_{i}) X,
  \end{gather*}
  where $P_{a,b,c,\alpha,\beta,\gamma}(x_{i})$ is the RS camera parametrized by the coefficient vectors $a,b,c,\alpha,\beta,\gamma$. The condition for the implicit function theorem is that the Jacobian matrix containing only the $x_i$ derivatives is invertible. Since $F_i$ does not depend on $x_j$ for $j\neq i$, this Jacobian will be a diagonal matrix and the diagonal entries are the derivatives of the $F_i$ viewed  as one-variate polynomials. Hence, this matrix will be non-zero at a collection of $x_i$ making $F$ vanish if and only if the one-variate polynomial has no multiple roots. We will check this at one specific instance. Namely, let $\alpha_\delta=1=c_d$ and otherwise $\alpha_i=\beta_j=\gamma_j=a_j=b_j=c_j=0$ for all other camera parameters.
  Moreover, we choose $X=[0,6,7.5,1]^\top$. Then, $F_i$ becomes{\small\begin{gather*}
      r(t)\begin{bmatrix}
          1+t^{2d} & 0 & 0\\
          0 & 1-t^{2d}& -2t^d\\
          0& 2t^d& 1-t^{2d}
      \end{bmatrix}\begin{bmatrix}
          1 & 0 & 0 & 0\\
          0 & 1 & 0 & 0\\
          0 & 0 & 1 & -t^d
      \end{bmatrix}\begin{bmatrix}
          0 \\
          6\\
          7.5\\
          1
      \end{bmatrix}= \\
      \begin{bmatrix}
          1+t^{2d} & -2t^{d+1} & t^{2d+1}-t
      \end{bmatrix}\begin{bmatrix}
          0\\
          6\\
          7.5-t^d
      \end{bmatrix}= -11t^{d+1}+7.5t^{2d+1}-7.5t-t^{3d+1}.
  \end{gather*}}
  We see that $t=0$ is one solution, and after multiplying the rest with $-1$ and substituting $z=t^d$ we obtain the polynomial $z^3-7.5z^2+11z+7.5$ whose roots are precisely $-\frac{1}{2},3,$ and $5$. Hence, the roots of the given polynomial are the $d$-th roots of these and $0$. Therefore, all its roots are distinct. Hence, in a Euclidean neighborhood of our chosen camera and point, there exist differentiable functions $v_0,\dots v_{3d}$ which are the roots of  \eqref{eq:orderPoint}.

  With the help of these functions, we can describe the act of taking a picture of a point close to $X$ and with a camera close to our chosen $P$ as a differentiable map; namely, \begin{gather}\label{eq:MapJacobian}
      (a,b,c,\alpha,\beta,\gamma,X)\mapsto \begin{bmatrix}
          v_i(a,b,c,\alpha,\beta,\gamma,X)\\
          \frac{(P_{(a,b,c,\alpha,\beta,\gamma,X)}(v_i(a,b,c,\alpha,\beta,\gamma,X))X)_2}{(P_{(a,b,c,\alpha,\beta,\gamma,X)}(v_i(a,b,c,\alpha,\beta,\gamma,X))X)_3}
      \end{bmatrix}_{i=0}^{3d}.
  \end{gather}
  Since the $v_i$ are differentiable functions and also $P$ is differentiable in the camera parameters, this function itself is differentiable and thus, by the inverse function theorem, we know that its image contains an open ball if the Jacobian is invertible at some point. We will check computationally for $d\leq 5$ that this is the case for the point chosen above.

  First, we look at the second entries, as these are seemingly more complicated. Let $z$ now be any of the appearing variables (we assume in the code that we mod out the global scaling by setting $c_d=1$).
  We abbreviate moreover $(a,b,c,\alpha,\beta,\gamma,X)$ with $\overline{\textbf{z}}$ and obtain for the derivative in direction $z${\small\begin{gather*}
      \frac{\partial}{\partial z}\frac{(P_{\overline{\textbf{z}}}(v_i(\overline{\textbf{z}}))X)_2}{(P_{\overline{\textbf{z}}}(v_i(\overline{\textbf{z}}))X)_3}=\\
      \frac{(\frac{\partial}{\partial z}(P_{\overline{\textbf{z}}}(v_i(\overline{\textbf{z}}))X)_2)(P_{\overline{\textbf{z}}}(v_i(\overline{\textbf{z}}))X)_3-(P_{\overline{\textbf{z}}}(v_i(\overline{\textbf{z}}))X)_2\frac{\partial}{\partial z}((P_{\overline{\textbf{z}}}(v_i(\overline{\textbf{z}}))X)_3)}{((P_{\overline{\textbf{z}}}(v_i(\overline{\textbf{z}}))X)_3)^2}.  
  \end{gather*}}
We observe that $(P_{\overline{\textbf{z}}}(v_i(\overline{\textbf{z}})X)_3)^2$ divides the whole row of the Jacobian. Since rescaling a row with a non-zero number does not change the invertibility of a matrix, we may just omit the denominator. Next, we will try to simplify $\frac{\partial}{\partial z}(P_{\overline{\textbf{z}}}(v_i(\overline{\textbf{z}}))X)_3$ and $\frac{\partial}{\partial z}(P_{\overline{\textbf{z}}}(v_i(\overline{\textbf{z}}))X)_2$. By the definition, we can view these as polynomials in $v_i(\overline{\textbf{z}})$ with coefficients which we will call $\lambda_j(\overline{\textbf{z}}) $ for now, i.e., {\small\begin{gather*}
      \frac{\partial}{\partial z}(P_{\overline{\textbf{z}}}(v_i(\overline{\textbf{z}}))X)_2=\frac{\partial}{\partial z}\sum_{j=0}^{3d}\lambda_j(\overline{\textbf{z}})v_i(\overline{\textbf{z}})^j=\sum_{j=0}^{3d}\frac{\partial}{\partial z}(\lambda_j(\overline{\textbf{z}}))v_i(\overline{\textbf{z}})^j+\lambda_j(\overline{\textbf{z}})\frac{\partial}{\partial z}v_i(\overline{\textbf{z}})^j=\\
      \sum_{j=0}^{3d}\frac{\partial}{\partial z}(\lambda_j(\overline{\textbf{z}}))v_i(\overline{\textbf{z}})^j+\sum_{j=0}^{3d}\lambda_j(\overline{\textbf{z}})jv_i(\overline{\textbf{z}})^{j-1}\frac{\partial}{\partial z}v_i(\overline{\textbf{z}})=\\
      \frac{\partial}{\partial z}(P_{\overline{\textbf{z}}}(t)X)_2|_{t=v_i(\overline{\textbf{z}})}+\left(\sum_{j=0}^{3d}\lambda_j(\overline{\textbf{z}})jv_i(\overline{\textbf{z}})^{j-1}\right)\frac{\partial}{\partial z}v_i(\overline{\textbf{z}}).
  \end{gather*}}
  Now we observe that $\sum_{j=0}^{3d}\lambda_j(\overline{\textbf{z}})jv_i(\overline{\textbf{z}})^{j-1}$ does not depend on the choice of the variable $z$ and thus is the same for the whole row. So all the second summands together give rise to a scaled version of the gradient of $v_i$. Since $v_i$ is another one of the functions we differentiate (as seen in  \eqref{eq:MapJacobian}), we have this gradient as another row and thus we can add a multiple of it without changing the determinant and invertibility of the matrix. Hence, we can drop the second summand. Arguing similarly for $\frac{\partial}{\partial z}(P_{\overline{\textbf{z}}}(v_i(\overline{\textbf{z}}))X)_3$, we get that the Jacobian can be brought to a form where the second entries in the description of the map in  \eqref{eq:MapJacobian} give rise to entries of the form \begin{gather*}
      \frac{\partial}{\partial z}(P_{\overline{\textbf{z}}}(t)X)_2|_{t=v_i(\overline{\textbf{z}})}(P_{\overline{\textbf{z}}}(v_i(\overline{\textbf{z}}))X)_3-\frac{\partial}{\partial z}(P_{\overline{\textbf{z}}}(t)X)_3|_{t=v_i(\overline{\textbf{z}})}(P_{\overline{\textbf{z}}}(v_i(\overline{\textbf{z}}))X)_2.
  \end{gather*}
  Applying the differential operator entry-wise to a vector, we observe that we can also write $\frac{\partial}{\partial z}(P_{\overline{\textbf{z}}}(t)X)_2|_{t=v_i(\overline{\textbf{z}})}=e_2^\top\left(\frac{\partial}{\partial z}P_{\overline{\textbf{z}}}(t)X\right)|_{t=v_i(\overline{\textbf{z}})}$. Thus, the computation of these entries boils down to computing the Jacobian in the camera and world point parameters for $P_{\overline{\textbf{z}}}(t)X$ (i.e., all partial derivatives except $t$) and then evaluating at a given point and potentially multiplying it with a unit vector. We will denote this Jacobian with $\mathcal{J}(\overline{\textbf{z}},t)$

  We will now show that the gradients of the $v_i$ are up to scaling also of such a form. The Jacobain matrix of the implicit functions is also known from the implicit function theorem. Namely,\begin{gather*}
      \left[\frac{\partial v_i}{\partial z_j}\right]_{i,j}=\left[\frac{\partial F_i}{\partial x_j}\right]_{i,j}^{-1}\cdot \left[\left(\frac{\partial F_i}{\partial z_j}\right)(\overline{\textbf{z}},v_i(\overline{\textbf{z}}))\right]_{i,j}.
  \end{gather*}
  As remarked before, the Jacobian with respect to the $x_i$ is a diagonal matrix, which means that it simply rescales a row of the Jacobian we care about. Thus, we can drop this scaling without changing the rank of the Jacobian. We remind here that $F_i=(r(t)P_{\overline{\textbf{z}}}(t)X)|_{t=x_i}$. Thus, since $r(t)$ does not depend on the camera parameters or the world point, we can also write this in terms of $\mathcal{J}$, namely $\nabla v_i=([1,0,-t]\mathcal{J})|_{t=v_i(\overline{\textbf{z}})}$.

  Hence, we see that to compute the Jacobian of our map, it suffices to compute $\mathcal{J}$. Moreover, since $\mathcal{J}$ is the Jacobian of a set of multivariate polynomials, this can be done in a computer. For the sake of completeness, we will however do it by hand here.{\small\begin{gather*}
      \frac{\partial P_{\overline{\textbf{z}}}(t)X }{\partial \alpha_i}=\frac{\partial R(\alpha(t),\beta(t),\gamma(t))}{\partial \alpha_i}\begin{bmatrix}
          1 & 0 & 0 & -a(t)\\
          0& 1& 0& -b(t)\\
          0 & 0& 1 & -c(t)
      \end{bmatrix}X.
  \end{gather*}}

  \noindent
  where we can expand now the derivative of the Cayley parametrization to be \begin{gather*}
      \frac{\partial R(\alpha(t),\beta(t),\gamma(t))}{\partial \alpha_i}=\frac{\partial}{\partial \alpha_i}\left[\begin{smallmatrix}
          1 + \alpha^2 - \beta^2 - \gamma^2 &
2(\alpha \beta - \gamma) &
2(\alpha \gamma + \beta) \\
2(\alpha \beta + \gamma) &
1 - \alpha^2 + \beta^2 - \gamma^2 &
2(\beta \gamma - \alpha) \\
2(\alpha \gamma - \beta) &
2(\beta \gamma + \alpha) &
1 - \alpha^2 - \beta^2 + \gamma^2
      \end{smallmatrix}\right]\\
      =\left[\begin{smallmatrix}
          \frac{\partial}{\partial\alpha_i} \alpha^2  &
2\beta\frac{\partial}{\partial\alpha_i}\alpha   &
2\gamma\frac{\partial}{\partial\alpha_i}\alpha  \\
2\beta\frac{\partial}{\partial\alpha_i}\alpha  &
 - \frac{\partial}{\partial\alpha_i}\alpha^2 &
 - 2\frac{\partial}{\partial\alpha_i}\alpha \\
2\gamma\frac{\partial}{\partial\alpha_i}\alpha  &
2\frac{\partial}{\partial\alpha_i} \alpha &
 -\frac{\partial}{\partial\alpha_i} \alpha^2 
      \end{smallmatrix}\right]=\left[\begin{smallmatrix}
          2\alpha\frac{\partial}{\partial\alpha_i} \alpha  &
2\beta t^i  &
2\gamma t^i \\
2\beta t^i  &
 - 2\alpha\frac{\partial}{\partial\alpha_i}\alpha &
 - 2t^i \\
2\gamma t^i  &
2t^i &
 -2\alpha\frac{\partial}{\partial\alpha_i} \alpha 
      \end{smallmatrix}\right]=2t^i\left[\begin{smallmatrix}
          \alpha &
\beta  &
\gamma  \\
\beta   &
 - \alpha &
 - 1 \\
\gamma   &
1 &
 -\alpha 
      \end{smallmatrix}\right].
  \end{gather*}
  Similarly, we obtain{\small\begin{gather*}
      \frac{\partial P_{\overline{\textbf{z}}}(t)X }{\partial \beta_i}=2t^i\begin{bmatrix}
                 -\beta(t) &
\alpha(t)  &
1  \\
\alpha(t)   &
 \beta(t) &
 \gamma(t) \\
-1   &
\gamma(t) &
 -\beta(t)
      \end{bmatrix}\begin{bmatrix}
          1 & 0 & 0 & -a(t)\\
          0& 1& 0& -b(t)\\
          0 & 0& 1 & -c(t)
      \end{bmatrix}X\text{ and}\\  
   \frac{\partial P_{\overline{\textbf{z}}}(t)X }{\partial \gamma_i}=   2t^i\begin{bmatrix}
                 -\gamma(t) &
-1  &
\alpha(t)  \\
1   &
 -\gamma(t) &
 \beta(t) \\
\alpha(t)   &
\beta(t) &
 \gamma(t)
      \end{bmatrix}\begin{bmatrix}
          1 & 0 & 0 & -a(t)\\
          0& 1& 0& -b(t)\\
          0 & 0& 1 & -c(t)
      \end{bmatrix}X.
  \end{gather*}}
  For the derivatives w.r.t. the translation parameters, we can argue analogously and obtain{\small\begin{gather*}
      \frac{\partial P_{\overline{\textbf{z}}}(t)X }{\partial a_i}= R(\alpha(t),\beta(t),\gamma(t))\frac{\partial}{\partial a_i}\begin{bmatrix}
          1 & 0 & 0 & -a(t)\\
          0& 1& 0& -b(t)\\
          0 & 0& 1 & -c(t)
      \end{bmatrix}X\\
      =R(\alpha(t),\beta(t),\gamma(t))\begin{bmatrix}
          0 & 0 & 0 & -t^i\\
          0& 0& 0& 0\\
          0 & 0& 0 & 0
      \end{bmatrix}X=R(\alpha(t),\beta(t),\gamma(t))\begin{bmatrix}
          -t^i\\
          0\\
          0
      \end{bmatrix},\\
      \frac{\partial P_{\overline{\textbf{z}}}(t)X }{\partial b_i}=R(\alpha(t),\beta(t),\gamma(t))\begin{bmatrix}
          0\\
          -t^i\\
          0
      \end{bmatrix}\text{ and}\\
      \frac{\partial P_{\overline{\textbf{z}}}(t)X }{\partial c_i}=R(\alpha(t),\beta(t),\gamma(t))\begin{bmatrix}
          0\\
          0\\
          -t^i
      \end{bmatrix}.
  \end{gather*}}
  Lastly, we look at the derivatives in the direction of the variables of the point \begin{gather*}
      \frac{\partial P_{\overline{\textbf{z}}}(t)X }{\partial X_i}=P_{\overline{\textbf{z}}}(t)\frac{\partial X}{\partial X_i}=P_{\overline{\textbf{z}}}(t)e_i.
  \end{gather*}
  This finishes the computation of $\mathcal{J}$. An implementation of $\mathcal{J}$ in this form can be found in the submitted code. As discussed before, it is then used to compute the Jacobian of the map we are interested in. Then, we check that that Jacobian has full rank at some point, giving an open ball in the image by the inverse function theorem. This finishes the discussion of the minimality of the balanced problems with $p=1$, which was tested for $d\leq 5$. The degrees were computed using the equation system given in \cite{hruby2025single} via homotopy continuation.
  \qed
\end{proof}

\begin{proof}[of Theorem \ref{thm:point_minimal_p2}]
    We start by computing the balanced problems. Since for $d>1$ we have that $1+d+2\delta>2$, the number of constraints we get is always $4p$ and independent of the case distinction in \eqref{eq:ptConstraintsNumber}. So we only have to distinguish the cases $d=0$ and $d>0$ for the number of DoF in the domain. Let us start by considering the case $d>0$: In this case, we have to satisfy the equation $3(d+\delta)-1+3p=4p$, which is equivalent to  $d+\delta=\frac{p+1}{3}$. Thus, all such problems with $p+1$ divisible by $3$ will be balanced. Considering the case $d=0$, we get the equation $3\delta+2p=4p$, so we obtain balanced problems when $\delta$ is even and $p=\frac{3\delta}{2}$.
    
    We continue by showing that the problems of the form $\delta=0,d>0$ and $p=3d-1$ are minimal of degree $1$. We observe that the second case in Theorem \ref{thm:point_minimal_problems} is a special case of this statement and we will thus argue similarly. Using the same argument as in the proof of Theorem \ref{thm:point_minimal_problems}, we may assume that generically all solutions to the reconstruction problem are finite points, i.e., the last coordinates of our points $X^i$ are $1$. Each world point $X^i$ gives rise to $4$ constraints:{\small\begin{gather*}
X_1^i-a(\tilde x_{i,1})-\tilde x_{i,1}X^i_3+\tilde x_{i,1}c(\tilde x_{i,1})=0, \,  \tilde y_{i,1}\cdot (X_3^i-c(\tilde x_{i,1}))-X_2^i+b(\tilde x_{i,1})=0,
\\
X_1^i-a(\tilde x_{i,2})-\tilde x_{i,2}X^i_3+\tilde x_{i,2}c(\tilde x_{i,2})=0, \,  \tilde y_{i,2}\cdot (X_3^i-c(\tilde x_{i,2}))-X_2^i+b(\tilde x_{i,2})=0.
  \end{gather*}}
  Collecting all these in one matrix as in the proof of Theorem \ref{thm:point_minimal_problems} reduces the problem to showing that the resulting $4p\times(4p+1)$ matrix has generically full rank. We do this by considering the submatrix that does not contain the column corresponding to $c_d$. Since the determinant is a polynomial in the $\tilde x_{i,j}$ and $\tilde y_{i,j}$, it will be non-zero for generic values if we can find one such tuple of values where it is non-zero. Thus, we start by setting $\tilde x_{i,2}=0$ for each $i$ such that the resulting matrix becomes\begin{gather*}
      \left[\begin{smallmatrix}
          -\tilde x_{1,1} &\dots & -\tilde x_{1,1}^d& 0& \dots & 0 & \tilde x_{1,1}^2& \dots & \tilde x_{1,1}^{d}& 1 & 0 & -\tilde x_{1,1} & 0 & 0 & 0 & 0& \dots & 0 \\
          0 &\dots & 0& \tilde x_{1,1}& \dots & \tilde x_{1,1}^d & -\tilde y_{1,1} \tilde x_{1,1}& \dots & -\tilde y_{1,1} \tilde x_{1,1}^{d-1} & 0 & -1 & \tilde y_{1,1}& 0 & 0 & 0 & 0& \dots & 0\\
        0 & \dots & 0 &0&\dots & 0& 0&\dots & 0& 1 & 0 &0& 0 & 0 & 0 & 0&\dots & 0\\
        0 & \dots & 0& 0&\dots& 0& 0& \dots &0&0&-1&\tilde y_{1,2}& 0 & 0 & 0 & 0&\dots & 0\\
        -\tilde x_{2,1} &\dots & -\tilde x_{2,1}^d& 0& \dots & 0 & \tilde x_{2,1}^2& \dots & \tilde x_{2,1}^{d}& 0 & 0 & 0& 1 & 0 & -\tilde x_{2,1} & 0 & \dots & 0 \\
          0 &\dots & 0& \tilde x_{2,1}& \dots & \tilde x_{2,1}^d & -\tilde y_{2,1} \tilde x_{2,1}& \dots & -\tilde y_{2,1} \tilde x_{2,1}^{d-1} & 0 & 0 & 0& 0 & -1 & \tilde y_{2,1}& 0 & \dots & 0\\
        0 & \dots & 0 &0&\dots & 0& 0&\dots & 0 & 0 & 0& 0& 1 & 0 &0& 0 &\dots & 0\\
        0 & \dots & 0& 0&\dots& 0& 0& \dots & 0 & 0 & 0&0&0&-1&\tilde y_{2,2}& 0 &\dots & 0\\
        & \vdots & & &\vdots& & & \vdots & & & & \vdots & &  &  & & \ddots & 
      \end{smallmatrix}\right]
  \end{gather*}
  where the vertical and diagonal dots are supposed to represent that we continue the pattern of the blocks consisting of $4$ rows of the matrix. We observe that the third row in each block of $4$ rows contains exactly $1$ non-zero entry and thus we can erase the corresponding row and column by Laplace expansion without doing harm to the invertibility of the matrix. Furthermore, we can subtract the last row from each block of $4$ rows from the second to ensure that the column corresponding to $X^i_2$ has only one non-zero entry left, namely the $-1$ in the last row. Thus, we can Laplace expand that column without changing the invertibility. This results in the following matrix:\begin{gather*}
      \left[\begin{smallmatrix}
          -\tilde x_{1,1} &\dots & -\tilde x_{1,1}^d& 0& \dots & 0 & \tilde x_{1,1}^2& \dots & \tilde x_{1,1}^{d} & -\tilde x_{1,1} & 0 & \dots \\
          0 &\dots & 0& \tilde x_{1,1}& \dots & \tilde x_{1,1}^d & -\tilde y_{1,1} \tilde x_{1,1}& \dots & -\tilde y_{1,1} \tilde x_{1,1}^{d-1} &  \tilde y_{1,1}-\tilde y_{1,2}& 0 & \dots \\
 \tilde x_{2,1} &\dots & -\tilde x_{2,1}^d& 0& \dots & 0 & \tilde x_{2,1}^2& \dots & \tilde x_{2,1}^{d} &0& -\tilde x_{2,1}  &0 \\ 
 0 &\dots & 0& \tilde x_{2,1}& \dots & \tilde x_{2,1}^d & -\tilde y_{2,1} \tilde x_{2,1}& \dots & -\tilde y_{2,1} \tilde x_{2,1}^{d-1} & 0& \tilde y_{2,1}-\tilde y_{2,2}& 0  \\
 & \vdots & & & \vdots & & & \vdots& & \vdots & \ddots & \ddots 
      \end{smallmatrix}\right].
  \end{gather*}
  Now we have blocks consisting of $2$ rows. By our genericity assumption, we have $\tilde x_{i,1}\neq 0$. Thus, we can add  $\frac{\tilde y_{i,1}-\tilde y_{i,2}}{\tilde x_{i,1}}$ times the first row of a block to the second row. Afterwards the columns corresponding to $X_3^i$ contain only one non-zero entry, namely $-\tilde x_{i,1}$. Hence, we can Laplace expand w.r.t. these columns and the determinant will be non-zero iff it was non-zero before. Therefore, we are left with showing that the following matrix is generically invertible:\begin{gather*}
      \left[\begin{smallmatrix}
          (\tilde y_{1,2}-\tilde y_{1,1}) \tilde x_{1,1}^0&\dots & (\tilde y_{1,2}-\tilde y_{1,1}) \tilde x_{1,1}^{d-1}& \tilde x_{1,1}& \dots &\tilde x_{1,1}^d& -\tilde x_{1,1}\tilde y_{1,2}& \dots & -\tilde x_{1,1}^{d-1}\tilde y_{1,2} \\
           (\tilde y_{2,2}-\tilde y_{2,1}) \tilde x_{2,1}^0&\dots & (\tilde y_{2,2}-\tilde y_{2,1}) \tilde x_{2,1}^{d-1}& \tilde x_{2,1}& \dots &\tilde x_{2,1}^d& -\tilde x_{2,1}\tilde y_{2,2}& \dots & -\tilde x_{2,1}^{d-1}\tilde y_{2,2} \\
           & \vdots & && \vdots&&&\vdots&
      \end{smallmatrix}\right].
  \end{gather*}
  Since the determinant of the above matrix is again a polynomial in the variables $\tilde x_{i,j},\tilde y_{i,j}$, it suffices again to find one point where it is invertible; then, it will hold generically and in particular at some point where all $\tilde x_{i,1}\neq 0$. The existence of such a point is equivalent to saying that the {multivariate} functions $y_2-y_1,\dots,(y_2-y_1)x^{d-1},x, \dots, x^d,$ $-xy_2,\dots,-x^{d-1}y_2 $ are linearly independent. Since we are interested in this statement over an infinite field, it is equivalent to saying that these are linearly independent as {multivariate} polynomials. After adding $-x^iy_2$ to $(y_2-y_1)x^{i}$ and eliminating signs, we obtain the 
  distinct monomials $y_1x^i,y_2x^i$ for $1\leq i\leq d-1$ and $x^i$ for $1\leq i\leq d$. As they are distinct monomials, they are linearly independent. Lastly, we have the binomial $y_2-y_1$, which does not contain $x$ and is thus linearly independent from the 
  monomials studied before. Thus, all these are linearly independent and the matrix is generically invertible.

  The minimality of the remaining problems listed in Theorem \ref{thm:point_minimal_p2} were again tested numerically/symbolically  by computing the solution set over a randomly chosen point. For this we used again the equations discussed in \cite{hruby2025single}.\qed
  \end{proof}

\section{Image Curve Constraint for $d=0,\delta=1$}
\label{sec:longEquation}
    The following equation cuts out the Zariski closure of the set of image curves of world lines observed by RS cameras in $\mathcal{P}_{0,1}$, using the notation from Example~\ref{ex:curve_d0del1}, in the affine chart where $Z_0=-1$:

    \noindent 
    \tiny
    {\tiny $4Z_3^2Z_2^4Z_1^4 +4Z_2^6Z_1^4- 8Z_3Z_2^4Z_1^5 +4Z_2^4Z_1^6 +8Z_3^2Z_2^3Z_1^3Y_2Y_1 +16Z_2^5Z_1^3Y_2Y_1 -16Z_3Z_2^3Z_1^4Y_2Y_1 +8Z_2^3Z_1^5Y_2Y_1 +4Z_3^2Z_2^4Z_1^2Y_1^2 -4Z_3^3Z_2^2Z_1^3Y_1^2 -12Z_3Z_2^4Z_1^3Y_1^2 +8Z_3^2Z_2^2Z_1^4Y_1^2 +4Z_2^4Z_1^4Y_1^2 -4Z_3Z_2^2Z_1^5Y_1^2 +4Z_3^2Z_2^2Z_1^2Y_2^2Y_1^2 +24Z_2^4Z_1^2Y_2^2Y_1^2 -8Z_3Z_2^2Z_1^3Y_2^2Y_1^2 +4Z_2^2Z_1^4Y_2^2Y_1^2 +8Z_3^2Z_2^3Z_1Y_2Y_1^3 -4Z_3^3Z_2Z_1^2Y_2Y_1^3 -28Z_3Z_2^3Z_1^2Y_2Y_1^3+\\8Z_3^2Z_2Z_1^3Y_2Y_1^3+8Z_2^3Z_1^3Y_2Y_1^3-4Z_3Z_2Z_1^4Y_2Y_1^3+16Z_2^3Z_1Y_2^3Y_1^3-4Z_3^3Z_2^2Z_1Y_1^4+7Z_3^2Z_2^2Z_1^2Y_1^4-Z_2^4Z_1^2Y_1^4-2Z_3Z_2^2Z_1^3Y_1^4-Z_2^2Z_1^4Y_1^4+4Z_3^2Z_2^2Y_2^2Y_1^4-20Z_3Z_2^2Z_1Y_2^2Y_1^4-Z_3^2Z_1^2Y_2^2Y_1^4+3Z_2^2Z_1^2Y_2^2Y_1^4+2Z_3Z_1^3Y_2^2Y_1^4-Z_1^4Y_2^2Y_1^4+4Z_2^2Y_2^4Y_1^4-4Z_3^3Z_2Y_2Y_1^5+8Z_3^2Z_2Z_1Y_2Y_1^5-2Z_2^3Z_1Y_2Y_1^5-4Z_3Z_2Z_1^2Y_2Y_1^5-4Z_3Z_2Y_2^3Y_1^5-2Z_2Z_1Y_2^3Y_1^5-Z_3^2Z_2^2Y_1^6+2Z_3Z_2^2Z_1Y_1^6-Z_2^2Z_1^2Y_1^6-Z_3^2Y_2^2Y_1^6-Z_2^2Y_2^2Y_1^6+2Z_3Z_1Y_2^2Y_1^6-Z_1^2Y_2^2Y_1^6-Y_2^4Y_1^6+16Z_3^2Z_2^4Z_1^2Y_2Y_0-32Z_3Z_2^4Z_1^3Y_2Y_0+16Z_2^4Z_1^4Y_2Y_0+16Z_3^4Z_2^3Z_1Y_1Y_0-64Z_3^3Z_2^3Z_1^2Y_1Y_0-16Z_3Z_2^5Z_1^2Y_1Y_0+96Z_3^2Z_2^3Z_1^3Y_1Y_0+8Z_2^5Z_1^3Y_1Y_0-64Z_3Z_2^3Z_1^4Y_1Y_0+16Z_2^3Z_1^5Y_1Y_0+32Z_3^2Z_2^3Z_1Y_2^2Y_1Y_0-64Z_3Z_2^3Z_1^2Y_2^2Y_1Y_0+32Z_2^3Z_1^3Y_2^2Y_1Y_0+\\16Z_3^4Z_2^2Y_2Y_1^2Y_0-72Z_3^3Z_2^2Z_1Y_2Y_1^2Y_0-32Z_3Z_2^4Z_1Y_2Y_1^2Y_0+100Z_3^2Z_2^2Z_1^2Y_2Y_1^2Y_0+4Z_2^4Z_1^2Y_2Y_1^2Y_0\\-48Z_3Z_2^2Z_1^3Y_2Y_1^2Y_0+4Z_2^2Z_1^4Y_2Y_1^2Y_0+16Z_3^2Z_2^2Y_2^3Y_1^2Y_0-32Z_3Z_2^2Z_1Y_2^3Y_1^2Y_0+16Z_2^2Z_1^2Y_2^3Y_1^2Y_0-16Z_3^5Z_2Y_1^3Y_0-8Z_3^3Z_2^3Y_1^3Y_0+60Z_3^4Z_2Z_1Y_1^3Y_0+36Z_3^2Z_2^3Z_1Y_1^3Y_0-80Z_3^3Z_2Z_1^2Y_1^3Y_0-24Z_3Z_2^3Z_1^2Y_1^3Y_0+44Z_3^2Z_2Z_1^3Y_1^3Y_0+8Z_2^3Z_1^3Y_1^3Y_0-8Z_3Z_2Z_1^4Y_1^3Y_0-16Z_3^3Z_2Y_2^2Y_1^3Y_0-16Z_3Z_2^3Y_2^2Y_1^3Y_0+32Z_3^2Z_2Z_1Y_2^2Y_1^3Y_0-16Z_2^3Z_1Y_2^2Y_1^3Y_0-16Z_3Z_2Z_1^2Y_2^2Y_1^3Y_0-4Z_3^4Y_2Y_1^4Y_0+4Z_3^2Z_2^2Y_2Y_1^4Y_0+14Z_3^3Z_1Y_2Y_1^4Y_0+30Z_3Z_2^2Z_1Y_2Y_1^4Y_0-16Z_3^2Z_1^2Y_2Y_1^4Y_0-8Z_2^2Z_1^2Y_2Y_1^4Y_0+6Z_3Z_1^3Y_2Y_1^4Y_0-4Z_3^2Y_2^3Y_1^4Y_0-12Z_2^2Y_2^3Y_1^4Y_0+8Z_3Z_1Y_2^3Y_1^4Y_0-4Z_1^2Y_2^3Y_1^4Y_0+6Z_3^3Z_2Y_1^5Y_0+2Z_3Z_2^3Y_1^5Y_0-
14Z_3^2Z_2Z_1Y_1^5Y_0+10Z_3Z_2Z_1^2Y_1^5Y_0-2Z_2Z_1^3Y_1^5Y_0+16Z_3Z_2Y_2^2Y_1^5Y_0+2Z_2Z_1Y_2^2Y_1^5Y_0+2Z_3^2Y_2Y_1^6Y_0+2Z_2^2Y_2Y_1^6Y_0-4Z_3Z_1Y_2Y_1^6Y_0+2Z_1^2Y_2Y_1^6Y_0+4Y_2^3Y_1^6Y_0-16Z_3^4Z_2^4Y_0^2+32Z_3^3Z_2^4Z_1Y_0^2-16Z_3^2Z_2^4Z_1^2Y_0^2-\\64Z_3^3Z_2^3Y_2Y_1Y_0^2+160Z_3^2Z_2^3Z_1Y_2Y_1Y_0^2-
128Z_3Z_2^3Z_1^2Y_2Y_1Y_0^2+32Z_2^3Z_1^3Y_2Y_1Y_0^2+56Z_3^4Z_2^2Y_1^2Y_0^2+8Z_3^2Z_2^4Y_1^2Y_0^2-176Z_3^3Z_2^2Z_1Y_1^2Y_0^2-8Z_3Z_2^4Z_1Y_1^2Y_0^2+208Z_3^2Z_2^2Z_1^2Y_1^2Y_0^2+4Z_2^4Z_1^2Y_1^2Y_0^2-112Z_3Z_2^2Z_1^3Y_1^2Y_0^2+24Z_2^2Z_1^4Y_1^2Y_0^2-32Z_3^2Z_2^2Y_2^2Y_1^2Y_0^2+64Z_3Z_2^2Z_1Y_2^2Y_1^2Y_0^2-32Z_2^2Z_1^2Y_2^2Y_1^2Y_0^2+48Z_3^3Z_2Y_2Y_1^3Y_0^2+
16Z_3Z_2^3Y_2Y_1^3Y_0^2-\\112Z_3^2Z_2Z_1Y_2Y_1^3Y_0^2+80Z_3Z_2Z_1^2Y_2Y_1^3Y_0^2-16Z_2Z_1^3Y_2Y_1^3Y_0^2+3Z_3^4Y_1^4Y_0^2-10Z_3^2Z_2^2Y_1^4Y_0^2-Z_2^4Y_1^4Y_0^2-10Z_3^3Z_1Y_1^4Y_0^2-6Z_3Z_2^2Z_1Y_1^4Y_0^2+11Z_3^2Z_1^2Y_1^4Y_0^2+3Z_2^2Z_1^2Y_1^4Y_0^2-4Z_3Z_1^3Y_1^4Y_0^2+12Z_3^2Y_2^2Y_1^4Y_0^2+12Z_2^2Y_2^2Y_1^4Y_0^2-24Z_3Z_1Y_2^2Y_1^4Y_0^2+12Z_1^2Y_2^2Y_1^4Y_0^2-
20Z_3Z_2Y_2Y_1^5Y_0^2+2Z_2Z_1Y_2Y_1^5Y_0^2-Z_3^2Y_1^6Y_0^2-Z_2^2Y_1^6Y_0^2+2Z_3Z_1Y_1^6Y_0^2-Z_1^2Y_1^6Y_0^2-6Y_2^2Y_1^6Y_0^2+
16Z_3^2Z_2^2Y_2Y_1^2Y_0^3-32Z_3Z_2^2Z_1Y_2Y_1^2Y_0^3+16Z_2^2Z_1^2Y_2Y_1^2Y_0^3-32Z_3^3Z_2Y_1^3Y_0^3+80Z_3^2Z_2Z_1Y_1^3Y_0^3-64Z_3Z_2Z_1^2Y_1^3Y_0^3+16Z_2Z_1^3Y_1^3Y_0^3-12Z_3^2Y_2Y_1^4Y_0^3-4Z_2^2Y_2Y_1^4Y_0^3+24Z_3Z_1Y_2Y_1^4Y_0^3-12Z_1^2Y_2Y_1^4Y_0^3+8Z_3Z_2Y_1^5Y_0^3-2Z_2Z_1Y_1^5Y_0^3+4Y_2Y_1^6Y_0^3+4Z_3^2Y_1^4Y_0^4-8Z_3Z_1Y_1^4Y_0^4+4Z_1^2Y_1^4Y_0^4-Y_1^6Y_0^4-16Z_3^4Z_2^3Z_1^2-
16Z_3^2Z_2^5Z_1^2+64Z_3^3Z_2^3Z_1^3+32Z_3Z_2^5Z_1^3-80Z_3^2Z_2^3Z_1^4+8Z_2^5Z_1^4+32Z_3Z_2^3Z_1^5-16Z_3^4Z_2^2Z_1Y_2Y_1-32Z_3^2Z_2^4Z_1Y_2Y_1+56Z_3^3Z_2^2Z_1^2Y_2Y_1+48Z_3Z_2^4Z_1^2Y_2Y_1-48Z_3^2Z_2^2Z_1^3Y_2Y_1+64Z_2^4Z_1^3Y_2Y_1-8Z_3Z_2^2Z_1^4Y_2Y_1+16Z_2^2Z_1^5Y_2Y_1+16Z_3^5Z_2Z_1Y_1^2+24Z_3^3Z_2^3Z_1Y_1^2-56Z_3^4Z_2Z_1^2Y_1^2-28Z_3^2Z_2^3Z_1^2Y_1^2+4Z_2^5Z_1^2Y_1^2+64Z_3^3Z_2Z_1^3Y_1^2-24Z_3Z_2^3Z_1^3Y_1^2-24Z_3^2Z_2Z_1^4Y_1^2+12Z_2^3Z_1^4Y_1^2-16Z_3^2Z_2^3Y_2^2Y_1^2+8Z_3^2Z_2Z_1^2Y_2^2Y_1^2+
112Z_2^3Z_1^2Y_2^2Y_1^2-16Z_3Z_2Z_1^3Y_2^2Y_1^2+8Z_2Z_1^4Y_2^2Y_1^2+16Z_3^3Z_2^2Y_2Y_1^3+4Z_3^4Z_1Y_2Y_1^3+\\
20Z_3^2Z_2^2Z_1Y_2Y_1^3+8Z_2^4Z_1Y_2Y_1^3-16Z_3^3Z_1^2Y_2Y_1^3-96Z_3Z_2^2Z_1^2Y_2Y_1^3+20Z_3^2Z_1^3Y_2Y_1^3+
24Z_2^2Z_1^3Y_2Y_1^3-8Z_3Z_1^4Y_2Y_1^3-16Z_3Z_2^2Y_2^3Y_1^3+64Z_2^2Z_1Y_2^3Y_1^3+4Z_3^4Z_2Y_1^4+8Z_3^2Z_2^3Y_1^4-22Z_3^3Z_2Z_1Y_1^4-14Z_3Z_2^3Z_1Y_1^4+32Z_3^2Z_2Z_1^2Y_1^4+2Z_2^3Z_1^2Y_1^4-14Z_3Z_2Z_1^3Y_1^4+32Z_3^2Z_2Y_2^2Y_1^4+4Z_2^3Y_2^2Y_1^4-64Z_3Z_2Z_1Y_2^2Y_1^4+6Z_2Z_1^2Y_2^2Y_1^4+8Z_2Y_2^4Y_1^4-2Z_3^3Y_2Y_1^5+4Z_3Z_2^2Y_2Y_1^5+2Z_3^2Z_1Y_2Y_1^5-10Z_2^2Z_1Y_2Y_1^5+2Z_3Z_1^2Y_2Y_1^5-2Z_1^3Y_2Y_1^5+2Z_3Y_2^3Y_1^5-8Z_1Y_2^3Y_1^5-2Z_3^2Z_2Y_1^6+4Z_3Z_2Z_1Y_1^6-2Z_2Z_1^2Y_1^6-2Z_2Y_2^2Y_1^6-32Z_3^4Z_2^3Y_2Y_0+128Z_3^3Z_2^3Z_1Y_2Y_0-128Z_3^2Z_2^3Z_1^2Y_2Y_0+32Z_2^3Z_1^4Y_2Y_0+48Z_3^5Z_2^2Y_1Y_0+32Z_3^3Z_2^4Y_1Y_0-192Z_3^4Z_2^2Z_1Y_1Y_0-80Z_3^2Z_2^4Z_1Y_1Y_0+
288Z_3^3Z_2^2Z_1^2Y_1Y_0-8Z_3Z_2^4Z_1^2Y_1Y_0-192Z_3^2Z_2^2Z_1^3Y_1Y_0+16Z_2^4Z_1^3Y_1Y_0+48Z_3Z_2^2Z_1^4Y_1Y_0-32Z_3^3Z_2^2Y_2^2Y_1Y_0+192Z_3^2Z_2^2Z_1Y_2^2Y_1Y_0-288Z_3Z_2^2Z_1^2Y_2^2Y_1Y_0+128Z_2^2Z_1^3Y_2^2Y_1Y_0+\\
88Z_3^4Z_2Y_2Y_1^2Y_0+64Z_3^2Z_2^3Y_2Y_1^2Y_0-336Z_3^3Z_2Z_1Y_2Y_1^2Y_0-224Z_3Z_2^3Z_1Y_2Y_1^2Y_0+440Z_3^2Z_2Z_1^2Y_2Y_1^2Y_0+\\
48Z_2^3Z_1^2Y_2Y_1^2Y_0-224Z_3Z_2Z_1^3Y_2Y_1^2Y_0+32Z_2Z_1^4Y_2Y_1^2Y_0+32Z_3^2Z_2Y_2^3Y_1^2Y_0-64Z_3Z_2Z_1Y_2^3Y_1^2Y_0+\\
32Z_2Z_1^2Y_2^3Y_1^2Y_0-12Z_3^5Y_1^3Y_0-68Z_3^3Z_2^2Y_1^3Y_0-
8Z_3Z_2^4Y_1^3Y_0+40Z_3^4Z_1Y_1^3Y_0+172Z_3^2Z_2^2Z_1Y_1^3Y_0+4Z_2^4Z_1Y_1^3Y_0-44Z_3^3Z_1^2Y_1^3Y_0-
96Z_3Z_2^2Z_1^2Y_1^3Y_0+16Z_3^2Z_1^3Y_1^3Y_0+28Z_2^2Z_1^3Y_1^3Y_0-16Z_3Z_2^2Y_2^2Y_1^3Y_0-16Z_3^2Z_1Y_2^2Y_1^3Y_0-\\
80Z_2^2Z_1Y_2^2Y_1^3Y_0+32Z_3Z_1^2Y_2^2Y_1^3Y_0-16Z_1^3Y_2^2Y_1^3Y_0-54Z_3^2Z_2Y_2Y_1^4Y_0-6Z_2^3Y_2Y_1^4Y_0+136Z_3Z_2Z_1Y_2Y_1^4Y_0-30Z_2Z_1^2Y_2Y_1^4Y_0-24Z_2Y_2^3Y_1^4Y_0+4Z_3^3Y_1^5Y_0+2Z_3Z_2^2Y_1^5Y_0-8Z_3^2Z_1Y_1^5Y_0+4Z_2^2Z_1Y_1^5Y_0+4Z_3Z_1^2Y_1^5Y_0-\\
2Z_3Y_2^2Y_1^5Y_0+20Z_1Y_2^2Y_1^5Y_0+4Z_2Y_2Y_1^6Y_0-64Z_3^4Z_2^3Y_0^2+128Z_3^3Z_2^3Z_1Y_0^2-64Z_3^2Z_2^3Z_1^2Y_0^2-160Z_3^3Z_2^2Y_2Y_1Y_0^2+384Z_3^2Z_2^2Z_1Y_2Y_1Y_0^2-288Z_3Z_2^2Z_1^2Y_2Y_1Y_0^2+64Z_2^2Z_1^3Y_2Y_1Y_0^2+32Z_3^4Z_2Y_1^2Y_0^2+24Z_3^2Z_2^3Y_1^2Y_0^2-64Z_3^3Z_2Z_1Y_1^2Y_0^2-16Z_3Z_2^3Z_1Y_1^2Y_0^2+32Z_3^2Z_2Z_1^2Y_1^2Y_0^2+8Z_2^3Z_1^2Y_1^2Y_0^2-64Z_3^2Z_2Y_2^2Y_1^2Y_0^2-64Z_2Z_1^2Y_2^2Y_1^2Y_0^2+128Z_3Z_2Z_1Y_2^2Y_1^2Y_0^2+16Z_3^3Y_2Y_1^3Y_0^2+32Z_3Z_2^2Y_2Y_1^3Y_0^2-16Z_3^2Z_1Y_2Y_1^3Y_0^2+16Z_1^3Y_2Y_1^3Y_0^2+16Z_2^2Z_1Y_2Y_1^3Y_0^2-16Z_3Z_1^2Y_2Y_1^3Y_0^2+18Z_3^2Z_2Y_1^4Y_0^2-2Z_2^3Y_1^4Y_0^2-64Z_3Z_2Z_1Y_1^4Y_0^2+20Z_2Z_1^2Y_1^4Y_0^2+24Z_2Y_2^2Y_1^4Y_0^2-2Z_3Y_2Y_1^5Y_0^2-16Z_1Y_2Y_1^5Y_0^2-2Z_2Y_1^6Y_0^2+32Z_3^2Z_2Y_2Y_1^2Y_0^3-64Z_3Z_2Z_1Y_2Y_1^2Y_0^3+32Z_2Z_1^2Y_2Y_1^2Y_0^3-16Z_3^3Y_1^3Y_0^3+32Z_3^2Z_1Y_1^3Y_0^3-16Z_3Z_1^2Y_1^3Y_0^3-8Z_2Y_2Y_1^4Y_0^3+2Z_3Y_1^5Y_0^3+4Z_1Y_1^5Y_0^3-32Z_3^5Z_2^2Z_1-32Z_3^3Z_2^4Z_1+104Z_3^4Z_2^2Z_1^2+8Z_3^2Z_2^4Z_1^2-96Z_3^3Z_2^2Z_1^3+80Z_3Z_2^4Z_1^3+8Z_3^2Z_2^2Z_1^4+4Z_2^4Z_1^4+16Z_3Z_2^2Z_1^5-16Z_3^4Z_2^2Y_2^2+64Z_3^3Z_2^2Z_1Y_2^2-96Z_3^2Z_2^2Z_1^2Y_2^2+64Z_3Z_2^2Z_1^3Y_2^2-16Z_2^2Z_1^4Y_2^2+16Z_3^5Z_2Y_2Y_1-104Z_3^4Z_2Z_1Y_2Y_1-128Z_3^2Z_2^3Z_1Y_2Y_1+240Z_3^3Z_2Z_1^2Y_2Y_1+192Z_3Z_2^3Z_1^2Y_2Y_1-\\
232Z_3^2Z_2Z_1^3Y_2Y_1+96Z_2^3Z_1^3Y_2Y_1+80Z_3Z_2Z_1^4Y_2Y_1-12Z_3^4Z_2^2Y_1^2-16Z_3^2Z_2^4Y_1^2+12Z_3^5Z_1Y_1^2+108Z_3^3Z_2^2Z_1Y_1^2+24Z_3Z_2^4Z_1Y_1^2-40Z_3^4Z_1^2Y_1^2-152Z_3^2Z_2^2Z_1^2Y_1^2+8Z_2^4Z_1^2Y_1^2+44Z_3^3Z_1^3Y_1^2+28Z_3Z_2^2Z_1^3Y_1^2-16Z_3^2Z_1^4Y_1^2+4Z_2^2Z_1^4Y_1^2+4Z_3^4Y_2^2Y_1^2-32Z_3^2Z_2^2Y_2^2Y_1^2-16Z_3^3Z_1Y_2^2Y_1^2-32Z_3Z_2^2Z_1Y_2^2Y_1^2+28Z_3^2Z_1^2Y_2^2Y_1^2+208Z_2^2Z_1^2Y_2^2Y_1^2-24Z_3Z_1^3Y_2^2Y_1^2+8Z_1^4Y_2^2Y_1^2-8Z_3^3Z_2Y_2Y_1^3-16Z_3Z_2^3Y_2Y_1^3+44Z_2^3Z_1Y_2Y_1^3+108Z_3^2Z_2Z_1Y_2Y_1^3-172Z_3Z_2Z_1^2Y_2Y_1^3+36Z_2Z_1^3Y_2Y_1^3-32Z_3Z_2Y_2^3Y_1^3+80Z_2Z_1Y_2^3Y_1^3+3Z_3^4Y_1^4+19Z_3^2Z_2^2Y_1^4-14Z_3^3Z_1Y_1^4-32Z_3Z_2^2Z_1Y_1^4+19Z_3^2Z_1^2Y_1^4+7Z_2^2Z_1^2Y_1^4-8Z_3Z_1^3Y_1^4+15Z_3^2Y_2^2Y_1^4+11Z_2^2Y_2^2Y_1^4-18Z_3Z_1Y_2^2Y_1^4-10Z_1^2Y_2^2Y_1^4+4Y_2^4Y_1^4+8Z_3Z_2Y_2Y_1^5-Z_3^2Y_1^6-14Z_2Z_1Y_2Y_1^5+2Z_3Z_1Y_1^6-Z_1^2Y_1^6-80Z_3^4Z_2^2Y_2Y_0+320Z_3^3Z_2^2Z_1Y_2Y_0-384Z_3^2Z_2^2Z_1^2Y_2Y_0-Y_2^2Y_1^6+128Z_3Z_2^2Z_1^3Y_2Y_0+16Z_2^2Z_1^4Y_2Y_0+32Z_3^5Z_2Y_1Y_0+96Z_3^3Z_2^3Y_1Y_0-128Z_3^4Z_2Z_1Y_1Y_0-232Z_3^2Z_2^3Z_1Y_1Y_0+192Z_3^3Z_2Z_1^2Y_1Y_0+48Z_3Z_2^3Z_1^2Y_1Y_0-128Z_3^2Z_2Z_1^3Y_1Y_0+8Z_2^3Z_1^3Y_1Y_0+32Z_3Z_2Z_1^4Y_1Y_0-64Z_3^3Z_2Y_2^2Y_1Y_0+288Z_3^2Z_2Z_1Y_2^2Y_1Y_0-384Z_3Z_2Z_1^2Y_2^2Y_1Y_0+\\
160Z_2Z_1^3Y_2^2Y_1Y_0+44Z_3^4Y_2Y_1^2Y_0+172Z_3^2Z_2^2Y_2Y_1^2Y_0-152Z_3^3Z_1Y_2Y_1^2Y_0+16Z_1^2Y_2^3Y_1^2Y_0-440Z_3Z_2^2Z_1Y_2Y_1^2Y_0+172Z_3^2Z_1^2Y_2Y_1^2Y_0+100Z_2^2Z_1^2Y_2Y_1^2Y_0-64Z_3Z_1^3Y_2Y_1^2Y_0+16Z_3^2Y_2^3Y_1^2Y_0-32Z_3Z_1Y_2^3Y_1^2Y_0-60Z_3^3Z_2Y_1^3Y_0-20Z_3Z_2^3Y_1^3Y_0+108Z_3^2Z_2Z_1Y_1^3Y_0+8Z_2^3Z_1Y_1^3Y_0-20Z_3Z_2Z_1^2Y_1^3Y_0+8Z_2Z_1^3Y_1^3Y_0+16Z_3Z_2Y_2^2Y_1^3Y_0-\\
112Z_2Z_1Y_2^2Y_1^3Y_0-32Z_3^2Y_2Y_1^4Y_0-16Z_2^2Y_2Y_1^4Y_0+54Z_3Z_1Y_2Y_1^4Y_0+4Z_1^2Y_2Y_1^4Y_0-12Y_2^3Y_1^4Y_0-2Z_3Z_2Y_1^5Y_0+8Z_2Z_1Y_1^5Y_0+2Y_2Y_1^6Y_0-96Z_3^4Z_2^2Y_0^2+192Z_3^3Z_2^2Z_1Y_0^2-96Z_3^2Z_2^2Z_1^2Y_0^2-128Z_3^3Z_2Y_2Y_1Y_0^2+288Z_3^2Z_2Z_1Y_2Y_1Y_0^2-192Z_3Z_2Z_1^2Y_2Y_1Y_0^2+32Z_2Z_1^3Y_2Y_1Y_0^2+28Z_3^2Z_2^2Y_1^2Y_0^2-Z_2^2Y_1^4Y_0^2+16Z_3^3Z_1Y_1^2Y_0^2-8Z_3Z_2^2Z_1Y_1^2Y_0^2-\\
32Z_3^2Z_1^2Y_1^2Y_0^2+4Z_2^2Z_1^2Y_1^2Y_0^2+16Z_3Z_1^3Y_1^2Y_0^2-32Z_3^2Y_2^2Y_1^2Y_0^2+64Z_3Z_1Y_2^2Y_1^2Y_0^2-32Z_1^2Y_2^2Y_1^2Y_0^2+\\
16Z_3Z_2Y_2Y_1^3Y_0^2+32Z_2Z_1Y_2Y_1^3Y_0^2+15Z_3^2Y_1^4Y_0^2-32Z_3Z_1Y_1^4Y_0^2+4Z_1^2Y_1^4Y_0^2+12Y_2^2Y_1^4Y_0^2-Y_1^6Y_0^2+16Z_3^2Y_2Y_1^2Y_0^3+32Z_3^5Z_2Z_1-32Z_3Z_1Y_2Y_1^2Y_0^3+16Z_1^2Y_2Y_1^2Y_0^3-4Y_2Y_1^4Y_0^3-16Z_3^6Z_2-16Z_3^4Z_2^3-64Z_3^3Z_2^3Z_1+16Z_3^4Z_2Z_1^2+96Z_3^2Z_2^3Z_1^2-64Z_3^3Z_2Z_1^3+64Z_3Z_2^3Z_1^3+32Z_3^2Z_2Z_1^4-32Z_3^4Z_2Y_2^2+128Z_3^3Z_2Z_1Y_2^2-192Z_3^2Z_2Z_1^2Y_2^2+128Z_3Z_2Z_1^3Y_2^2-32Z_2Z_1^4Y_2^2+8Z_3^5Y_2Y_1-48Z_3^4Z_1Y_2Y_1-192Z_3^2Z_2^2Z_1Y_2Y_1+104Z_3^3Z_1^2Y_2Y_1+288Z_3Z_2^2Z_1^2Y_2Y_1-96Z_3^2Z_1^3Y_2Y_1+64Z_2^2Z_1^3Y_2Y_1+32Z_3Z_1^4Y_2Y_1-4Z_3^4Z_2Y_1^2-44Z_3^2Z_2^3Y_1^2+72Z_3^3Z_2Z_1Y_1^2+64Z_3Z_2^3Z_1Y_1^2+4Z_2^3Z_1^2Y_1^2-108Z_3^2Z_2Z_1^2Y_1^2+24Z_3Z_2Z_1^3Y_1^2-16Z_3^2Z_2Y_2^2Y_1^2-64Z_3Z_2Z_1Y_2^2Y_1^2+176Z_2Z_1^2Y_2^2Y_1^2-12Z_3^3Y_2Y_1^3-44Z_3Z_2^2Y_2Y_1^3+\\
60Z_3^2Z_1Y_2Y_1^3+80Z_2^2Z_1Y_2Y_1^3-68Z_3Z_1^2Y_2Y_1^3+8Z_1^3Y_2Y_1^3-16Z_3Y_2^3Y_1^3+32Z_1Y_2^3Y_1^3+14Z_3^2Z_2Y_1^4-22Z_3Z_2Z_1Y_1^4+4Z_2Z_1^2Y_1^4+10Z_2Y_2^2Y_1^4+4Z_3Y_2Y_1^5-6Z_1Y_2Y_1^5-64Z_3^4Z_2Y_2Y_0+256Z_3^3Z_2Z_1Y_2Y_0-320Z_3^2Z_2Z_1^2Y_2Y_0+\\
128Z_3Z_2Z_1^3Y_2Y_0+104Z_3^3Z_2^2Y_1Y_0-240Z_3^2Z_2^2Z_1Y_1Y_0+56Z_3Z_2^2Z_1^2Y_1Y_0-32Z_3^3Y_2^2Y_1Y_0+128Z_3^2Z_1Y_2^2Y_1Y_0-160Z_3Z_1^2Y_2^2Y_1Y_0+64Z_1^3Y_2^2Y_1Y_0+152Z_3^2Z_2Y_2Y_1^2Y_0-336Z_3Z_2Z_1Y_2Y_1^2Y_0+72Z_2Z_1^2Y_2Y_1^2Y_0-12Z_3^3Y_1^3Y_0-16Z_3Z_2^2Y_1^3Y_0+8Z_3^2Z_1Y_1^3Y_0+4Z_2^2Z_1Y_1^3Y_0+16Z_3Z_1^2Y_1^3Y_0+16Z_3Y_2^2Y_1^3Y_0-48Z_1Y_2^2Y_1^3Y_0-14Z_2Y_2Y_1^4Y_0-2Z_3Y_1^5Y_0+4Z_1Y_1^5Y_0-64Z_3^4Z_2Y_0^2+128Z_3^3Z_2Z_1Y_0^2-64Z_3^2Z_2Z_1^2Y_0^2-32Z_3^3Y_2Y_1Y_0^2+64Z_3^2Z_1Y_2Y_1Y_0^2-\\
32Z_3Z_1^2Y_2Y_1Y_0^2+16Z_3^2Z_2Y_1^2Y_0^2+16Z_1Y_2Y_1^3Y_0^2-12Z_3^6-44Z_3^4Z_2^2+40Z_3^5Z_1-16Z_3^3Z_2^2Z_1-44Z_3^4Z_1^2+104Z_3^2Z_2^2Z_1^2+16Z_3^3Z_1^3+16Z_3Z_2^2Z_1^3-16Z_3^4Y_2^2+64Z_3^3Z_1Y_2^2-96Z_3^2Z_1^2Y_2^2+64Z_3Z_1^3Y_2^2-16Z_1^4Y_2^2-128Z_3^2Z_2Z_1Y_2Y_1+\\
192Z_3Z_2Z_1^2Y_2Y_1+16Z_2Z_1^3Y_2Y_1+4Z_3^4Y_1^2-40Z_3^2Z_2^2Y_1^2+4Z_3^3Z_1Y_1^2+56Z_3Z_2^2Z_1Y_1^2-12Z_3^2Z_1^2Y_1^2+56Z_1^2Y_2^2Y_1^2-32Z_3Z_1Y_2^2Y_1^2-40Z_3Z_2Y_2Y_1^3+60Z_2Z_1Y_2Y_1^3+3Z_3^2Y_1^4-4Z_3Z_1Y_1^4+3Y_2^2Y_1^4-16Z_3^4Y_2Y_0+64Z_3^3Z_1Y_2Y_0-80Z_3^2Z_1^2Y_2Y_0+32Z_3Z_1^3Y_2Y_0+48Z_3^3Z_2Y_1Y_0-104Z_3^2Z_2Z_1Y_1Y_0+16Z_3Z_2Z_1^2Y_1Y_0+44Z_3^2Y_2Y_1^2Y_0-88Z_3Z_1Y_2Y_1^2Y_0+16Z_1^2Y_2Y_1^2Y_0-4Z_3Z_2Y_1^3Y_0-4Y_2Y_1^4Y_0-16Z_3^4Y_0^2+32Z_3^3Z_1Y_0^2-16Z_3^2Z_1^2Y_0^2+4Z_3^2Y_1^2Y_0^2-40Z_3^4Z_2+32Z_3^3Z_2Z_1+32Z_3^2Z_2Z_1^2-32Z_3^2Z_1Y_2Y_1+48Z_3Z_1^2Y_2Y_1-12Z_3^2Z_2Y_1^2+16Z_3Z_2Z_1Y_1^2-12Z_3Y_2Y_1^3+16Z_1Y_2Y_1^3+8Z_3^3Y_1Y_0-16Z_3^2Z_1Y_1Y_0-12Z_3^4+16Z_3^3Z_1
$}

\normalsize
\section{Details on Experiments} \label{sec:SMexperiments}

Here, we present details about the experiments presented in \Cref{sec:experiments}.

\subsection{Pseudo Ground Truth for $d=0,\delta=1$}
Here, we explain an approach to find the pseudo ground-truth motion $R(x)$ of each image in sequence iPhone 3GS from \cite{DBLP:conf/cvpr/ForssenR10}, which is used in the real-world tests for $d=0,\delta=1$. We exploit the fact that, at the beginning of this sequence, the camera is static, \ie without RS effect. We furthermore assume that the position of the camera center remains constant over the capture of the sequence, and that the motion can be locally approximated with $d=0,\delta=1$.

To find the rotation $R(x)$ of an image $i$, we start with establishing point matches between this image and image $46$ \cite{DBLP:conf/iccv/LindenbergerSP23}, which was captured with a static camera. If point $u=[x \ y \ 1]^\top$ on image $i$ is in a correspondence to a point $u' \in \PP^2$ on image $46$, then it holds that $[u]_\times R(x) R u'=0$,
where $R \in \mathrm{SO}(3)$ is the rotation between frame $0$ and frame $i$ at scanline $x=0$. As $R(x)$ and $R$ have 3 DoF each, and every point correspondence fixes 2 DoF, this problem is balanced if we assume  $3$ point correspondences. This problem is minimal with $64$ solutions, verified with GB.
We find $R(x)$ with RANSAC~\cite{ransac} using this solver, and measure the reprojection error for every correspondence as
\begin{equation} \label{eq:pseudo_gt_error}
    \lVert \pi(u) - \pi(R(x)Ru') \rVert_2,
\end{equation}
where $\pi$ is a map that divides the vector with its last coordinate. We then refine the selected solution with a local optimization, minimizing the sum of squared reprojection errors \eqref{eq:pseudo_gt_error} computed over all inliers. While not as accurate as COLMAP \cite{schoenberger2016sfm}, this does support RS images, and we believe that it is more accurate than single-view solvers.

\subsection{Image Curve Detector}
Now, we describe the detector of image curves used in the experiments (\Cref{sec:experiments}). Examples of results are shown in \Cref{fig:teaser}. This detector is based on \cite{halir1998numerically} and modified to detect curves that arise from projecting 3D lines with a RS camera. The detector  proceeds as follows:

\begin{enumerate}
    \item Find edge points with the Canny detector \cite{DBLP:journals/pami/Canny86a} and for every pixel calculate the distance from the closest edge with distance transform.
    \item Fit curves into these edge points with sequential RANSAC \cite{multibody}.
    \item Within the RANSAC loop, sample 3 edge points and linearly fit a curve in the form $y = Z_2x^2+Z_1x+Z_0$ into the data.
    \item Calculate the score for each curve by iterating through every scanline with $x$ integer, calculating $y=Z_2x^2+Z_1x+Z_0$, and finding the distance $dt$ from the closest curve using the distance transform. If $dt<\tau$, increment the score by $\tau-dt$.
    \item After every RANSAC loop, select the curve with the highest score and refine it by refitting the curve into all its inliers (\ie edge points closer than the threshold $\tau$). Then, remove these inliers from the edge set, and recompute the distance transform.
\end{enumerate}
We repeat this step $20$ times, and thus we can obtain up to $20$ curves. Threshold $\tau$ is set to $2px$.
We tend to sample points close to each other to increase the probability that they will lie on the same curve.
We have also tried fitting the predicted curves \eg \eqref{eq:curveD0Delta1}, but we have discovered that on  real data, the parabola is already sufficiently accurate (see \Cref{fig:teaser} for examples), while the higher degree curves tend to overfit onto multiple curves at the same time.

\subsection{Reprojection error measurement}

Here, we discuss the measurement of the reprojection error, used in the RANSAC scheme in our real-world experiments (\Cref{sec:experiments}).
We consider three cases,  corresponding to the problems we solve in the real-world experiments: 1) $d=0,\delta=1$, 2) $d=1,\delta=0$+parallel, 3) $d=1,\delta=0$+parallel+coplanar. In all three cases, we assume that the projection is given by a collection of points $u_j \in \PP^2, j \in \{1,\ldots,N\}$ lying on the curve.

\medskip \noindent
$\boldsymbol{d=0,\delta=1:}$
In this case, the line is parameterized by the coefficient vector  $q \in \PP^2$ of the plane spanned by the line and the camera center at the origin . We estimate the rotation $R(x)$ with our solver. As $q$ has 2 DoF, we estimate it using 2 points $u_1=[x_1:y_1:1], u_N=[x_N:y_N:1]$. We proceed as follows:
\begin{equation}
    M_1 := [u_1]_{\times} [r(x_1)]_{\times} R(x_1), \quad M_N := [u_N]_{\times} [r(x_N)]_{\times} R(x_N).
\end{equation}
Let $m_1$, $m_N$ be the first rows of the matrices $M_1$, $M_N$. Then, $q$ is obtained as $q = m_1 \times m_n$.
It can be verified that a line parametrized by $q$ will project perfectly onto scanline $x_1$ at $u_1$ and onto scanline $x_N$ at $u_N$.

To measure the reprojection error, we iterate over all points $u_j, j \in \{2,\ldots,N-1\}$, get the projections $u_{j,proj}$ using \eqref{eq:lineImageAtTimeX}, and compute the reprojection error as
\begin{equation}
    \epsilon = \frac{\sum_{j=2}^{N-1} \lVert u_j - u_{j,proj}\lVert_2}{N-2}.
\end{equation}
We consider the line to be an inlier if $\epsilon < 0.4 px$.

\medskip
\noindent
$\boldsymbol{d=1,\delta=0,}$ \textbf{parallel:} Here, the line is parameterized by a vector $L = [0 \ L_{y} \ L_{z}] $ $  \in \RR^3$. We estimate the center motion $C(x)$ and the line direction $\Delta$ with our solver. $L$ has 2 DoF and we estimate it using 2 points $u_1=[x_1:y_1:1], u_N=[x_N:y_N:1]$, by solving the following system of linear equations:
\begin{equation}
        u_1^{\top} [\Delta]_{\times} L = u_1^{\top} [\Delta]_{\times} C(x_1), \quad u_N^{\top} [\Delta]_{\times} L = u_N^{\top} [\Delta]_{\times} C(x_N), \quad [1 \ 0 \ 0] L = 0.
\end{equation}
We measure the error $\epsilon$ as in the previous case. The inlier threshold is $1.0px$.

\medskip
\noindent
$\boldsymbol{d=1,\delta=0,}$ \textbf{parallel and coplanar:} In this case, the line is parameterized with a single scalar $\lambda \in \RR$. We estimate the center motion $C(x)$, the line direction $\Delta$, and $P_d$  with our solver. We further assume $P_0 = [0 \ 0 \ 1]^\top$. $\lambda$ has 1 DoF and we estimate it with a single point $u_c = [x_c : y_c : 1]$ with $c = \lceil N/2\rceil$:
\begin{equation}
    \lambda = \frac{u_c^\top [\Delta]_\times(C(x_c)-P_0)}{  u_c^\top [\Delta]_\times P_d  }.
\end{equation}
We then measure the error $\epsilon$ as in the previous cases. Since the line is now perfectly projected at one instead of two points, its reprojection errors are typically much higher, and thus we use a higher threshold, namely $7px$.

\begin{center}
    We congratulate the reader for getting this far. :)
\end{center}

\end{document}